\documentclass[10pt]{amsart}

\setlength{\textwidth}{\paperwidth}
\addtolength{\textwidth}{-2.875in}
\addtolength{\textheight}{0.5in}

\calclayout

\usepackage[utf8]{inputenc}
\usepackage{amsthm,amsmath,amsfonts,amssymb}

\usepackage[all]{nowidow}

\usepackage[numbers]{natbib}

\usepackage[colorlinks,citecolor=blue,urlcolor=blue,allcolors=blue]{hyperref}
\usepackage{graphicx}
\usepackage[british]{babel}
\usepackage{mathtools}
\mathtoolsset{showonlyrefs} 
\usepackage{booktabs,longtable}
\usepackage{multirow}
\usepackage{mathrsfs}
\usepackage[super]{nth}
\usepackage[plain,noend,ruled]{algorithm2e}
\usepackage[noend]{algpseudocode}
\usepackage{todonotes}
\usepackage{thmtools}
\usepackage{thm-restate}
\usepackage{caption}
\captionsetup{width=\textwidth}

\theoremstyle{plain}
\newtheorem{proposition}{Proposition}
\newtheorem{theorem}{Theorem}
\newtheorem{lemma}{Lemma}
\newtheorem{corollary}{Corollary}

\newtheorem{appendixlemma}{Lemma}[section]
\newtheorem{appendixthm}{Theorem}[section]

\theoremstyle{remark}
\newtheorem{definition}{Definition}
\newtheorem{appendixdef}{Definition}[section]
\newtheorem{notation}{Notation}[section]
\newtheorem{assumption}{Assumption}
\newtheorem{example}{Example}[section]
\newtheorem{examples}{Examples}[section]

\newtheorem*{motivation*}{Motivation}
\newtheorem{remark}{Remark}[section]

\usepackage{bm}
\usepackage{bbm}
\usepackage{enumerate}
\usepackage{mathtools}
\usepackage{soul}
\usepackage{todonotes}
\newtheorem*{theorem*}{Theorem}
\newtheorem*{lemma*}{Lemma}
\newtheorem*{proposition*}{Proposition}

\theoremstyle{plain} 
\newcommand{\thistheoremname}{}
\newtheorem{genericthm}[definition]{\thistheoremname}

\usepackage{mathrsfs} 

\DeclareSymbolFont{extraup}{U}{zavm}{m}{n}
\DeclareMathSymbol{\vardiamond}{\mathalpha}{extraup}{87}


\usepackage{accents}

\makeatletter
\providecommand*{\sha}{%
  \mathbin{\mathpalette\sha@{}}%
}
\newcommand*{\sha@}[2]{%
  \sbox0{$#1\vcenter{}$}%
  \kern .15\ht0 
  \rlap{\vrule height .25\ht0 depth 0pt width 2.5\ht0}%
  \raise.1\ht0\hbox to 2.5\ht0{%
    \vrule height 1.75\ht0 depth -.1\ht0 width .17\ht0 %
    \hfill
    \vrule height 1.75\ht0 depth -.1\ht0 width .17\ht0 %
    \hfill
    \vrule height 1.75\ht0 depth -.1\ht0 width .17\ht0 %
  }%
  \kern .15\ht0 
}
\makeatother



\usepackage{stmaryrd}

\usepackage[makeroom]{cancel} 

\makeatletter
\DeclareRobustCommand{\cev}[1]{%
  {\mathpalette\do@cev{#1}}%
}
\newcommand{\do@cev}[2]{%
  \vbox{\offinterlineskip
    \sbox\z@{$\m@th#1 x$}%
    \ialign{##\cr
      \hidewidth\reflectbox{$\m@th#1\vec{}\mkern4mu$}\hidewidth\cr
      \noalign{\kern-\ht\z@}
      $\m@th#1#2$\cr
    }%
  }%
}
\makeatother

\makeatother

\makeatletter
\newcommand{\oset}[3][-1.75ex]{%
  \mathrel{\mathop{#3}\limits^{
    \vbox to#1{\kern-2\ex@
    \hbox{$\scriptstyle#2$}\vss}}}}
\makeatother
\makeatletter
\newcommand{\osettext}[3][-1.15ex]{%
  \mathrel{\mathop{#3}\limits^{
    \vbox to#1{\kern-2\ex@
    \hbox{$\scriptstyle#2$}\vss}}}}
\makeatother

\newcommand{\vertiii}[1]{{\left\vert\kern-0.25ex\left\vert\kern-0.25ex\left\vert #1 
    \right\vert\kern-0.25ex\right\vert\kern-0.25ex\right\vert}}
\newcommand{\bvertiii}[1]{{\big\vert\kern-0.25ex\big\vert\kern-0.25ex\big\vert #1 
    \big\vert\kern-0.25ex\big\vert\kern-0.25ex\big\vert}}

\usepackage{tikz}
\usepackage[super]{nth}

\newcommand{\coloneqq}{\mathrel{\mathop:}=}
\newcommand{\eqqcolon}{\mathrel{=\hspace{-2.75pt}\mathop:}}

\newcommand{\supp}{\mathrm{supp}}

\newcommand\restr[2]{{
  \left.\kern-\nulldelimiterspace 
  #1 
  \vphantom{\big|} 
  \right|_{#2} 
  }}

\usepackage{enumitem}
\usepackage{accents}
\usepackage{float}
\usepackage{subfig}
\usepackage{xfrac}

\newcommand{\R}{\mathbb{R}}
\newcommand{\I}{\mathbb{I}}
\newcommand{\N}{\mathbb{N}}
\newcommand{\Z}{\mathbb{Z}}

\newcommand{\p}{\mathfrak{p}}

\newcommand{\DP}{\mathrm{DP}_{\!d}}

\newcommand{\W}{\mathcal{W}}
\newcommand{\E}{\mathbb{E}}
\newcommand{\sig}{\mathfrak{sig}}
\newcommand{\tepsilon}{{\tilde{\varepsilon}}}

\makeatletter
\providecommand*{\shuffle}{%
  \mathbin{\mathpalette\shuffle@{}}%
}
\newcommand*{\shuffle@}[2]{%
  \sbox0{$#1\vcenter{}$}%
  \kern .15\ht0 
  \rlap{\vrule height .25\ht0 depth 0pt width 2.5\ht0}%
  \raise.1\ht0\hbox to 2.5\ht0{%
    \vrule height 1.75\ht0 depth -.1\ht0 width .17\ht0 %
    \hfill
    \vrule height 1.75\ht0 depth -.1\ht0 width .17\ht0 %
    \hfill
    \vrule height 1.75\ht0 depth -.1\ht0 width .17\ht0 %
  }%
  \kern .15\ht0 
}
\makeatother

\newcommand{\bdiam}{\tikz[baseline={([yshift=-.9ex]current bounding box.center)}]{\node[fill=black,rotate=45,inner sep=.1ex, text height=0.75ex, text width=0.75ex] {};%
\node[ font=\color{white}] (wi) {};}}

\makeatletter
\newcommand{\leqnomode}{\tagsleft@true\let\veqno\@@leqno}
\newcommand{\reqnomode}{\tagsleft@false\let\veqno\@@eqno}
\makeatother



\usepackage{thmtools, thm-restate}
\usepackage[multiple]{footmisc}

\usepackage{amsaddr}


\begin{document}

\title[Nonlinear ICA for Time-Dependent Signals]{Nonlinear Independent Component Analysis\\For Discrete-Time and Continuous-Time Signals}


\author{\vspace{-2em}\footnotesize Alexander Schell and Harald Oberhauser}
\address[A1]{Mathematical Institute\\University of Oxford} 

\begin{abstract}
We study the classical problem of recovering a multidimensional source signal from observations of nonlinear mixtures of this signal. We show that this recovery is possible (up to a permutation and monotone scaling of the source's original component signals) if the mixture is due to a sufficiently differentiable and invertible but otherwise arbitrarily nonlinear function and the component signals of the source are statistically independent with `non-degenerate' second-order statistics. The latter assumption requires the source signal to meet one of three regularity conditions which essentially ensure that the source is sufficiently far away from the non-recoverable extremes of being deterministic or constant in time. These assumptions, which cover many popular time series models and stochastic processes, allow us to reformulate the initial problem of nonlinear blind source separation as a simple-to-state problem of optimisation-based function approximation. We propose to solve this approximation problem by minimizing a novel type of objective function that efficiently quantifies the mutual statistical dependence between multiple stochastic processes via cumulant-like statistics. This yields a scalable and direct new method for nonlinear Independent Component Analysis with widely applicable theoretical guarantees and for which our experiments indicate good performance. 
\end{abstract}

\thanks{\href{https://mathscinet.ams.org/mathscinet/msc/msc2020.html}{\color{black}{\textit{MSC2020 subject classification:}}} Primary 62H25, 62M99; secondary 62H05, 60L10, 62M45, 62R10.\newline 
\indent\indent\textit{Keywords and Phrases:} Blind Source Separation, Independent Component Analysis, inverse problem, \indent statistical independence, latent variable model, functional data analysis, nonlinear BSS, nonlinear ICA}
\email[A1]{\href{mailto:alexander.schell@maths.ox.ac.uk}{\texttt{alexander.schell@maths.ox.ac.uk}} \normalfont{and} \href{mailto:harald.oberhauser@maths.ox.ac.uk}{\texttt{harald.oberhauser@maths.ox.ac.uk}}} 

\vspace*{-2em}
\maketitle

\vspace{-1em}
\section{Introduction}
\noindent 
A common problem in science and engineering is that an observed quantity, $X$, is determined by an unobserved source, $S$, which one is interested in.
Denoting by $f$ the deterministic relationship between $X$ and $S$, one thus arrives at the equation    
\begin{equation}\label{ICA-Relation}
X = f(S)
\end{equation}
where $X$ is known but both the relation $f$ and the source $S$ are unknown.\\[-0.5em]

\noindent
The premise that the data $X$ is determined by its source $S$ reflects in the assumption that $f$ is a deterministic function, while the premise that $S$ can be completely inferred from $X$ --- i.e.\ that no information be lost in the process of going from $S$ to $X$ --- is reflected in the assumption that the function $f$ is one-to-one; for simplicity, it is typically also assumed that $f$ is onto. Any function $f$ of this kind will be referred to as a mixing transformation.\\[-0.5em] 

\noindent
The central challenge, known as the problem of \emph{Blind Source Separation (BSS)}, then becomes to infer --- or `identify' --- the hidden source $S$ from the given data $X$:
\begin{equation}\label{intext:ProblemOfBSS}
\begin{gathered}
\text{\emph{Under which assumptions is it possible to recover the source data $S$ in \eqref{ICA-Relation} if only}}\\[-0.35em] \text{\emph{its mixture $X$ is observed? To what extent can such a recovery be achieved}}\\[-0.35em] \text{\emph{and how can it be performed in practice?}}
\end{gathered}
\end{equation}
It is clear that without additional assumptions, the above problem of inference \eqref{intext:ProblemOfBSS} is severely underdetermined: If $X$ and equation \eqref{ICA-Relation} is the only information available but both $f$ and $S$ are unknown, then we may generally find infinitely many possible `explanations' $(\tilde{S},\tilde{f})$ for $X$ which all satisfy \eqref{ICA-Relation} but are not otherwise meaningfully related to the true explanation $(S,f)$ underlying the data. In many cases, however, this `indeterminacy of $S$ given $X$ with $f$ unknown' can be controlled by imposing certain statistical conditions on the source $S$.\\[-0.5em] 

The following simple example illustrates this situation. 

\begin{example}\label{example:intro}
Suppose that you are on a video-call and want to follow the simultaneous speeches of two speakers $S^1$ and $S^2$, modelled as real-valued time series each. As the propagation of sound adheres to the superposition principle, the acoustic signals $X^1$ and $X^2$ that reach your left and right ear, respectively, may be modelled as linear mixtures $X^i = a_{i1}S^1 + a_{i2}S^2$ of the individual speech signals $S^1$ and $S^2$. Denoting $X\equiv(X^1, X^2)^\intercal$ and $S\equiv(S^1, S^2)^\intercal$ and $A\equiv(a_{ij})\in\mathbb{R}^{2\times 2}$, the relation between the audio data $X$ and its underlying sources $S$ can hence be expressed by the model equation $X=A\cdot S$, which for $A$ invertible is a special case of \eqref{ICA-Relation} for the linear map $f\coloneqq A$. The above problem \eqref{intext:ProblemOfBSS} then becomes to recover the constituent speeches $S^1$ and $S^2$ from their observed mixtures $X^1, X^2$ alone, given that the relationship between $X$ and $S$ is linear. Now without further assumptions, the true explanation $(S,A)$ of the data $X$ cannot be distinguished from any of its `alternative explanations' $\{(\tilde{S}, \tilde{A})\equiv(B\cdot S, AB^{-1})\mid B\in\mathbb{R}^{2\times 2} \text{ invertible}\}$. But if the speech signals $S^1$ and $S^2$ were assumed to be uncorrelated, say, then the above family of best-approximations of $(S,A)$ reduced to $\{(\tilde{S}, \tilde{A})\equiv(B\Lambda\cdot S, A\Lambda^{-1}B^\intercal)\mid \Lambda\in\R^{2\times 2} \text{ (invertible) diagonal}, \ B\in\mathbb{R}^{2\times 2} \text{ orthogonal}\}$;\footnote{\ Indeed: The assumption of uncorrelatedness complements the original model equation \eqref{ICA-Relation} by the additional (statistical) source condition $\mathrm{Cov}(\tilde{S},\tilde{S}) = \mathrm{Cov}(S,S) = \mathrm{I}_2$, which implies that $B^\intercal B = \mathrm{Cov}(\tilde{S},\tilde{S}) = \mathrm{I}_2$ (where the components of $\tilde{S}$ are assumed to be scaled to unit variance).} hence if they are uncorrelated,  $S^1$ and $S^2$ may be recovered from $X$ uniquely up to scale and a rotation.{\hfill \bdiam}
\end{example}

\noindent
This simple observation can be significantly improved by way of the classical Darmois-Skitovich theorem \citep{reiersol1950,DAR,SKI} which implies that for $f$ linear, the original source $S$ may be identified from $X$ even up to scaling and a permutation of its components if $S$ is modelled as a random vector whose coordinates $S^i$ are not only uncorrelated but statistically independent. This mathematical insight, elaborated in P.\ Comon's seminal framework \cite{COM}, quickly became the theoretical foundation of \emph{Independent Component Analysis (ICA)}, a popular statistical method that has since seen far-reaching theoretical investigations and extensions, e.g.\ \citep{BJO,SAM}, and has been successfully implemented in numerous widely-applied algorithms, e.g.\ \citep{BCM,CD2,HAT,HYF}; see for instance \cite{ERK,HRS,MNT} as well as the monographs \citep{HBS,HKO} for an overview.\\[-0.5em] 

\noindent
Comon's contribution is arguably the most conceptionally influential answer to the above inference task \eqref{intext:ProblemOfBSS} to date that was both practically relevant and mathematically rigorous. 
However, Comon's approach applies to linear relationships \eqref{ICA-Relation} between $X$ and $S$ only, because among nonlinear mixing functions on $\R^d$ there are many `non-trivial' transformations that preserve the mutual statistical independence of their input vectors \cite{HYP}. This is a substantial limitation not only from a theoretical perspective but also in applications, where real-world data is often assumed to depend nonlinearly on certain nonredundant (independent) explanatory source signals and the instantaneous invertible nonlinear model \eqref{ICA-Relation} is deemed an adequate description of this dependence. See for instance \citep{ardizzone2018applications, cranmer2020applications, he2018applications, ican2017applications, khoshnevis2019applications, ding2019miningmachine,noe2020applications} and the references therein for a few according example applications of nonlinear BSS ranging from the analysis of star clusters in interstellar gas clouds and biomedical tissue monitoring during surgery over electroencephalography and molecular simulation to statistical process monitoring, vibration analysis and stock market prediction.\\[-0.5em]    

\noindent
Overcoming the traditional confinement to linearity has thus been a long-standing scientific endeavour, and the past twenty-six years have seen various attempts of establishing alternative identifiability approaches to recover multivariate data from their nonlinear transformations. Prominent ideas in this direction include the optimisation of mutual information over outputs of (adversarial) neural networks, e.g. \citep{ALM, BRB, HJE, khemakhem20iVAE, TWZ}, or the idea of `linearising' the generative relation \eqref{ICA-Relation} by mapping the observable $X$ into a high-dimensional feature space where it is then subjected to a linear ICA-algorithm \cite{HAR}.\\[-0.5em]

\noindent
More recently, the works of Hyvärinen et al.\ \citep{TCL,HYM,AUX} achieved significant progress regarding the recovery of nonlinearly mixed sources with temporal structure (e.g.\ time series, instead of random vectors in $\R^d$) by first augmenting the observed mixture of these sources with an auxiliary variable such as time \citep{TCL} or its history \citep{HYM}, and then training logistic regression to discriminate (`contrast') between the thus-augmented observable and some additional `variation' of the data. This variation is obtained by augmenting the observable with a randomized auxiliary variable of the same type as before, thus linking the asymptotical recovery of the source $S = f^{-1}(X)$ to a trainable optimisation problem, namely the convergence of a universal function approximator (e.g.\ a neural network) learning a classification task. These identifiability results were extended and embedded into the context of variational autoencoders in \cite{khemakhem20iVAE}.\\[-0.5em]

\noindent
Motivated by the classical ICA framework of Comon \cite{COM} and the recent contrastive learning breakthrough \citep{HYM}, we revisit the inference problem \eqref{intext:ProblemOfBSS} for stochastic processes\footnote{\ Throughout, ``stochastic process'' means ``continuous-time stochastic process'' unless mentioned otherwise.} $X=(X_t)$ and $S=(S_t)$ with recent tools from stochastic analysis.
We believe the following to be our main contributions to the existing literature:\\[-1em] 

\begin{description}[leftmargin=1.5em]
\item[\textbf{Identifiability for Stochastic Processes}] We provide general identifiability results that generalise Comon's classical independence-based identifiability criterion from linear mixtures of random vectors to nonlinear mixtures of discrete- and continuous-time stochastic processes; cf.~Theorems~\ref{thm:Comon},\,\ref{thm:NICA_stat},\,\ref{cor:NICA_MainCor}. On a theoretical level, working with infinite-dimensional (i.e.~path-valued) random variables poses new challenges that we address by using rough path theory. From an applied perspective, many models are naturally formulated in continuous time rather than in discrete time (e.g.~in biology, physics, medicine or finance), which our approach accounts for by naturally covering both discrete-time and continuous-time models alike, including Stochastic Differential Equations (SDEs) in particular. 
\item[\textbf{Blind Source Separation via Signature Cumulants}] Our identifiability theory allows us to reformulate the problem of nonlinear blind source separation as an easy-to-state optimisation problem which involves the minimisation of statistical dependence between multiple stochastic processes, see Theorem~\ref{thm:optimisation}. Unlike for vector-valued data, statistical dependence between stochastic processes can manifest itself inter-temporally, in the sense that different coordinates of the processes may exhibit statistical dependencies both instantaneously and over different points in time. We propose to quantify such complex dependency relations by using so-called signature cumulants \citep{bonnier2019signature} as objective functions. These signature cumulants can be seen as generalising the concept of cumulants from vector-valued data to path-valued data. Analogous to classical cumulants, signature cumulants then provide a graded, parsimonious, and efficiently computable quantification of the degree of statistical (in)dependence between stochastic processes. Joined with our optimisation approach, this combines to a widely applicable new and robust statistical method for the nonlinear blind source separation of time-dependent signals, see Theorem~\ref{thm:optimisation} and Section \ref{sect:algorithm}.   

\item[\textbf{Consistency With Respect to Time Discretization and Sample Size}]When applying our methodology in practice, the following issues arise: Firstly, although the underlying stochastic model is often formulated in continuous time, in practice one usually has access to time-discretized samples only, often taken over non-equally spaced time grids. Secondly, oftentimes only a single (time-discretized) sample path of the process is available rather than many independent realisations, for example in the classical cocktail party problem.  
We address both of these issues and show that our method is statistically consistent even if only a single, time-discretized and finite sample of the observable is given, see Theorem~\ref{thm:consistency}.
This is also the setting in which our experiments are carried out in Section~\ref{sect:experiments}. 
\end{description} 

\begin{figure}[htpb] 
\vspace{-0.5em}
\centering
\hspace*{-1.25em}
\includegraphics[trim={0em 0 0em 0},clip,scale=0.35]{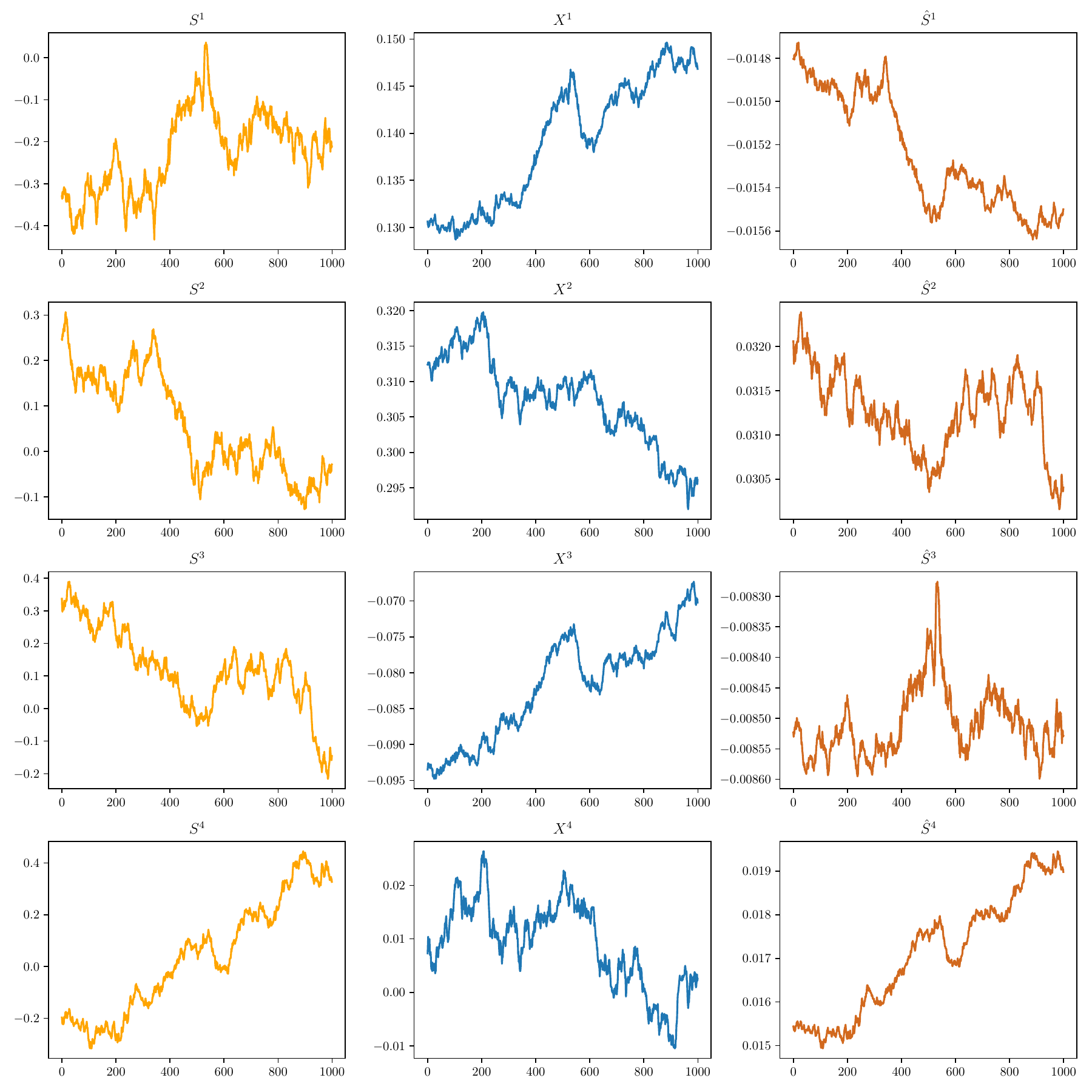}
\caption{\emph{A source $S$ with four components $S^1, \ldots, S^4$ (orange) is mapped under some nonlinear transformation $f$, resulting in the observed mixture $X$ (blue). We present a new method to recover the original source $S$ from its mixture $X$ up to a minimal deviation: Given $X$, this approach returns an estimate $\hat{S}=(\hat{S}^1,\cdots,\hat{S}^4)$ (brown) that approximates $S$ up to the original order of its channels and a monotone scaling. In this example, $\hat{S}^1$ estimates the original component $S^2$, and $\hat{S}^2$ estimates $S^3$, and $\hat{S}^3$ estimates $S^1$, and $\hat{S}^4$ estimates $S^4$.\newline \phantom{.}\hfill (See Example \ref{example:summary} for details.)\vspace{-1em}}}\label{fig:example} 
\end{figure}

\noindent
This article is structured as follows. We precede our statistical analysis with an informal yet concise summary of this paper's main contributions (Section \ref{sect:overview_of_contribution}). The formal exposition of our approach towards the recovery of nonlinearly mixed independent sources begins thereafter by recalling the main results of \cite{COM} as conceptional points of reference (Section \ref{sect:comon}). The core of our identifiability theory is developed in the subsequent two sections: advocating for the incorporation of time as an integral dimension of our source model (Section \ref{sect:intro_BSS}), we show how sources admitting a non-degenerate `temporal structure' harbour sufficient mathematical richness to encode any nonlinear action performed upon them as a sort of `intrinsic statistical fingerprint', based on which the constituent relation \eqref{ICA-Relation} may then be inverted up to a minimal deviation by maximizing an independence criterion (Section \ref{sect:core_theory}). Our approach covers sources of various types of statistical regularity, including popular time series models, various Gaussian processes and Geometric Brownian Motion (Section \ref{sect:example_sources}). The practical applicability of our ICA-method is enabled by a novel independence criterion for time-dependent data (Section \ref{sect:independence_criterion}) that leads to a practical and statistically consistent separation algorithm (Section \ref{sect:consistency}) that we demonstrate in a series of numerical experiments (Section \ref{sect:experiments}). The paper ends with a brief conclusion and an outlook on future directions (Section \ref{sect:conclusion}). Most proofs are given in the appendix along with some technical auxiliaries and further remarks, including an explication of how, as promised in the title, all results and methods in this paper are directly applicable to the separation of discrete-time signals as well (Section \ref{appendix:NICA_discrete}). 

\section{Summary of Contribution}\label{sect:overview_of_contribution}
\noindent
Motivated by recent breakthroughs of Hyvärinen and Morioka \citep{TCL,HYM}, we propose a new approach to the problem of nonlinear blind source separation \eqref{intext:ProblemOfBSS} for multidimensional time-dependent signals that leverages modern tools from stochastic analysis: For an unknown discrete- or continuous-time signal $S=(S_t)$ in $\R^d$ and an unknown function $f : \R^d\rightarrow \R^d$, a new statistical method to recover $S$ from its transformation $X = f(S) \equiv (f(S_t))$ via `signature cumulants' is presented.\\[-0.5em]  

\noindent
In essence, we provide a new algorithm\footnote{\ That is, an explicitly computable map -- or estimator, in the statistical sense -- that takes in [a realisation of] the mixture $X$ and returns an `optimal' approximation of [the corresponding realisation of] $S$ as an output.} that performs the inversion, or `retransformation',
\begin{equation}\label{summary_of_cont:eq1}
X \, \longmapsto \, S
\end{equation}     
of the generative relation \eqref{ICA-Relation} in the case that $f$ and $S$ are not explicitly known and $f$ is sufficiently differentiable and (by necessity) invertible\footnote{\ Invertible at least on the smallest subset of $\R^d$ which is actually reached by $S$, but see Def.\ \ref{def:spatial_support} and \eqref{intext:mixingtrafo_as_homoeomorphism}.} but otherwise arbitrarily nonlinear.\\[-0.5em]                 

\noindent
Finding ways to achieve this `blind inversion' \eqref{summary_of_cont:eq1} has been of long-standing scientific interest, and efforts in this direction gave rise to an established area of specialised statistical research that has been very active for nearly three decades now. Apart from only a small number of exceptions, however, related works were predominantly confined to the very limiting assumption that the hidden relation $f$ be a linear map on $\R^d$ -- the few existing approaches towards the blind inversion of nonlinear causal relations were either heuristic or required $f$ to belong to very narrowly defined function classes only, and it was not until the recent breakthroughs of Hyvärinen et al.\ that the first mathematically justified ideas for the blind inversion of general nonlinear relationships between $X$ and $S$ have emerged. Our work is a contribution to the dawning research on nonlinear blind inversion.

\subsection{Identifiability (Theorems \ref{thm:NICA_stat}, \ref{cor:NICA_MainCor})} To achieve a meaningful recovery \eqref{summary_of_cont:eq1} of the source $S$ from $X$, we need to compensate for the blindness regarding $f$ and $S$ by imposing some additional assumptions on the latter. The most established such assumption, and arguably the most relevant one in practice, is that the component signals of $S$ are statistically independent; we adopt this assumption throughout.\\[-0.5em]     

\noindent
Many of the conceptional issues that arise in the nonlinear blind reconstruction of an indepen- dent-component source $S$ from $X$ can then be anticipated from the classical i.e.\ linear case $f\in\R^{d\times d}$ already. Similar to the classical case, cf.\ Theorem \ref{thm:Comon},
\begin{itemize} 
\item the blindness\footnote{\ That is, the fact that the inverse problem \eqref{summary_of_cont:eq1} is inherently underdetermined since the constituents $f$ and $S$ of the RHS in \eqref{ICA-Relation} are both unknown.} underlying \eqref{summary_of_cont:eq1} makes an exact recovery of $S$ impossible, but statistical prior information on the source allows to identify $S$ from $X$ up to a minimal ambiguity, namely up to a permutation and monotone scaling of the source's original component signals;\\[-0.75em]
\item these minimally ambiguous (in the above sense) estimates $\hat{S}$ of the original source $S$ preserve the initial condition of intercomponental independence (IC), but under some natural assumptions on $S$ the converse is also true: those retransformations of $X$ which are IC must be minimally ambiguous to $S$.          
\end{itemize} 
These insights into the blind inversion \eqref{summary_of_cont:eq1}, which are rigorously discussed in Section \ref{sect:core_theory}, are the mathematical heart of our approach. Especially the equivalence stated in the last point, which is made precise in Theorems \ref{thm:NICA_stat} and  \ref{cor:NICA_MainCor}, is a central new finding:\\[-0.5em] 

\noindent
Under some mild statistical conditions on the source $S$, we can show that the assumed IC property of the source is strong enough to trivialise\footnote{\ Here, `trivialise' means reduce to the composition of a permutation and a componentwise monotone scaling.} the action of any spatial diffeomorphism which preserves this property; in other words: their property of having minimal intercomponental statistical dependence distinguishes the minimally ambiguous estimates $\hat{S}$ of $S$ from any other invertible nonlinear transformations of $X$.\\[-0.5em]  

\noindent
This makes `minimisation of intercomponent-dependence' an illuminating optimisation principle for the initially blind search for $S$, which immediately translates into the following strategy for the desired inversion \eqref{summary_of_cont:eq1}:      
\begin{equation}\label{summary_of_cont:eq2}
\text{as an estimate $\hat{S}$ for $S$, \quad choose \quad $\hat{S}= \theta_\star(X)$, \ $\theta_\star$ invertible, \  s.t.\ \ $\theta_\star(X)$ is IC\,;}  
\end{equation}
i.e., the right retransformations of $X$ are those that minimise intercomponental dependence.\\[-0.5em] 

\noindent  
As mentioned, the sources $S$ for which this strategy works are those that `carry their IC property well enough' for this property to characterise them, up to minimal ambiguity, among their (invertible) nonlinear transformations. But not every source is of this kind, as becomes particularly clear from considering two `unrecoverable' statistical extreme cases: If the source $S$ is deterministic,\footnote{\ That is, if $S$ attains exactly one sample path with probability one.} then the IC property is void and a meaningful blind inversion \eqref{summary_of_cont:eq1} of the source's mixtures is generally impossible. If $S$ is constant in time, i.e.\ $S = (Z)_{t\in[0,1]}$ for some random vector $Z$ in $\R^d$, then the IC property on $S$ cannot manifest cross-componentally over different time-points and is then generally too weak to support the strategy \eqref{summary_of_cont:eq2} for nonlinear mixtures, see \cite{HYP} and Example \ref{example:ComonAndNonlinearTrafos}.\\[-0.5em]      

\noindent
These unidentifiable source types can be seen as degenerate extremes that are naturally interpolated by the mathematical model class of continuous-time stochastic processes, and said interpolation can be controlled at the level of the second-order finite-dimensional distributions (fdds) of such processes, see Section \ref{sect:intro_BSS}. In fact, we can formulate three regularity assumptions on the family of fdds of a source $S$ which enable the IC-based identifiability \eqref{summary_of_cont:eq2} of the source by ensuring that it is sufficiently far away from the above degeneracies (Section \ref{sect:core_theory}). More specifically, our non-degeneracy assumptions on the source require that sufficiently many of its fdds admit a probability density which is sufficiently complex in that it satisfies one of the following conditions:      
\begin{enumerate}[label=(\alph*)]
\item\label{summary_of_cont:cond1} the density avoids local factorisations and is not of a certain `pathological' Gaussian-like shape, as is made precise in Definition \ref{def:psG_nonsep_stochproc};
\item\label{summary_of_cont:cond2} the density has locally non-vanishing mixed log-derivatives that lie outside certain nullsets, as specified in Definition \ref{def:log-regular}.  
\end{enumerate} 
While the non-factorizability and non-vanishing-log-derivative conditions ensure that the source is `stable enough' to make its IC property unfold\footnote{\ Instead of holding it merely within its fixed-time marginals, as in the generally unidentifiable case of IC random vectors in $\R^d$.} into its component signals in such a way that the (`residual') action inflicted upon $S$ by the composition of the mixing transformation $f$ with an IC-enforcing retransformation [as in \eqref{summary_of_cont:eq2}] does not collapse when considered jointly at different points in time, the exclusion of Gaussian-like shapes or algebraically degenerate density configurations ensures that this residual action on $S$ is `expressive' enough (as per implying a non-degenerate eigenspectrum of a Jacobian). All of this is made precise in Section \ref{sect:overview} and the proofs of Theorems \ref{thm:NICA_stat} and \ref{cor:NICA_MainCor}.\\[-0.5em]  

\noindent
The source conditions \ref{summary_of_cont:cond1} and \ref{summary_of_cont:cond2} again generalise classical theory in a natural way (cf.\ Section \ref{sect:comon} and the remarks on p.\ \pageref{rem:lebesgue_boundary} and Remark \ref{rem:Gaussian_processes}), and in Section \ref{sect:example_sources} we illustrate their broad applicability by compiling a set of widely used signal classes to which these conditions apply.\\[-0.5em]

\noindent
Thus far, our work has established the dependence-minimising approach \eqref{summary_of_cont:eq2} as a successful mathematical strategy to achieve the nonlinear blind source separation task \eqref{summary_of_cont:eq1}, see Theorem \ref{thm:NICA_stat} and Theorem \ref{cor:NICA_MainCor}: We identified natural probabilistic conditions \ref{summary_of_cont:cond1} and \ref{summary_of_cont:cond2} on the source which guarantee that its IC property manifests strongly enough to characterise that source among any invertible (re)transformations of $X$ up to some inevitable ambiguity\footnote{\ That is, as we recall, up to a permutation and monotone scaling of the source's original component signals.}.

\subsection{Blind Inversion via Optimisation (Theorem \ref{thm:optimisation})}In the second part of the paper, we propose a way to turn this theoretical strategy into a ready-to-use statistical method that can be easily implemented in practice. What we need to do for this is provide the observer of the mixture $X$ with three things, namely
\begin{itemize}
\item[--] a set $\Theta$ of invertible candidate demixing transformations on $\R^d$ which is `large enough' to include approximations of the original inverse $f^{-1}$ up to permutation and scale, and for consistency is endowed with a suitable approximation topology;\footnote{\ See the hypothesis on $\Theta$ that is formulated in Theorem \ref{thm:optimisation} for the first, and Assumption \ref{sect:capping:assumptions} (on p.\ \pageref{sect:capping:assumptions}) for the latter assumption.}    
\item[--] a `pair of goggles' $\phi$ that allows the observer to gauge the degree of intercomponental statistical dependence of any given (re)transformation of $X$: the weaker the statistical dependence between the component signals of a process $Y$, the smaller shall be $\phi(Y)\in\R_{\geq 0}$; the desired inversion \eqref{summary_of_cont:eq1} is then performed [via \eqref{summary_of_cont:eq2}] by choosing those transformations $\theta(X)$, $\theta\in\Theta$, of $X$ for which the value $\phi(\theta(X))$ is minimal;
\item[--] an automatable optimisation procedure that combines $\Theta$ and $\phi$ and returns
\begin{equation}\label{summary_of_cont:eq3}
\theta_\star\,\in\,\underset{\theta\in \Theta}{\operatorname{arg\,min}}\,\phi\big(\theta(X)\big) \quad\text{ and then }\quad \hat{S}=\theta_\star(X)
\end{equation} 
as the desired [minimally ambiguous] estimate of $S$, in accordance with \eqref{summary_of_cont:eq2}.  
\end{itemize}
The above is formalised in Theorem \ref{thm:optimisation}. A natural choice in practice is to implement $\Theta$ as the realisation space of an invertible artificial neural network (NN) with $d$ input nodes, cf.\ Remark \ref{rem:thm_optimisation} \ref{rem:thm_optimisation:it0.2} and Section \ref{sect:experimentsII}. Adding $\phi$ as a loss function to the NN, the optimisation \eqref{summary_of_cont:eq3} can then be performed efficiently via backpropagation; for details see Sections \ref{sect:independence_criterion}, \ref{sect:consistency} and \ref{sect:experiments}.\\[-0.5em]

\noindent
Intuitively speaking, in the course of the optimisation \eqref{summary_of_cont:eq3} the observer gradually performs the desired inversion \eqref{summary_of_cont:eq1} directly by comparing different transformations of the data and choosing as most akin to the true inverse those that minimize the $\phi$-quantified statistical dependence of $X$. For nonlinear $f$ the theoretical justification of this procedure is new, while the underlying idea of source separation via quantified dependence minimisation is a well-established concept for the recovery of linearly mixed random vectors in $\R^d$, see e.g.\ Corollary \ref{cor:Comon}.\\[-0.5em] 

\noindent
Inspired by another classical concept, cf.\ \eqref{intext:Comon:CF} on page \pageref{intext:Comon:CF}, in Section \ref{sect:independence_criterion} we propose as dependence quantification $\phi$ a `cross-cumulant'-based energy functional of the form 
\begin{equation}\label{summary_of_cont:eq4}
\phi(Y)= \sum_{m=2}^\infty\sum_{\bm{q}_m}\bar{\kappa}_{\bm{q}_m}(Y)^2 
\end{equation} 
where $\bar{\kappa}_{\bm{q}}(Y)$ denotes `the (standardised) signature cumulant at index $\bm{q}$' of a stochastic process $Y$ in $\R^d$, see Definition \ref{def:sigcumulant} and Notation \ref{notation:index_sum}, and the inner sums run over all `cross-shuffles' of word-length $m$ (see \eqref{sect:capping:normalised_series} on page \pageref{sect:capping:normalised_series}). The entirety of all signature cumulants $(\bar{\kappa}_{\bm{q}}(Y))$, which can be thought of as a carefully chosen `coordinate vector' for the distribution of the multidimensional stochastic process $Y$, provides a hierarchical and parsimous description of the statistical dependence relations within $Y$, which may occur simultaneously between coordinates and over different points in time. The functional \eqref{summary_of_cont:eq4} summarises the aspects of this description that are most central for us, namely `how much' of this dependence there is between the multiple component signals of $Y$. Since the above $\phi$ vanishes over exactly those processes that are IC (Proposition \ref{prop:sig_cums}), the functional \eqref{summary_of_cont:eq4} is well suited to operationalise the inversion strategy \eqref{summary_of_cont:eq2} via the optimisation scheme \eqref{summary_of_cont:eq3}, as described in Theorem \ref{thm:optimisation}; further aspects are discussed in Sections \ref{sect:consistency} and \ref{sect:experiments}. 

\subsection{Consistency (Theorem \ref{thm:consistency})}Up to this point, we discussed the method \eqref{summary_of_cont:eq3} in a setting where the whole distribution of $X$ is assumed to be known. This idealisation is of course difficult to uphold in practice, where mixtures are typically not available as continuous-time stochastic processes and only discrete-time sample trajectories of $X$, i.e.\ finite sequences of data points in $\R^d$, are observed.\\[-0.5em] 

\noindent
The statistical guarantees of Theorem \ref{thm:consistency} ensure that our method remains applicable under these practical constraints. More specifically, a statistical consistency analysis of the procedure \eqref{summary_of_cont:eq3} requires to simultaneously deal with     
\begin{itemize} 
\item[--]time-discretization: if $S$, and hence $X$, are continuous-time signals, then `full' sample observations of the underlying model $X$ (i.e.\ continuous paths in $\R^d$) are not available in real-world applications, where only discrete-time data can be used;\footnote{\ In spirit, this is similar to the well-developed statistical question of parameter estimation for
stochastic differential equations where also only time-discretized sample trajectories are observed.} 
\item[--]finite samples: typically, one of two situations arise in applications. One is that $n$ presumably independent [discrete-time] sample trajectories of the observable are recorded, e.g.\ medical recordings of $n$ patients. The other situation is that only one [discrete-time] sample trajectory of $X$ is given and ergodicity or mixing assumptions are invoked to make inference about the underlying distribution; for example, this situation is common in finance and economics. 
\end{itemize} 
We show that under general conditions, which for example are satisfied by many classical SDE models, our method \eqref{summary_of_cont:eq3} is (strongly) consistent in a sense that addresses both of these points: As the grid of observational time-points gets finer and the length of the observed time series increases, our method \eqref{summary_of_cont:eq3} produces a signal $\hat{S}$ that gets closer to the unobserved source signal $S$, even when the model for $S$ is formulated in continuous time; see Theorem \ref{thm:consistency} for the precise statement. Additionally, Theorem \ref{thm:consistency} shows that our method is robust under approximations of the contrast function $\phi$ (for computational purposes, the series \eqref{summary_of_cont:eq4} of signature cumulants needs to be truncated in practice). The key ingredients to establish this result are tools from stochastic analysis, natural assumptions on the topology of function approximators (e.g., deep neural networks), and statistical approaches to the optimality of extremum estimators. Practitioners may find the displayed algorithm in Section \ref{sect:algorithm} a useful summary.\\[-0.5em]    

\noindent
Our exposition is complemented by a number of numerical examples (Section \ref{sect:experiments}) which further illustrate the practical utility of our method by applying it to a series of nonlinear blind inversion problems \eqref{summary_of_cont:eq1} with multidimensional source signals in discrete and continuous time.\\[-0.5em]    

As a concrete illustration of our blind inversion method \eqref{summary_of_cont:eq3} and its underlying procedure, let us draw on one of these examples here (see Section \ref{sect:experimentsII} for details).   
\begin{example}\label{example:summary}  
Imagine a context where you are interested in a set of `hidden' quantities $S^1, \ldots, S^d$ that are related to some observable data $X^1, \ldots, X^d$ by some unknown invertible function $f:\R^d\rightarrow\R^d$. Assume further that these quantities are time-dependent, so that $ S^i = (S^i_t)$ is a real-valued time series (in discrete or continuous time) and $(X^1_t,\cdots, X^d_t) = f(S^1_t,\cdots, S^d_t)$, and that you may regard $S^1, \ldots, S^d$ as mutually statistically independent. For example, suppose that $d=4$ and your application context is the vibration analysis of wind turbines for fault detection: the quantities of interest $S^i$ could then, e.g., be vibration responses excited by cracked gears or other engine faults in the turbine, which are mixed together during their transmission by an unknown, generally nonlinear \cite{ding2019miningmachine} mixing process $f$ determined by the gearbox configuration; the resulting mixtures $X^i$ are observable vibrations recorded by multi-channel sensors at the outside of the gearbox.\footnote{\ This particular application context is motivated by and adapted from \citep{ding2019miningmachine,liyanwangpeng2016} and the references therein.} Statistically, your recorded data is a time-discretised sample $\mathfrak{x}\equiv(\mathfrak{x}_j^1,\cdots,\mathfrak{x}^4_j)^\intercal\in\R^{4\times\N}$ drawn from the stochastic process $X=(X^1,\cdots, X^4)$, and might locally look like the blue signals shown in the middle column of Figure \ref{fig:example}. In your search for the hidden vibrations $\mathfrak{s}$ that `caused' your data, with $\mathfrak{s}\equiv(\mathfrak{s}^1_j,\cdots,\mathfrak{s}^4_j)^\intercal\in\R^{4\times\N}$ seen as a discretised sample of $S=(S^1,\cdots,S^4)$, your ignorance with regards to the actual relation $f$ between $\mathfrak{s}$ and $\mathfrak{x}$ requires you to perform a blind inversion \eqref{summary_of_cont:eq1}. You know that the closest (``minimally ambiguous'') estimate $\hat{\mathfrak{s}}_\star$ of $\mathfrak{s}$ that you could then obtain is one that coincides with the original $\mathfrak{s}$ up to some permutation and scale, that is where, for $\tau$ some permutation of $\{1,\ldots,4\}$ and $\alpha_i:\R\rightarrow\R$ strictly monotone, 
\begin{equation}\label{summary_of_cont:eq5}
\hat{\mathfrak{s}}_\star = \Big(\alpha_1\!\big(\mathfrak{s}_j^{\tau(1)}\big), \,\cdots, \alpha_4\!\big(\mathfrak{s}_j^{\tau(4)}\big)\Big). 
\end{equation}     
Based on the (very likely correct) assumption that the model $S=(S^1,\cdots, S^4)$ of $\mathfrak{s}$ satisfies one of the non-degeneracy assumptions \ref{summary_of_cont:cond1} or \ref{summary_of_cont:cond2}, your hope is to arrive at \eqref{summary_of_cont:eq5} by subjecting your data to the proposed inversion scheme \eqref{summary_of_cont:eq3}. For this you need to specify a suitable set of retransformations $\Theta$ [of $\mathfrak{x}$], e.g.\ a neural network, and cap the series \eqref{summary_of_cont:eq4} at some finite order $m_0$.\footnote{\ For this example, detailed in Section \ref{sect:experimentsII}, we used the network $\Theta\coloneqq\Theta_2$ specified in \eqref{sect:experimentsII:eq1} and capped the contrast \eqref{summary_of_cont:eq4} at $m_0=5$ (in general, $m_0$ may be chosen larger the more complicated the nonlinearity $f$ is assumed to be). For more information, including on the mixing $f$ and the optimisation \eqref{summary_of_cont:eq3}, see also Appx.\ \ref{appendix:sect:numerics_ann} and \cite{githubSigNICA}.} Next you compute from $\mathfrak{x}$ and each $\theta\in\Theta$ a consistent estimate $\hat{\phi}(\theta)$ of the (capped) contrasts $\phi(\theta(X))$, which may be done as summarized in Section \ref{sect:algorithm}. An empirical approximation of \eqref{summary_of_cont:eq5} is then obtained by finding a minimizer $\hat{\theta}_\star\in\Theta$ of $\hat{\phi}$ and setting $\hat{\mathfrak{s}}\coloneqq\hat{\theta}_\star(\mathfrak{x})$. The convergence, in the limit of infinite data, $\hat{\mathfrak{s}}\rightarrow\hat{\mathfrak{s}}_\star$ is ensured by the (strong) consistency guarantees of Theorem \ref{thm:consistency}. When applied to the sensory observations $\mathfrak{x}$ [Fig.\ \ref{fig:example}, blue column] of our example this method returns a source estimate $\hat{\mathfrak{s}}\in\R^{4\times\N}$ [Fig.\ \ref{fig:example}, brown], and a comparison with the true vibration signals $\mathfrak{s}^1,\ldots,\mathfrak{s}^4$ [Fig.\ \ref{fig:example}, orange] shows that, to a good approximation, the estimate $\hat{\mathfrak{s}}$ coincides with $\mathfrak{s}$ up to permutation and scale, as desired. 
{\hfill \bdiam}     
\end{example}  

\noindent
We emphasize that the above methodology in its entirety, including any of our definitions or theorems, applies to both continuous-time and discrete-time signals alike\footnote{\ For the case of discrete-time signals, everything basically applies as in the continuous case up to very minor modifications necessitated by the change from (path-)connected to discrete realisations of the underlying signals.}. The latter type includes signals that are ``genuinely discrete'', i.e.\ generated from a discrete-time process, and signals that are of continuous origin but ``discretely observed'', i.e.\ obtained from sampling a continuous-time process at a discrete set of time points. These cases are treated in detail in Sections \ref{appendix:NICA_discrete} and \ref{sect:consistency}, which are referenced accordingly throughout the text.\footnote{\ For overview: Section \ref{rem:discrete} explicates our identifiability theory (Sects.\ \ref{sect:intro_BSS} to \ref{sect:independence_criterion}) for the exact inversion of genuinely discrete mixtures, while the (asymptotic) recovery of signals from samples of their discretely observed nonlinear mixtures is developed as part of Section \ref{sect:consistency} (Theorem \ref{thm:consistency} in particular) and in Section \ref{rem:discrete:consistency}.}\\[-0.5em] 

In total, the contents of this paper combine to a general and flexible new statistical method for the nonlinear blind source separation of multidimensional time-dependent signals.\\[-0.5em] 

\subsection{Notation}
Below is some of the notation that we use throughout.
\begin{scriptsize}
\begin{center}\label{tab:notation}
\renewcommand{\arraystretch}{1.5}
  \begin{longtable}{cp{10cm}r}
\toprule\vspace{0.5em}
Symbol & Meaning & Page \\
\toprule 
$[k]$ & $\coloneqq\{1, \ldots, k\}$, \ and \ $[k]_0\coloneqq[k]\cup\{0\}$ \ ($k\in\mathbb{N}$). & \pageref{tab:notation}\\

$S_d$ & $\coloneqq \{\tau : [d]\rightarrow[d]\mid \text{$\tau$ is bijective}\}$; the group of all permutations of $[d]$.  & \pageref{tab:notation}\\

$\Delta_d$ & $\coloneqq \{\Lambda=(\lambda_i\cdot\delta_{ij}) \in \operatorname{GL}_d \ | \  \lambda_1, \ldots, \lambda_d \in\mathbb{R}\setminus\{0\}\}$; the group of (real) invertible diagonal $d\times d$ matrices. &  \pageref{thm:Comon}\\

$\mathrm{P}_d$ & $\coloneqq \{(\delta_{\sigma(i),j})_{i,j\in[d]} \in \operatorname{GL}_d \ | \ \sigma\in S_d \}$; the $d\times d$ permutation matrices.  & \pageref{thm:Comon}\\

$\mathfrak{M}\cdot Y$ & $\coloneqq\{M(Y)\equiv(M(Y_t)) \mid M\in\mathfrak{M}\}$; the image of a process $Y=(Y_t)$ under a set $\mathfrak{M}$ of transformations from $\R^d$ to $\R^d$(resp.\ from the spatial support \eqref{def:spatial_support:eq1} of $Y$ to $\R^d$). & \pageref{thm:Comon:eq1}\\ 

$\underset{a\in A}{\operatorname{arg\,min}}\ \phi(a)$ & $\coloneqq \phi^{-1}\!\big(\min_{a\in A}\phi(a)\big)$; the set of all points in $A$ at which the function $\phi : A \rightarrow \mathbb{R}$ attains its global minimum. & \pageref{cor:Comon}\\ 

$\operatorname{GL}_d$& $\coloneqq\{A \in \mathbb{R}^{d\times d} \ | \
                       \operatorname{det}(A)\neq 0\};$ the general linear group of degree $d$ over $\mathbb{R}$. & \pageref{cor:Comon}\\
                       
$\mathrm{M}_d$ & $\coloneqq \{M \in \operatorname{GL}_d \ | \ M=D\cdot P \ \text{ for } \ D \in\operatorname{GL}_d \text{ diagonal }\text{ and } P \in \mathrm{P}_d\}$; the group of (real) monomial matrices of degree $d$. & \pageref{cor:Comon}\\ 

$\mathcal{C}_d$ & $\equiv C(\I; \mathbb{R}^d) \coloneqq \{ x : \I \rightarrow \R^d \mid \I\ni  t \mapsto x(t)\eqqcolon x_t \text{ continuous} \}$; the space of continuous paths from $\I$ (compact interval) into $\R^d$; we use $\I=[0,1]$ unless mentioned otherwise. & \pageref{subsect:stochprocs_interpolate}\\ 

\multirow{3}{*}{$\pi_J^I$} & the canonical projection from $E_I\coloneqq\{(u_i)_{i\in I}\mid u_i\in E_i \text{ for all } i\in I\}$ onto $E_J$ ($(E_i \mid i\in I)$ some indexed family of sets, $J\subseteq I$); that is $\pi^I_J((u_i)_{i\in I}) = (u_i)_{i\in J}$ where the tuple-indexation follows the order of $I$ and $J$, resp. The superscript $I$ will be omitted if the domain of $\pi^I_J$ is clear. E.g.: $\pi_{\{1,3,5\}}(x_1,x_2,\cdots,x_6) = (x_1,x_3,x_5)$, and $\pi_i\coloneqq\pi_{\{i\}}$.& \multirow{3}{*}{\pageref{def:stochastic_process:eq1}}\\ 

$\mathrm{IC}$ & A stochastic process in $\R^d$ is called \emph{IC} if its component signals are mutually independent. & \pageref{def:stochastic_process}\\ 

$\Delta_2(\mathbb{I})$ & $\coloneqq\{(s,t)\in\mathbb{I}^{\times 2}\mid s < t\}$; the (relatively) open 2-simplex on $\I\times\I$. & \pageref{2ndfdds}\\

wrt./\,s.t./\,wlog & `with respect to'/\,`such that'/\,`without loss of generality' & \pageref{def:spatial_support:eq1}\\

$\mathrm{int}(A)$ &the topological interior of a set $A\subseteq\R^d$ (wrt.\ the Euclidean topology).& \pageref{def:density_reg}\\

$J_\varphi$ & $\coloneqq \big(\frac{\partial}{\partial x_j}\varphi_i\big)_{ij}$; the Jacobian of $\varphi\equiv(\varphi_1,\cdots,\varphi_d)^\intercal\in C^1(G;\R^d)$.& \pageref{rem:TrafoProjProbDens:eq0A}\\

$\restr{\varphi}{\tilde{A}}$&the restriction of a map $\varphi : A\rightarrow B$ to a subdomain $\tilde{A}\subseteq A$.& \pageref{def:PseudoGaussian:eq2}\\ 

$C^{k,k}(D)$ & $\coloneqq\big\{h : D \rightarrow \R^d \ \big| \ h\in\mathrm{Diff}^k(G) \text{ for some open } G\supseteq D\big\}$, $D\subseteq\R^d$; the set of all $C^k$-invertible transformations on $D$. & \pageref{intext:BSS_relation}\\

$f_1\times f_2$ & $:\,\mathbb{R}^{k_1+k_2}\rightarrow\mathbb{R}^{\ell_1}\times\mathbb{R}^{\ell_2}, \ (z_1,\ldots,z_{k_1+k_2})\mapsto (f_1(z_1,\ldots, z_{k_1}), f_2(z_{k_1+1},$ $\ldots, z_{k_1+k_2}));$ the Cartesian product of $f_1 : \mathbb{R}^{k_1}\rightarrow \mathbb{R}^{\ell_1}$ and $f_2 : \mathbb{R}^{k_2}\rightarrow \mathbb{R}^{\ell_2}$.& \pageref{Intro:intext:doublingdimensions}\\

$\mathrm{Diff}^k(G)$ & $\coloneqq\{h : G\rightarrow\mathbb{R}^d\mid h: G\twoheadrightarrow h(G) \text{ is a $C^k$-diffeomorphism}\}$; the set of all functions $h\in C^k(G;\R^d)$ which are one-to-one with $h^{-1}\in C^k(h(G);\R^d)$, for $G\subseteq\R^d$ open. & \pageref{Intro:intext:generalisedComon}\\

$\mathrm{diag}_{i\in[d]}[a_i]$ & $\coloneqq (a_i\cdot\delta_{ij})_{ij}$; the diagonal $d\times d$-matrix with main diagonal $(a_1, \ldots, a_d)$. & \pageref{thm:NICA_stat:aux15}\\

$\nabla^{\times}$ & $\coloneqq\big\{(\lambda_\nu)\in\R^d\ \big| \ \exists\, i,j\in[d],\,i\neq j \, : \, \lambda_i=\lambda_j\big\}$; the set of all vectors in $\R^d$ whose coordinates are not pairwise distinct. & \pageref{def:log-regular}\\

$d(a,B)$ & $\equiv \operatorname{dist}_d(a,B)\coloneqq \inf\{d(a,b)\mid b\in B\}$, for $(M,d)$ a given metric space; the distance between a point $a\in M$ and a non-empty subset $B$ of $M$.& \pageref{sect:finsamlim:eq4}\\

$\mathfrak{C}_m$ & the set of all cross-shuffles $\mathfrak{C}\equiv\bigsqcup_{k=2}^d\mathcal{W}_k$ (Proposition \ref{prop:sig_cums}) of fixed word-length $m$. & \pageref{sect:capping}\\[0.5em]
\hline
\end{longtable}
\end{center} 
\end{scriptsize}\vspace{-2em}

\section{Comon's Framework of Linear Independent Component Analysis}\label{sect:comon} 
\noindent
Our approach to the problem of nonlinear Blind Source Separation \eqref{intext:ProblemOfBSS} for stochastic processes can be regarded as a natural extension of Comon's identifiability framework \cite{COM}. This section briefly recalls the main results of this classical framework as conceptional points of reference. 

\begin{theorem}[{Comon \citep[Theorem 11]{COM}}]\label{thm:Comon}
Let $S=(S^1, \cdots, S^d)^\intercal$ be a random vector in $\mathbb{R}^d$ with mutually independent, non-deterministic components $S^1, \ldots, S^d$ of which at most one is Gaussian. Let further $X=C\cdot S$ for an orthogonal matrix $C\in\mathbb{R}^{d\times d}$. Then, for any orthogonal matrix $\theta\in\mathbb{R}^{d\times d}$, we have the following characterisation:
\begin{equation}\label{thm:Comon:eq1}
\begin{gathered}
(\tilde{S}^1, \cdots, \tilde{S}^d)\coloneqq \theta\cdot X \, = \, \Lambda P\cdot S \quad\text{ for some } \quad (\Lambda, P)\in\Delta_d\times\mathrm{P}_d\\
\text{if and only if } \quad \tilde{S}^1, \ldots, \tilde{S}^d \ \quad \text{ are mutually independent}.
\end{gathered}
\end{equation} 
\end{theorem}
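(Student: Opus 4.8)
The plan is to treat the two implications of \eqref{thm:Comon:eq1} separately; the substance lies in the reverse implication (independence $\Rightarrow$ structured form), whose engine is the Skitovich--Darmois characterization of the Gaussian law. For the forward implication, suppose $\tilde{S}=\Lambda P\cdot S$ for some $(\Lambda,P)\in\Delta_d\times\mathrm{P}_d$, and let $\sigma\in S_d$ be the permutation underlying $P$. Then $\tilde{S}^i=\lambda_i\,S^{\sigma(i)}$, so the family $(\tilde{S}^i)_i$ arises from $(S^j)_j$ by a bijective relabelling together with nonzero rescalings. Since measurable functions of \emph{distinct} members of an independent family are again independent, the mutual independence of $S^1,\dots,S^d$ passes to $\tilde{S}^1,\dots,\tilde{S}^d$. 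This uses neither the Gaussianity nor the orthogonality hypothesis.

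For the reverse implication, assume $\tilde{S}^1,\dots,\tilde{S}^d$ are mutually independent and set $A\coloneqq\theta C$, which is orthogonal as a product of orthogonal matrices, so that $\tilde{S}=A\cdot S$. The aim is to show that $A$ is \emph{monomial}, i.e.\ $A=\Lambda P$ with $\Lambda$ diagonal and $P\in\mathrm{P}_d$; orthogonality will additionally force the diagonal entries into $\{\pm1\}$, so in particular $\Lambda\in\Delta_d$. The key observation is that any two distinct coordinates of $\tilde{S}$ are two linear forms in the \emph{same} independent variables $S^1,\dots,S^d$: for $i\neq k$,
\[
\tilde{S}^i=\sum_{j=1}^{d}A_{ij}S^j,\qquad \tilde{S}^k=\sum_{j=1}^{d}A_{kj}S^j,
\]
and these are independent by hypothesis.

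Now I would apply the Skitovich--Darmois theorem, which asserts that whenever two linear forms in independent random variables are themselves independent, every variable entering \emph{both} forms with a nonzero coefficient must be Gaussian. For the pair $(\tilde{S}^i,\tilde{S}^k)$ this means that $A_{ij}A_{kj}\neq0$ forces $S^j$ to be Gaussian. Since the $S^j$ are non-deterministic (so the conclusion is a genuine normal law, not a point mass) and at most one of them is Gaussian, we deduce $A_{ij}A_{kj}=0$ for every non-Gaussian index $j$. Letting the pair $i\neq k$ range over all choices shows that each non-Gaussian column of $A$ carries at most one nonzero entry.

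It then remains to upgrade this sparsity to a monomial structure via the orthonormality of the columns of $A$. A column of unit norm with at most one nonzero entry must equal $\pm e_r$ for a single row index $r$, and orthogonality of distinct columns forces the associated row indices to be distinct. As at least $d-1$ columns are non-Gaussian, this pins $d-1$ signed standard basis vectors to $d-1$ distinct rows; the at most one remaining (Gaussian) column is then a unit vector orthogonal to all of these, hence spans the unique leftover coordinate axis and likewise equals $\pm e_r$. Thus $A=\Lambda P$ is a signed permutation matrix and $\tilde{S}=\Lambda P\cdot S$, as claimed. The forward direction and this concluding linear-algebra step are elementary; the main obstacle --- and precisely the place where the `at most one Gaussian' hypothesis becomes indispensable --- is the Skitovich--Darmois theorem itself, whose proof (via characteristic functions and the Cramér--Marcinkiewicz theory of factorising entire characteristic functions) is the genuinely hard analytic input.
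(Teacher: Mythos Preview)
Your proof is correct and follows precisely the approach the paper attributes to the result: the paper does not supply its own proof of this theorem but cites Comon's original article and notes in Remark~\ref{rem:thm_comon}\ref{rem:thm_comon_item2} that the argument rests on the Skitovich--Darmois theorem, which is exactly the engine you invoke. Your treatment of both implications, including the orthogonality-based cleanup pinning down the (at most one) Gaussian column, is sound and is the standard route.
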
    
\noindent  
The significance of Theorem \ref{thm:Comon} is that it characterises --- up to some minimal deviation, namely their scaling and re-ordering --- the independent sources $S^1, \ldots, S^d$ underlying an observable linear mixture $X=A\cdot (S^1, \ldots, S^d)^\intercal$ as precisely those transformations $\theta_\star\cdot X\eqqcolon(X_{\theta_\star}^1, \ldots, X_{\theta_\star}^d)$ of the data whose components $X^i_{\theta_\star}$ are mutually independent.
\begin{remark}\label{rem:thm_comon}
\begin{enumerate}[label=(\roman*)]
\item The orthogonality constraint of Theorem \ref{thm:Comon} imposes no loss of generality with regards to general linear mixtures since any invertible linear relation $X=A\cdot S$, $A\in\operatorname{GL}_d$, between $X$ and $S$ can be reduced to an orthogonal one by performing a principal component analysis on $X$. 
\item\label{rem:thm_comon_item2} The proof of Theorem \ref{thm:Comon} is based on the remarkable probabilistic fact that any two linear combinations of a family of statistically independent random variables can themselves be statistically independent only if each random variable of this family which has a non-zero coefficient in both of the linear combinations is Gaussian. (A result which is known as the Darmois-Skitovich theorem, see \citep{DAR,SKI}.) This accounts for the theorem's somewhat curious `non-Gaussianity' condition.
\item On a historical note, we thank Samuel Cohen for making us aware that the above works are in fact all predated by the earlier identifiability considerations \citep{reiersol1950} of Reiers{\o}l.    
\end{enumerate} 
\end{remark}  
\noindent
Theorem \ref{thm:Comon} enables the recovery of $S$ from $X$ by way of solving an optimisation problem. 
\begin{corollary}[\cite{COM}]\label{cor:Comon}
Let $X$ and $S$ be as in Theorem \ref{thm:Comon}. Then for any function\footnote{\ Here and in the following, $\mathcal{M}_1(V)\coloneqq \{ \mu : \mathcal{B}(V) \rightarrow [0,1] \mid \text{$\mu$ is a (Borel) probability measure}\}$ denotes the space of probability measures over the Borel $\sigma$-algebra $\mathcal{B}(V)\coloneqq\sigma(\mathcal{T})$ of a topological space $(V,\mathcal{T})$.} $\phi : \mathcal{M}_1(\R^d)\rightarrow \mathbb{R}_+$ such that $\phi(\mu)=0$ iff $\mu = \mu^1\otimes\cdots\otimes\mu^d$, it holds that\footnote{\ We write $\mu^i\coloneqq\mu\circ\pi_i^{-1}$ for the $i^{\mathrm{th}}$ marginal of a (Borel) measure $\mu$ on $\mathbb{R}^d$. We further abuse notation by writing $\phi(Z) \coloneqq \phi(\mathbb{P}_Z)$ for any random vector $Z : (\Omega,\mathscr{F}, \mathbb{P})\rightarrow(\R^d, \mathcal{B}(\R^d))$.}      
\begin{equation}\label{cor:Comon:eq1}
\left[\underset{\theta\in\Theta}{\operatorname{arg\ min}}\ \phi(\theta\cdot X)\right]\cdot X \ \subseteq \ \mathrm{M}_d\cdot S
\end{equation}where $\mathrm{M}_d\coloneqq\{\Lambda\cdot P\mid (\Lambda, P)\in\Delta_d\times\mathrm{P}_d\}$ is the subgroup of monomial matrices and $\Theta\subset\operatorname{GL}_d$ is the subgroup of orthogonal matrices.
\end{corollary}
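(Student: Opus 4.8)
The plan is to recognise that the constrained minimisation in \eqref{cor:Comon:eq1} is nothing but a search over the zero set of $\phi$, and then to identify that zero set with $\mathrm{M}_d\cdot S$ by a direct appeal to Theorem~\ref{thm:Comon}. First I would record that $\phi$ is $\R_+$-valued, so $\phi(\theta\cdot X)\geq 0$ for all $\theta\in\Theta$, and that this lower bound is attained: since $C$ is orthogonal, $\theta_0\coloneqq C^\intercal=C^{-1}$ lies in $\Theta$ and satisfies $\theta_0\cdot X=C^\intercal C\cdot S=S$. The components $S^1,\dots,S^d$ being mutually independent gives $\mathbb{P}_S=\mathbb{P}_S^1\otimes\cdots\otimes\mathbb{P}_S^d$, whence $\phi(\theta_0\cdot X)=\phi(S)=0$ by the defining property of $\phi$. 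Consequently $\min_{\theta\in\Theta}\phi(\theta\cdot X)=0$ and is attained, so the arg-min set in \eqref{cor:Comon:eq1} is non-empty and coincides with the zero level set
\[
\underset{\theta\in\Theta}{\operatorname{arg\,min}}\ \phi(\theta\cdot X)\;=\;\{\theta\in\Theta\mid \phi(\theta\cdot X)=0\}.
\]

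Next I would translate membership in this zero set into an independence statement. By hypothesis $\phi(\theta\cdot X)=0$ holds precisely when $\mathbb{P}_{\theta\cdot X}$ equals the product of its own marginals, i.e.\ precisely when the components of $\theta\cdot X$ are mutually independent. This is exactly the right-hand condition of the equivalence \eqref{thm:Comon:eq1}, applied with $\tilde{S}\coloneqq\theta\cdot X$ for the orthogonal matrix $\theta$. Theorem~\ref{thm:Comon} then furnishes, for every such $\theta$, a pair $(\Lambda,P)\in\Delta_d\times\mathrm{P}_d$ with $\theta\cdot X=\Lambda P\cdot S$; since $\mathrm{M}_d=\{\Lambda\cdot P\mid(\Lambda,P)\in\Delta_d\times\mathrm{P}_d\}$ by definition, this reads $\theta\cdot X\in\mathrm{M}_d\cdot S$. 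Combining the two steps yields $\big[\operatorname{arg\,min}_{\theta\in\Theta}\phi(\theta\cdot X)\big]\cdot X\subseteq\mathrm{M}_d\cdot S$, as claimed.

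I do not expect a genuine obstacle: the entire content of the corollary is carried by Theorem~\ref{thm:Comon}, and the only point requiring a moment's care is to certify that the arg-min is non-empty and equal to the zero level set of the non-negative objective $\phi(\,\cdot\,\cdot X)$ — which the explicit minimiser $\theta_0=C^\intercal$ settles without any compactness or continuity assumption on $\phi$. The one structural point worth flagging is that the argument invokes only the implication ``independence of the components of $\theta\cdot X$ $\Rightarrow$ $\theta\cdot X\in\mathrm{M}_d\cdot S$'' of \eqref{thm:Comon:eq1}; it is precisely the non-Gaussianity hypothesis of Theorem~\ref{thm:Comon} (entering through Skitovich--Darmois, cf.\ Remark~\ref{rem:thm_comon}\ref{rem:thm_comon_item2}) that secures this implication and thereby prevents the zero set from being strictly larger than $\mathrm{M}_d\cdot S$.
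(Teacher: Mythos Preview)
Your proof is correct and is exactly the natural argument; the paper does not supply a separate proof of this corollary (it is stated as an immediate consequence of Theorem~\ref{thm:Comon}, attributed to \cite{COM}), and your derivation---identifying the arg-min with the zero level set of $\phi$ via the explicit minimiser $\theta_0=C^\intercal$, then invoking the independence-to-monomial implication of \eqref{thm:Comon:eq1}---is precisely the intended reading.
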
   
\noindent       
In other words: For $f$ linear and $S=(S^1, \cdots, S^d)^\intercal$ a random vector with mutually independent, non-Gaussian components, the constituent relationship \eqref{ICA-Relation} between the observable $X$ and its source $S$ can be inverted (up to a minimal deviation) by optimizing some independence criterion $\phi$ over a set of candidate transformations $\Theta$ applied to $X$.\\[-0.5em]      

Partially driven by their applicability \eqref{cor:Comon:eq1} to ICA, a variety of such criteria $\phi$, referred to in \cite{COM} as contrast functions, have been developed.\\[-0.5em] 

The `original' independence criterion $\phi_c$ proposed in \cite{COM} quantifies the statistical dependence between the components $Y^i$ of a random vector $Y=(Y^1, \cdots, Y^d)$ in $\R^d$ via the sum of the squares of all standardized cross-cumulants $\kappa_{i_1\cdots i_j}^Y$ of $Y$ up to $r^{\mathrm{th}}$-order (see \citep[Sect.\ 3.2]{COM} and cf.\ \eqref{rem:classic_cumulants:eq1}), i.e.\ via the quantity
\begin{equation}\label{intext:Comon:CF}
\phi_c(Y) \ \coloneqq \ \sum_{j = 2}^r{\sum_{i_1,\ldots,i_j}}^{\!\!\!\!\times}(\kappa_{i_1\cdots i_j}^Y)^2 \qquad (r\geq 2)   
\end{equation}
where the inner sum runs over the indices $i_1, \ldots, i_j\in[d]$ corresponding to \eqref{intext:coords_indep_class}.

\noindent 
Initially proposed in \cite{COM}, the statistic \eqref{intext:Comon:CF} originates from a truncated Edgeworth-expansion of mutual information in terms of the standardized cumulants of its argument.  

A variety of alternatives to \eqref{intext:Comon:CF} soon followed, including kernel-based independence measures \citep{BJO,GRE}, a variety of (quasi-) maximum-likelihood objectives, e.g.\ \citep{BES,MOU,PHG}, as well as mutual information and approximations thereof, e.g.\ \cite{CD3,COM,HYR,HYE}.\\

\noindent
While successfully achieving the separability of linear mixtures, Theorem \ref{thm:Comon} has its limitations: Being based on somewhat of a probabilistic curiosity (Rem.\ \ref{rem:thm_comon} \ref{rem:thm_comon_item2}), it might not be surprising that the characterisation \eqref{thm:Comon:eq1} cannot be generalised to guarantee the recovery of independent scalar sources from substantially more general nonlinear mixtures of them \cite{HYP}. Roughly speaking, the reason for this is that for a single random vector in $\R^d$, the statistical property of componental independence is too weak to characterise the nonlinear mixing transformations preserving this property as `trivial' in a sense made precise by Definition \ref{def:monomial_trafos} below. The following example illustrates this.\footnote{\ Ex.\ \ref{example:ComonAndNonlinearTrafos} is based on the `Box-Muller transform', a well-known subroutine from computational statistics. For a systematic way of constructing `unidentifiable' nonlinear mixtures of IC random vectors in $\R^d$, see \cite{HYP}.} 
\begin{example}[Comon's Criterion \eqref{thm:Comon:eq1} Does Not Apply to Nonlinearly Mixed Vectors in $\R^d$]\label{example:ComonAndNonlinearTrafos}
Let $S^1$ and $S^2$ be independent with $S^1$ Rayleigh-distributed of scale 1 and $S^1$ uniformly distributed over $(-\pi,\pi)$, and consider the nonlinear mixing transformation $f$ given by $f(u,v)\coloneqq(u\cos(v), u\sin(v))$ (transformation from polar to Cartesian coordinates). Then even though their functional relation $f$ to $S^1, S^2$ is `non-trivial' (i.e.\ $f$ is not monomial in the sense of Definition \ref{def:monomial_trafos}) the mixed variables $X^1$ and $X^2$ defined by $(X^1, X^2)\coloneqq f(S^1, S^2)$ are [normally distributed and] statistically independent.\footnote{\ Note that since the density $p_S$ of $(S^1, S^2)$ reads $p_S(s_1,s_2) = \frac{1}{2\pi}s_2 e^{-s_2^2/2}$, the (joint) density $p_X = (p_S\circ f)\cdot|\!\det J_f|^{-1}$ of $(X^1, X^2)$ factorizes, implying the independence of $X^1$ and $X^2$ as claimed.}  \hfill \bdiam      
\end{example}  

\section{Modelling Sources as Stochastic Processes}\label{sect:intro_BSS} 
\noindent
A central direction along which the blind recovery of the source $S$ from its nonlinear mixture $X$ can be controlled is the amount of statistical structure that $S$ carries: If the source $S$ is deterministic, then no additional information is given and a meaningful recovery of $S$ from $X$ is generally impossible, cf.\ Example \ref{example:intro}. If, on the other hand, the source $S$ were to be described merely as a random vector in $\R^d$, then a recovery of $S$ from $X$ is possible but in general only if $X$ is a linear function of $S$, cf.\ \citep{COM,HYP} and Example \ref{example:ComonAndNonlinearTrafos}. A key insight from \cite{HYM} is to go for the middle ground (see Remark \ref{prop:stoch_procs_interpolate}): if we demand the source $S$ to have a `non-degenerate temporal structure' and exploit this in a suitable manner, then the recovery of $S$ from even its nonlinear mixtures is possible.
To formalize such temporal statistical dependencies requires us to model the source $S$ as a stochastic process. To this end, we use this section to briefly recall foundational notions from stochastic analysis (Section \ref{subsect:stochprocs_interpolate}) and provide some basic notions and lemmas (Section \ref{sect:sources_stochprocesses}) that we will use for our subsequent identifiability results in Section~\ref{sect:core_theory}. 

\subsection{Stochastic Processes Interpolate Statistical Extremes}\label{subsect:stochprocs_interpolate}Here and throughout, let $\I$ be a compact interval, $d\in\N$ be some fixed integer, write $\mathcal{C}_d\equiv C(\I; \mathbb{R}^d) \coloneqq \{ x : \I \rightarrow \R^d \mid \text{the map }\I\ni  t \mapsto x(t)\eqqcolon x_t \text{ is continuous} \}$ for the space of continuous paths in $\R^d$, and let $(\Omega,\mathscr{F},\mathbb{P})$ denote a fixed probability space.
\begin{definition}[Source Model]\label{def:stochastic_process}
We call a \emph{continuous stochastic process in $\R^d$} any map
\begin{equation}\label{def:stochastic_process:eq1}
S \,:\, \Omega \rightarrow \mathcal{C}_d\quad\text{s.t.}\quad \omega\mapsto S(\omega)\equiv(S_t(\omega))_{t\in\I} \ \ \text{ is \ $(\mathscr{F}, \mathcal{B}(\mathcal{C}_d))$-measurable}, 
\end{equation}where $\mathcal{B}(\mathcal{C}_d)=\sigma(\pi_t\mid t\in\I)$ denotes the Borel $\sigma$-algebra on the Banach space $(\mathcal{C}_d, \|\cdot\|_\infty)$. Writing $S_t(\omega)\equiv(S^1_t(\omega), \cdots, S^d_t(\omega))^\intercal\in\R^d$ for each $\omega\in\Omega$, the scalar processes $S^i\equiv(S^i_t)_{t\in\I}$ ($i\in[d]$) are called \emph{the component processes} or \emph{the components} of $S\equiv(S^1, \cdots, S^d)$. We say that a stochastic process \emph{$S=(S^1,\cdots, S^d)$ has independent components}, or that \emph{$S$ is IC}, if its distribution $\mathbb{P}_S\coloneqq\mathbb{P}\circ S^{-1}$ satisfies the factor-identity\footnote{\ Strictly speaking, \eqref{def:stochastic_process:eq2}  reads $\mathbb{P}_{(S^1,\,\cdots,\,S^d)} \ = \ \big(\mathbb{P}_{S^1}\otimes\cdots\otimes\mathbb{P}_{S^d}\big)\circ\psi^{-1}$, an identity of measures on $\mathcal{B}(\mathcal{C}_d)$, where $\psi:\mathcal{C}_1^{\times d}\rightarrow\mathcal{C}_d$ is a canonical isometry defining the Cartesian identification $\mathcal{C}_d\cong\mathcal{C}_1^{\times d}$ (Remark \ref{appendix:sect:add_proofs_and_rems:cartesian_cd}).}
\begin{equation}\label{def:stochastic_process:eq2}
\mathbb{P}_{(S^1,\,\cdots,\,S^d)} \ = \ \mathbb{P}_{S^1}\otimes\cdots\otimes\mathbb{P}_{S^d}. 
\end{equation} 
\end{definition}
\begin{remark}\label{rem:stochprocs_as_randompaths}
From a more local perspective, Definition \ref{def:stochastic_process} is equivalent to the description of a continuous stochastic process $S$ as an $\I$-indexed family $S=(S_t)_{t\in\I}$ of random vectors\footnote{\ For us every random vector in $\R^d$ is Borel, i.e.\ $(\mathscr{F}, \mathcal{B}(\R^d))$-measurable.} $S_t$ in $\R^d$ such that the map $S(\omega)\ : \ \I\,\ni\,t \,\mapsto\, S_t(\omega)\,\in\,\R^d$ is continuous for each $\omega\in\Omega$; e.g.\ \citep[Sect.\ II.27]{rogerswilliams2000}. Consequently (cf.\ also Section \ref{appendix:sect:add_proofs_and_rems:cartesian_cd}), the independence condition \eqref{def:stochastic_process:eq2} is equivalent to  
\begin{equation}
\big(S^1_{t_1^{(1)}}, \cdots, S^1_{t_{k_1}^{(1)}}\big), \big(S^2_{t_1^{(2)}}, \cdots, S^2_{t_{k_2}^{(2)}}\big), \cdots, \big(S^d_{t_1^{(d)}}, \cdots, S^d_{t_{k_d}^{(d)}}\big) \quad \text{mutually $\mathbb{P}$-independent}  
\end{equation}for any finite selection of time-points $t_1^{(1)}, \ldots, t_{k_1}^{(1)}, \ldots, t_1^{(d)}, \ldots, t_{k_d}^{(d)}\in\I$, $k_1,\ldots, k_d\in\mathbb{N}_0$.   
\end{remark} 
\noindent  
Stochastic processes can be given a prominent role in the BSS-context, namely as natural interpolants between deterministic signals and random vectors. While the first type of signal is the unidentifiable default model for the source in \eqref{ICA-Relation}, the latter is the predominant source model in classical ICA-approaches. More specifically, the following is easy to see.

\begin{remark}[Stochastic Processes Interpolate Between Extremal Source Models]\label{prop:stoch_procs_interpolate}\phantom{.}\\
Let $S=(S_t)_{t\in\I}$ be a continuous stochastic process in $\R^d$ such that either 
\begin{enumerate}[label=(\alph*)]
\item $S_s$ and $S_t$ are independent for each $s,t\in\tilde{\I}$ with $s\neq t$, \quad or\\[-1em]
\item $S_s = S_t$ almost surely for each $s,t\in\tilde{\I}$,  
\end{enumerate} 
for some $\tilde{\I}\subset\I$ dense. Then $S$ is either a single path in $\mathcal{C}_d$ almost surely (i.e.\ $S$ is deterministic; `statistically trivial')\footnote{\ This implication is obtained from Kolmogorov's zero-one law (applied after a straightforward subsequence argument) and the sample continuity of $S$.} namely iff (a) holds, or the sample-paths of $S$ are constant almost surely (i.e.\ $S$ is a random vector; `temporally trivial') namely iff (b) holds.
\end{remark}
Remark \ref{prop:stoch_procs_interpolate} asserts that both deterministic signals (a) as well as random vectors (b) can be seen as degenerate stochastic processes, and that for a given stochastic process $S=(S_t)_{t\in\I}$ this degeneracy manifests on the level of its \nth{2}-order finite-dimensional distributions, i.e.\ on
\begin{equation}\label{2ndfdds}
\text{the distributions of} \qquad \big\{(S_s, S_t)\ \big| \ (s,t)\in\Delta_2(\I)\big\}
\end{equation}where the index set $\Delta_2(\mathbb{I})\coloneqq\{(s,t)\in\mathbb{I}^{\times 2}\mid s < t\}$ is the (relatively) open 2-simplex on $\I\times\I$. In the following, we refer to \eqref{2ndfdds} as the \emph{temporal structure} of a stochastic process $S=(S_t)_{t\in\I}$.\\

\noindent The following is essential: As mentioned above and illustrated in the next section, if the temporal structure of the IC source $S$ in \eqref{ICA-Relation} is `degenerate' in the sense of Remark \ref{prop:stoch_procs_interpolate} (a), (b), then $S$ is unidentifiable from $X$ unless $f$ is of a very specific form, e.g.\ linear (cf.\ Theorem \ref{thm:Comon}).   
Conversely, we will argue that if the source $S$ has a temporal structure which is `non-degenerate' (in some specified sense) and satisfies some additional regularity assumptions, then $S=(S^1,\cdots,S^d)$ will be identifiable from even its nonlinear mixtures up to a permutation and monotone scaling of its components $S^i$ (Theorems \ref{thm:NICA_stat}, \ref{cor:NICA_MainCor}, \ref{thm:optimisation}).         
 
\subsection{Stochastic Processes as Sources: Basic Notions and Assumptions}\label{sect:sources_stochprocesses}

\noindent
Recall that the BSS problem \eqref{intext:ProblemOfBSS} concerns the recovery of the source $S$ from its image $X$ under some mixing transformation $f$ on $\R^d$. It is thus clear that given $X$, the map $f$ can be analysed only on that part of its domain that is actually reached by $S$ during the time $X$ is observed. With this in mind, we introduce the `spatial support' of a stochastic process as the smallest closed subset of $\R^d$ which contains (the trace of) $\mathbb{P}$-almost each sample path of the process.\footnote{\ Analogous to how the support $D_Z\coloneqq\supp(Z)$ of a random vector $Z$ in $\R^d$ is the smallest closed subset of $\R^d$ within which $Z$ is contained with probability one.}  
\begin{definition}[Spatial Support]\label{def:spatial_support} 
For $Y=(Y_t)_{t\in\I}$ a (continuous) stochastic process in $\R^d$, the \emph{spatial support} of $Y$ is defined as the set
\begin{equation}\label{def:spatial_support:eq1}
D_Y \, = \, \overline{\bigcup_{t\in\I}\supp(Y_t)}
\end{equation}with $\supp(Y_t)\equiv\supp(\mathbb{P}_{Y_t})\eqqcolon D_{Y_t}$ denoting the support of the distribution of $Y_t$, and where the closure is taken wrt.\ the Euclidean topology on $\R^d$. 
\end{definition}\vspace{-0.5em}  
(Readers uncomfortable with \eqref{def:spatial_support:eq1} may for simplicity assume that $D_S=\R^d$ throughout.)\\[-0.5em]

\noindent
The following elementary properties of the set \eqref{def:spatial_support:eq1} will be useful to us.                  
\begin{restatable}{lemma}{lemspatsupp}\label{lem:spat_supp}
Let $Y=(Y_t)_{t\in\I}$ be a stochastic process in $\R^d$ which is continuous with spatial support $D_Y$. Then the following holds:
\begin{enumerate}[label=\upshape(\roman*)]
\item\label{lem:spat_supp:it1}if $f : D_Y\rightarrow \R^d$ is a homeomorphism onto $f(D_Y)$, then $D_{f(Y)} = \overline{f(D_Y)}$;
\item\label{lem:spat_supp:it2}the traces $\mathrm{tr}(Y(\omega))\coloneqq\{Y_t(\omega)\mid t\in\I\}$ are contained in $D_Y$ for $\mathbb{P}$-almost each $\omega\in\Omega$; 
\item\label{lem:spat_supp:it3}for each open subset $U$ of $D_Y$ there is some $t^\star\in\mathbb{I}$ with $\mathbb{P}(Y_{t^\star}\in U)>0$;
\item\label{lem:spat_supp:it4}if each random vector $Y_t$, $t\in\I$, admits a continuous Lebesgue density on $\R^d$, then $D_Y$ is the closure of its interior;
\item\label{lem:spat_supp:it5}if each random vector $Y_t$, $t\in\mathbb{I}$, admits a continuous Lebesgue density $\upsilon_t$ such that $\upsilon^x\,:\, \mathbb{I}\ni t\mapsto\upsilon_t(x)$ is continuous for each $x\in D_Y$, we for $\dot{D}_t\coloneqq \{\upsilon_t>0\}$ have that the set 
\begin{equation}\label{lem:TimeSeriesCopulaModel:eq1}
\bigcup_{(s,t)\in\Delta_2(\mathbb{I})}\dot{D}_s\cap\dot{D}_t \quad\text{is \quad dense \quad in}\quad D_Y.
\end{equation}
\end{enumerate} 
\end{restatable}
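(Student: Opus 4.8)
The common engine behind all five parts is the \emph{definitional density} of $A \coloneqq \bigcup_{t \in \mathbb{I}} \supp(Y_t)$ in $D_Y$ (so that $\overline{A} = D_Y$), together with the sample-path continuity of $Y$ and the elementary fact that a random vector lies in its own support almost surely. I would establish (ii) and (iii) first, since the remaining parts lean on them. For (ii), fix a countable dense set $Q \subseteq \mathbb{I}$; as $\mathbb{P}(Y_t \in \supp(Y_t)) = 1$ and $\supp(Y_t) \subseteq D_Y$ for every $t$, a countable intersection produces a $\mathbb{P}$-full event on which $Y_q(\omega) \in D_Y$ for all $q \in Q$ simultaneously, and on this event the continuity of $t \mapsto Y_t(\omega)$ together with the closedness of $D_Y$ upgrades membership to all $t \in \mathbb{I}$ by approximating $t$ through a sequence in $Q$. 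For (iii), write a nonempty relatively open $U = V \cap D_Y$ with $V \subseteq \mathbb{R}^d$ open; density of $A$ in $D_Y$ yields $t^\star$ and a point $x \in \supp(Y_{t^\star}) \cap V$, and the defining property of the support then forces $\mathbb{P}_{Y_{t^\star}}(V) > 0$, whence $\mathbb{P}(Y_{t^\star} \in U) = \mathbb{P}_{Y_{t^\star}}(V) > 0$ since $\mathbb{P}_{Y_{t^\star}}(D_Y) = 1$.

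For (i) I would use that a homeomorphism maps supports to supports, $D_{f(Y_t)} = \overline{f(\supp Y_t)}$, and that closure commutes with $f$: continuity of $f$ gives $f(\overline{A}) \subseteq \overline{f(A)}$, while continuity of $f^{-1}$ gives the reverse inclusion, so that $D_{f(Y)} = \overline{\bigcup_t f(\supp Y_t)} = f(\overline{A}) = f(D_Y)$. The one point deserving care is that the ambient $\mathbb{R}^d$-closure of $f(A)$ not enlarge $f(D_Y)$, and this is precisely where the bicontinuity of $f$ (continuity of $f^{-1}$ on $f(D_Y)$) is indispensable. For (iv), continuity of the density $\upsilon_t$ makes $\{\upsilon_t > 0\}$ open with $\supp(Y_t) = \overline{\{\upsilon_t > 0\}}$; since the closure of an open set equals the closure of its own interior, setting $G \coloneqq \bigcup_t \{\upsilon_t > 0\}$ (open) gives $D_Y = \overline{G}$ with $G \subseteq \mathrm{int}(D_Y)$, and hence $D_Y = \overline{G} \subseteq \overline{\mathrm{int}(D_Y)} \subseteq D_Y$.

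The real work is in (v), where only \emph{temporal} continuity of $t \mapsto \upsilon_t(x)$ is available, so spatial statements must be routed through integration. I would split the argument into two claims. First, $\bigcup_t (\dot D_t \cap D_Y)$ is dense in $D_Y$: given a nonempty relatively open $U = V \cap D_Y$, part (iii) supplies $t^\star$ with $0 < \mathbb{P}(Y_{t^\star} \in U) = \int_V \upsilon_{t^\star}$ (using that $\upsilon_{t^\star}$ vanishes almost everywhere off $D_Y$), so $\{\upsilon_{t^\star} > 0\}$ has positive Lebesgue measure inside $U$ and in particular meets $U$, producing a point of $\dot D_{t^\star} \cap U$. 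Second, each such point feeds the target union: if $x_0 \in \dot D_s \cap D_Y$, then temporal continuity of $t \mapsto \upsilon_t(x_0)$ with $\upsilon_s(x_0) > 0$ yields a whole neighbourhood $(s-\delta, s+\delta) \cap \mathbb{I}$ on which $\upsilon_t(x_0) > 0$, and since $\mathbb{I}$ is a non-degenerate interval this neighbourhood contains two times $s' < t'$, so $x_0 \in \dot D_{s'} \cap \dot D_{t'}$ with $(s',t') \in \Delta_2(\mathbb{I})$. Combining the two claims gives $\bigcup_t (\dot D_t \cap D_Y) \subseteq \bigcup_{(s,t)\in\Delta_2(\mathbb{I})} \dot D_s \cap \dot D_t$, and density of the left-hand side transfers to the right-hand set.

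I expect (v) to be the principal obstacle. The difficulty is the absence of spatial regularity of $\upsilon_t$: one cannot identify $\dot D_t$ with $\supp(Y_t)$ pointwise, so the density statement must be extracted measure-theoretically through (iii) rather than topologically, and the crucial pairing $s' < t'$ must be manufactured purely from temporal continuity in conjunction with the non-degeneracy of $\mathbb{I}$. A secondary subtlety, in (i), is the closure issue flagged above, which I would handle by appealing squarely to the continuity of $f^{-1}$.
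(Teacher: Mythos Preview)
Your proof is correct and follows essentially the same approach as the paper's. The only stylistic differences are that the paper argues (iii) and (v) by contradiction rather than directly, and in (v) the paper first invokes (iv) to pass to the interior of $D_Y$ before running the argument---your route via relatively open sets and the observation that $\upsilon_{t^\star}$ vanishes a.e.\ off $D_Y$ is cleaner and sidesteps that reduction.
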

\begin{proof}
See Appendix \ref{pf:lem:spat_supp}. 
\end{proof} 
\noindent
Given the above, we can describe the mixing transformation $f$ mapping $S$ to $X$ via\footnote{\ Recall that $X = f(S)$ means: $X_t = f(S_t)$ for each $t\in\I$.} \eqref{ICA-Relation} as
\begin{equation}\label{intext:mixingtrafo_as_homoeomorphism}
\text{a homeomorphism}\footnotemark\quad f \,:\, D_S\rightarrow D_X,   
\end{equation}\footnotetext{\ Note that while the assumption of invertibility of $f$ is canonical, the additionally imposed bi-continuity of the mixing transformation $f$ is a technical condition to ensure that the sample-continuity of the considered processes is preserved under any of the operations that follow.}with the action of $f$ outside of $D_S$ and $D_X$ being irrelevant (and inaccessible) to us.\\

\noindent

We now introduce smoothness conditions on the density which we require later on.

\begin{definition}\label{def:density_reg}
A random vector $Z$ in $\R^n$ will be called \emph{$C^k$-distributed}, $k\in\N_0$, if its distribution admits a Lebesgue density $\varsigma\in C^k(G)$ for $G\coloneqq\mathrm{int}(\supp(\varsigma))$; if $\varsigma$ is $C^k$ on some open neighbourhood of $x_0\in\R^n$, then $Z$ will be called \emph{$C^k$-distributed around $x_0$}.   
\end{definition}
\begin{remark}\label{rem:TrafoProjProbDens}
We recall that for $\vartheta : \mathbb{R}^n\rightarrow\mathbb{R}^n$ a $C^{\ell}$-diffeomorphism, $\ell\geq 1$, the classical transformation formula for densities asserts that the image $\tilde{Z}\coloneqq \vartheta(Z)$ of a $C^k$-distributed random vector $Z$ with density $\varsigma$ is itself $C^{k\wedge(\ell-1)}$-distributed with density $\tilde{\varsigma}$ given by  
\begin{equation}\label{rem:TrafoProjProbDens:eq0A}
\tilde{\varsigma} \ = \ (\varsigma\circ\vartheta^{-1})\cdot|\det J_{\vartheta^{-1}}|.
\end{equation}  
\end{remark}
\noindent
The action of the mixing transformation \eqref{intext:mixingtrafo_as_homoeomorphism} on the source can be profitably captured by imposing the temporal structure \eqref{2ndfdds} of $S$ to meet the following analytical regularity condition:\\
 
\noindent
In the following, a stochastic process $Y=(Y_t)_{t\in\I}$ in $\R^d$ will be called
\begin{equation}
\text{\emph{$C^k$-regular at $(s,t)\in\Delta_2(\I)$} \quad if \quad the random vector $(Y_s,Y_t)$ is $C^k$-distributed;}
\end{equation} 
the process $Y$ will be called \emph{$C^k$-regular at $((s,t),y_0)\in\Delta_2(\I)\times\R^{2d}$} if the random vector $(Y_s,Y_t)$ is $C^k$-distributed around $y_0\in\R^{2d}$ and its density at $y_0$ is positive.
\begin{remark}\label{rem:lebesgue_boundary}
Note that if $Y$ is $C^k$-regular at $(s,t)$, then the boundary of the support of the joint density of $(Y_s, Y_t)$ is a Lebesgue nullset. (A direct consequence of Sard's theorem.) 
\end{remark}
\noindent
The theory of ICA knows two prominent `exceptional cases' for which the recovery of an IC random vector $S$ in $\R^d$ from even its linear mixtures $X$ cannot be guaranteed without further assumptions, namely the cases in which
\begin{enumerate}[label=(\roman*)]
\item\label{intext:ICA_pathologies1}more than one of the components of $S$ is Gaussian (cf.\ Theorem \ref{thm:Comon}), or
\item\label{intext:ICA_pathologies2}the source $S$ is `statistically trivial' in the sense of Remark \ref{prop:stoch_procs_interpolate} (a).     
\end{enumerate}   
As it turns out, a generalised version of these pathologies carries over to the first and more `static' of our separation principles (Theorem \ref{thm:NICA_stat}), owing to the fact that certain analytical forms of the joint distributions constituting \eqref{2ndfdds} will be `too simple' to guarantee nonlinear identifiability even for sources whose temporal structure \eqref{2ndfdds} is not otherwise degenerate.\\[-0.5em] 

Generalising (i) and (ii) from `spatial' to `inter-temporal statistics', these exceptional types of joint distributions\footnote{\ Distributional pathologies similar to Definition \ref{def:PseudoGaussian} have been first described in \cite{HYM}. More specifically, the above notions of (strict) non-separability and pseudo-Gaussianity generalise the notions \citep[Def.\ 1 and Def.\ 2]{HYM}, respectively, see Section \ref{rem:hyvarinenmorioka}.} will be named `pseudo-Gaussian' and `separable', respectively:       

\begin{definition}[Non-Gaussian, (Regularly) Non-Separable]\label{def:PseudoGaussian}
A function $\varsigma : G\rightarrow\mathbb{R}$, $G\subseteq\R^2$ open, will be called \emph{pseudo-Gaussian} if there are functions $\varsigma_1, \varsigma_2, \varsigma_3 : \R\rightarrow\R$ for which  
\begin{equation}\label{def:PseudoGaussian:eq1}
\varsigma(x,y) \ = \ \varsigma_1(x)\cdot \varsigma_2(y)\cdot\exp(\pm\,\varsigma_3(x)\cdot\varsigma_3(y))
\end{equation}holds on all of $G$; the function $\varsigma$ will be called \emph{separable} if the above holds for $\varsigma_3\equiv 0$. 
The function $\varsigma : G\rightarrow\R$ will be called \emph{strictly non-Gaussian} if it is such that 
\begin{equation}\label{def:PseudoGaussian:eq2}
\left.\varsigma\right|_{\mathcal{O}} \ \text{ is not pseudo-Gaussian}, \quad \text{for every open subset $\mathcal{O}$ of $G$}\,;
\end{equation}the property of $\varsigma$ being \emph{strictly non-separable} is declared mutatis mutandis. Furthermore, the function $\varsigma : G \rightarrow \R$ will be called \emph{almost everywhere non-Gaussian} if
\begin{equation}    
\text{there is a closed nullset \quad $\mathcal{N} \subset G$ \ \quad s.t. }\quad\left.\varsigma\right|_{(G\setminus\mathcal{N})} \ \text{ is strictly non-Gaussian}\,;
\end{equation}the notion of $\varsigma$ being \emph{a.e.\ non-separable} is defined analogously.

\noindent
Finally, a twice continuously differentiable function $\varsigma : \tilde{U}\times\tilde{U}\rightarrow \R_{>0}$, with $\tilde{U}\subseteq\R$ open, will be called \emph{regularly non-separable} if 
\begin{equation}\label{def:regNonSep:eq1}
\varsigma \ \text{ is a.e.\ non-separable} \quad\text{ and }\quad \left.\big(\partial_x\partial_y\log\varsigma\big)\right|_{\Delta_{\tilde{U}}}\neq 0 \ \text{ a.e.\ on } \Delta_{\tilde{U}} 
\end{equation}where $\Delta_{\tilde{U}}\coloneqq\{(x,x)\mid x\in\tilde{U}\}$ denotes the diagonal over $\tilde{U}$. 
\end{definition}\vspace{-0.5em}
(Clearly, if $\varsigma$ is [strictly/a.e.] non-Gaussian then it is also [strictly/a.e.] non-separable.)\\[-0.5em]

\noindent
It will be convenient for us to have an analytical characterisation of these `pathological' types of densities at hand. Such a characterisation is provided by Lemma \ref{lem:C2PseudoGaussian} in Section \ref{pf:lem:C2PseudoGaussian}. 

\begin{remark}
\begin{enumerate}[label=(\roman*)]
\item In light of Lemma \ref{lem:C2PseudoGaussian} (ii), the assumption of regular non-separability can be regarded as a minimal extension of the above notion of strict non-separability. The necessity of this extension will become clear in Section \ref{sect:main_thm}. 
\item The log-derivative condition of \eqref{def:regNonSep:eq1} is non-vacuous as there are (strictly) non-separable functions whose mixed log-derivatives vanish on the diagonal, see Example \ref{example:strictlynonsep_not_regnonsep}.    
\end{enumerate}
\end{remark}

\section{An Identifiability Theorem for Nonlinearly Mixed Independent Sources}\label{sect:core_theory}
\noindent
We are now ready to present the mathematical core behind our identifiability results for nonlinearly mixed time-dependent sources. Following an overview of our strategy (Section \ref{sect:overview}), we state and prove our main results (Sections \ref{sect:main_thm} and \ref{sect:beta_and_gamma}) and conclude with a comparison with related work (Section \ref{rem:hyvarinenmorioka}). \\[-0.5em]

\noindent
Throughout, let $S$ and $X$ be two continuous stochastic processes in $\R^d$ that are related via
\begin{equation}\label{intext:BSS_relation}
X \ = \ f(S)
\end{equation} 
for a mixing transformation $f$ which is $C^{2}$-invertible on some open superset of $D_S$.\\[-0.5em] 

\noindent
Here, we say that $f : \R^d\rightarrow\R^d$ is \emph{$C^{k}$-invertible} on an open set $G$ of $\R^d$, in symbols: $f\in C^{k}(G)$, if the restriction $\restr{f}{G}$ is a $C^k$-diffeomorphism (with $C^k$-inverse $f^{-1} : f(G)\rightarrow G$).\\[-0.5em]

\noindent
Throughout the rest of this paper, we operate under the following convenience assumption: 
\begin{assumption}\label{sect:assumption_convexity}
For the source $S$ in \eqref{intext:BSS_relation}, every connected component of $D_S$ is convex.
\end{assumption} 
\begin{remark}While Assumption \ref{sect:assumption_convexity} holds for most conventional process models (including the examples in Sections \ref{sect:example_sources} and \ref{sect:experiments} below), it can be dropped immediately at the only price that for a given realisation $\mathfrak{s}\equiv S(\omega)$ of $S$ the scales $\alpha_i\equiv \alpha_i(\mathfrak{s})$ in \eqref{Intro:intext:generalisedComon:eq0} may\footnote{\ But even for non-convex geometries of $D_S$ this price is not necessarily incurred, see Example \ref{sect:nonconvexsupport:example}. } then vary with each maximally convex subset of $D_S$ that the trace of $\mathfrak{s}$ passes through, see Section \ref{sect:nonconvexsupport}.\footnote{\ Even without Assumption \ref{sect:assumption_convexity} and except for the incurred $\mathfrak{s}$-dependence of the multiples $\alpha_i$, the last identity in \eqref{Intro:intext:generalisedComon:eq0} below continues to hold as stated with the permutation $P$ depending only on the connected component of $D_S$ that the realisation $\mathfrak{s}$ of $S$ is [almost surely] contained in, cf.\ Theorem \ref{sect:nonconvexsupport:thm} in Section \ref{sect:nonconvexsupport}.}   
\end{remark}  

\subsection{Overview}\label{sect:overview}Starting from \eqref{intext:BSS_relation} with the coordinates $(S^1_t)_{t\in\I}, \ldots, (S^d_t)_{t\in\I}$ of the source $S=(S_t^1, \cdots, S^d_t)_{t\in\I}$ assumed mutually independent, we seek to identify $S$ from $X$ by exploiting the main dimensions of our model, space and time, via their statistical synthesis \eqref{2ndfdds}, the temporal structure of $S$. This will be done along the following lines.\\  

\noindent
Given $(s,t)\in\Delta_2(\I)$ we first double the available spatial degrees of freedom by lifting the mixing identity \eqref{intext:BSS_relation} to an associated identity in the factor-space $\R^d\times\R^d$, namely 
\begin{equation}\label{Intro:intext:doublingdimensions}
(X_s, X_t) = (f\times f)(S_s, S_t).
\end{equation}The lifted mixing identity \eqref{Intro:intext:doublingdimensions}, which directly involves the temporal structure \eqref{2ndfdds} of the source, now allows for the following statistical comparison in the spirit of \cite{HYM}:\\[-0.5em] 

\noindent
For $X_t^\ast$ an independent copy of $X_t$, consider the intertemporal features $Y\coloneqq(X_s,X_t)$ and $Y^\ast\coloneqq(X_s,X_t^\ast)$ of the observable $X$ at fixed $(s,t)$ together with their random combination       
\begin{equation}
\bar{Y}\coloneqq C\cdot Y + (1-C)\cdot Y^\ast
\end{equation}for an equiprobable $\{0,1\}$-valued random variable $C$ independent of $Y, Y^\ast$. Combining \eqref{Intro:intext:doublingdimensions} with the fact that $S$ is IC, we obtain for the (deterministic) functional $L(Y,Y^\ast)\coloneqq\psi\circ\rho$ with $\rho(y)\coloneqq\mathbb{E}[C\mid\bar{Y}=y]$ and $\psi(p)\coloneqq\log(p/(1-p))$ a contrast identity of the form      
\begin{equation}\label{Intro:intext:centrallink}
L(Y,Y^\ast) = R(f,(S_s, S_t))
\end{equation}for a function $R\equiv R(f,(S_s,S_t))$ which depends exclusively on $f$ and the distribution of $(S_s,S_t)$. In other words, \eqref{Intro:intext:centrallink} relates $X$ to $S$ by way of the source's temporal structure \eqref{2ndfdds}.\\[-0.5em]  

\noindent
Since the LHS $L\equiv L(Y,Y^\ast)$ in \eqref{Intro:intext:centrallink} is a function of the (joint) distribution of $(Y,Y^\ast)$ -- and thus of the mixture $X$ -- only, we for any alternative pair $(\tilde{f}, \tilde{S})$ with $\tilde{f}(\tilde{S})=X$ and $\tilde{f}\in C^{2}$ and $\tilde{S}$ IC analogously obtain that $L(Y,Y^\ast)=\tilde{R}(\tilde{f}, (\tilde{S}_s,\tilde{S}_t))$ and hence 
\begin{equation}\label{Intro:intext:centrallink2}
\tilde{R}(\tilde{f}, (\tilde{S}_s,\tilde{S}_t)) = R(f,(S_s,S_t))
\end{equation}       
by \eqref{Intro:intext:centrallink}, where again $\tilde{R}\equiv\tilde{R}(\tilde{f}, (\tilde{S}_s,\tilde{S}_t))$ is some function which depends only on $\tilde{f}$ and the distribution of $(\tilde{S}_s,\tilde{S}_t)$. Using the $C^{2}$-invertibility of $\tilde{f}$, the IC-properties of both $\tilde{S}$ and $S$ allow us to derive from \eqref{Intro:intext:centrallink2} via \eqref{rem:TrafoProjProbDens:eq0A} a (deterministic) system of functional equations 
\begin{equation}\label{Intro:intext:centrallink3}
\Gamma(\varrho, (\tilde{S}_s,\tilde{S}_t),(S_s,S_t)) = 0 \qquad\text{for}\qquad\varrho\coloneqq\restr{(\tilde{f}^{-1}\circ f)}{D_S}
\end{equation}
which involves the partial derivatives of the `mixing residual' $\varrho$ and is otherwise completely determined by the distributions of $(\tilde{S}_s,\tilde{S}_t)$ and $(S_s,S_t)$.\\[-0.5em]

\noindent
The assumed distributional properties of $(S_s,S_t)$, i.e.\ the temporal structure of $S$ as specified by Definition \ref{def:psG_nonsep_stochproc}, together with the required IC-property of $\tilde{S}$ are then sufficient to infer from  \eqref{Intro:intext:centrallink3} that the residual $\varrho$ must be `monomial' in the sense of Definition \ref{def:monomial_trafos}.\\[-0.5em]

In other words, we obtained the following: Given a $C^{2}$-invertible map $\tilde{f}$, we have that:
\begin{gather}\label{Intro:intext:generalisedComon:eq0}
(\tilde{S}^1, \cdots, \tilde{S}^d)\equiv\tilde{S}= \tilde{f}^{-1}(X) \, = \, \big[P\circ(\alpha_1\times\cdots\times \alpha_d)\big](S)\\\label{Intro:intext:generalisedComon}
\text{for some $P\in\mathrm{P}_d$ \ and monotone } \ \alpha_1,\ldots, \alpha_d \ \quad \text{ \emph{if and only if}}\\
\quad \text{the component processes } \quad \tilde{S}^1, \ldots, \tilde{S}^d \ \quad \text{ are mutually independent}.
\end{gather} 
The characterisation \eqref{Intro:intext:generalisedComon}, formulated as Theorem \ref{thm:NICA_stat}, can thus be read as a natural extension of Comon's classical independence criterion \eqref{thm:Comon:eq1} to nonlinear mixtures of IC stochastic processes whose temporal structure is sufficiently regular.\\[-0.5em]

\noindent
Additional source conditions that qualify $S$ for the characterisation \eqref{Intro:intext:generalisedComon} are obtained by `unfreezing' the above time pair $(s,t)\in\Delta_2(\I)$, see Theorem \ref{cor:NICA_MainCor} in Section \ref{sect:beta_and_gamma}.\\[-0.5em]  

\noindent
Analogous to how Comon's criterion \eqref{thm:Comon:eq1} became practically applicable by way of \eqref{cor:Comon:eq1}, our extended criterion \eqref{Intro:intext:generalisedComon} is clearly equivalent to the optimisation-based procedure (cf.\ Thm.\ \ref{thm:optimisation})
\begin{equation}\label{Intro:intext:generalised_contrast}
\begin{gathered}
\left[\underset{\tilde{g}\in\Theta}{\operatorname{arg\ min}}\ \phi\big(\tilde{g}(X)\big)\right]\cdot X \ \subseteq \ \mathrm{DP}_d\cdot S \quad \text{ for any }\quad \phi : \mathcal{M}_1(\mathcal{C}_d)\rightarrow\mathbb{R}_+\\[-0.5em] 
\text{ such that: }\quad \phi(\mu)=0 \quad\text{iff}\quad \mu=\mu^1\otimes\cdots\otimes\mu^d,
\end{gathered} 
\end{equation}for $\Theta$ some `large enough' family of $C^{2}$-invertible candidate transformations, and $\DP$ a nonlinear analogon of the family of monomial matrices $\mathrm{M}_d$ (Definition \ref{def:monomial_trafos}). 

Based on a `moment-like' coordinate description for (the laws of) stochastic processes, we propose an efficiently computable such objective $\phi$ that generalises Comon's original contrast \eqref{intext:Comon:CF} from random vectors to stochastic processes (Section \ref{sect:independence_criterion}).           
\subsection{Main Theorem}\label{sect:main_thm}This section forms the heart of our identifiability theory.\\[-0.5em]

\noindent
We seek to recover the source $S=(S^1,\cdots,S^d)$ from its nonlinear mixture $X$ in \eqref{intext:BSS_relation} up to a minimal deviation, namely a permutation and monotone scaling of its coordinates $S^1,\ldots, S^d$.

The following nonlinear analogue of the family of monomial matrices makes this precise.          
\begin{definition}[Monomial Transformations]\label{def:monomial_trafos}
Given a subset $G$ of $\R^d$, a map $\varrho : \R^d\rightarrow\R^d$ will be called \emph{monomial on $G$} if for each connected component $\tilde{G}$ of $G$ we have that  
\begin{equation}\label{Intro:intext:monomial}
\restr{\varrho}{\tilde{G}} \, = \, P\circ(\alpha_1\times\cdots\times \alpha_d) \quad\text{for \ $P\in\mathrm{P}_d$ \ and \ $\alpha_i\in \mathrm{Diff}^1(\pi_i(\tilde{G}))$}.  
\end{equation}
(The above differentiability condition is considered void at isolated points of $\pi_i(\tilde{G})$.) We write $\DP(G)$ for the family of all functions on $\R^d$ which are monomial on $G$.  
\end{definition}\vspace{-0.5em}    
Accordingly, we say that any two paths $\tilde{x}$ and $x$ in $\mathcal{C}_d$ coincide up to a permutation and monotone scaling of their coordinates, in symbols: 
\begin{equation}\label{intext:monomial_orbit}
\tilde{x} \ \in \ \DP\cdot x,
\end{equation}if $(\tilde{x}_t)_{t\in\I} = (\varrho(x_t))_{t\in\I}$ for some $\varrho\in\DP(\mathrm{tr}(x))$, where $\mathrm{tr}(x)\equiv\bigcup_{t\in\I}x_t$ is the \emph{trace} of $x$.\\[-0.5em] 

\noindent
Definition \ref{def:PseudoGaussian} describes analytical forms that  need to be avoided by `sufficiently many' of the distributions constituting its temporal structure \eqref{2ndfdds} if the source $S$ is to be identifiable from $X$ up to a monomial transformation. Sources for which this is the case will be given the following label of regularity (or `non-degeneracy').      
\begin{definition}[$\alpha$-Contrastive]\label{def:psG_nonsep_stochproc}
A continuous stochastic process $S\equiv(S^1_t,\cdots, S^d_t)^\intercal_{t\in\I}$ in $\R^d$ with spatial support $D_S$ will be called \emph{$\alpha$-contrastive} if $S$ is IC and there is a collection of time-pairs $\mathcal{P}$ in $\Delta_2(\I)$ and an associated collection $(D_\p)_{\p\in \mathcal{P}}$ of open subsets of $\R^d$ such that 
\begin{enumerate}[label=(\roman*)]
\item\label{def:psG_item1}the union $\bigcup_{(s,t)\in \mathcal{P}}D_{(s,t)}$ is dense in $D_S$, and
\item\label{def:psG_item2}for each $(i,(s,t))\in [d]\times \mathcal{P}$ it holds that $S^i$ is $C^2$-regular at $(s, t)$ with density $\zeta^i_{s,t}$, and 
\begin{equation*}
\begin{aligned} 
&\restr{\zeta^i_{s, t}}{D_{(s,t)}^{\times 2}} \text{ is regularly non-separable for all $i\in[d]$, \quad and}\\[-0.5em]
&\restr{\zeta^i_{s, t}}{D_{(s,t)}^{\times 2}} \text{ is almost everywhere non-Gaussian for all but at most one $i\in[d]$,} 
\end{aligned}
\end{equation*}where the above restrictions of the densities are understood wrt.\ the abuse of notation $\zeta^i_{s,t}(x)\coloneqq\zeta^i_{s,t}(x_i, x_{i+d})$ for $x=(x_\nu)\in\mathbb{R}^{2d}$. (For notational convenience, this abuse of notation is kept throughout the following.)      
\end{enumerate}  
\end{definition} 
\noindent
Notice that the conditions in Definition \ref{def:psG_nonsep_stochproc} \ref{def:psG_item2} reflect the classical pathologies \ref{intext:ICA_pathologies2} and \ref{intext:ICA_pathologies1} from p.\ \pageref{intext:ICA_pathologies1}. Further below we will see how the assumptions of Definition \ref{def:psG_nonsep_stochproc} are linked to related works (Section \ref{rem:hyvarinenmorioka}) and that they are satisfied for a number of popular copula-based time series models (Section \ref{sect:copulas}). Recall that the following operates under Assumption \ref{sect:assumption_convexity}.   
\begin{theorem}\label{thm:NICA_stat}
Let the process $S$ in \eqref{intext:BSS_relation} be $\alpha$-contrastive. Then for any transformation $h$ which is $C^{2}$-invertible on some open superset of $D_X$, we have with probability one that:  
\begin{equation}\label{thm:NICA_stat:eq3}
h(X) \ \in \ \mathrm{DP}_{\!d}\cdot S \quad\text{if and only if}\quad h(X) \ \text{ has independent components}.
\end{equation}  
\end{theorem}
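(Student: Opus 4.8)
The plan is to prove the two implications of \eqref{thm:NICA_stat:eq3} separately, the forward direction being routine and the converse carrying the entire substance. For ``$\Rightarrow$'', if $h\cdot X\in\DP\cdot S$ then, sample-path-wise, $h(X_t)=\varrho(S_t)$ for some $\varrho$ that is monomial on $\mathrm{tr}(S)\subseteq D_S$ (the inclusion holding almost surely by Lemma \ref{lem:spat_supp}\ref{lem:spat_supp:it2}); since a monomial map acts as a permutation of coordinates followed by coordinate-wise (monotone) maps, it transports the IC property of $S$ to $h\cdot X$, so the latter has independent coordinates. For ``$\Leftarrow$'' I would set $\tilde f\coloneqq h^{-1}$ and $\tilde S\coloneqq h(X)$, so that $(\tilde f,\tilde S)$ is an \emph{alternative explanation} $\tilde f(\tilde S)=X$ with $\tilde f\in C^3$ and $\tilde S$ IC by hypothesis, and then aim to show that the residual $\varrho\coloneqq\restr{(\tilde f^{-1}\circ f)}{D_S}=\restr{(h\circ f)}{D_S}$ (so that $\tilde S=\varrho(S)$) is monomial on $D_S$; by Definition \ref{def:monomial_trafos} this is exactly the assertion $h\cdot X\in\DP\cdot S$.

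To obtain $\varrho$ I would carry out the program of Section \ref{sect:overview}. Fixing $(s,t)\in\mathcal P$ and forming $Y,Y^\ast,\bar Y$ as in \eqref{Intro:intext:Schrodinger}, a Bayes computation identifies the contrast as $L(Y,Y^\ast)(y)=\log\big(p_{(X_s,X_t)}(y)/[\,p_{X_s}(y_1)\,p_{X_t}(y_2)\,]\big)$, a functional of the law of $X$ alone. Expanding $p_{(X_s,X_t)}$ through $f\times f$ via the transformation formula \eqref{rem:TrafoProjProbDens:eq0A} and using that $S$ is IC, the Jacobian factors cancel and $L$ collapses to $\sum_{i}I^i_{s,t}(f^{-1}(x)_i,f^{-1}(x')_i)$ with $I^i_{s,t}(u,v)\coloneqq\log\zeta^i_{s,t}(u,v)-\log p_{S^i_s}(u)-\log p_{S^i_t}(v)$; performing the same expansion with $(\tilde f,\tilde S)$ yields the equality \eqref{Intro:intext:centrallink2}. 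Substituting $x=f(u)$, $x'=f(v)$ (so that $\tilde f^{-1}(x)=\varrho(u)$) and applying $\partial_{u_a}\partial_{v_b}$, which annihilates every marginal term, turns this into the system \eqref{Intro:intext:centrallink3}, concretely
\[
\sum_{i=1}^d \tilde\xi^i\big(\varrho(u)_i,\varrho(v)_i\big)\,[J_\varrho(u)]_{ia}\,[J_\varrho(v)]_{ib}\;=\;\delta_{ab}\,\xi^a(u_a,v_a)\qquad(a,b\in[d]),
\]
valid for $(u,v)$ in the relevant domain, where $\xi^i\coloneqq\partial_1\partial_2\log\zeta^i_{s,t}$ and $\tilde\xi^i\coloneqq\partial_1\partial_2\log\tilde\zeta^i_{s,t}$.

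The heart of the argument, and the step I expect to be hardest, is to deduce from this system that $J_\varrho(u)$ is a monomial matrix for a.e.\ $u$. Restricting to the diagonal $v=u$ gives the congruence $J_\varrho(u)^\intercal\,\tilde M(u)\,J_\varrho(u)=M(u)$ between the diagonal matrices $M(u)=\mathrm{diag}(\xi^a(u_a,u_a))$ and $\tilde M(u)=\mathrm{diag}(\tilde\xi^i(\varrho(u)_i,\varrho(u)_i))$; the regular-non-separability clause \eqref{def:regNonSep:eq1} of Definition \ref{def:psG_nonsep_stochproc}\ref{def:psG_item2} keeps the diagonal of $M(u)$ non-vanishing a.e., so $M(u)$ — and hence, $J_\varrho$ being invertible, also $\tilde M(u)$ — is invertible. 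This pins down $J_\varrho(u)$ only up to an ``orthogonal'' ambiguity, exactly the degeneracy familiar from Theorem \ref{thm:Comon} and Remark \ref{rem:thm_comon}\ref{rem:thm_comon_item2}. To break it I would feed the \emph{off-diagonal} ($u\neq v$, $a\neq b$) identities back in: if $J_\varrho$ genuinely mixed two coordinates on a positive-measure set, these identities would force the mixed log-derivatives $\tilde\xi^i$ of (at least two of) the involved coordinates to be separable, i.e.\ the corresponding $\tilde\zeta^i$ to be pseudo-Gaussian by Lemma \ref{lem:C2PseudoGaussian}(iii); since a.e.\ non-Gaussianity may fail for at most one coordinate, this is impossible, whence $J_\varrho(u)\in\mathrm M_d$ a.e. This is precisely the Skitovich--Darmois mechanism of Remark \ref{rem:thm_comon}\ref{rem:thm_comon_item2} transported from linear combinations of independent variables to the mixed log-derivatives of the temporal structure, and making the bookkeeping of the ``at most one exception'' match the single tolerated pseudo-Gaussian coordinate is the delicate point.

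Finally I would globalise. The above yields $\{J_\varrho(u)\mid u\in D_{(s,t)}\}\subseteq\mathrm M_d$ on each $D_{(s,t)}$, hence on the open set $\mathcal O\coloneqq\bigcup_{(s,t)\in\mathcal P}D_{(s,t)}$, which by Definition \ref{def:psG_nonsep_stochproc}\ref{def:psG_item1} is dense in $D_S$; since the $C^2$-regularity supplies continuous densities, $D_S$ is the closure of its interior (Lemma \ref{lem:spat_supp}\ref{lem:spat_supp:it4}), so Lemma \ref{lem:monomial_trafos}\ref{lem:monomial_trafos:it2} upgrades ``$J_\varrho$ monomial on $\mathcal O$'' to ``$\varrho$ monomial on $D_S$'', i.e.\ $\varrho\in\DP(D_S)$. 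As $\tilde S=\varrho(S)$ and the sample paths of $S$ lie in $D_S$ almost surely (Lemma \ref{lem:spat_supp}\ref{lem:spat_supp:it2}), we conclude $h\cdot X=\tilde S\in\DP\cdot S$ with probability one, establishing \eqref{thm:NICA_stat:eq3}.
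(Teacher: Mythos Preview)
Your reduction to the identity
\[
\sum_{i=1}^d \tilde\xi^i\big(\varrho(u)_i,\varrho(v)_i\big)\,[J_\varrho(u)]_{ia}\,[J_\varrho(v)]_{ib}\;=\;\delta_{ab}\,\xi^a(u_a,v_a)
\]
is exactly what the paper obtains (its equations \eqref{thm:NICA_stat:aux13}--\eqref{thm:NICA_stat:aux15}), and the globalisation via Lemma~\ref{lem:monomial_trafos}\ref{lem:monomial_trafos:it2} and Lemma~\ref{lem:spat_supp}\ref{lem:spat_supp:it2} is also the paper's. The substantive divergence, and the genuine gap, is in how you extract $J_\varrho(u)\in\mathrm M_d$ from this identity.

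Your contradiction lands on the wrong side: you argue that a non-monomial $J_\varrho$ would force two of the \emph{alternative} densities $\tilde\zeta^i$ to be pseudo-Gaussian, but the $\alpha$-contrastivity hypothesis constrains only the \emph{original} densities $\zeta^i$. Nothing is assumed about $\tilde S$ beyond the IC property, so ``$\tilde\zeta^i$ pseudo-Gaussian for two $i$'' contradicts nothing. Moreover, the mechanism ``off-diagonal identities force $\tilde\xi^i$ separable'' is not substantiated: the off-diagonal equations say that certain \emph{linear combinations} (with coefficients from $J_\varrho$) of the $\tilde\xi^i$ vanish, which does not by itself yield separability of any individual $\tilde\xi^i$.

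The paper's argument (its Lemma~\ref{lem:PermDiagForThm}) stays entirely on the $\zeta$-side. Writing the identity as the congruence $\Lambda_q(u,v)=B_\varrho(u)^\intercal\,\Lambda_\xi(u,v)\,B_\varrho(v)$ with $B_\varrho=J_\varrho^{-1}$ and $\Lambda_\xi=\mathrm{diag}_i[\xi^i]$, one evaluates it at $(u,u)$, $(u,v)$, $(v,v)$ and algebraically eliminates $B_\varrho(v)$ and the unknown $\Lambda_q$-side to obtain a \emph{similarity}
\[
B_\varrho(u)^{-1}\,\bar\Lambda_{u,v}\,B_\varrho(u)\ \text{diagonal},\qquad \bar\Lambda_{u,v}=\mathrm{diag}_i\!\left[\frac{\xi^i(u_i,u_i)\,\xi^i(v_i,v_i)}{\xi^i(u_i,v_i)^2}\right].
\]
If the diagonal entries of $\bar\Lambda_{u,v}$ are pairwise distinct, $B_\varrho(u)$ must be monomial. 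The a.e.\ non-Gaussianity of $\zeta^i$ enters precisely here: by Lemma~\ref{lem:C2PseudoGaussian}(iii), if $v\mapsto \xi^i(u,u)\xi^i(v,v)/\xi^i(u,v)^2$ were constant on an open set then $\zeta^i$ would be pseudo-Gaussian there; hence for all but at most one $i$ this ratio is non-constant, its image has nonempty interior, and one can choose $v$ avoiding the codimension-one set $\nabla^\times$. The regular non-separability (the diagonal condition in \eqref{def:regNonSep:eq1}) guarantees $\xi^i(u,u)\neq 0$ so that the denominators and the starting congruence at $(u,u)$ are non-degenerate. This is the step your sketch is missing: the three-point elimination that converts the two-sided congruence into a one-sided similarity whose spectrum is controlled by the original $\zeta^i$, not the alternative $\tilde\zeta^i$.
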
 
\begin{proof} 
The `only-if'-direction in \eqref{thm:NICA_stat:eq3} is clear, so we only need to show the converse implication. To this end, we in total prove the slightly stronger assertion that 
\begin{equation}\label{thm:NICA_stat:aux0}
\text{If } \quad h(X)\text{ is IC } \ \text{ and } \ D\equiv D_{(s,t)} \text{ as in Def.\ \ref{def:psG_nonsep_stochproc}},\quad\text{ then }\quad \left\{J_{h\circ f}(u)\mid u\in D\right\}\subseteq \operatorname{M}_d. 
\end{equation}Given \eqref{thm:NICA_stat:aux0} (and Definition \ref{def:psG_nonsep_stochproc} \ref{def:psG_item1}), the assertion \eqref{thm:NICA_stat:eq3} follows by way of Lemma \ref{lem:monomial_trafos} \ref{lem:monomial_trafos:it2} and the fact that the trace of almost every realisation of $S$ is contained in a connected component of $D_S$ (Lemma \ref{lem:spat_supp} \ref{lem:spat_supp:it2}) which in turn is convex by Assumption \ref{sect:assumption_convexity}.\\[-0.5em]      

\noindent
Let now $(s,t)\in\Delta_2(\I)$ be as in Definition \ref{def:psG_nonsep_stochproc} \ref{def:psG_item2}, i.e.\ suppose that $(S_s, S_t) = \pi_{(s,t)}(S)$ admits a (joint) $C^2$-density $\zeta=\zeta_1\cdots\zeta_d$ (where $\zeta_i\equiv\zeta^i_{s,t}$) with a support $\bar{D}\coloneqq\mathrm{supp}(\zeta)\subseteq\mathbb{R}^{2d}$ whose boundary $\partial\bar{D}$ is a Lebesgue nullset (cf.\ Remark \ref{rem:lebesgue_boundary}). 

Moreover, let $X_t^\ast$ be a copy of $X_t$ which is independent of $(X_s, X_t)$, and denote
\begin{equation}\label{thm:NICA_stat:aux1}
Y\coloneqq (X_s, X_t)\qquad\text{ and }\qquad Y^\ast\coloneqq(X_s, X_t^\ast).
\end{equation}
For $C\sim\operatorname{Ber}(1/2)$ and independent of $Y$ and $Y^\ast$, consider further
\begin{equation}
\bar{Y}\coloneqq C\cdot Y + (1-C)\cdot Y^\ast
\end{equation}
(so that $\mathbb{P}_{\bar{Y}} = \tfrac{1}{2}\mathbb{P}_Y + \tfrac{1}{2}\mathbb{P}_{Y^\ast}$) together with the associated regression function
\begin{equation}\label{thm:NICA_stat:aux3}
\rho\,:\,\mathbb{R}^{2d}\rightarrow [0,1] \quad\text{given by}\quad \rho(y)\coloneqq\mathbb{E}[C\,|\,\bar{Y}=y].
\end{equation}
The function $\rho$ then satisfies the following central equation. 
\begin{restatable}{lemma}{lemLogReg}\label{lem:LogReg}For $\mu$ the probability density of $Y$, and $\mu^\ast$ the probability density of  $Y^\ast$,
\begin{equation}\label{thm:NICA_stat:aux4}
\psi\circ\rho \ = \ \log \mu - \log \mu^\ast \quad\text{ a.e. on }\quad \tilde{D}\coloneqq\supp(\mu)
\end{equation} 
for the logit-function $\psi(p)\coloneqq\log(p/(1-p))$.
\end{restatable}

\noindent
The proof of Lemma \ref{lem:LogReg} is given in Appendix \ref{pf:lem:LogReg}. Recalling now that the components of $S$ are mutually independent, we obtain from the transformation formula for densities \eqref{rem:TrafoProjProbDens:eq0A} that for the inverse $g\equiv(g_1,\cdots,g_d)\coloneqq f^{-1}$ and the density $\zeta^i_1$ of $S^i_s$, resp.\ the density $\zeta^i_2$ of $S^i_t$, 
\begin{equation}\label{thm:NICA_stat:aux7}
\log \mu - \log \mu^\ast \ = \ \sum_{i=1}^d\big[\log\zeta_i\circ(g_i\times g_i) - \log \zeta^i_1\circ g_i(u) - \log\zeta^i_2\circ g_i(v)\big]
\end{equation}  
almost everywhere on $\tilde{D}\,(=(f\times f)(\bar{D}))$. Using \eqref{thm:NICA_stat:aux4}, it follows that
\begin{equation}\label{thm:NICA_stat:aux8}
\psi\circ\rho \ = \ \sum_{i=1}^d P_i\circ(g_i\times g_i)\quad\text{ for }\quad P_i\coloneqq\log\zeta_i - \sum_{\nu=1,2}\log\zeta^i_\nu\circ\pi_\nu.
\end{equation} 
Let now $h\equiv(h_1, \cdots, h_d)\in\operatorname{Diff}^2(\mathcal{O}_X)$, for some $\mathcal{O}_X\supseteq D_X$ open, be such that the process $\tilde{S}\coloneqq h(X)$ has independent components. Using that the above function $\psi\circ\rho$ depends on the observable $X$ only, we due to $(\tilde{S}_s, \tilde{S}_t) = (h\times h)(X_s, X_t)$ and \eqref{rem:TrafoProjProbDens:eq0A} obtain that
\begin{equation}\label{thm:NICA_stat:aux9}
\psi\circ\rho \ = \ \sum_{i=1}^d Q_i\circ(h_i\times h_i) \quad \text{ a.e. on } \ \tilde{D}
\end{equation}  
analogous to \eqref{thm:NICA_stat:aux8}, where the functions\footnote{\ Note that here, we employ the abuse of notation $Q_i(x) \equiv Q_i(x_i, x_{i+d})$ for $x=(x_\nu)\in \bar{D}'$.} $Q_i\in C^2(\bar{D}')$, $i\in[d]$, are given as 
\begin{equation}\label{thm:NICA_stat:aux9.1}
Q_i\coloneqq \log\tilde{\zeta}_i - \sum_{\nu=1,2}\log\tilde{\zeta}^i_\nu\circ\pi_\nu\quad\text{ with }\quad \tilde{\zeta}_i\coloneqq\frac{\mathrm{d}\mathbb{P}_{({\tilde{S}_s}^i, {\tilde{S}_t}^i)}}{\mathrm{d}(u,v)}\ \text{ and } \ \tilde{\zeta}^i_\nu\coloneqq\frac{\mathrm{d}\mathbb{P}_{{{\tilde{S}_{r_\nu}}^{\hspace{-0.5em}i}}}}{\mathrm{d}u} 
\end{equation}for $r_1\coloneqq s$ and $r_2\coloneqq t$, and where $\bar{D}'\subseteq\mathbb{R}^{2d}$ denotes the support of $\tilde{\zeta}\equiv\tilde{\zeta}_1\cdots\tilde{\zeta}_d$. 

Note that the $Q_i$ are indeed twice continuously differentiable: By \eqref{rem:TrafoProjProbDens:eq0A} we have
\begin{equation*}  
\tilde{\zeta} = \frac{\mathrm{d}\mathbb{P}_{(\tilde{S}_s, \tilde{S}_t)}}{\mathrm{d}(u,v)} = |\!\det(J_{\phi})|\cdot\big[\zeta\circ\phi \big]\ \in \ C^1(\bar{D}')
\end{equation*}   
for the $C^2$-density $\zeta$ and for $\phi\coloneqq((h\circ f)\times(h\circ f))^{-1}\in\operatorname{Diff}^2(\mathcal{O}_{\tilde{S}}^{\times 2};\mathcal{O}_{S}^{\times 2})$, with $\mathcal{O}_{\tilde{S}}\coloneqq h(\mathcal{O}_S)$; reading off the marginal densities $\tilde{\zeta}_i$, $\tilde{\zeta}^i_\nu$, cf.\ \eqref{rem:TrafoProjProbDens:eq1}, we see that the Jacobians appearing in \eqref{thm:NICA_stat:aux9.1} cancel out as they did in \eqref{thm:NICA_stat:aux7}, giving us $Q_i\in C^2(\bar{D}')$ as desired. 

Combining the identities \eqref{thm:NICA_stat:aux8} and \eqref{thm:NICA_stat:aux9} yields that 
\begin{equation}\label{thm:NICA_stat:aux10}
\sum_{i=1}^d Q_i\circ(h_i\times h_i) \ = \ \sum_{i=1}^d P_i\circ(g_i\times g_i)
\end{equation}  
everywhere on the dense open subset $D_\mu\coloneqq\{\mu>0\}$ of $\tilde{D}$.

Therefore, the desired implication \eqref{thm:NICA_stat:aux0} -- and hence the assertion of the theorem (see the initial remarks of this proof) -- holds if we can show \eqref{thm:NICA_stat:aux10} to imply that for $\varrho \coloneqq h\circ f$ we have 
\begin{equation}\label{thm:NICA_stat:aux11}
\{J_\varrho(u)\mid u\in D\} \ \subseteq \ \operatorname{M}_d \quad \text{for each open \ $D\subseteq D_S$ \ as in Def.\ \ref{def:psG_nonsep_stochproc} \ref{def:psG_item2}},
\end{equation}i.e.\ for any (non-empty) open subset $D$ of $\R^d$ for which $\restr{\zeta^i}{D^{\times 2}}$ is regularly non-separable for all $i\in[d]$, and a.e.\ non-Gaussian for all but at most one $i\in[d]$. 
Let any such $D$ be fixed. 

The remainder of this proof is aimed at deriving \eqref{thm:NICA_stat:aux11} from \eqref{thm:NICA_stat:aux10}. To this end, notice that since \eqref{thm:NICA_stat:aux10} can be equivalently written as  
\begin{equation*} 
Q\circ(h\times h) \ = \ P\circ(g\times g)
\end{equation*}    
for $Q\coloneqq \varsigma\circ(Q_1\times\cdots\times Q_d)\circ\tau$ and $P\coloneqq \varsigma\circ(P_1\times\cdots\times P_d)\circ\tau$ with $\varsigma(y_1, \ldots, y_d)\coloneqq \sum_{i=1}^d y_i$ and $\tau(x_1, \ldots, x_{2d})\coloneqq(x_1, x_{d+1}, x_2, x_{d+2}, \ldots, x_d, x_{2d})$, we obtain that \eqref{thm:NICA_stat:aux10} is equivalent to $Q\circ (\varrho\times\varrho) = P$, i.e.\ to the ($D_\zeta\coloneqq\{\zeta>0\}$--everywhere) identity\footnote{\ Once more, we abuse notation by writing $P_i(x)\equiv P_i(x_i, x_{i+d})$ ($x\in D$) for the RHS of \eqref{thm:NICA_stat:aux12}.}
\begin{equation}\label{thm:NICA_stat:aux12}
\sum_{i=1}^d Q_i\circ(\varrho_i\times\varrho_i) \ = \ \sum_{i=1}^d P_i.
\end{equation} 
The above is an identity between two twice-continuously-differentiable functions in the arguments $(u_1, \ldots, u_d, v_1, \ldots, v_d)\in D_\zeta\subseteq\mathbb{R}^{2d}$, so we can apply the cross-derivatives $\partial_{u_j}\partial_{v_k}$ to both sides of \eqref{thm:NICA_stat:aux12} to arrive at the identities     
\begin{equation}\label{thm:NICA_stat:aux13}
\sum_{i=1}^d\big[q_i\circ(\varrho_i\times\varrho_i)\big]\cdot\partial_{u_j}\varrho_i\cdot\partial_{v_k}\varrho_i \ = \ \sum_{i=1}^d \xi_i\cdot\delta_{ijk}\qquad(j,k\in[d])
\end{equation}
where the $\varrho_i$ are the components of \eqref{thm:NICA_stat:aux11} and the functions $q_i$ and $\xi_i$ are given as 
\begin{equation}\label{thm:NICA_stat:aux14.0}
q_i\,\coloneqq\,\partial_{u_i}\partial_{v_i}Q_i\qquad\text{ and }\qquad \xi_i\,\coloneqq\,\partial_{u_i}\partial_{v_i}P_i = \partial_{u_i}\partial_{v_i}\!\log\zeta_i, 
\end{equation}
respectively. (Note that $\partial_{u_j}\partial_{v_k}R_i = r_i\cdot\delta_{ijk}$ ($(R,r)\in\{(Q,q), (P,\xi)\}$) by the Cartesian product-form of the functions \eqref{thm:NICA_stat:aux8} and \eqref{thm:NICA_stat:aux9.1}.) 
Observe now that the system of equations \eqref{thm:NICA_stat:aux13} can be equivalently expressed as the congruence relation
\begin{equation}
J_\varrho^\intercal\cdot\Lambda_q\cdot J_\varrho \ = \ \Lambda_\xi \qquad\left(:\Leftrightarrow\ J_\varrho^\intercal(u)\cdot\Lambda_q(u,v)\cdot J_\varrho(v) = \Lambda_\xi(u,v)\right)
\end{equation}
for $J_\varrho$ the Jacobian of $\varrho$ and for $\Lambda_q, \Lambda_\xi$ defined as the matrix-valued functions
\begin{equation}
\Lambda_q\coloneqq\mathrm{diag}_{i=1,\ldots,d}[q_i\circ(\varrho_i\times\varrho_i)] \quad\text{ and }\quad\Lambda_\xi\coloneqq\mathrm{diag}_{i=1,\ldots,d}[\xi_i].   
\end{equation} 
Since $\varrho$ is a diffeomorphism over $\bar{D}$, its Jacobian $J_\varrho$ is invertible and hence 
\begin{equation}\label{thm:NICA_stat:aux15}
\Lambda_q \ = \ B_{\varrho}^\intercal\cdot\Lambda_\xi\cdot B_{\varrho}\quad\text{ on }\quad D_\zeta,\qquad\text{for }\quad B_\varrho\coloneqq J_\varrho^{-1}.
\end{equation}       
Since $B_{\varrho} = J_{\varrho^{-1}}\circ\varrho$ by the inverse function theorem, the matrix-valued function $B_\varrho$ is clearly continuous. Hence\footnote{\ Notice that $D^{\times 2}\subset\bar{D}\equiv\supp(\zeta)$ (and hence $D^{\times 2}\subseteq D_\zeta$, as $D$ is open) since $\restr{\zeta}{D^{\times 2}}>0$ a.e.\ by the fact that $\restr{\zeta}{D^{\times 2}}$ is a.e.\ non-separable (and hence a.e.\ non-zero in particular).} we can apply Lemma \ref{lem:PermDiagForThm} below to from \eqref{thm:NICA_stat:aux15} and the assumptions of Definition \ref{def:psG_nonsep_stochproc} \ref{def:psG_item2} obtain as desired that 
\begin{equation}\label{thm:NICA_stat:aux16}
\{J_{\varrho}(u)\mid u\in D\} \ \subseteq \ \operatorname{M}_d. 
\end{equation} 
Indeed, since the above open set $D\subseteq D_S$ has been chosen such that the (positive) functions $\restr{\zeta^i}{D^{\times 2}}$ are regularly non-separable for each $i\in[d]$ and a.e.\ non-Gaussian for all but at most one $i\in[d]$ (Definition \ref{def:psG_nonsep_stochproc} \ref{def:psG_item2}), Lemma \ref{lem:PermDiagForThm} is clearly applicable to the system \eqref{thm:NICA_stat:aux15}, providing \eqref{thm:NICA_stat:aux16} as required. But since the above set $D$ was chosen without further restrictions, \eqref{thm:NICA_stat:aux16} amounts to \eqref{thm:NICA_stat:aux11} and hence proves Theorem \ref{thm:NICA_stat} as desired.
\end{proof} 
The following section extends the above line of argument to additional types of sources.

\subsection{An Extension to Sources of Alternative Temporal Structures}\label{sect:beta_and_gamma} 
We can generalise the strategy behind Theorem \ref{thm:NICA_stat} by `unfreezing' its usage of the temporal structure \eqref{2ndfdds}, that is by allowing the considered time-pairs $(s,t)$ to `vary more freely' across $\Delta_2(\I)$; see Lemma \ref{lem:jacobian_system}. This qualifies additional source classes for nonlinear identification via the characterisation \eqref{thm:NICA_stat:eq3}. As before, the technical key for this is to make the Jacobian $J_\varrho$ of the mixing residual (cf.\ \eqref{thm:NICA_stat:aux15}) serve as change of basis for a source-dependent matrix function with non-degenerate eigenspectrum. The next definition formulates two sufficient conditions for this.\\[-0.5em]         

\noindent
Define $\displaystyle\psi(x,y,z)\coloneqq x^{-2}yz$, and denote by $\nabla^{\times}\coloneqq\big\{(\lambda_\nu)\in\R^d\ \big| \ \exists\, i,j\in[d],\,i\neq j \ : \ \lambda_i=\lambda_j\big\}$ the set of all vectors in $\R^d$ whose coordinates are not pairwise distinct.
\begin{definition}[$\{\beta, \gamma\}$-Contrastive]\label{def:log-regular}
A continuous stochastic process $S=(S^1_t,\ldots,S^d_t)_{t\in\I}$ in $\R^d$ with independent components and spatial support $D_S$ will be called
\begin{itemize}
\item\mbox{\emph{$\beta$-contrastive} if $D_S$ is the closure of its interior and for any open subset $U$ of $D_S$ there is}
\begin{gather}\notag
\text{an open subset $\tilde{U}$ of $U$ \quad and \quad $\p\equiv(s,t),\,\p'\in\Delta_2(\I)$ \quad such that, \quad for all $i\in[d]$,}\\[-0.5em] \notag
\text{the density $\zeta^i_{s,t}$ of $(S^i_s, S^i_t)$, \ likewise $\zeta^i_{\p'}$, \ exists with \ $\zeta^i_\p, \zeta^i_{\p'}\in C^2(\tilde{U}^{\times 2})$ \ and}\\ \label{def:log-regular:eq1}
\begin{align}
\xi_{s,t}^{i\,|\,\tilde{U}}\coloneqq\big[\partial_{x_i}\partial_{x_{i+d}}\log \zeta_{s,t}^i\big]\circ\iota_{\tilde{U}} \, \neq \, 0 \ \quad &\text{and}\quad \xi_{\p'}^{i\,|\,\tilde{U}} \neq 0 \quad \text{ (a.e.)}, \qquad\text{ and}\\[-0.5em]
\xi_{\p'}^{i\,|\,\tilde{U}} \, \notin \, \big\langle\xi_{\p}^{i\,|\,\tilde{U}}\big\rangle_\R &\coloneqq \big\{c\cdot\xi_{\p}^{i\,|\,\tilde{U}} \, \big| \, c\in\R\big\}\label{def:log-regular:eq2} 
\end{align} 
\end{gather}
with $\iota_{\tilde{U}} : \tilde{U}\ni u \mapsto (u,u)\in \Delta_{\tilde{U}}$ and both $U,\tilde{U}$ non-empty;\footnote{\ Here as before, we abuse notation by writing $\zeta_{s,t}^i(x)=\zeta_{s,t}^i(x_i,x_{i+d})$ for $x=(x_\nu)\in\R^{2d}$.}\vspace{0.5em}
\item\emph{$\gamma$-contrastive} if there is a dense open subset $\mathcal{U}$ of $D_S$ for which the following holds: 
\begin{gather}\notag
\label{def:gamma_contrastive:eq1}\text{for each $u\in\mathcal{U}$ \quad there exists $(v,\p_0,\p_1,\p_2)\in\R^d\times\Delta_2(\I)^{\times 3}$ \quad such that}\\[-0.5em]
S \text{ is $C^2$-regular around $(\p_0,(u,v)), (\p_1,(u,u))$ and $(\p_2,(v,v))$, \quad and}\\[-0.5em]\label{def:gamma_contrastive:eq2} 
\big(\psi(\xi^i_{\p_0}(u,v),\xi^i_{\p_1}(u,u),\xi^i_{\p_2}(v,v))\big)_{i\in[d]} \in \big(\mathbb{R}^d\setminus\nabla^{\times}\big),
\end{gather}where $\xi^i_\p\coloneqq\partial_{x_i}\partial_{x_{i+d}}\log\zeta_{\p}$ is the mixed log-derivatives of the $C^2$-density $\zeta_\p^i$ of $(S_s^i,S_t^i)$. 
\end{itemize} 
\end{definition} 
We will see that the assumptions of $\gamma$-contrastivity are satisfied for a number of popular stochastic processes (Section \ref{sect:Gaussian_processes}).
\begin{remark}[Relation Between $\alpha$-, $\beta$- and $\gamma$-Contrastive Sources]Notice that every $\alpha$-contrastive process is also $\gamma$-contrastive (for $\p_0=\p_1=\p_2$, as the proof of Theorem \ref{thm:NICA_stat} shows), while $\beta$-contrastivity does not imply---nor is it implied by---either $\alpha$- or $\gamma$-contrastivity.
\end{remark}
Recall that the following theorem operates under Assumption \ref{sect:assumption_convexity}.
\begin{theorem}\label{cor:NICA_MainCor}
Let the process $S$ in \eqref{intext:BSS_relation} be $\beta$- or $\gamma$-contrastive. Then for any transformation $h$ which is $C^{2}$-invertible on an open superset of $D_X$, we have with probab.\ one that: 
\begin{equation}\label{cor:NICA_MainCor:eq1}
h(X) \ \in \ \mathrm{DP}_d\cdot S \quad\text{if and only if}\quad h(X) \text{ has independent components}.
\end{equation}  
\end{theorem}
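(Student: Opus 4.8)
The plan is to reduce both cases to the single assertion that the ``mixing residual'' $\varrho\coloneqq h\circ f$ has a monomial Jacobian on a dense open subset of $D_S$, and then to upgrade this to $\varrho\in\DP(D_S)$ exactly as at the close of the proof of Theorem~\ref{thm:NICA_stat}. The ``only if''-direction of \eqref{cor:NICA_MainCor:eq1} is immediate, since a permutation together with coordinate-wise monotone scaling sends an IC process to an IC process. For the converse, assume $h\cdot X$ is IC. By Lemma~\ref{lem:monomial_trafos}\ref{lem:monomial_trafos:it2}, combined with the fact that $D_S$ is the closure of its interior (Lemma~\ref{lem:spat_supp}\ref{lem:spat_supp:it4}) and that almost every trace of $S$ lies in a connected subset of $D_S$ (Lemma~\ref{lem:spat_supp}\ref{lem:spat_supp:it2}), it suffices to produce an open $\mathcal O$ with $\overline{\mathcal O}=D_S$ and
\[
\{J_\varrho(u)\mid u\in\mathcal O\}\subseteq\operatorname{M}_d .
\]
Both contrastivity hypotheses serve precisely to manufacture such an $\mathcal O$.

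For the $\gamma$-contrastive case this is almost immediate from Lemma~\ref{lem:jacobian_system}. Fixing $u$ in the dense open set $\mathcal U$ of Definition~\ref{def:log-regular} together with the associated data $(v,\p_0,\p_1,\p_2)$, the lemma yields the similarity \eqref{lem:jacobian_system:eq1}, in which the conjugated matrix $\bar\Lambda_{\p_0,\p_1,\p_2}(u,v)$ is diagonal with $i$-th entry $\psi(\xi^i_{\p_0}(u,v),\xi^i_{\p_1}(u,u),\xi^i_{\p_2}(v,v))$. Hypothesis \eqref{def:gamma_contrastive:eq2} states exactly that this diagonal avoids $\nabla^{\times}$, i.e.\ has pairwise distinct entries, while $\tilde\Lambda_{\p_0,\p_1,\p_2}(u,v)$ is diagonal as well; since a diagonal matrix with pairwise distinct eigenvalues is conjugated to a diagonal matrix only by a monomial matrix, $B_\varrho(u)\in\operatorname{M}_d$ and hence $J_\varrho(u)=B_\varrho(u)^{-1}\in\operatorname{M}_d$. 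As $u\in\mathcal U$ was arbitrary, $\mathcal O\coloneqq\mathcal U\cap\mathrm{int}(D_S)$ (open in $\R^d$, dense in $D_S$) does the job.

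For the $\beta$-contrastive case I would run the congruence machinery of \eqref{cor:NICA_MainCor:aux1} (equivalently \eqref{thm:NICA_stat:aux15}) at the two time-pairs $\p,\p'$ supplied by Definition~\ref{def:log-regular}, but \emph{evaluated on the diagonal}: for each $u\in\tilde U$ this gives $B_\varrho(u)^\intercal\Lambda_{\xi_\p}(u,u)B_\varrho(u)=\Lambda_{q_\p}(u,u)$ and the analogous identity for $\p'$, all four matrices being diagonal. Forming $\Lambda_{q_\p}(u,u)^{-1}\Lambda_{q_{\p'}}(u,u)$ cancels the $B_\varrho(u)^\intercal$-factors and turns the two congruences into the single conjugation $J_\varrho(u)\,\operatorname{diag}_{i}[g_i(u_i)]\,J_\varrho(u)^{-1}=\Lambda_{q_\p}(u,u)^{-1}\Lambda_{q_{\p'}}(u,u)$, where $g_i(u_i)\coloneqq\xi^{i\,|\,\tilde U}_{\p'}(u)/\xi^{i\,|\,\tilde U}_{\p}(u)$. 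The crucial structural observation, parallel to the proof of Lemma~\ref{lem:PermDiagForThm}, is that each $g_i$ depends on the single coordinate $u_i$ only, while the non-proportionality condition \eqref{def:log-regular:eq2} makes every $g_i$ non-constant. Hence the product map $u\mapsto(g_1(u_1),\dots,g_d(u_d))$ has image containing a product of non-degenerate intervals, which cannot lie inside the nullset $\nabla^{\times}$; thus the locus where $g_1(u_1),\dots,g_d(u_d)$ are pairwise distinct is dense and open in $\tilde U$, and on it $J_\varrho(u)\in\operatorname{M}_d$ by the same distinct-eigenvalue argument as above. Ranging over all open $U\subseteq D_S$ and taking the union of the resulting loci assembles the required dense open $\mathcal O$.

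I expect the main obstacle to be the eigenvalue-separation step in the $\beta$-case: per-coordinate non-proportionality does \emph{not} by itself force the $d$ eigenvalue-functions to separate at a common point, and the argument only closes because the product (independence) structure makes the $i$-th eigenvalue a function of $u_i$ alone, so that non-constancy in each coordinate may be exploited independently. A secondary point requiring care is the bookkeeping needed to guarantee that the diagonal congruences \eqref{cor:NICA_MainCor:aux1} are genuinely available at the diagonal points used --- that the relevant joint densities are positive and $C^2$ there --- which is exactly what the $C^2$-regularity and positivity clauses of Definition~\ref{def:log-regular} supply. Once $\mathcal O$ is in hand, the passage to $\varrho\in\DP(D_S)$ and then to $h\cdot X\in\DP\cdot S$ almost surely is routine and identical to the end of the proof of Theorem~\ref{thm:NICA_stat}.
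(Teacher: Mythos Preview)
Your treatment of the $\gamma$-contrastive case is correct and matches the paper's argument exactly.

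For the $\beta$-contrastive case, however, there is a real gap at the eigenvalue-separation step --- precisely where you anticipated trouble. From the two diagonal congruences you correctly derive the similarity
\[
J_\varrho(u)\,\operatorname{diag}_i\!\big[g_i(u_i)\big]\,J_\varrho(u)^{-1}=\Lambda_{q_\p}(u,u)^{-1}\Lambda_{q_{\p'}}(u,u),\qquad g_i=\xi^{i\,|\,\tilde U}_{\p'}\big/\xi^{i\,|\,\tilde U}_{\p},
\]
and from \eqref{def:log-regular:eq2} you correctly deduce that each $g_i$ is non-constant on $\pi_i(\tilde U)$. But your next inference --- that the image of $u\mapsto(g_1(u_1),\dots,g_d(u_d))$ contains a product of non-degenerate intervals, hence meets $\R^d\setminus\nabla^\times$ --- does not follow. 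The domain $\tilde U$ is merely open, not a product; on any given sub-box $I_1\times\cdots\times I_d\subset\tilde U$ some $g_i$ may well be constant on $I_i$ (global non-proportionality on $\tilde U$ does not preclude local constancy), and there is no reason the points of local non-constancy of the various $g_i$ should assemble into a single point of $\tilde U$. Re-applying the $\beta$-hypothesis to a smaller $U$ yields a \emph{new} pair $(\p,\p')$ with the same potential defect, so the ``ranging over all $U$'' device does not close the loop either.

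The paper avoids this by \emph{not} restricting to diagonal evaluations. It applies Lemma~\ref{lem:jacobian_system} twice --- once with $\p_0=\p_1=\p_2=\p$ and once with $\p_0=\p_1=\p,\ \p_2=\p'$ --- on a small ball $\mathcal U=B_\delta(u)$ where $\xi^i_\p$ is non-vanishing on the full square $\mathcal U^{\times 2}$, obtaining for every $\tilde v\in\mathcal U$ and every $C\in\R$ a similarity with eigenvalues
\[
\lambda^i_{u,\tilde v}=\frac{\xi^i_{\p}(u,u)}{\xi^i_{\p}(u,\tilde v)^2}\big(\xi^i_{\p}(\tilde v,\tilde v)+C\cdot\xi^i_{\p'}(\tilde v,\tilde v)\big)=\theta_i(\tilde v)+C\cdot\theta_i'(\tilde v).
\]
The off-diagonal factor $\xi^i_\p(u,\tilde v)^{-2}$ is the missing lever: if \emph{both} $\theta_i$ and $\theta_i'$ were constant in $\tilde v$, one eliminates $\xi^i_\p(u,\tilde v)^{2}$ between the two resulting identities and obtains $\xi^{i\,|\,\mathcal U}_{\p'}=\mathrm{const}\cdot\xi^{i\,|\,\mathcal U}_{\p}$, contradicting \eqref{def:log-regular:eq2}. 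Hence for a suitable $C$ each $\vartheta_i=\theta_i+C\theta_i'$ is non-constant in $\tilde v$, and a single choice of $\tilde v$ (varying near $u$) then separates all eigenvalues. Your diagonal-only approach discards exactly this degree of freedom.
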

\begin{proof} 
Let $h$ be $C^{2}$-invertible on some open superset of $D_X$ and such that $h(X)$ has independent components; the proof of \eqref{cor:NICA_MainCor:eq1} is an extension of the proof of Theorem \ref{thm:NICA_stat}, so let us adopt the set-up and notation of the latter (as done in Lemma \ref{lem:jacobian_system}). Recall from there (cf.\ \eqref{thm:NICA_stat:aux11}) that \eqref{cor:NICA_MainCor:eq1} follows if we can find a dense open subset $\mathcal{D}$ of $D_S$ such that      
\begin{equation}\label{cor:NICA_MainCor:aux3}
B_\varrho(u)\, \in \, \mathrm{M}_d \quad\text{ for each } \ u\in\mathcal{D}.
\end{equation} 
Suppose first that $S$ is $\gamma$-contrastive. In this case, we claim that \eqref{cor:NICA_MainCor:aux3} holds for the dense subset $\mathcal{D} \equiv\mathcal{U}$ of $D_S$ postulated by Def.\ \ref{def:log-regular}. To see that this is true, fix any $u\in\mathcal{U}$ and recall that, by Lemma \ref{lem:jacobian_system} and the previous discussions, the Jacobian $B_\varrho(u)$ of $\varrho\equiv h\circ f$ at $u$ is monomial if there is $v\in\mathbb{R}^d$ and $\p_0, \p_1, \p_2\in\Delta_2(\mathbb{I})$ with $(u,v)\in\{\xi^1_{\p_0}\neq 0, \ldots,\xi^d_{\p_0}\neq 0\}$ for which the diagonal matrix $\bar{\Lambda}_{u,v}\equiv\bar{\Lambda}_{\p_0,\p_1,\p_2}(u,v)$ given by \eqref{lem:jacobian_system:eq2} has pairwise distinct eigenvalues. (Note that $(u,v)\in\{\xi^1_{\p_0}\neq 0, \ldots,\xi^d_{\p_0}\neq 0\}$ if $\{\psi(\xi^i_{\p_0}(u,v),\alpha,\beta)\mid i\in[d]\}\subset\R$ for some $\alpha,\beta\in\R$.) Since the diagonal of $\bar{\Lambda}_{u,v}$ equals the vector $(\psi(\xi^i_{\p_0}(u,v),\xi^i_{\p_1}(u,u),\xi^i_{\p_2}(v,v))_{i\in[d]}$, choosing $(v,\p_0,\p_1,\p_2)$ as in \eqref{def:gamma_contrastive:eq1}, \eqref{def:gamma_contrastive:eq2} thus yields $B_\varrho(u)\in\mathrm{M}_d$ as claimed. As $u\in\mathcal{U}$ was arbitrary, we obtain $\restr{B_\varrho}{\mathcal{U}}\subset\mathrm{M}_d$ as desired in \eqref{cor:NICA_MainCor:aux3}. This proves \eqref{cor:NICA_MainCor:eq1} for $\gamma$-contrastive sources.

The $\beta$-contrastive case is more technical and hence deferred to Appendix \ref{cor:NICA_MainCor:beta}.              
\end{proof} 

\subsection{Related Work}\label{rem:hyvarinenmorioka}
We remark that the above assumption of $\alpha$-contrastivity is strictly weaker than the earlier identifiability conditions given in \citep{HYM} which served us as motivation. Indeed: the latter are defined for densities on $G=\R^2$ only, and if such a density $\varsigma$ is ``uniformly dependent'' in the sense of \citep[Def.\ 1]{HYM} then $\varsigma$ is also strictly (and regularly) non-separable on $G=\R^2$ by Lemma \ref{lem:C2PseudoGaussian} \ref{lem:C2PseudoGaussian:it2}; as to \citep{HYM}'s complementary notion of $\varsigma$ being ``quasi-Gaussian'' \citep[Def.\ 2]{HYM}, we thank one of our referees for drawing our attention to the fact\footnote{\ The insufficiency of \citep[Thm.\ 1, Assmpt.\ 3.]{HYM} (for \citep[Def.\ 2 eq.\ (4)]{HYM} as stated) was also conjectured in \citep[(end of) Section 4.3]{halva2021disentangling}; we prove this conjecture true in Appendix \ref{rem:hyvarinenmorioka2}.} that, as stated in loc.\ cit., the corresponding non-separability condition \citep[Thm.\ 1, Assmpt.\ 3.]{HYM} fails to ensure the validity of \citep[Thm.\ 1]{HYM}, see Appendix \ref{rem:hyvarinenmorioka2}; this deficiency can be remedied, however, if one weakens the excluding notion of quasi-Gaussianity \citep[Def.\ 2]{HYM} by imposing its defining factorisation condition \citep[Def.\ 2 eq.\ (4)]{HYM} to hold merely on some open subset of $\R^2$ instead of globally on all of $\R^2$ (cf.\ Appendix \ref{rem:hyvarinenmorioka2}), as is done -- upon logical negation -- in Definition \ref{def:PseudoGaussian}, eq.\ \eqref{def:PseudoGaussian:eq2}, and also in the later work \citep[Theorem 2]{halva2021disentangling}. With \citep[Def.\ 2]{HYM} thus weakened, the (thus strengthened) identifiability condition \citep[Thm.\ 1, Assmpt.\ 3.]{HYM} then becomes a special case of our assumption of pseudo-Gaussianity (Definition \ref{def:PseudoGaussian}) by Lemma \ref{lem:C2PseudoGaussian} \ref{lem:C2PseudoGaussian:it3}. Consequently, if a source $S\equiv(S^i)$ satisfies \citep[Hypotheses 1., 2.\ $\&$ 3.\ of Theorem 1]{HYM} -- that is if $S$ is stationary and $C^2$-regular at some point $(s_0,t_0)\equiv(t-1,t)$ with $D_{(s_0,t_0)}=\R^d\,(=D_S)$ such that the densities $\varsigma^i\equiv\varsigma^i_{s_0,t_0}$ of $S^i$ are all uniformly dependent (hence all regularly non-separable) and none quasi-Gaussian in the above, corrected sense (cf.\ also \citep[Assmpt.\ B2]{halva2021disentangling}) -- then $S$ is clearly $\alpha$-contrastive in particular, and the converse is clearly not true in general.\\[-0.5em] 

\noindent
When contrasted with the few prior works in the area that allow for a theoretical comparison, most notably \citep{TCL,HYM}, we see that our approach provides a strict generalisation of previously attained results, see above, or yields stronger conclusions while operating under assumptions which are much less restrictive; for example, we do not require the source to belong to a predefined distributional family as, e.g., in \citep{TCL}.\\[-0.5em]

\noindent
With regards to methodology, we recall that \cite{HYM} propose to estimate the demixing nonlinearity by training a universal approximator (typically a neural network) to distinguish between vectors excerpting originally-ordered data and vectors excerpting data whose initial sequential order has undergone a random permutation. By implementing this classification task via logistic regression, an approximation of the demixing transformation is then obtained as an optimally trained configuration of the classifying universal approximator provided that the employed regression function is of a certain composite functional form. 

In contrast, our approach approximates the demixing nonlinearity more directly via a dependence minimisation task in the classical spirit of Comon \citep{COM}, which we propose to perform by optimising an explicitly defined, universally applicable contrast function derived from novel signature-based statistics for multidimensional stochastic processes (Section \ref{sect:independence_criterion}). Not only is our method thus guaranteed to work under much weaker assumptions than \cite{HYM} --- see the above discussion and the facts that our method is fully applicable to the (non-stationary) discrete- and continuous-time case and free of assumptions on the functional form of any approximating auxiliary nonlinearities; its equivalence to a simple-to-formulate optimisation problem also makes our method straightforward to implement and more directly accessible to a theoretical analysis of its statistical properties, cf.\ Sections \ref{sect:consistency} and \ref{sect:experiments}.\\[-0.5em] 

\noindent
We also note that a slightly weaker technical modification of our assumptions $\mathrm{(i)\,\&\,(ii)}$ from Definition \ref{def:psG_nonsep_stochproc} is given and used in the later work \citep[Theorem 2]{halva2021disentangling}, where the problem of nonlinear blind source separation is studied in the presence of independent additive noise. To the best of our knowledge, our notions of $\beta$- or $\gamma$-constrastivity (Definition \ref{def:log-regular} (and \ref{def:alpha_bar_contrastive})) bear no evident resemblance to the conditions proposed in this or other works.\\[-0.5em]    

\section{Examples of Applicable Sources}\label{sect:example_sources}
\noindent
The statistical non-degeneracy assumptions of $\alpha$-, $\beta$- or $\gamma$-contrastivity hold for a number of well-established models for stochastic signals, among them most popular copula-based time series models (Section \ref{sect:copulas}) as well as a variety of Gaussian processes and Geometric Brownian Motion (Section \ref{sect:Gaussian_processes}).       
\subsection{Popular Copula-Based Source Models Are $\alpha$-Contrastive}\label{sect:copulas}
It is well-known (e.g.\ \citep[Sect.\ 2.10]{nelsen2006}, \cite{darsow1992}) that the temporal structure \eqref{2ndfdds} of a scalar stochastic process $S=(S_t)_{t\in\mathbb{I}}$ can be given an analytical representation of the form
\begin{equation}\label{TimeSeriesCopulaModel:eq1}
\zeta_{s,t}(x,y) \ = \  \zeta_s(x)\zeta_t(y)\cdot c_{s,t}(F^S_s(x), F^S_t(y)) \qquad \big((s,t)\in\Delta_2(\I)\big), 
\end{equation}
where $\zeta_{s,t}$ is the probability density of $(S_s,S_t)$, $F^S_r$ is the cdf of the vector $S_r$ with $\zeta_r$ its density, and $c_{s,t}:[0,1]^{\times 2}\rightarrow\R$ is the uniquely determined copula density of $(S_s,S_t)$.

\begin{proposition}\label{prop:TSCopulaProp1}
Let $S\equiv(S_t)_{t\in\mathbb{I}}\equiv(S^1, \cdots, S^d)$ be an IC stochastic process in $\R^d$ such that $S_t$ admits a $C^2$-density $\zeta_t$ for each $t\in\mathbb{I}$ with the property that $t\mapsto\zeta_t(x)$ is continuous for each $x\in\mathbb{R}^d$. Suppose further that for some $\mathcal{P}\subseteq\Delta_2(\I)$ with $\bigcup_{(s,t)\in\mathcal{P}}\{\zeta_s\cdot\zeta_t>0\}$ dense in $D_S$,\footnote{\ Lemma \ref{lem:spat_supp} \ref{lem:spat_supp:it5} guarantees that such a set $\mathcal{P}$ exists.} it holds that the copula densities $\{c^i_{s,t}\mid (s,t)\in\mathcal{P}\}$ of $S^i$ (cf.\ \eqref{TimeSeriesCopulaModel:eq1}) are such that  
\begin{equation}\label{prop:TSCopulaProp1:aux3}
\text{$c^i_{s,t}$ \ are positive and strictly non-Gaussian \quad and \quad $\partial_x\partial_y \log c^i_{s,t}$ vanishes nowhere,} 
\end{equation}
for each $i\in[d]$. Then the process $S$ is $\alpha$-contrastive.
\end{proposition}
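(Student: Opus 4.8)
The plan is to verify the two clauses of Definition~\ref{def:psG_nonsep_stochproc} directly from the copula representation \eqref{TimeSeriesCopulaModel:eq1}, reducing everything to one chain-rule computation. Writing the joint density of $(S^i_s, S^i_t)$ as $\zeta^i_{s,t}(x,y) = \zeta^i_s(x)\,\zeta^i_t(y)\,c^i_{s,t}(F^i_s(x), F^i_t(y))$ with $F^i_r$ the cdf of $S^i_r$ (so that $\tfrac{d}{dx}F^i_s=\zeta^i_s$), taking logarithms and applying $\partial_x\partial_y$ annihilates the single-variable factors $\log\zeta^i_s(x)$ and $\log\zeta^i_t(y)$, leaving
\[
\partial_x\partial_y\log\zeta^i_{s,t}(x,y) \;=\; \big[\partial_1\partial_2\log c^i_{s,t}\big]\!\big(F^i_s(x), F^i_t(y)\big)\cdot\zeta^i_s(x)\,\zeta^i_t(y).
\]
This identity is the engine of the proof: the copula hypotheses \eqref{prop:TSCopulaProp1:aux3} enter entirely through its right-hand side.

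For the witnessing data I would take the $\mathcal P$ supplied by the hypothesis and, for $(s,t)\in\mathcal P$, set $D_{(s,t)}\coloneqq\{\zeta_s\zeta_t>0\}$, an open set since the marginal densities are continuous; clause \ref{def:psG_item1} is then exactly the assumed density of $\bigcup_{(s,t)\in\mathcal P}\{\zeta_s\zeta_t>0\}$ in $D_S$. Because $S$ is IC the positivity set factorises, so on $\pi_i(D_{(s,t)})$ the scalar marginals $\zeta^i_s,\zeta^i_t$ are strictly positive and each $F^i_r$ restricts to a $C^2$-diffeomorphism onto its image; in particular $\zeta^i_{s,t}>0$ on $D_{(s,t)}^{\times2}$ and the displayed identity is valid there.

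The regular-non-separability half of clause \ref{def:psG_item2} is then immediate. Since $\partial_1\partial_2\log c^i_{s,t}$ vanishes nowhere by \eqref{prop:TSCopulaProp1:aux3} and $\zeta^i_s,\zeta^i_t>0$, the right-hand side above is nonzero throughout $D_{(s,t)}^{\times2}$; hence $\{\partial_x\partial_y\log\zeta^i_{s,t}\neq0\}$ is all of $D_{(s,t)}^{\times2}$ and so trivially dense, whence $\zeta^i_{s,t}$ is strictly (thus a.e.) non-separable by Lemma~\ref{lem:C2PseudoGaussian}\,(ii). Evaluating the same non-vanishing on the diagonal $\{x=y\}$ yields the log-derivative condition of \eqref{def:regNonSep:eq1}. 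Thus $\restr{\zeta^i_{s,t}}{D_{(s,t)}^{\times2}}$ is regularly non-separable for every $i\in[d]$.

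The main obstacle is the non-Gaussianity clause: propagating strict non-Gaussianity of the copula $c^i_{s,t}$ to a.e.\ non-Gaussianity of the joint density $\zeta^i_{s,t}$. I would argue by contraposition — if $\restr{\zeta^i_{s,t}}{\mathcal O}$ were pseudo-Gaussian on a symmetric convex open $\mathcal O$, then by Lemma~\ref{lem:C2PseudoGaussian}\,(iii) its mixed log-derivative would be separable and symmetric there, and pushing this through the displayed identity and the substitution $(u,v)=(F^i_s(x),F^i_t(y))$ would force $\partial_1\partial_2\log c^i_{s,t}$ into the symmetric separable (square-product) form that characterises a pseudo-Gaussian copula, contradicting \eqref{prop:TSCopulaProp1:aux3}. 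The delicate point, and the step I expect to cost the most work, is that the two marginal transforms $F^i_s$ and $F^i_t$ act by \emph{different} monotone diffeomorphisms in the two arguments, so the symmetric structure that Lemma~\ref{lem:C2PseudoGaussian}\,(iii) exploits for $\zeta^i_{s,t}$ is not carried verbatim onto the copula; reconciling these, and thereby exhibiting the closed nullset outside of which $\zeta^i_{s,t}$ is strictly non-Gaussian, is the crux. Granting it, non-Gaussianity holds for every $i$ (hence for all but at most one), completing the verification of Definition~\ref{def:psG_nonsep_stochproc}.
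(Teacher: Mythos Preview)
Your approach is essentially identical to the paper's: both use the chain-rule identity
\[
\partial_x\partial_y\log\zeta^i_{s,t}(x,y)=\big[\partial_1\partial_2\log c^i_{s,t}\big](F^i_s(x),F^i_t(y))\cdot\zeta^i_s(x)\,\zeta^i_t(y),
\]
set $D_{(s,t)}=\{\zeta_s\zeta_t>0\}$, and read off clauses \ref{def:psG_item1} and \ref{def:psG_item2} of Definition~\ref{def:psG_nonsep_stochproc}. The paper's proof is briefer than yours: it simply asserts that Def.~\ref{def:psG_nonsep_stochproc}\,\ref{def:psG_item2} is ``immediate'' from \eqref{TimeSeriesCopulaModel:eq1}, \eqref{prop:TSCopulaProp1:aux3} and the fact that $\phi^i_{s,t}=F^i_s\times F^i_t$ is a diffeomorphism, without addressing the asymmetry you flag.

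Your worry about the non-Gaussianity transfer is legitimate but turns out to be much lighter than you fear; neither the paper nor you spell out the missing line, which is this. If $\zeta^i_{s,t}$ were pseudo-Gaussian on some open $\mathcal O$, then pushing forward by $\phi^i_{s,t}$ gives
\[
c^i_{s,t}(u,v)=\tilde c_1(u)\,\tilde c_2(v)\,\exp\!\big(\pm g(u)h(v)\big)\quad\text{on }\phi^i_{s,t}(\mathcal O),
\]
with $g=\varsigma_3\circ(F^i_s)^{-1}$, $h=\varsigma_3\circ(F^i_t)^{-1}$ generally different. Now simply pick a point $(u_0,v_0)\in\phi^i_{s,t}(\mathcal O)$ with $u_0\neq v_0$ and a small rectangle $I_1\times I_2\subset\phi^i_{s,t}(\mathcal O)$ with $I_1\cap I_2=\emptyset$; defining $c_3|_{I_1}\coloneqq g$ and $c_3|_{I_2}\coloneqq h$ (and extending arbitrarily to $\mathbb R$) yields $g(u)h(v)=c_3(u)c_3(v)$ on $I_1\times I_2$, so $c^i_{s,t}|_{I_1\times I_2}$ is pseudo-Gaussian, contradicting \eqref{prop:TSCopulaProp1:aux3}. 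Thus $\zeta^i_{s,t}$ is in fact \emph{strictly} non-Gaussian (no exceptional nullset needed), and the ``crux'' dissolves in one line.
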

\begin{proof}
See Appendix \ref{pf:prop:TSCopulaProp1}.         
\end{proof} 

\noindent
A popular approach in finance, insurance economy and other fields is to read \eqref{TimeSeriesCopulaModel:eq1} as a semi-parametric stationary model for $S=(S_t)_{t\in\I}$ by assuming the existence of some $\mathcal{I}\subset\I$ discrete (`set of observations') such that $\zeta_r\equiv\zeta$ with cdf $F_\zeta$ for each $r\in\mathcal{I}$, and $D_S = \supp(\zeta)$ and $c_{s,t}\equiv c_\theta$ uniformly parametrized for all $(s,t)\in\mathcal{P}\coloneqq \mathcal{I}^{\times 2}\cap\Delta_2(\I)$, see e.g.\ \citep[Sect.\ 2]{chenfanCopula}, \cite{emura2017}:
\begin{equation}\label{TimeSeriesCopulaModel:eq2}
\zeta_{s,t}(x,y) \ = \ \zeta(x)\zeta(y)\cdot c_\theta(F_\zeta(x), F_\zeta(y)), \qquad (s,t)\in\mathcal{P}.
\end{equation} 
We verify exemplarily that a source $S=(S^1, \cdots, S^d)$ in $\R^d$ whose components $S^i$ are modelled according to \eqref{TimeSeriesCopulaModel:eq2} is $\alpha$-contrastive for a number of popular \mbox{copula densities $c_\theta$.}\\[-1em]
\begin{corollary}\label{cor:TSCopulaProp}
Let $S=(S^1,\cdots,S^d)$ be a stochastic process whose independent components $S^i$ are modelled according to \eqref{TimeSeriesCopulaModel:eq2} for each $i\in[d]$ with copula-density $c_i$ belonging to one of the following popular classes:
\begin{enumerate}[label=\upshape(\roman*)]
\item\label{prop:TSCopulaProp1:item1}\emph{(Clayton)}\qquad  
$\begin{gathered}
\displaystyle \ \ \, c_i(x,y) = (1+\theta)(xy)^{(-1-\theta)}(-1 +x^{-\theta} + y^{-\theta})^{(-2-1/\theta)}
\end{gathered}$\\[0.5em]where $\theta\in(-1,\infty)\setminus\{0, -\tfrac{1}{2}\}$;\\[-0.5em]
\item \emph{(Gumbel)} \qquad 
$\begin{gathered}[t]\displaystyle c_i(x,y) = 1+\theta(1-2x)(1-2y), \qquad\theta\in[-1,1]\setminus\{0\};
\end{gathered}$\\[-0.25em]
\item \emph{(Frank)} \qquad
$\begin{gathered}
\displaystyle \ \ \, c_i(x,y) = \frac
{\theta e^{\theta(x+y)}(e^{\theta}-1)}
{(e^\theta-e^{\theta x}-e^{\theta y}+e^{\theta(x+y)})^2}, \qquad \theta\in\mathbb{R}\setminus\{0\}.
\end{gathered}$
\end{enumerate}Then $S$ is $\alpha$-contrastive. 
\end{corollary}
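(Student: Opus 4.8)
The plan is to reduce everything to Proposition \ref{prop:TSCopulaProp1}. Since each component $S^i$ follows the stationary copula model \eqref{TimeSeriesCopulaModel:eq2} with a fixed marginal $C^2$-density $\zeta$, the map $t\mapsto\zeta_t(x)\equiv\zeta(x)$ is trivially continuous and $\bigcup_{(s,t)\in\mathcal{P}}\{\zeta_s\zeta_t>0\}$ is dense in $D_S$ for $\mathcal{P}=\mathcal{I}^{\times2}\cap\Delta_2(\I)$, so the only thing left to verify is the analytic pair \eqref{prop:TSCopulaProp1:aux3} for each copula density $c_i=c_\theta$: that $c_\theta$ is positive and strictly non-Gaussian, and that $\partial_x\partial_y\log c_\theta$ vanishes nowhere. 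I would dispatch positivity first: for Gumbel and Frank it is immediate on all of $(0,1)^2$ (from $|\theta|\le1$, resp. from $\theta(e^\theta-1)>0$ together with the square in the denominator), and for Clayton it is immediate when $\theta>0$; for $\theta\in(-1,0)$ one restricts to the open positivity region $R_\theta=\{c_\theta>0\}$ and takes the sets $D_{(s,t)}$ of Definition \ref{def:psG_nonsep_stochproc} inside the $F$-preimage of $R_\theta$.

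Next I would install a single differential criterion that yields both remaining conditions at once. Writing $\xi:=\partial_x\partial_y\log c_\theta$, the requirement that $\partial_x\partial_y\log c_\theta$ vanish nowhere is literally ``$\xi\neq0$''. For strict non-Gaussianity I would use only the easy half of the characterisation: differentiating \eqref{def:PseudoGaussian:eq1} shows that if $c_\theta$ is pseudo-Gaussian on an open set $\mathcal{O}$ then $\xi$ is separable there, while on any ball $B$, where the continuous nonvanishing $\xi$ has constant sign, Lemma \ref{lem:C2PseudoGaussian}(i) applied to $|\xi|$ gives that $\xi$ is separable on $B$ iff $\partial_x\partial_y\log|\xi|\equiv0$ on $B$. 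Hence, if $\partial_x\partial_y\log|\xi|$ vanishes nowhere, then $\xi$ is non-separable on every open set and $c_\theta$ is non-pseudo-Gaussian on every open set, i.e. strictly non-Gaussian. So for each family it suffices to exhibit $\xi\neq0$ and $\partial_x\partial_y\log|\xi|\neq0$ throughout the positivity region.

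The computations are clean and parallel, which is the real point. In $\log c_\theta$ every summand depending on $x$ or $y$ alone is killed by $\partial_x\partial_y$, so $\xi$ collapses to a multiple of $\partial_x\partial_y$ of the log of a single interaction core: $W:=x^{-\theta}+y^{-\theta}-1$ for Clayton, $D:=e^\theta-e^{\theta x}-e^{\theta y}+e^{\theta(x+y)}$ for Frank, and $c_\theta$ itself for Gumbel. A short calculation gives $\xi=\theta(2\theta+1)(xy)^{-\theta-1}W^{-2}$ (Clayton), $\xi=-2\theta\,c_\theta$ (Frank), and $\xi=4\theta\,c_\theta^{-2}$ (Gumbel), each manifestly nonzero using $\theta\neq0$ throughout and, for Clayton, exactly the excluded value $\theta=-\tfrac12$ being what would make $2\theta+1$ vanish. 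Since $\log|\xi|$ then differs from a constant multiple of $\log c_\theta$ (Frank, Gumbel) or of $\log W$ (Clayton) by terms separable in $x$ and $y$, applying $\partial_x\partial_y\log$ once more returns a nonzero multiple of $\xi$ (resp. of $\partial_x\partial_y\log W=-\theta^2(xy)^{-\theta-1}W^{-2}$), so $\partial_x\partial_y\log|\xi|\neq0$ everywhere as well. Both conditions of \eqref{prop:TSCopulaProp1:aux3} hold, and $\alpha$-contrastivity follows from Proposition \ref{prop:TSCopulaProp1}.

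The only genuine friction I anticipate is bookkeeping together with the negative-parameter Clayton support issue flagged above, namely ensuring that the restricted sets $D_{(s,t)}$ still feed a sufficiently large union into Definition \ref{def:psG_nonsep_stochproc}(i). The excluded parameters $\theta=0$ (all families) and $\theta=-\tfrac12$ (Clayton) are precisely those at which $\xi$ degenerates, so no separate degeneracy analysis is needed beyond respecting them.
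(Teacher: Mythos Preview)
Your proposal is correct and follows the same route as the paper, which simply states that the corollary ``is a direct consequence of Proposition \ref{prop:TSCopulaProp1} upon checking that each of the copula densities (i), (ii) and (iii) satisfies \eqref{prop:TSCopulaProp1:aux3}'' and leaves the verification to the reader. Your write-up is considerably more detailed: the criterion you isolate---that $\xi\coloneqq\partial_x\partial_y\log c_\theta\neq 0$ together with $\partial_x\partial_y\log|\xi|\neq 0$ forces strict non-Gaussianity via Lemma \ref{lem:C2PseudoGaussian}(i)---is a clean and reusable way to organize the three computations, and your explicit formulas for $\xi$ in each family (which I checked) are correct and make transparent exactly why the excluded parameter values $\theta=0$ and $\theta=-\tfrac12$ arise.
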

\begin{proof}
This is a direct consequence of Proposition \ref{prop:TSCopulaProp1} upon checking that each of the copula densities (i), (ii) and (iii) satisfies \eqref{prop:TSCopulaProp1:aux3}. This, however, follows from inspection and a straightforward computational verification.
\end{proof} 

\subsection{Popular Gaussian Processes and Geometric Brownian Motion are $\gamma$-Contrastive}\label{sect:Gaussian_processes}
\noindent
Given an interval $\mathbb{I}$ and functions $\mu:\mathbb{I}\rightarrow\mathbb{R}^d$ and $\kappa : \mathbb{I}^{\times 2}\rightarrow\operatorname{GL}_d(\mathbb{R})$, we write $S\sim\mathcal{GP}_{\mathbb{I}}(\mu,\kappa)$ to denote that $S=(S_t)_{t\in\mathbb{I}}$ is a Gaussian Process in $\R^d$ with mean $\mu=(\mu_i)$ and covariance $\kappa=(\kappa^{ij})$. We assume that any pair $(\mu,\kappa)$ we consider in the following is such that each process $S\sim\mathcal{GP}(\mu,\kappa)$ admits a version with continuous sample paths.\\[-0.5em]

\noindent
(The proofs of the below results are given in Appendices \ref{pf:prop:GPsAreVaried}, \ref{pf:cor1:GPsAreVaried} and \ref{pf:prop:SDE_GBM}, respectively.)    

\begin{lemma}\label{prop:GPsAreVaried}
Let $S\sim\mathcal{GP}_{\mathbb{I}}(\mu,\kappa)$ be a (continuous) Gaussian process in $\mathbb{R}^d$ with diagonal covariance function $\kappa\equiv(\kappa^{ij})=(\kappa^{ij}\delta_{ij})$. Then $S$ is $\gamma$-contrastive if and only if there exist pairs $\p_0,\p_1,\p_2\in\Delta_2(\I)$ such that
\begin{equation}\label{prop:GPsAreVaried:eq1}
\left(\frac{\kappa^i_{\p_1}\cdot\kappa^i_{\p_2}\cdot[k^i_{\p_0}-(\kappa^i_{\p_0})^2]^2}{[k^i_{\p_1}-(\kappa^i_{\p_1})^2]\cdot[k^i_{\p_2}-(\kappa^i_{\p_2})^2]\cdot(\kappa^i_{\p_0})^2}\right)_{i\in[d]} \in \ (\mathbb{R}^d\setminus\nabla^{\times})   
\end{equation}
for the auxiliary functions 
\begin{equation}\label{prop:GPsAreVaried:eq2}
\kappa_{s,t}^i\coloneqq \kappa^{ii}(s,t) \quad\text{ and }\quad k^i_{s,t}\coloneqq\kappa^{ii}(s,s)\cdot\kappa^{ii}(t,t).
\end{equation} 
\end{lemma}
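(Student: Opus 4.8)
The plan is to reduce $\gamma$-contrastivity (Definition \ref{def:log-regular}) to a single linear-algebra computation of the mixed log-derivatives $\xi^i_{\mathfrak{p}}$, exploiting the fact that for a Gaussian these are \emph{constant} in the spatial variable. First I would record the standing structural facts. Since $\kappa$ is diagonal, the components $S^1,\ldots,S^d$ are uncorrelated and jointly Gaussian, hence independent, so $S$ is IC; and since each marginal covariance $\kappa(t,t)$ lies in $\operatorname{GL}_d$, every $S_t$ is non-degenerate with $\supp(S_t)=\mathbb{R}^d$, whence $D_S=\mathbb{R}^d$. For a pair $\mathfrak{p}=(s,t)\in\Delta_2(\mathbb{I})$ the vector $(S^i_s,S^i_t)$ is bivariate normal, its $2\times 2$ covariance $\Sigma^i_{\mathfrak{p}}$ having diagonal entries $\kappa^{ii}(s,s),\kappa^{ii}(t,t)$ and off-diagonal entry $\kappa^i_{\mathfrak{p}}=\kappa^{ii}(s,t)$, so that $\det\Sigma^i_{\mathfrak{p}}=k^i_{\mathfrak{p}}-(\kappa^i_{\mathfrak{p}})^2$. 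I would restrict throughout to those $\mathfrak{p}$ for which this determinant is nonzero (equivalently, strictly positive, being a Gram determinant), which is precisely the non-degeneracy that makes $\zeta^i_{\mathfrak{p}}$ a smooth, everywhere-positive density, and hence makes $S$ $C^2$-regular around \emph{every} point $(\mathfrak{p},y_0)$.

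The key step is the evaluation of $\xi^i_{\mathfrak{p}}=\partial_{x_i}\partial_{x_{i+d}}\log\zeta^i_{\mathfrak{p}}$. As $\log\zeta^i_{\mathfrak{p}}$ is a quadratic polynomial in its two arguments with constant Hessian $-(\Sigma^i_{\mathfrak{p}})^{-1}$, the mixed partial equals the off-diagonal entry of that Hessian, namely $\xi^i_{\mathfrak{p}}=\kappa^i_{\mathfrak{p}}/(k^i_{\mathfrak{p}}-(\kappa^i_{\mathfrak{p}})^2)$, a \emph{constant} independent of the evaluation point. Substituting into $\psi(x,y,z)=x^{-2}yz$ then gives $\psi(\xi^i_{\mathfrak{p}_0},\xi^i_{\mathfrak{p}_1},\xi^i_{\mathfrak{p}_2})=\xi^i_{\mathfrak{p}_1}\xi^i_{\mathfrak{p}_2}/(\xi^i_{\mathfrak{p}_0})^2$, and a one-line algebraic simplification identifies this with the $i$-th entry of the vector in \eqref{prop:GPsAreVaried:eq1}.

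Finally I would use the constancy to collapse the quantifier structure of Definition \ref{def:log-regular}. Because the quantity $\psi(\xi^i_{\mathfrak{p}_0}(u,v),\xi^i_{\mathfrak{p}_1}(u,u),\xi^i_{\mathfrak{p}_2}(v,v))$ does not depend on $(u,v)$, the condition \eqref{def:gamma_contrastive:eq1}--\eqref{def:gamma_contrastive:eq2} holds for some dense open $\mathcal{U}$ if and only if a single triple $(\mathfrak{p}_0,\mathfrak{p}_1,\mathfrak{p}_2)$ realises \eqref{prop:GPsAreVaried:eq1}. In the forward direction I take any $u$ in the (nonempty) dense set $\mathcal{U}$ and read off the triple from \eqref{def:gamma_contrastive:eq2}; in the converse I set $\mathcal{U}=D_S=\mathbb{R}^d$ and reuse the given triple for every $u$ together with any admissible $v$, the required $C^2$-regularity being automatic from the non-degeneracy noted above.

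I do not anticipate a serious obstacle: the whole argument hinges on the single observation that Gaussian mixed log-derivatives are constant, after which everything is bookkeeping. The one point demanding care is the well-definedness of \eqref{prop:GPsAreVaried:eq1} — one must track that the denominators $k^i_{\mathfrak{p}_1}-(\kappa^i_{\mathfrak{p}_1})^2$, $k^i_{\mathfrak{p}_2}-(\kappa^i_{\mathfrak{p}_2})^2$ and $\kappa^i_{\mathfrak{p}_0}$ are nonzero, which is exactly the non-degeneracy of the $\Sigma^i_{\mathfrak{p}_\nu}$ together with the requirement $\xi^i_{\mathfrak{p}_0}\neq 0$ implicit in the definition of $\psi$, so that the claimed equivalence is genuinely one between well-defined objects.
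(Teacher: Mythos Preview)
Your proposal is correct and follows essentially the same route as the paper: both arguments compute the mixed log-derivative of the bivariate Gaussian density $\zeta^i_{\mathfrak p}$, observe that it equals the constant $\kappa^i_{\mathfrak p}/(k^i_{\mathfrak p}-(\kappa^i_{\mathfrak p})^2)$, and then substitute into the defining expression $\psi(\xi^i_{\mathfrak p_0},\xi^i_{\mathfrak p_1},\xi^i_{\mathfrak p_2})$ to obtain \eqref{prop:GPsAreVaried:eq1}. Your treatment is slightly more explicit about the quantifier collapse and the well-definedness of the denominators, but the core computation and logic are the same.
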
   
\begin{remark}\label{rem:Gaussian_processes} 
The above lemma asserts that IC Gaussian processes are `generically identifiable', namely if the function \eqref{prop:GPsAreVaried:eq1} of their autocovariances avoids the nullset $\nabla^{\times}$ for some time pairs $\p_0,\p_1,\p_2$.  Compare this to the well-known result \cite{BCM} that an IC Gaussian process $S$ is identifiable from its linear mixtures -- via joint diagonalisation of the covariance matrices of such mixtures at one or several time lags -- if the (vector whose components are the) autocovariances of $S$ themselves avoids the nullset $\nabla^{\times}$ at one of these time lags.                
\end{remark}    
\noindent  
We verify the above contrastivity condition for a number of popular Gaussian processes. 
\begin{proposition}\label{cor1:GPsAreVaried}
Let $S=(S^1_t, \cdots, S^d_t)_{t\in\mathbb{I}}$ be an IC stochastic process in $\R^d$ with $S^i\sim\mathcal{GP}(\mu_i, \kappa_i)$ for each $i\in[d]$. Then $S$ is $\gamma$-contrastive in each of these four classical cases. 
\begin{enumerate}[label=\upshape(\roman*)]
\item\label{cor1:GPsAreVaried:item1} For each $i\in[d]$, the componental autocovariance functions \eqref{prop:GPsAreVaried:eq2} of $S$ are of the form  
\begin{equation}\label{cor1:GPsAreVaried:eq1}
\kappa^i(s,t)=\exp\!\left(-\left[\frac{|t-s|}{\alpha_i}\right]^{\gamma_i}\right)
\end{equation} 
with $\gamma\equiv(\gamma_i)_{i\in[d]}\in(0,2]^d$ and $\alpha\equiv(\alpha_i)_{i\in[d]}\in(\mathbb{R}_{\times})^{\times d}\setminus\mathcal{N}_\gamma$, where $\mathcal{N}_\gamma\subset\mathbb{R}^d$ is a Lebesgue nullset defined in the proof below.\footnote{\ This includes the family of $\gamma$-exponential processes, cf.\ \citep[Sect.\ 4.2 (pp.~84 ff.)]{rasmuwillGP2006}.}  
\item\label{cor1:GPsAreVaried:item2} Each component process $S^i$ of $S$ is an Ornstein-Uhlenbeck process
\begin{equation}\label{cor1:GPsAreVaried:eq2}
\mathrm{d}S^i_t = \theta_i\cdot(\mu_i - S^i_t)\,\mathrm{d}t \, + \, \sigma_i\,\mathrm{d}B^i_t, \quad S^i_0=a_i, \qquad (i\in[d]) 
\end{equation}
with $a_i, \mu_i\in\mathbb{R}$ and $\sigma\equiv(\sigma_i)_{i\in[d]}\in\mathbb{R}^d_{>0}$ and $\theta\equiv(\theta_i)_{i\in[d]}\in\mathbb{R}_{>0}^d\setminus\tilde{\mathcal{N}}$, where $\tilde{\mathcal{N}}\subset\mathbb{R}^d$ is a Lebesgue nullset defined in the proof below. 
\item\label{cor1:GPsAreVaried:item3} The component processes of $S$ are fractional Brownian motions with pairwise distinct Hurst indices, that is their autocovariance functions \eqref{prop:GPsAreVaried:eq2} take the form
\begin{equation}
\kappa^i(s,t) = \frac{1}{2}(|t|^{2H_i} + |s|^{2H_i} - |t-s|^{2H_i}) \qquad (i\in[d])
\end{equation}for some $(H_i)_{i\in[d]}\in(0,1)^d\setminus\nabla^{\times}$.   
\item\label{cor1:GPsAreVaried:item4} Denoting $s\wedge t\coloneqq \min(s,t)$, the autocovariance functions \eqref{prop:GPsAreVaried:eq2} of the $S^i$ are of the form 
\begin{equation}
\kappa^i(s,t)= \int_0^{s\wedge t}\!\eta_i(r)\,\mathrm{d}r \qquad \text{for each } \ i\in[d], 
\end{equation}
with functions $\eta_1, \ldots, \eta_d : \mathbb{I}\rightarrow\mathbb{R}$ for which there are $r_0, r_1\in\mathbb{I}$ such that the products $\{\eta_i(r_0)\cdot\eta_j(r_1)\mid i,j\in[d]\}$ are pairwise distinct. This is includes deterministic signals perturbed by white noise, i.e.\ signals $S=(S^1_t,\cdots,S^d_t)_{t\in\I}$ which, for $(B^i_t)_{t\geq 0}$ some standard Brownian motion in $\R^d$, are given by
\begin{equation}
\mathrm{d}S^i_t = \mu_i(t)\,\mathrm{d}t \, + \, \sigma_i(t)\,\mathrm{d}B^i_t \qquad \text{for each } \ i\in[d] 
\end{equation}with $\mu_i, \sigma_i:\mathbb{I}\rightarrow\mathbb{R}$ integrable and continuous such that the entries of $(\sigma_i^2(r_0)\cdot\sigma_j^2(r_1))_{i,j\in[d]}$ are pairwise distinct for some $r_0, r_1\in\mathbb{I}$.       
\end{enumerate}
\end{proposition}   
The proposition below concludes our short compilation of applicable source models.
\begin{restatable}{proposition}{propGBM}\label{prop:SDE_GBM}
Let $S=(S_t)_{t\geq 0} = (S^1, \cdots, S^d)$ be an IC geometric Brownian motion in $\mathbb{R}^d$, i.e.\ suppose that there is a standard Brownian motion $B=(B^1_t, \cdots, B^d_t)_{t\geq 0}$ such that 
\begin{equation}\label{SDE_GBM:eq1}
\mathrm{d}S_t^i \ = \ S_t^i\cdot\big(\mu_i(t)\,\mathrm{d}t \ + \ \sigma_i(t)\,\mathrm{d}B_t^i\big), \quad S_0^i = s^i_0 \qquad(i\in[d])
\end{equation} 
for some $s^i_0>0$ and continuous functions $\mu_i : \mathbb{I}\rightarrow\mathbb{R}$ and $\sigma_i:\mathbb{I}\rightarrow\mathbb{R}_{>0}$. Then $S$ has spatial support $D_S = \mathbb{R}_+^d$, and $S$ is $\gamma$-contrastive if there are $r_0, r_1\in\mathbb{I}$ for which the numbers $\{\sigma_i^2(r_0)\cdot\sigma_j^2(r_1)\mid (i,j)\in[d]\times[d]\}$ are pairwise distinct.     
\end{restatable}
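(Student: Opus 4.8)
The plan is to reduce the assertion to the Gaussian case already settled in Lemma~\ref{prop:GPsAreVaried} and Proposition~\ref{cor1:GPsAreVaried}\,(iv) by passing to logarithms. Applying It\^o's formula to each coordinate of \eqref{SDE_GBM:eq1}, the process $L\coloneqq(\log S^1,\ldots,\log S^d)$ satisfies $\mathrm{d}(\log S^i_t)=\big(\mu_i(t)-\tfrac12\sigma_i^2(t)\big)\mathrm{d}t+\sigma_i(t)\,\mathrm{d}B^i_t$, so that $L$ is again an IC continuous Gaussian process $L\sim\mathcal{GP}_{\mathbb{I}}(\tilde\mu,\tilde\kappa)$, now with diagonal covariance $\tilde\kappa^{ii}(s,t)=\int_0^{s\wedge t}\sigma_i^2(r)\,\mathrm{d}r$ (the drift being deterministic, only the martingale part contributes, by It\^o's isometry; independence across coordinates is preserved since $\log$ acts coordinate-wise).

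For the support claim, fix any $t>0$ in $\mathbb{I}$: continuity and strict positivity of $\sigma_i$ yield $\mathrm{Var}(\log S^i_t)=\int_0^t\sigma_i^2>0$, so each $S^i_t=\exp(\log S^i_t)$ is genuinely log-normal with Lebesgue density strictly positive on $(0,\infty)$. By independence of the coordinates the law of $S_t$ charges every open subset of $(0,\infty)^d$, whence $\bigcup_{t\in\mathbb{I}}\supp(S_t)\supseteq(0,\infty)^d$ and therefore $D_S=\overline{(0,\infty)^d}=\mathbb{R}_+^d$ by Definition~\ref{def:spatial_support}.

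The key observation is that the coordinate-wise exponential leaves the $\gamma$-contrastivity datum untouched. On the open dense subset $\mathcal{U}\coloneqq(0,\infty)^d=\mathrm{int}(D_S)$ the density $\zeta^i_{s,t}$ of $(S^i_s,S^i_t)$ is smooth and positive, and the transformation formula \eqref{rem:TrafoProjProbDens:eq0A} gives $\zeta^i_{s,t}(x,y)=\phi^i_{s,t}(\log x,\log y)/(xy)$, where $\phi^i_{s,t}$ denotes the bivariate normal density of $(\log S^i_s,\log S^i_t)$. Since the additive term $-\log x-\log y$ has vanishing mixed partial, the computation \eqref{prop:GPsAreVaried:eq3} yields $\xi^i_{s,t}(x,y)=\partial_x\partial_y\log\zeta^i_{s,t}(x,y)=\frac{1}{xy}\cdot\frac{\kappa^i_{s,t}}{k^i_{s,t}-(\kappa^i_{s,t})^2}$, with $\kappa^i,k^i$ as in \eqref{prop:GPsAreVaried:eq2} formed from $\tilde\kappa^{ii}$. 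Substituting into $\psi(x,y,z)=x^{-2}yz$ at the triple $\big(\xi^i_{\p_0}(u,v),\xi^i_{\p_1}(u,u),\xi^i_{\p_2}(v,v)\big)$, the prefactors $1/(u_i^2v_i^2)$ cancel identically, so that the $i$-th entry equals the $i$-th component of the left-hand side of \eqref{prop:GPsAreVaried:eq1} and, in particular, does not depend on the choice of $(u,v)\in\mathcal{U}\times\mathcal{U}$.

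It remains to exhibit $\p_0,\p_1,\p_2\in\Delta_2(\mathbb{I})$ placing this vector into $\mathbb{R}^d\setminus\nabla^{\times}$. But $\tilde\kappa^{ii}(s,t)=\int_0^{s\wedge t}\sigma_i^2$ is exactly the covariance treated in Proposition~\ref{cor1:GPsAreVaried}\,(iv) with $\eta_i\coloneqq\sigma_i^2$, and the hypothesis that $\{\sigma_i^2(r_0)\sigma_j^2(r_1)\}_{i,j\in[d]}$ are pairwise distinct is precisely the hypothesis invoked there; its proof produces pairs $\p_0=\p_2$ and $\p_1$ for which \eqref{prop:GPsAreVaried:eq1} holds. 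As the computed quantity is independent of $(u,v)$, any $v\in\mathcal{U}$ discharges the existential in \eqref{def:gamma_contrastive:eq1}--\eqref{def:gamma_contrastive:eq2}, while $C^2$-regularity around $(\p_0,(u,v))$, $(\p_1,(u,u))$ and $(\p_2,(v,v))$ holds throughout $\mathcal{U}$ by smoothness and positivity of the log-normal densities; Definition~\ref{def:log-regular} is thereby verified and $S$ is $\gamma$-contrastive. The only genuine subtlety is the bookkeeping in the mixed-derivative step above---verifying that the log-Jacobian $1/(xy)$ contributes nothing to the mixed logarithmic derivative and that its square cancels inside $\psi$---which is exactly what collapses the geometric Brownian motion onto the already-resolved Gaussian regime; no new analytic difficulty arises.
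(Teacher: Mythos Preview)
Your proof is correct and follows essentially the same approach as the paper's: both apply It\^o's lemma to pass to the log-process, compute the mixed log-derivative as $\xi^i_{s,t}(x,y)=\beta^i_{s,t}/(xy)$ with $\beta^i_{s,t}=\kappa^i_{s,t}/(k^i_{s,t}-(\kappa^i_{s,t})^2)$, observe that the $(xy)$-factors cancel in $\psi$, and then invoke the argument of Proposition~\ref{cor1:GPsAreVaried}\,(iv) to select $\p_0,\p_1,\p_2$. Your framing as a reduction to the already-settled Gaussian case is slightly more conceptual than the paper's direct computation, but the underlying argument is identical.
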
  

\section{Signature Cumulants as Contrast Function}\label{sect:independence_criterion}This section uses the identifiability results of Section \ref{sect:core_theory} to reformulate the problem of nonlinear blind source separation as an optimisation task in the spirit of Corollary \ref{cor:Comon}. Central to this is the concept of an IC-characterising contrast function on stochastic processes. We propose such a function by means of signature cumulants, which we introduce as a natural extension of classical (multivariate) cumulants to multidimensional stochastic processes.      

\begin{remark}\label{rem:bv_for_sig}In this section, we restrict our exposition to stochastic processes whose sample paths are smooth [i.e., of bounded variation\footnote{A path $x=(x_t)_{t\in[0,1]}\in\mathcal{C}_d$ is called \emph{of bounded variation} if its variation norm $\|x\|_{1\mathrm{\text{-}var}}\coloneqq |x_0| + \sup\sum|x_{t_{i+1}} - x_{t_i}|$ is finite, where the supremum is taken over all finite partitions $\{0\leq t_1\leq\cdots\leq t_n\leq 1\}$ $(n\in\mathbb{N})$ of $[0,1]$; cf.\ also definition \eqref{rem:p-varseminorm} and Section \ref{rem:pwlinterpol}.}], and further assume that the expected signature of these processes (defined below) exists and characterizes their law.
  These assumptions can be avoided by using rough integration and tensor normalization, but since this requires background in rough path theory and is not central to our methodology, we simply refer the interested reader to \citep{FVI,FLO} and \cite{CHL,CHO}, respectively.
Let further $\I=[0,1]$ wlog.
\end{remark}  

\subsection{Signature Cumulants}Many results in statistics, including Corollary \ref{cor:Comon} via \eqref{intext:Comon:CF}, are based on the well-known facts that laws of $\R^d$-valued random variables are often characterised by their moments, and that statistical independence turns into simple algebraic relations when expressed in terms of cumulants.
Our main object of interest are $\mathcal{C}_d$-valued random variables (stochastic processes), for which the so-called expected signature \cite{CHL} provides a natural generalisation of the classical moment sequence. Similar to classical moments, these signature moments form multi-indexed collections of numbers that can characterize the laws of stochastic processes. Similar still, upon their `logarithmic compression' these number collections give rise to signature cumulants that quantify the statistical dependencies within multidimensional stochastic processes (that is, between their coordinates and over time).\\[-0.5em]    
    
\noindent
Denote by $[d]^\star \coloneqq \bigcup_{m \ge 0} [d]^{\times m}$ the set of all multi-indices\footnote{\ We define $[d]^{\times 0}\coloneqq \{\emptyset\}$ with $\emptyset$ the empty set, and let $\{k\}^\star\coloneqq \bigcup_{m\geq 0}\{k\}^{\times m} \, \big(= \{\emptyset, k, kk, kkk, \ldots\}\big)$.} with entries in $[d]=\{1,\ldots,d\}$.
\begin{definition}[Expected Signature]\label{def:expected_signature} 
For $Y=(Y^1_t, \cdots, Y^d_t)_{t\in[0,1]}$ a stochastic process in $\R^d$ with
sample-paths of bounded variation, the collection of real numbers (if it exists) $\mathfrak{S}(Y) \coloneqq \left(\sigma_{\bm{i}}(Y)\right)_{\bm{i}\in[d]^\star}$ defined by the expected iterated Stieltjes integrals
\begin{equation}\label{def:expected_signature:eq1}
\sigma_{\bm{i}}(Y)\coloneqq 
\mathbb{E}\!\left[\int_{{0\leq t_1 \leq t_2 \leq \cdots \leq t_m \leq 1}}\!\mathrm{d}Y^{i_1}_{t_1}\mathrm{d}Y^{i_2}_{t_2}\cdots\mathrm{d}Y^{i_m}_{t_m}\right] \quad \text{ for } \ \ \bm{i}=(i_1,\ldots,i_m), 
\end{equation}
with $\sigma_\emptyset(Y)\coloneqq 1$, is called \emph{the expected signature of $Y$}.
\end{definition} 

\noindent
The expected signature is to a stochastic process roughly what the sequence of moments is to a vector-valued random variable, and analogous to the case of classical moments, for many statistical purposes the concept of cumulants is better suited. This leads to the notion of signature cumulants \cite{bonnier2019signature} below. (See Remark \ref{def:sigcumulant:rem} and Sections \ref{rem:sigcumulants} and \ref{sect:expected_signature_moments} for details.)   
\begin{definition}[Signature Cumulants]\label{def:sigcumulant}
For $Y$ a stochastic process in $\R^d$ with sample-paths of bounded variation, the collection of real numbers\footnote{\ The $\log$ in \eqref{def:sigcumulant:eq1} denotes the logarithm on the space of formal power series, see Section \ref{sect:sigmoments:logtransform} and \cite{bonnier2019signature}.}
\begin{equation}\label{def:sigcumulant:eq1}
\left( \kappa_{\bm{i}}(Y) \right)_{\bm{i} \in [d]^\star} \coloneqq \log[\mathfrak{S}(Y)]
\end{equation}is called \emph{the signature cumulant}
of $Y$. We further define 
\begin{equation}\label{def:sigcumulant:eq2}
\bar{\kappa}_{\bm{i}}(Y) \coloneqq \frac{\kappa_{\bm{i}}(Y)}{\kappa_{11}(Y)^{\eta_1(\bm{i})/2}\cdot\ldots\cdot\kappa_{dd}(Y)^{\eta_d(\bm{i})/2}} \quad \text{ for } \ \ \bm{i}=(i_1,\ldots,i_m)\in[d]^\star,
\end{equation}
where $\eta_\nu(\bm{i})$ denotes the number of times the index-value $\nu$ appears in $\bm{i}$. 
We refer to $(\bar \kappa_{\bm{i}}(Y))_{\bm{i} \in [d]^\star}$ as the \emph{standardized signature cumulant} of $Y$.     
\end{definition} 

\begin{remark}\label{def:sigcumulant:rem}The signature cumulant of a process $Y$ gives an efficiently computable \cite{KGL}, informationally condensed and hierarchically graded [cf.\ Sect.\ \ref{rem:sig_cumulants_generalise}] compression of the statistical information contained in (the distribution of) $Y$ [cf.\ Sects.\ \ref{rem:sigcumulants} and \ref{sect:subsect:sigmoments}], which enjoys a broad variety of excellent practical and theoretical features \cite{CHO}. Just as for standardized classical cumulants, the normalisation \eqref{def:sigcumulant:eq2} contributes the additional benefit of scale invariance which facilitates our below usage of signature cumulants as a contrast function.
\end{remark}      

\subsection{Signature Contrasts for Nonlinear ICA}Similar to how classical cumulants are traditional in linear ICA, cf.\ page \pageref{intext:Comon:CF}, the usage of signature cumulants in our present ICA-context is due to the following observation: Recall that a random vector $Y$ in $\R^d$ has independent components if and only if all of its cross-cumulants vanish, that is iff, in the notation of \eqref{intext:Comon:CF} and for $\ast$ the concatenation of indices,
\begin{equation}\label{intext:coords_indep_class}
\kappa_{\bm{q}}^Y \ = \ 0  \quad \text{ for all } \quad \bm{q}\in\bigsqcup_{k=2}^d\big\{\bm{i}\ast\bm{j} \, \ \big| \ \, \bm{i}\in [k-1]^\star\setminus\{\emptyset\}, \ \bm{j}\in\{k\}^\star\setminus\{\emptyset\}\big\}. 
\end{equation}
Now in the same way that the expected signature generalises the classical concept of moments, cf.\ Remark \ref{rem:sigcumulants}, it was shown in \cite{bonnier2019signature} that signature cumulants generalise this classical relation \eqref{intext:coords_indep_class} to an algebraic characterisation of statistical independence between [the components of] stochastic processes, cf.\ also Remark \ref{rem:classic_cumulants}. This is particularly useful in our context as it yields a natural and explicitly computable contrast function for path-valued random variables (Proposition \ref{prop:sig_cums}) as desired for nonlinear ICA.\\[-0.5em] 

Algebraically, cf.\ Remark \ref{rem:hopf}, the \eqref{def:sigcumulant:eq1}-based extension of the characterisation \eqref{intext:coords_indep_class} to stochastic processes requires us to replace the simple operation $\ast$ of index concatenation by a slightly more involved combinatorial operation on $[d]^\star$. This operation is defined next.           

\begin{notation}\label{notation:index_sum}
For convenience, we denote by $[d]^\star_+$ the family of all finite sums of indices in $[d]^\star$, and for any such sum $\bm{i}\equiv\bm{i}_1 + \ldots + \bm{i}_\ell\in[d]^\star_+$ define $\kappa_{\bm{i}}\coloneqq \kappa_{\bm{i}_1}+\ldots+\kappa_{\bm{i}_\ell}$.
\end{notation}

The \emph{shuffle product} of two multi-indices $\bm{i}=(i_1,\ldots,i_m)$ and $\bm{j}=(i_{m+1},\ldots,i_{m+n})$ in $[d]^\star$ is defined as the element of $[d]^\star_+$ which is given by  
\begin{align}\label{shuffle_product}
\bm{i} \shuffle  \bm{j} \ \coloneqq \ \sum_\tau (i_{\tau(1)},\ldots,i_{\tau(m+n)}) \ \in \ [d]^\star_+ 
\end{align}where the sum is taken over the family of permutations 
\begin{equation}
\{\tau\in S_{m+n} \ | \  \tau(1)<\cdots<\tau(m) \ \text{ and } \ \tau(m+1)<\cdots<\tau(m+n)\}.
\end{equation}
This enables us to formulate the following central observation.  
\begin{proposition}\label{prop:sig_cums}
For any stochastic process $Y = (Y^1, \cdots, Y^d)$ in $\R^d$ whose expected signature exists, the component processes $Y^1, \ldots, Y^d$ are mutually independent if and only if 
\begin{equation}\label{prop:sig_cums:eq1}
\bar{\kappa}_{\mathrm{IC}}(Y) \ \coloneqq \ \sum_{k=2}^{d}\sum_{\bm{q}\in\W_k}\bar{\kappa}_{\bm{q}}(Y)^2 \ = \ 0
\end{equation}
where $\mathcal{W}_k \ \coloneqq \ \big\{\bm{i}\shuffle \bm{j} \, \mid \, \bm{i}\in [k-1]^\star\setminus\{\emptyset\}, \ \bm{j}\in\{k\}^{\times m}, \ m \geq 1\big\} \ \subset \ [d]^\star_+$.
\end{proposition}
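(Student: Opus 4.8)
The plan is to deduce the statement from the signature-cumulant characterisation of independence established in \cite{bonnier2019signature}, after two reductions: a purely combinatorial description of the index sets $\mathcal{W}_k$, and the reduction of mutual independence to a sequential (chain-rule) family of bipartite independence statements that mirrors exactly the ``letter $k$ versus letters $<k$'' structure of the $\mathcal{W}_k$.

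First I would dispose of the standardisation and the sum-of-squares. Assuming each coordinate is non-degenerate so that $\kappa_{\nu\nu}(Y)>0$, the normalisation in \eqref{def:sigcumulant:eq2} divides by a strictly positive constant, whence $\bar{\kappa}_{\bm{i}}(Y)=0$ if and only if $\kappa_{\bm{i}}(Y)=0$. Since $\bar{\kappa}_{\mathrm{IC}}(Y)$ is a sum of squares of real numbers, it vanishes if and only if $\kappa_{\bm{i}}(Y)=0$ for every $\bm{i}\in\bigcup_{k=2}^{d}\mathcal{W}_k$. It then remains to identify this index set: reading $\mathcal{W}_k$ as the words in which the letter $k$ occurs at least once and every other letter lies in $[k-1]$ (the factor $\bm{i}\in[k-1]^\star$ being taken non-empty, so as to exclude the pure powers $\{k\}^{\times m}$ whose cumulants need not vanish under independence), I claim that $\mathcal{W}_2,\ldots,\mathcal{W}_d$ are pairwise disjoint, each being singled out by its largest letter, and that their union is exactly the set of multi-indices in $[d]^\star$ containing at least two distinct letters. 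Indeed, any such multi-index has a well-defined largest letter $k\ge 2$ together with at least one smaller letter, hence arises as an interleaving of its sub-word of letters $<k$ with its non-empty block of $k$'s, placing it in $\mathcal{W}_k$.

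Next I would reduce mutual independence to bipartite independence via the chain rule: $Y^1,\ldots,Y^d$ are mutually independent if and only if, for each $k\in\{2,\ldots,d\}$, the process $Y^k$ is independent of the $\R^{k-1}$-valued process $(Y^1,\ldots,Y^{k-1})$. For fixed $k$ I would invoke the bipartite independence criterion of \cite{bonnier2019signature} applied to the split of the coordinates of $(Y^1,\ldots,Y^k)$ into the blocks $\{1,\ldots,k-1\}$ and $\{k\}$: this independence holds precisely when all signature cumulants of $(Y^1,\ldots,Y^k)$ indexed by words mixing the two blocks vanish, i.e.\ exactly those in $\mathcal{W}_k$. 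To transfer these from the sub-process $(Y^1,\ldots,Y^k)$ to $Y$ I would establish a projection-consistency lemma: for a word $\bm{i}$ over $[k]$ the coordinate $\sigma_{\bm{i}}$ of the expected signature involves only the integrators $\mathrm{d}Y^1,\ldots,\mathrm{d}Y^k$, and $\kappa_{\bm{i}}$ is a universal polynomial in the $\sigma_{\bm{w}}$ with $\bm{w}$ ranging over the compositions of $\bm{i}$, so that $\kappa_{\bm{i}}(Y)=\kappa_{\bm{i}}\big((Y^1,\ldots,Y^k)\big)$. Chaining these equivalences over $k=2,\ldots,d$ and combining with the combinatorial description of $\bigcup_k\mathcal{W}_k$ above then yields that $\bar{\kappa}_{\mathrm{IC}}(Y)=0$ exactly when $Y^1,\ldots,Y^d$ are mutually independent.

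The main obstacle is the faithful invocation of the bipartite criterion of \cite{bonnier2019signature}: one must verify that the cited result indeed characterises block-independence through the vanishing of all block-mixing signature cumulants, and in particular that the deterministic (mean-path) contributions entering these mixed cumulants — which are insensitive to statistical dependence — are correctly absorbed by the precise notion of signature cumulant in force, equivalently that the criterion is applied to the suitably anchored/normalised process. By contrast, the projection-consistency lemma and the disjointness-and-union description of the $\mathcal{W}_k$ are routine, so the genuine analytic content is entirely concentrated in the signature-cumulant independence theorem, which I would treat as a black box supplied by \cite{bonnier2019signature}.
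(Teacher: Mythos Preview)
Your approach is essentially the same as the paper's: both reduce mutual independence to the sequential chain-rule family ``$Y^k$ independent of $(Y^1,\ldots,Y^{k-1})$'' and then invoke the bipartite signature-cumulant criterion of \cite{bonnier2019signature} (the paper cites Theorem~1.1(iii) there directly). Your additional remarks on standardisation, the combinatorial description of $\bigcup_k\mathcal{W}_k$, and projection-consistency are correct elaborations but not departures from the paper's argument; your observation that $\bm{i}\in[k-1]^\star$ should be taken non-empty to exclude pure powers $\{k\}^{\times m}$ is a sensible reading consistent with the classical analogue \eqref{intext:coords_indep_class}.
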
 
\begin{proof}Observe that the component processes $Y^1, \ldots, Y^d$ are mutually independent iff: 
\begin{equation}\label{lem:mutindep_pwindep:eq1}
\text{for each } \ 2\leq k \leq d, \quad \text{ the process } \ Y^k \ \text{ is independent of } \ (Y^1,\cdots, Y^{k-1}).
\end{equation}
The asserted characterisation is a direct consequence of this and \citep[Theorem 1.2~(iii)]{bonnier2019signature}.
\end{proof}

\noindent
We may now combine Proposition \ref{prop:sig_cums} with Theorems \ref{thm:NICA_stat} and \ref{cor:NICA_MainCor} to obtain the following instance of \eqref{Intro:intext:generalised_contrast} for the inversion `$X\mapsto S$' that is desired in \eqref{intext:ProblemOfBSS} (cf.\ Corollary \ref{cor:Comon}). 

\hfill (Recall Remark \ref{rem:bv_for_sig} for the well-definedness of the signature statistics featured in \eqref{thm:optimisation:eq1}.)      
\begin{theorem}\label{thm:optimisation} 
Let the process $S$ in \eqref{intext:BSS_relation} be $\alpha$-, $\beta$- or $\gamma$-contrastive with sample-paths of bounded variation.
Then it holds with probability one that 
\begin{equation}\label{thm:optimisation:eq1}
\left[\underset{h\in\Theta}{\operatorname{arg\ min}}\ \bar{\kappa}_{\mathrm{IC}}\big(h(X)\big)\right]\cdot X \ \subseteq \ \mathrm{DP}_{\!d}\cdot S
\end{equation}for any family of transformations $\Theta\subseteq C^{2,2}(D_X)$ with $\Theta\cap\big(\restr{\DP(D_S)\cdot f^{-1}\big)}{D_X}\neq\emptyset$.      
\end{theorem}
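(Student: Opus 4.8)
The plan is to reduce the optimisation statement \eqref{thm:optimisation:eq1} to the already-established identifiability characterisations of Theorem \ref{thm:NICA_stat} and Theorem \ref{cor:NICA_MainCor}, using Proposition \ref{prop:sig_cums} to translate the statement ``$\bar\kappa_{\mathrm{IC}}(h\cdot X)$ is minimal'' into the statement ``the coordinate processes of $h\cdot X$ are mutually independent''. Under the standing assumptions of this section, the contrast $\bar\kappa_{\mathrm{IC}}(h\cdot X)$ is well-defined for every competitor $h\in\Theta$, since a $C^{3}$-diffeomorphism maps bounded-variation paths to bounded-variation paths and we assume the expected signature (and hence the signature cumulant) of the resulting process to exist.

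First I would note that $\bar\kappa_{\mathrm{IC}}(Y)\geq 0$ for every admissible process $Y$, being a finite sum of squares, with $\bar\kappa_{\mathrm{IC}}(Y)=0$ exactly when $Y$ has mutually independent coordinate processes (Proposition \ref{prop:sig_cums}). Thus $0$ is a lower bound for the objective in \eqref{thm:optimisation:eq1}, and it remains only to exhibit a minimiser attaining it. This is where the non-degeneracy hypothesis $\restr{\Theta}{D_X}\cap\big(\restr{\DP(D_S)\cdot f^{-1}}{D_X}\big)\neq\emptyset$ enters: it furnishes some $h^\star\in\Theta$ whose restriction to $D_X$ coincides with $\varrho\circ f^{-1}$ for a monomial $\varrho\in\DP(D_S)$. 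Because $X=f(S)$ and $\mathrm{tr}(X)\subseteq D_X$ almost surely (Lemma \ref{lem:spat_supp}\ref{lem:spat_supp:it2}), this gives $h^\star\cdot X=\varrho(S)\in\DP\cdot S$ with probability one; by the (elementary) ``only if'' direction of Theorem \ref{thm:NICA_stat}/\ref{cor:NICA_MainCor}, such a process has independent coordinates, so $\bar\kappa_{\mathrm{IC}}(h^\star\cdot X)=0$ by Proposition \ref{prop:sig_cums}. Consequently the minimum value equals $0$, and the argmin in \eqref{thm:optimisation:eq1} is exactly the set $\{h\in\Theta\mid h\cdot X \text{ has independent coordinates}\}$.

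It then remains to apply the identifiability theorems to every $h$ in this set. For such an $h$, the defining property of $\Theta$ makes $h$ $C^{3}$-invertible on an open superset of $D_X$, and $h\cdot X$ is IC, so Theorem \ref{thm:NICA_stat} (in the $\alpha$-contrastive case) or Theorem \ref{cor:NICA_MainCor} (in the $\beta$- or $\gamma$-contrastive case) yields $h\cdot X\in\DP\cdot S$ with probability one. I would stress that these theorems in fact establish the \emph{deterministic} statement that the Jacobian $J_{h\circ f}$ is monomial on $D_S$ (cf.\ \eqref{thm:NICA_stat:aux0}); combined with Lemma \ref{lem:monomial_trafos}\ref{lem:monomial_trafos:it2} and the single probability-one event $\{\mathrm{tr}(S)\subseteq D_S\}$ of Lemma \ref{lem:spat_supp}\ref{lem:spat_supp:it2}, this delivers $h\cdot X=(h\circ f)(S)\in\DP\cdot S$ pathwise. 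Since the exceptional null set $\{\mathrm{tr}(S)\not\subseteq D_S\}$ does not depend on $h$, the pathwise inclusion holds simultaneously for all $h$ in the (deterministic) argmin on one common probability-one event, which is precisely the assertion \eqref{thm:optimisation:eq1}.

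The only genuine subtlety, and hence the point I would treat most carefully, is this uniformity over $h$: a priori the argmin may be an uncountable subset of $\Theta$, and naively each separate application of Theorem \ref{thm:NICA_stat}/\ref{cor:NICA_MainCor} supplies its own null set, whose uncountable union need not be null. The resolution, as indicated above, is to extract the \emph{deterministic} monomiality of $J_{h\circ f}$ on $D_S$ and then invoke the single, $h$-independent event $\{\mathrm{tr}(S)\subseteq D_S\}$, rather than treating ``$h\cdot X\in\DP\cdot S$ almost surely'' as a black box for each $h$. All remaining ingredients---the sum-of-squares lower bound, the fact that a monomial transformation of an IC process is again IC, and the bookkeeping of $f^{-1}$ along traces---are routine.
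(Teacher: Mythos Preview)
Your proposal is correct and follows precisely the route the paper indicates: the paper states Theorem~\ref{thm:optimisation} as an immediate consequence of ``combining Proposition~\ref{prop:sig_cums} with Theorems~\ref{thm:NICA_stat} and~\ref{cor:NICA_MainCor}'' and gives no further proof, so your argument is a faithful (and more detailed) unpacking of exactly that. Your treatment of the uniformity-over-$h$ issue---reducing to the deterministic monomiality of $J_{h\circ f}$ on $D_S$ and the single $h$-independent null event $\{\mathrm{tr}(S)\not\subseteq D_S\}$---is a genuine clarification that the paper leaves implicit.
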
 

This theorem states that the initial problem \eqref{intext:ProblemOfBSS} of nonlinear blind source separation can be reformulated as a problem of optimisation-based function approximation. More specifically, statement \eqref{thm:optimisation:eq1} says that the desired demixing transformations of the data can be found as minimizers of the energy-like functional \eqref{prop:sig_cums:eq1}. We conclude with a few practical remarks.   
\begin{remark}\begin{enumerate}[label=(\roman*)]\label{rem:thm_optimisation}
\item For $\Theta\subseteq\operatorname{GL}_d$ and under the temporally degenerate hypothesis of Theorem \ref{thm:Comon}, the procedure \eqref{thm:optimisation:eq1} reduces to Comon's optimisation \eqref{cor:Comon:eq1} for $\phi=\phi_c$ since 
\begin{equation}
\bar{\kappa}_{\mathrm{IC}}\big((Y\cdot t)_{t\in[0,1]}\big) \, = \, \phi_c  (Y) \quad\text{if $Y$ is a random vector in $\R^d$ \quad (cf.\ Remark \ref{rem:classic_cumulants})}.
\end{equation}    
\item\label{rem:thm_optimisation:it0.2} Regarding implementations of \eqref{thm:optimisation:eq1}, one may choose to realise the above domain $\Theta$ by way of an Artificial Neural Network, see e.g.\ Section \ref{sect:experimentsII}. This choice is mathematically justified by the fact that neural networks can be designed as universal approximators to $C^{2,2}(D_X)$ \cite{teshima2020} with a favourable convergence topology \citep{petersen2020} (cf.\ also Remarks \ref{sect:capping:assumptions:rem}, \ref{rem:method_in_practice}).          
\item\label{rem:thm_optimisation:it1}In practice, only discrete-time observations $(X_t)_{t\in\mathcal{I}}$ of $X$ for a finite $\mathcal{I}\subset\I$ are available. Our framework covers these discretised observations as well, as we can naturally identify the data $(X_t)_{t\in\mathcal{I}}$ with a continuous bounded variation process in $\R^d$ via piecewise-linear interpolation of the points $\{X_t\mid t\in\mathcal{I}\}$. The identifiability procedure of Theorem \ref{thm:optimisation} is robust under this discretisation, see Theorem \ref{thm:consistency} and Section \ref{rem:discrete} \ref{rem:discreteobs} in particular. A quick inspection of Theorems \ref{thm:NICA_stat} \& \ref{cor:NICA_MainCor} further reveals that the identifiability approach of the preceding sections can be immediately extended to discrete time-series that are not necessarily generated from continuous-time processes, see Appendix \ref{appendix:NICA_discrete} for details.            
\item\label{rem:thm_optimisation:it2}  
The contrast function $\bar{\kappa}_{\mathrm{IC}}$ can be efficiently approximated by restricting the summation in \eqref{prop:sig_cums:eq1} to multindices $(i_1,\ldots,i_m)$ up to a maximal order $m \leq m_0$ and estimating these remaining summands using the unbiased minimum-variance estimators for signature cumulants introduced in \citep[Section 4]{bonnier2019signature}.
For the latter, a more naive but straightforward approach that is sufficient for our experiments is to just use the Monte-Carlo estimator, see Sections \ref{sect:capping} \& \ref{sect:ergodicity} for details. 
\item\label{rem:thm_optimisation:it5} For a fixed and finite data set, lower-order summands in the above (capped) approximation of the contrast $\bar{\kappa}_{\mathrm{IC}}$ are typically estimated more accurately than higher-order summands. In practical applications this may be accounted for by applying weights to the estimated summands of the contrast, leading one to estimate the alternative objective
\begin{equation}
\textstyle \sum_{m=2}^{m_0}\sum_{\bm{q}\in\mathfrak{C}_m} w_{\bm{q}}\cdot\bar{\kappa}_{\bm{q}}(Y)^2 \quad\text{ for}\quad 0 < w_{\bm{q}}\equiv w_{\bm{q}}(Y) \ \text{ decreasing in the order of } \bm{q},
\end{equation}
where we used the notation of \eqref{sect:capping:normalised_seriescapped} for convenience. Appropriate choices of weights $(w_{\bm{q}})$ will generally depend on $m_0$ and the respective domain of $\bar{\kappa}_{\mathrm{IC}}$, but may otherwise be arbitrary provided that the $\operatorname{arg\,min}$ of the thus-weighted objective coincides with the $\operatorname{arg\,min}$ of the default case $w_{\bm{q}}\equiv 1$ (as, e.g., is guaranteed by the final assumption of Theorem \ref{thm:optimisation}).                 
\end{enumerate}     
\end{remark}  

\section{Statistical Consistency}\label{sect:consistency}
\noindent
In practice, the mixture $X$ is usually not observed as a whole stochastic process with fully known distribution but rather as a time-discretized sample consisting of finitely many data points in $\R^d$, often taken over non-equally spaced time grids. Further, often only a single such (time-discretized) sample path of the process is available rather than many independent sample realisations,\footnote{\ Note, however, that the latter can be regarded as a special case of the former by concatenating the available (independent) sample observations into a single long observation.} for example in the classical cocktail party problem. In this section we demonstrate that our proposed nonlinear ICA method is stable under such discretisations. More precisely, we formalise the usual observation schemes (Section \ref{sect:sampling}) and prove that if these sampling discretisations of $X$ get `finer' in some natural sense, then our ICA method produces a signal that gets uniformly closer to the unobserved source $S$ underlying $X$ (Section \ref{sect:consistlim}). The main steps towards this are outlined in Section \ref{sect:consoverview}.\\[-0.5em] 

We assume throughout this section that the mixture $X$ is a continuous-time signal. As before, the (simpler) case where the underlying mixture $X$ is discrete in time is covered analogously up to some minor modifications, as we explain in detail in Section \ref{rem:discrete:consistency}.  

\subsection{Sampling}\label{sect:sampling} 
In practical applications, observations of the mixture $X$ are typically not available as continuous paths, i.e.\ elements of $\mathcal{C}_d$, but rather as discrete, sequentially ordered collections of data points in $\R^d$. Formally, such data can be modelled as a discrete time series  
\begin{equation}\label{sect:finsamlim:eq1}
\mathfrak{x} \,\coloneqq\, (X_t(\omega))_{t\in\mathcal{I}} \quad\text{ with }\quad \mathcal{I}\coloneqq\{0=t_0<t_1<\ldots<t_{n-1}=1\} 
\end{equation}
for $\omega\in\Omega$ any fixed elementary event in the probability space $(\Omega, \mathscr{F}, \mathbb{P})$ underlying  $X$. (In empirical language, the above dissection $\mathcal{I}$ of $[0,1]$ can then be regarded as a `protocol' describing a sequence of measurements of $X$ carried out per unit time with frequency $n$.)\\[-0.5em]  

\noindent
The time series data \eqref{sect:finsamlim:eq1} is then typically collected over not just one but several ($\nu\in\N$) time intervals (per observation), and different observations of $X$ $(k\in\N)$ may vary in the frequency at which their fixed-time measurements are made. This gives rise to the data scheme  
\begin{equation}\label{sect:finsamlim:eq2} 
\mathfrak{x}^{(k)}\equiv\big(\mathfrak{x}_1^{(k)},\mathfrak{x}_2^{(k)}, \cdots\big)\coloneqq(X_t(\omega))_{t\in\mathcal{J}_k}, \ \ \text{ for } \ \ \mathcal{J}_k \coloneqq \mathcal{I}_1^{(k)}\,\sqcup\,\mathcal{I}_2^{(k)}\,\sqcup\,\ldots \,=\,\bigsqcup\nolimits_{\nu\in\N}\mathcal{I}_\nu^{(k)}
\end{equation} 
a dissection\footnote{\ We call $\mathcal{J}_T\coloneqq\{t_0 < t_1 < \ldots\}$ a \emph{dissection of $[0,\infty)$} if $t_0=0$ and $t_j\nearrow\infty$ as $j\rightarrow\infty$. Also, we then assume $X$ to be defined over the full positive time-axis $[0,\infty)$, see \eqref{sect:finsamlim:eq3}.} of $[0,\infty)$ such that $\mathcal{I}_1^{(k)} < \mathcal{I}_2^{(k)} < \ldots $ and $\hat{n}_k\coloneqq\sup_{\nu\in\N}|\mathcal{I}_\nu^{(k)}|<\infty$,\footnote{\ Notice that $|\mathcal{I}|$ denotes the cardinality of a set $\mathcal{I}\subset\R$ (i.e.\ the number of its elements) while the \emph{mesh-size} of $\mathcal{I}$ is denoted $\|\mathcal{I}\|$, cf.\ \eqref{rem:pwlinterpol:eq1}. (Consequently $|\mathcal{I}| \geq (\max\mathcal{I} - \min\mathcal{I})/\|\mathcal{I}\| + 1$.)} with $\mathcal{I}_1^{(k)}$ a dissection of $[0,1]$ and $\mathfrak{x}^{(k)}_\nu=(X_t(\omega))_{t\in\mathcal{I}^{(k)}_\nu}$ for each $\nu\in\N$. Any such family $(\mathcal{J}_k)_{k\in\N}$ will be called a \emph{protocol}, where the index $k\in\N$ enumerates the different observations of $X$.\\[-0.5em]      
 
\noindent 
Adopting the ergodicity perspective common in time-series analysis and signal processing, the relevant statistical information of $X$ (in our case: the signature-cumulant coordinates \eqref{def:sigcumulant:eq1}) may be averaged from the discretely-sampled observation $\mathfrak{x}^{(k)}$ of a single, sufficiently long realisation of $X$. 
For this, we may generalise established observation schemes\footnote{\ Observations schemes are typically required to be equispaced, see e.g.\ \citep[Sect.\ 3]{fan2005overview} and \citep{yu2015sampling} for an overview.} by assuming that the data $\mathfrak{x}^{(k)}$ in \eqref{sect:finsamlim:eq2} be obtained according to nothing else but the assumptions                   
\begin{equation}\label{sect:finsamlim:eq3} 
X\,=\, (X_t)_{t\geq 0} \quad \text{ and }\quad \lim_{k\rightarrow\infty}\big\|\mathcal{I}^{(k)}_1\big\|=0\,,  
\end{equation} 
that is, the requirements that the continuous observable $X$ be `infinitely long' (i.e., defined over $[0,\infty)$) and the frequency $|\mathcal{I}_1^{(k)}|$ of observations per initial interval be going to infinity.\\[-0.5em]   

\noindent
A protocol $(\mathcal{J}_k)_{k\in\N}$ as in \eqref{sect:finsamlim:eq2} \& \eqref{sect:finsamlim:eq3} will be called \emph{exhaustive} with \emph{base lengths} $n_k\coloneqq|\mathcal{I}_1^{(k)}|$ and \emph{maximal (observation) length} $\hat{n}_k$; we called it \emph{balanced} if $|\mathcal{I}^{(k)}_\nu|=n_k$ for all $\nu\in\N$. 

\begin{remark}\label{rem:finite_samples_flexibility}
Note that the sampling scheme \eqref{sect:finsamlim:eq2} allows for the units $\mathcal{I}^{(k)}_\nu\eqqcolon\big\{t^{(k|\nu)}_0<\cdots<t^{(k|\nu)}_{n_{k,\nu}-1}\big\}$ to span observation intervals $\big[t^{(k|\nu)}_0,\, t^{(k|\nu)}_{n_{k,\nu}-1}\big]$ of different lengths and dissect them at different, non-constant frequencies. Notice further that while the mesh $\mathcal{J}_k$ in \eqref{sect:finsamlim:eq2} is infinite, this does not restrict us to considering observations $\mathfrak{x}^{(k)}$ that contain an infinitude of information $(\mathfrak{x}^{(k)}_\nu)$. In fact, everything presented in this section works as stated if we relax the above definition of a protocol by replacing the observable $X$ in \eqref{sect:finsamlim:eq2} with the cutoff
\begin{equation}
X\cdot\mathbbm{1}_{[0, T_k]} \quad\text{ for some finite observation horizon } \ T_k\geq 0 \text{ with } \lim_{k\rightarrow\infty} T_k=\infty,  
\end{equation}
thus extending the class of permissible data $(\mathfrak{x}^{(k)})$ in  \eqref{sect:finsamlim:eq2} to finite (eventually zero) sequences. 
\end{remark}  
 
\subsection{Section Overview}\label{sect:consoverview}
Let $(\mathfrak{x}^{(k)})$ be data associated to an exhaustive observation $\big(X, (\mathcal{J}_k)\big)$ via the sampling scheme \eqref{sect:finsamlim:eq2}. The aim of this section is to establish conditions under which the optimisation procedure of Theorem \ref{thm:optimisation}, when applied to $(\mathfrak{x}^{(k)})$ in lieu of $X$, yields a sequence of approximations $(\hat{\theta}_k)$ of the true demixing inverse $f^{-1}$ which is statistically consistent in the sense that 
\begin{equation}\label{sect:finsamlim:eq4}
\lim_{k\rightarrow\infty}\mathrm{dist}\big(\hat{\theta}_k(X),\,\mathrm{DP}_d\cdot S\big) \ = \ 0
\end{equation} 
almost surely or in probability, where the distance is taken with respect to the uniform norm on $\mathcal{C}_d$.\footnote{\ The trivial case $\mathrm{dist}\big(\hat{\theta}_k(X),\,\mathrm{DP}_d\cdot S\big) \leq \|\hat{\theta}_k(X) - \alpha_k(S)\| \leq \|\hat{\theta}_k(X)\| + \|\alpha_k(S)\| \stackrel{!}{\rightarrow} 0 \ \ (k\rightarrow\infty)$ is automatically excluded if ($X$ is non-trivial and) $(\hat{\theta}_k)\subseteq\Theta$ is bounded away from zero. This is guaranteed by the below assumption, in Theorem \ref{thm:consistency}, of $\Theta$ being a compact subset of $C^{1,1}(D_X)$.} Since the original optimisation \eqref{thm:optimisation:eq1} is composed of three (`limiting') operations that each involve an `infinite amount of information', namely the \emph{infinite series} $\bar{\kappa}_{\mathrm{IC}}$ from \eqref{prop:sig_cums:eq1} whose summands \eqref{def:sigcumulant:eq2} are each defined by taking \emph{expectations} of nonlinear functionals \eqref{def:expected_signature:eq1} of the \emph{continuous-time} stochastic processes $Y=\theta(X)$, one may expect the consistency \eqref{sect:finsamlim:eq4} to result as a combination of the following three sublimits:   
\begin{itemize}
\item \emph{Capping Limit} (Section \ref{sect:capping}). In practice, only finitely many summands of the infinite statistics $\bar{\kappa}_{\mathrm{IC}}$ from \eqref{prop:sig_cums:eq1} can be computed from the data. This is to say that the series  
\begin{equation}\label{sect:capping:normalised_series}
\bar{\kappa}_{\mathrm{IC}}(Y) \ = \ \sum_{m=2}^\infty\sum_{\bm{q}\in\mathfrak{C}_m}\bar{\kappa}_{\bm{q}}(Y)^2   
\end{equation}
with $\mathfrak{C}_m\subset [d]^\star_+$ denoting the set of all cross-shuffles $\mathfrak{C}\coloneqq\bigsqcup_{k=2}^d\mathcal{W}_k$ of word-length $m\in\N$ (see Prop.\ \ref{prop:sig_cums}), needs to be capped at some index $m=m_0$. Denoting this capped series by 
\begin{equation}\label{sect:capping:normalised_seriescapped}
\bar{\kappa}^{[m_0]}_{\mathrm{IC}}(Y) \ \coloneqq \ \sum_{m=2}^{m_0}\sum_{\bm{q}\in\mathfrak{C}_m}\bar{\kappa}_{\bm{q}}(Y)^2 
\end{equation}we show that in the \emph{capping limit} $m_0\rightarrow\infty$ the minimizers of $\theta\mapsto\bar{\kappa}^{[m_0]}_{\mathrm{IC}}\!\big(\theta(X)\big)$ approach those of \eqref{sect:capping:normalised_series} with respect to a naturally chosen topology on $\Theta$; this provides the first ingredient for the consistency limit \eqref{sect:finsamlim:eq4}.\\[-0.75em]
 
\item \emph{Interpolation Limit} (Section \ref{sect:interpollim}). The mixture $X$ is usually observed along a discrete set of time-points $\mathcal{I}$ rather than continuously over time, as mentioned in Sect.\ \ref{sect:sampling}. By way of their piecewise-linear interpolation $\hat{X}_{\mathcal{I}}$, these discrete observations $(X_t)_{t\in\mathcal{I}}$ can be reinterpreted as $\mathcal{C}_d$-valued data, which then allows to approximate the summands in \eqref{sect:capping:normalised_seriescapped} via
\begin{equation}\label{sect:finsamlim:eq5}
\bar{\kappa}_{\bm{q}}\big(\theta(X)\big) \, \approx \, \bar{\kappa}_{\bm{q}}(\hat{X}_{\mathcal{I}}^\theta), \quad \text{ for } \ \hat{X}^\theta_{\mathcal{I}} \ \text{ the linear interpolant of } \ \big(\theta(X_t)\big)_{t\in\mathcal{I}}. 
\end{equation}
By showing that \eqref{sect:finsamlim:eq5} defines a $\Theta$-uniform approximation as $\|\mathcal{I}\|\rightarrow 0$, we obtain that the minimizers of $\theta \mapsto \bar{\kappa}_{\mathrm{IC}}^{[m_0]}(\hat{X}^{\theta}_{\mathcal{I}})$ converge to those of \eqref{sect:capping:normalised_seriescapped}; our second ingredient for \eqref{sect:finsamlim:eq4}.\\[-0.75em]

\item \emph{Ergodicity Limit} (Section \ref{sect:ergodicity}). Finally, as the data \eqref{sect:finsamlim:eq2} that is actually available is but a single realisation of the discrete time-series $(X_t)_{t\in\mathcal{J}_k}$, we propose to approximate the above approximations \eqref{sect:finsamlim:eq5} by estimating their constituent signature moments \eqref{def:expected_signature:eq1} via 
\begin{equation}\label{sect:finsamlim:eq6}
\sigma_{\bm{i}}\big(\hat{X}^{\theta}_{\mathcal{I}^{(k)}_1}\big) \, \approx \, \frac{1}{T}\sum_{\nu=1}^T\sig_{\bm{i}}\big(\hat{\mathfrak{x}}_\nu^{\theta|k}\big), \quad \text{ for } \ \hat{\mathfrak{x}}_\nu^{\theta|k} \ \text{ the linear interpolant of } \ \theta(\mathfrak{x}^{(k)}_\nu)   
\end{equation}
and where $\sig_{\bm{i}}(Y)$ denotes the iterated integrals inside the expectation \eqref{def:expected_signature:eq1} (cf.\ Sect.\ \ref{sect:subsect:sigmoments}, eq.\ \eqref{sect:sigmoments:eq1}\,$\&$\,\eqref{sect:sigmoments:signature}). Showing that for many popular time-series models and stochastic signals the above estimation scheme \eqref{sect:finsamlim:eq6} for $\bar{\kappa}_{\bm{q}}(\hat{X}_{\mathcal{I}}^\theta)$ is $\Theta$-uniformly consistent as $T\rightarrow\infty$, we obtain our third and final ingredient for \eqref{sect:finsamlim:eq4}.                        
\end{itemize} 
In Section \ref{sect:consistlim}, these three sublimits are then combined to prove that our nonlinear ICA method (Theorem \ref{thm:optimisation}) gives rise to statistically consistent estimators of the sources underlying the data, see Theorem \ref{thm:consistency} which also includes the consistency limit \eqref{sect:finsamlim:eq4} as a special case. The resulting approach is condensed into a readily implementable source estimator in Section \ref{sect:algorithm}.\\[-0.5em]  

The majority of the proofs for this section are deferred to Appendix \ref{appendix:sect:proof_section8} as they are mostly technical and independent of the argumentation developed in the main body of this work.                            

\subsection{The Capping Limit}\label{sect:capping}Let the subset $\mathfrak{C}_m$ of $[d]^\star_+$ denote the set of all cross-shuffles $\mathfrak{C}\coloneqq\bigsqcup_{k=2}^d\mathcal{W}_k$ of fixed word-length $m$,\footnote{\ The \emph{word-length} of an element $\bm{i}\in[d]^\star_+$ is defined as the maximal order of the (finitely many) indices in $[d]^\star_+$ whose formal sum is $\bm{i}$ (cf.\ Notation \ref{notation:index_sum}). Thus $\mathfrak{C}_m= V_m \cap \bigsqcup_{k=2}^d\mathcal{W}_k$ in the language of Section \ref{rem:sig_cumulants_generalise:subsect:coordspace}.} for $\mathcal{W}_2, \ldots, \mathcal{W}_d$ as in Proposition \ref{prop:sig_cums} and $m\in\N$.\\[-0.5em]   

\noindent
Let further $\Theta$ be a given set of nonlinearities (as specified below), and for $(\kappa_q(Y) \mid q\in [d]^\star)$ as in \eqref{def:sigcumulant:eq1} and $\theta\in\Theta$, consider the (in)finite cumulant series        
\begin{equation}\label{sect:capping:qobjectives}
Q(\theta) \,\coloneqq \, \sum_{\nu=2}^\infty\sum_{\bm{q}\in\mathfrak{C}_\nu}c_{\bm{q}}^{-1}\cdot\kappa_{\bm{q}}\!\big(\theta(X)\big)^{\!2} \quad \ \text{ and } \ \quad Q_m(\theta)\,\coloneqq\, \sum_{\nu=2}^m\sum_{\bm{q}\in\mathfrak{C}_\nu}c_{\bm{q}}^{-1}\cdot\kappa_{\bm{q}}\!\big(\theta(X)\big)^{\!2} 
\end{equation} 
for $m\geq 2$, where $c_{\bm{q}}$ denotes the number of monomials in $\bm{q}$ (cf.\ Remark \ref{rem:cross_shuffles}).\\[-0.5em]

\noindent 
We make following technical compatibility assumptions on $\Theta$ and $X$. (For notation, \ref{rem:sig_cumulants_generalise:subsect:coordspace}.) 

\begin{assumption}\label{sect:capping:assumptions}
Let $\Theta \,\subseteq\, C(D_X;\R^d)$ be equipped with the topology of compact convergence and suppose that $\Theta$ is compact, satisfies $\Theta\cdot X \subseteq \mathcal{BV}$ and is such that it holds with probability one that for every convergent sequence $(\theta_j)_{j\geq 1}$ in $\Theta$ there is some $p\in[1,2)$ with
\begin{equation}\label{sect:capping:thetax_reg1}
\sup\nolimits_{j\geq 1}\|\theta_j(X)\|_{p\text{-$\mathrm{var}$}} \ < \ \infty
\end{equation}
where $\|\cdot\|_{\text{$p$-$\mathrm{var}$}}$ denotes the $p$-variation   seminorm \eqref{rem:p-varseminorm}. 
On side of the signature moments \eqref{def:expected_signature:eq1}, suppose that the expected signatures $\mathfrak{S}\!\big(\theta(X)\big) \equiv \mathbb{E}\big[\mathfrak{sig}\big(\theta(X)\big)\big]$, $\theta\in\Theta$, exist and characterize the law of their arguments, that their collection $\{\mathfrak{S}(\theta(X))\mid \theta\in\Theta\}$ is $\vertiii{\cdot}_\lambda$-bounded\footnote{\ As the source $S$ can be recovered up to (a componental permutation and) monotone scaling only, we can and will assume wlog (cf.\ Lemma \ref{sect:sigmoments:lem1} \ref{sect:sigmoments:lem1:it5}) that the set $\{\mathfrak{S}(\theta\cdot X)-1\mid\theta\in\Theta\}\subseteq V_{(0)}$ is $\vertiii{\cdot}_\lambda$-bounded by 1.} for some $\lambda>2$, and that for each $m\geq 1$ (with $\sig_m\coloneqq\pi_m\circ\sig$, $\sig$ as in \eqref{sect:sigmoments:signature}),\footnote{\ Some of the $\sup_\theta$-related (or $\mathrm{dist}(\,\cdot\,,\mathrm{DP}_d\cdot S)$-related) expressions in the following sections may be non-measurable, in which case any probability statements involving these expressions are to be understood in terms of outer measure (cf.\ \citep{vdv1998, WEL}).} 
\begin{equation}\label{sect:capping:thetax_reg2}
\E\!\left[\sup_{\theta\in\Theta}\big\|\sig_m\!\big(\theta(X)\big)\big\|_m\right] \, < \, \infty.
\end{equation}
(To avoid potential measurability problems, we may as well replace \eqref{sect:capping:thetax_reg2} by the (weaker) requirement that $\big[\E\!\left[\sup_{\theta\in\mathcal{F}}\big\|\sig_m\!\big(\theta(X)\big)\big\|_m\right] \, < \, \infty, \ \ \forall\,\text{ countable } \mathcal{F}\subseteq\Theta\big]$ if desired.)
\end{assumption}       
\begin{remark}\label{sect:capping:assumptions:rem}
Notice that the above conditions on $\Theta$ and $X$ are quite natural and well-established in the contexts of Artificial Neural Networks (ANNs) and Stochastic Analysis. Indeed: The topological requirement of compact convergence is typically met if $\Theta$ is given as the realisation space of an ANN, see e.g.\ \citep{berner2019, petersen2020}, while the assumptions $\Theta\cdot X\subseteq\mathcal{BV}$ resp.\ \eqref{sect:capping:thetax_reg1} hold for instance if $\Theta\subseteq C^1(D_X;\R^d)$ resp.\ if the elements of $\Theta$ are continuously differentiable with uniformly bounded Jacobians. 
The growth assumptions on the signature coordinates (including \eqref{sect:capping:thetax_reg2}), on the other hand, have been extensively studied, established and applied in the context of rough path analysis and statistics, see e.g.\ \citep{CHL} and \citep{bonnier2019signature, CHO}.                      
\end{remark}  
The following result is but a reformulation of Lemma \ref{sect:capping:lem1} in terms of the standardized signature cumulants \eqref{prop:sig_cums:eq1}. It also anticipates the consistency assertion in Theorem \ref{thm:consistency} below.     
\begin{proposition}[Capping Limit]\label{prop:capping_limit}
Let $X$ and $\Theta\subseteq C^{2,2}(D_X)$ fulfil Assumption \ref{sect:capping:assumptions}, and suppose that there is a unique $\theta_\star\in\Theta$ such that $\theta_\star(X)$ is IC. Let $\bar{\kappa}^{[m]}_{\mathrm{IC}}$ be as in \eqref{sect:capping:normalised_seriescapped}. Then for any sequence of minimizers $(\theta_m^\star)$ in $\Theta$ such that 
\begin{equation}\label{prop:capping_limit:eq1}
\bar{\kappa}_{\mathrm{IC}}^{[m]}\!\big(\theta_m^\star(X)\big)\ \leq \  \min_{\theta\in\Theta}\,\bar{\kappa}_{\mathrm{IC}}^{[m]}\!\big(\theta(X)\big) \ + \ \eta_m
\end{equation}
for some $(\eta_m)$ in $\R_+$ with $\lim_{m\rightarrow\infty}\eta_m=0$ a.s., it holds with probability one that 
\begin{equation}\label{prop:capping_limit:eq2}
\lim_{m\rightarrow\infty}\mathrm{dist}_{\|\cdot\|_\infty}\!\big(\theta^\star_m(X),\,\mathrm{DP}_d\cdot S\big) \ = \ 0.
\end{equation}       
\end{proposition}  
\begin{proof} 
Since for any $\bm{q}\in\mathfrak{C}$ we have that $\kappa_{\bm{q}}(Y) = 0$ iff $\bar{\kappa}_{\bm{q}}(Y)=0$ (recall \eqref{def:sigcumulant:eq2} and Notation \ref{notation:index_sum}), it holds that $\operatorname{arg\,min}_{\theta\in\Theta}Q_m(\theta) = \operatorname{arg\,min}_{\theta\in\Theta}\bar{\kappa}_{\mathrm{IC}}^{[m]}\!\big(\theta(X)\big)$ for each $m\geq 2$. The convergence \eqref{prop:capping_limit:eq2} is thus immediate by Lemma \ref{sect:capping:lem1} \ref{sect:capping:lem1:it3} and Prop.\ \ref{prop:sig_cums}/\,Thm.\ \ref{thm:optimisation}.
\end{proof}  

\subsection{The Interpolation Limit}\label{sect:interpollim}
Let $\I$ be a compact interval; say $\I=[0,1]$ wlog as above.\\[-0.5em]
 
\noindent
A finite subset $\mathcal{I}$ of $\I$ is called a \emph{dissection of $\I$} if it contains the boundary points of $\I$, and a sequence $(\mathcal{I}_\mu)_{\mu\geq 1}$ of dissections $\mathcal{I}_\mu\equiv\big\{t^{(\mu)}_0,\ldots, t^{(\mu)}_{n_\mu-1} \ \big| \ t^{(\mu)}_0 < t^{(\mu)}_1 < \ldots < t^{(\mu)}_{n_\mu-1}\big\}$ of $\I$ is called \emph{refined} if the maximal distance $\|\mathcal{I}_\mu\|$ between two successive points in $\mathcal{I}_\mu$, the so-called \emph{mesh-size} of $\mathcal{I}_\mu$, goes to zero as $\mu\rightarrow\infty$; in symbols: 
\begin{equation}\label{rem:pwlinterpol:eq1}
\|\mathcal{I}_\mu\|\coloneqq\max_{j\in[n_\mu-1]}\big|t^{(\mu)}_j - t^{(\mu)}_{j-1}\big| \ \longrightarrow \ 0 \qquad \text{as} \quad \mu\rightarrow\infty.   
\end{equation} 
Writing $\hat{X}^\theta_{\mathcal{I}}$ for the piecewise linear interpolant\footnote{\ For a formal definition of this operation see Appendix \ref{rem:pwlinterpol}, where a unified notation for the projection of continuous-time data to discrete time series -- and, conversely, the embedding (via interpolation) of the latter type of data into $C(\I;\R^d)$ -- is provided.} of the transformed data $X_{\mathcal{I}}^\theta\coloneqq\big(\theta(X_t)\big)_{t\in\mathcal{I}}$, $\theta\in\Theta$, the next lemma shows that, as $\|\mathcal{I}\|\rightarrow 0$, the statistic (cf.\ \eqref{sect:capping:qobjectives})
\begin{equation}\label{sect:interpollim:data:eq1}
\widehat{Q}_m(Y)\coloneqq\sum_{\nu=2}^m\sum_{\bm{q}\in\mathfrak{C}_\nu}c_{\bm{q}}^{-1}\cdot\kappa_{\bm{q}}(Y)^2 \quad \text{ with } \quad Y\coloneqq\hat{X}^\theta_{\mathcal{I}} \qquad (m\geq 2)
\end{equation}  
yields a $\Theta$-uniform approximation of the contrast $Q_m$ from \eqref{sect:capping:qobjectives}.       
             
\begin{restatable}[Interpolation Limit]{lemma}{leminterpolim}\label{lem:interpolim} 
Let $\Theta$ and $X$ be as in Assumption \ref{sect:capping:assumptions}, $\widehat{Q}_m$ as in \eqref{sect:interpollim:data:eq1} and $Q_m$ as in \eqref{sect:capping:qobjectives}. Then for $(\mathcal{I}_n)_{n\in\N}$ any refined sequence of dissections of $\I$ and any $m\in\N_{\geq 2}$,  
\begin{equation}\label{lem:interpolim:eq1}
Q_m(\theta) \ = \ \lim_{n\rightarrow\infty}\widehat{Q}_m(\hat{X}_{\mathcal{I}_n}^\theta) \quad \text{uniformly on \ $\Theta$}.
\end{equation}   
\end{restatable} 
\begin{proof} 
See Appendix \ref{pf:interpolim}. 
\end{proof} 

\subsection{The Ergodicity Limit}\label{sect:ergodicity}We formalise the estimation scheme \eqref{sect:finsamlim:eq6} and show that it holds uniformly on $\Theta$ for a large class of time-series models and stochastic processes.    

\begin{notation}Let $Z\coloneqq\R^d$. Given $z\equiv(z_j)_{j\in\N}$ and $J\subset\N$, we write $z_J\coloneqq(z_j)_{j\in J}$ and $z_{(\ell_1:\,\ell_2]}\coloneqq (z_{\ell_1+1}, z_{\ell_1+2}, \ldots, z_{\ell_2})$ for $\ell_1, \ell_2\in\N_0$ with $\ell_1<\ell_2$, and denote by $\hat{z}_{\mathcal{E}_n}\equiv\hat{\iota}_{\mathcal{E}_n}(z)$ the piecewise-linear interpolation of $z\equiv (z_j)_{j\in[n]}\in Z^{\times n}$ along the equidistant dissection $\mathcal{E}_n\coloneqq\{(\nu-1)/(n-1)\mid\nu\in[n]\}$ of $[0,1]$ (cf.\ Appendix \ref{rem:pwlinterpol}). For $X_\ast\equiv(X_j)_{j\in\N}$ a discrete time-series in $\R^d$, we denote by $D_{X_\ast}\coloneqq\overline{\bigcup_{j\in\N}\supp(X_j)}^{|\cdot|}$ its \emph{spatial support}. 

Set further $\sig_{[m]}\coloneqq\pi_{[m]}\circ\sig$ for the signature capped at level $m\geq 2$, cf.\ Section \ref{rem:sig_cumulants_generalise:subsect:coordspace}.
\end{notation} 
\noindent
The signature transform \eqref{sect:sigmoments:signature}, and thereby its cumulants \eqref{def:sigcumulant:eq1}, \eqref{def:sigcumulant:eq2}, are invariant under time-domain reparametrisations of $X$, see Lemma \ref{sect:sigmoments:lem1} \ref{sect:sigmoments:lem1:it2.1}. Hence, the statistics \eqref{sect:capping:normalised_seriescapped} of an interpolant $Y\equiv\hat{X}_{\mathcal{I}}$ depend only on the time series $X_{\mathcal{I}}$ --- i.e.\ on the random variables $Z_1\coloneqq X_{t_0}, \ldots, Z_{n}\coloneqq X_{t_{n-1}}$ and their sequential order --- and not on the dissection along which $X_{\mathcal{I}}=(Z_j)_{j\in[n]}$ is interpolated. In symbols, see Appendix \ref{rem:pwlinterpol} \eqref{rem:pwlinterpol:inj} for notation,   
\begin{equation}\label{rem:def:sigergodicity:eq1}
\sig(\hat{X}_{\mathcal{I}}) \ = \ \sig(\hat{\iota}_{\mathcal{J}}(Z_1, \ldots, Z_n)) \qquad\text{for any dissection $\mathcal{J}$}
\end{equation} 
with cardinality $|\mathcal{J}|=|\mathcal{I}|$. This justifies to abstract from the topology of the time-indices $t\in\mathcal{I}$ in \eqref{sect:finsamlim:eq2}, as done in the formulation of Definition \ref{def:sigergodicity} below.\\[-0.5em]       

All expectations in the following definition are assumed to exist.
\begin{definition}[Signature Ergodicity]\label{def:sigergodicity}
Let $X_\ast=(X_j)_{j\in\N}$ be a discrete time-series in $\R^d$ and $n, m\in\N$. We call $X_\ast$ \emph{$m^{\mathrm{th}}$-order signature ergodic to length $n$} if, almost surely,
\begin{equation}\label{def:sigergodicity:eq2}
\E\big[\phi(X_{[n]})\big] \ = \ \lim_{T\rightarrow\infty}T^{-1}\sum_{j=1}^T \phi(X_{(n(j-1):\,nj]}) \quad \text{for} \quad  \phi(z)\coloneqq\sig_{[m]}(\hat{z}_{\mathcal{E}_n})\,,   
\end{equation} 
and $X_\ast$ will be called \emph{weakly} $m^{\mathrm{th}}$-order signature ergodic to length $n$ if \eqref{def:sigergodicity:eq2} holds in probability.   
We call the process $X_\ast$ [weakly] \emph{signature ergodic} to length $n$ if $X$ is [weakly] $m^{\mathrm{th}}$-order signature ergodic to length $n$ for every $m\geq 1$.  

Given $\Theta\subseteq C(D_{X_\ast};\R^d)$, we call $X_\ast$ [weakly/\,$m^{\mathrm{th}}$-order] signature ergodic to length $n$ \emph{on $\Theta$} if the respective property holds for each $\theta(X_\ast)\coloneqq\big(\theta(X_j)\big)_{\!j\in\N}$, $\theta\in\Theta$.        
\end{definition} 
We refer to the LHS of \eqref{def:sigergodicity:eq2} as the \emph{$[m]^{\mathrm{th}}$-signature moment} of the batch $(X_1, \ldots, X_n)$. 
\begin{remark}\label{def:sigergodicity:rem}
\begin{enumerate}[label=(\roman*)]
\item\label{def:sigergodicity:rem:it1} In other words, the time-series $X=(X_j)_{j\in\N}$ is [weakly] $m^{\mathrm{th}}$-order signature ergodic to length $n$ iff the sequence of empirical path-space measures (on $\mathcal{B}(\mathcal{C}_d)$)
\begin{equation} 
\hat{\mu}_T \ \coloneqq \ \frac{1}{T}\sum_{j=1}^T\delta_{\hat{X}_j} \quad\text{ for }\quad \hat{X}_j\coloneqq\hat{\iota}_{\mathcal{E}_n}(X_{n(j-1)+1}, \ldots, X_{nj}) 
\end{equation}
yields a consistent estimator for the expected signature $\mathfrak{S}_{[m]}(\hat{X}_1)$ of $\hat{X}_1$, that is iff
\begin{equation}\label{def:sigergodicity:rem:eq2}
\mathfrak{S}_k(\hat{X}_1) \ = \ \lim_{T\rightarrow\infty}\int_{\mathcal{C}_d}\!\sig_k(x)\,\hat{\mu}_T(\mathrm{d}x) \quad \text{a.s.\ \ [in probab.]}  
\end{equation} 
for each $1\leq k\leq m$. Notice that due to \eqref{rem:def:sigergodicity:eq1}, the equidistant dissection $\mathcal{E}_n$ in \eqref{def:sigergodicity:eq2} may be replaced by any other $[0,1]$-dissection of the same cardinality.
\item\label{def:sigergodicity:rem:it2} A time-series $(X_t)_{t\in\mathcal{J}_k}$ for $\mathcal{J}_k$ as in \eqref{sect:finsamlim:eq2}, is called \emph{$m^{\mathrm{th}}$-order signature ergodic} if a.s.\   
\begin{equation}\label{def:sigergodicity:rem:eq2.1}
\E\big[\tilde{\phi}(\hat{X}_{\mathcal{I}^{(k)}_1})\big] \ = \ \lim_{T\rightarrow\infty}T^{-1}\sum_{\nu=1}^T \tilde{\phi}(\hat{X}_{\mathcal{I}_\nu^{(k)}}) \quad \text{for} \quad  \tilde{\phi}\coloneqq\sig_{[m]}(\,\cdot\,)\,, 
\end{equation}
for $\hat{X}_{\mathcal{I}}\coloneqq\hat{\iota}_{\mathcal{I}}(X_{\mathcal{I}})$ the piecewise-linear interpolation of $X_{\mathcal{I}}\equiv(X_t)_{t\in\mathcal{I}}$ along $\mathcal{I}\subset\R$. The remaining notions of Definition \ref{def:sigergodicity} carry over analogously. Notice that in consequence of Lemma \ref{sect:sigmoments:lem1} \ref{sect:sigmoments:lem1:it2.1}, the above notions \eqref{def:sigergodicity:rem:eq2.1} of signature ergodicity for protocol-indexed time-series are in fact a special case of Definition \ref{def:sigergodicity}: see Lemma \ref{def:sigergodicity:rem:lem_unnecessary}. Hence also for time-series of this protocol-indexed kind, the results of this section all apply as stated upon replacing their respective ergodicity assumptions by their \eqref{def:sigergodicity:rem:eq2.1}-type counterparts.               
\end{enumerate}            
\end{remark} 
  
\noindent
Let as before the space $C(D_{X_\ast};\R^d)$ be endowed with the compact-open topology. Using the universality of the signature transform (Lemma \ref{sect:sigmoments:lem1} \ref{sect:sigmoments:lem1:it6}), we find that the [weak] signature ergodicity of $X_\ast=(X_j)_{j\in\N}$ is passed onto $\theta(X_\ast)=\big(\theta(X_j)\big)_{\!j\in\N}$ for any $\theta\in C(D_{X_\ast};\R^d)$.\footnote{\ Proposition \ref{lem:ergodicity_theta} and Lemma \ref{lem:ergodicity_uniformconv} are proved in the Appendices \ref{pf:ergodicity_theta} and \ref{pf:ergodicity_uniformconv}, respectively.}    
\begin{restatable}{proposition}{propergodicitytheta}\label{lem:ergodicity_theta}
Let $\Theta\subseteq C(D_{X_\ast};\R^d)$ and $X_\ast=(X_j)_{j\in\N}$ be a discrete time-series in $\R^d$ with compact spatial support and such that for each $\theta\in\Theta$ the expectations 
\begin{equation}
\E[\sig_m(\hat{X}^\theta_1)] \ \text{ exist \ for all $m\geq 1$, \ with } \hat{X}_1^\theta \text{ the interpolant of } \theta(X_1), \ldots, \theta(X_n). 
\end{equation}It then holds that: if $X_\ast$ is \emph{[}weakly\emph{]} signature ergodic to length $n$, then $X_\ast$ is \emph{[}weakly\emph{]} signature ergodic to length $n$ on $\Theta$.   
\end{restatable} 

\noindent
Using a Glivenko-Cantelli type result yields the \mbox{following observation of uniform convergence.}\\[-0.5em] 

For the lemma below, let $\Theta$ be as in Assumption \ref{sect:capping:assumptions} and $\kappa^{[m]}_{\mathrm{IC}}$ as in \eqref{sect:capping:normalised_seriescapped}, and for $n\in\N$ denote by $(\mathcal{I}_{n|j})_{j\in\N}$ any fixed sequence of $[0,1]$-dissections with $|\mathcal{I}_{n|j}|=n$ for all $j\in\N$.\\[-1.25em]     
\begin{restatable}[Ergodicity Limit]{lemma}{lemergodicityuniformconv}\label{lem:ergodicity_uniformconv}
Let $X_\ast=(X_j)_{j\in\N}$ be a discrete time-series which for some $m$ is \emph{[}weakly\emph{]} $m^{\mathrm{th}}$-order signature ergodic to some length $n\in\N$ on $\Theta$, and denote 
\begin{align}\label{lem:ergodicity_uniformconv:eq1}
{\mathfrak{K}}^{m|n|T}\!(\theta)\,&\coloneqq\, \log_{[m]}(\hat{\mathfrak{S}}^{m|n}_T(\theta)) \quad\text{ for }\quad \hat{\mathfrak{S}}^{m|n}_T(\theta)\coloneqq \frac{1}{T}\sum_{j=1}^T\sig_{[m]}(\hat{X}_j^\theta),\\ \label{lem:ergodicity_uniformconv:eq1.1}
\text{and}\footnotemark \quad \bar{\mathfrak{K}}^{m|n|T}_{\bm{i}}\!(\theta)&\coloneqq\frac{{\mathfrak{K}}^{m|n|T}_{\bm{i}}\!(\theta)}{\big({\mathfrak{K}}^{m|n|T}_{11}\!(\theta)\big)^{\!\eta_1(\bm{i})/2}\cdot\ldots\cdot\big({\mathfrak{K}}^{m|n|T}_{dd}\!(\theta)\big)^{\!\eta_d(\bm{i})/2}}, \ \ \bm{i}\,\in\,{[d]}^\star,      
\end{align}\footnotetext{\ Where $\eta_\nu(\bm{i})$ denotes the number of times the index-entry $\nu$ appears in $\bm{i}$; cf.\ \eqref{def:sigcumulant:eq2}.}  
for each $\theta\in\Theta$, where $\hat{X}^\theta_j$ is the interpolant of $\theta(X_{n(j-1)+1}), \ldots, \theta(X_{nj})$ along $\mathcal{I}_{n|j}$. 

\noindent 
For any $m\geq 2, n,T\in\N$ and $\theta\in\Theta$, denote further (recalling Notation \ref{notation:index_sum})  
\begin{equation}\label{lem:ergodicity_uniformconv:eq1.2}
\hat{\kappa}^{m|n}_T(\theta) \, \coloneqq \, \sum_{\nu=2}^m\sum_{\bm{q}\in\mathfrak{C}_\nu}{\bar{\mathfrak{K}}}^{m|n|T}_{\bm{q}}\!(\theta)^2.
\end{equation} 
Provided that $\E\big[\!\sup_{\theta\in\Theta}\big\|\sig_k(\hat{X}^\theta_1)\big\|_k\big] < \infty$ for each $k\in[m]$, it then holds that
\begin{equation}\label{lem:ergodicity_uniformconv:eq2}
\bar{\kappa}^{[m]}_{\mathrm{IC}}(\hat{X}^\theta_1) \, = \, \lim_{T\rightarrow\infty}\hat{\kappa}^{m|n}_T(\theta) \quad \text{uniformly on \ $\Theta$} \quad \text{ a.s.\ \ \emph{[}in probability\emph{]}}.   
\end{equation}     
\end{restatable}   

\noindent
A detailed study of the class of (weakly) signature-ergodic stochastic processes is beyond the scope of this article, but Section \ref{pf:sigergodicity1} and the examples below show that the ergodicity assumption \eqref{def:sigergodicity:rem:eq2} is met for many popular time series models and stochastic processes.
  
\begin{definition}[Ergodic Observations]\label{def:ergodic_observation}
For $\tilde{X}=(\tilde{X}_t)_{t\geq 0}$ a continuous stochastic process in $\R^d$ and $\mathcal{J}=(\mathcal{J}_k)_{k\in\N}\subset 2^{[0,\infty)}$ an exhaustive protocol with base lengths $(n_k)_{k\in\N}$, 
\begin{enumerate}[label=(\roman*)]
\item the pair $(\tilde{X},\mathcal{J})$ will be called an \emph{ergodic observation} if for almost all $k\in\N$, 
\begin{equation}\label{def:ergodic_observation:eq1}
(\tilde{X}_t)_{t\in\mathcal{J}_k} \ \ \text{ is} \quad \text{signature ergodic to length $n_k$},
\end{equation}
and $(\tilde{X}, \mathcal{J})$ will be called \emph{ergodic*} if in addition the spatial support of $\tilde{X}$ is compact;
\item the pair $(\tilde{X},\mathcal{J})$ will be called a \emph{weakly} ergodic observation if for almost all $k\in\N$, 
\begin{equation}\label{def:ergodic_observation:eq2}
(\tilde{X}_t)_{t\in\mathcal{J}_k} \ \ \text{ is} \quad \text{weakly signature ergodic to length $n_k$} 
\end{equation}and the running maximum of $|\tilde{X}|$ has finite expectation,\footnote{\ The integrability of the running maximum of $|\tilde{X}|$ is discussed in, e.g., \citep[Chapter 13]{boucheron2013} and \citep{marcus1972sample}} i.e.\ $\E[\sup_{t\in[0,1]}\!|\tilde{X}_t|]<\infty$. 
\end{enumerate}
Given a finite-horizon process $X=(X_t)_{t\in\I}$, we call a pair $(\tilde{X}, \mathcal{J})$ a [weakly] $\text{ergodic}^{[\ast]}$ observation \emph{of $X$} if it is a [weakly] $\text{ergodic}^{[\ast]}$ observation and $(\tilde{X}_t)_{t\in\I}=X$ almost surely, and we call $(\tilde{X}, \mathcal{J})$ a [weakly] $\text{ergodic}^{[\ast]}$ observation of $X$ \emph{on $\Theta\subseteq C(D_{\!\tilde{X}};\R^d)$} if in addition the pair $\big(\theta(\tilde{X}), \mathcal{J}\big)$ is a [weakly] $\text{ergodic}^{[\ast]}$ observation for each $\theta\in\Theta$ individually.            
\end{definition}   

\begin{examples} Lemma \ref{lem:sigergodicity1} implies that the [strong resp.\ weak] ergodicity assumptions \eqref{def:sigergodicity:eq2} resp.\ \eqref{def:sigergodicity:rem:eq2.1} are satisfied by a large number of time series and continuous stochastic processes $\tilde{X}$ (and with it by $\theta(\tilde{X})$ for $\theta\in\Theta$), for the latter by way of \eqref{def:ergodic_observation:eq1} resp.\ \eqref{def:ergodic_observation:eq2} and Lemma \ref{def:sigergodicity:rem:lem_unnecessary} via any protocol $\mathcal{J}$ chosen such that $\tilde{X}_{\mathcal{J}}$ is appropriately stationary. These include, adequate stationarity provided (cf.\ e.g.\ Definition \ref{appendixdef:seasonal_increments}),       
\begin{enumerate}[label=\arabic*.] 
\item (trivially) all $q$-dependent time series (e.g.\ all moving-average processes of finite degree);
\item various linear and related processes such as certain [MC]ARMA, ARCH and GARCH models (see, e.g., \citep{fryzlewicz2011mixing, mokkadem1988,pham1985,schlemm2012});
\item many Markov processes, diffusions and stochastic dynamical systems \mbox{(e.g.\ \citep{denker1989,masuda2007ergodicity,rosenblatt1971,xiaohong2010});}    
\end{enumerate}
see e.g.\ \cite{BR2} for an overview. In practice however, infringements of the above (sufficient) conditions for signature ergodicity may typically be innocuous, cf.\ Section \ref{sect:experiments}.     
\end{examples}    

\subsection{The Consistency Limit}\label{sect:consistlim}The considerations of Subsections \ref{sect:consoverview} to \ref{sect:ergodicity} combine to the following consistency result for our ICA-method (Theorem \ref{thm:optimisation}).   
\begin{theorem}[Consistency]\label{thm:consistency}
Let $X, S$ and $\Theta$ be as in Theorem \ref{thm:optimisation} and Assumption \ref{sect:capping:assumptions}, and let $(\tilde{X},(\mathcal{J}_k)_{k\in\N})$ be an ergodic* \emph{[}resp.\ weakly ergodic on $\Theta$\emph{]} observation of $X$ with base lengths $(n_k)_{k\in\N}$. Suppose that there is $\theta_\star\in\Theta$ such that $\theta_\star(X)$ is IC. Then for any error bound $\varepsilon>0$ there exists a capping threshold $m_0\geq 2$ such that for any fixed $m\geq m_0$ the following holds: There is a mesh-index $k_0=k_0(m)\in\N$ such that for any $k\geq k_0$ and any sequence $(\hat{\theta}^\star_T)$ in $\Theta$ with the property that, for $\hat{\kappa}^{m|n_k}_{T}$ as in \eqref{lem:ergodicity_uniformconv:eq1.2} but computed from $X_\ast \equiv \tilde{X}_{\mathcal{J}_k}$,
\begin{equation}\label{thm:consistency:eq1}
\hat{\kappa}^{m|n_k}_{T}\!\big(\hat{\theta}_T^\star\big)\ \leq \  \min_{\theta\in\Theta}\,\hat{\kappa}^{m|n_k}_{T}(\theta) \ + \ \eta_T \qquad (T\in\N)
\end{equation} 
for some $(\eta_T)\subset\R_+$ with $\lim_{T\rightarrow\infty}\eta_T=0$ almost surely \emph{[}resp.\ in probability\emph{]}, it holds that       
\begin{equation}\label{thm:consistency:eq2}
\lim_{\tau\rightarrow\infty}\max\left\{\sup_{T\geq\tau}\Big[\mathrm{dist}_{\|\cdot\|_\infty}\!\big(\hat{\theta}^\star_T(X),\,\mathrm{DP}_d\cdot S\big)\Big], \, \varepsilon\right\} \ = \ \varepsilon    
\end{equation}  
almost surely \emph{[}resp.\ in probability\emph{]}. If $(\tilde{X}, (\mathcal{J}_k)_{k\in\N})$ is ergodic on $\Theta$ and the spatial support of $X$ is not necessarily compact, then \eqref{thm:consistency:eq2} holds almost surely with the above threshold $m_0$ depending on the realisation of $\tilde{X}$.              
\end{theorem} 
The above theorem shows that the optimality-based inversion scheme \eqref{thm:optimisation:eq1} is provably robust under a variety of approximations arising in statistical practice.
\begin{remark}In particular,\footnote{\ Since the maximum norm and the Euclidean norm on $\R^d$ are equivalent.} Theorem \ref{thm:consistency} gives the following guarantee: Provided that the hyperparameters $m$ (capping threshold) and $k$ (observation frequency) are chosen large enough, the minimizers \eqref{thm:consistency:eq1} of the empirical signature contrasts $\hat{\kappa}^{m|n}_T$ from \eqref{lem:ergodicity_uniformconv:eq1.2} will, in the infinite-data limit $T\rightarrow\infty$, recover each component of the source to an arbitrarily high $\I$-uniform precision (up to order and monotone scaling). In other words, as the grid of observational time-points gets finer $(k\rightarrow\infty)$ and the length of the observed time series increases $(T\rightarrow\infty)$, our method produces a signal that gets uniformly closer to the unobserved source.
\end{remark}     
    
\begin{proof}[Proof of Theorem \ref{thm:consistency}] 
For brevity, only the statement for $(\tilde{X}, (\mathcal{J}_k))$ ergodic* is proved here; a proof of the remaining non-compact [weakly] ergodic case is given in Appendix \ref{pf:thm:consistency:add}.   
So let $(\tilde{X}, (\mathcal{J}_k))$ be an ergodic observation of $X$ and assume that $D_X$ is compact.   

Making the $(m,k,T)$-dependence of each minimizer $\hat{\theta}_T^\star$ in \eqref{thm:consistency:eq1} explicit by writing $\hat{\theta}_T^\star\eqqcolon\theta^{m|k}_T$, fix any $\theta_\star\in\Theta$ with $\theta_\star(X)$ IC and observe that assertion \eqref{thm:consistency:eq2} follows from the claim: 
\begin{equation}\label{thm:consistency:aux1}
\begin{gathered}
\forall\,\tepsilon>0\,:\, \exists\, m_0\geq 2\,:\, \text{for each } m\geq m_0 \text{ there is } k_0\equiv k_0(m) \text{ such that\,:}\\
\lim_{\tau\rightarrow\infty}\alpha_\tau^{m|k} \vee \tepsilon = \tepsilon \text{ \ \ a.s.\ \ \ with \ \ } \alpha_\tau^{m|k}\coloneqq\sup\nolimits_{T\geq\tau}\tilde{d}(\theta_T^{m|k}\!, \Theta_\star), \  \text{ for each } k\geq k_0;    
\end{gathered} 
\end{equation}here: $a\vee b\coloneqq\max\{a,b\}$ and $\tilde{d}$ denotes the topology-inducing metric \eqref{pf:thm:consistency:add:eq1} on $\Theta$, and
\begin{equation}
\Theta_\star \,\coloneqq\, \left\{\beta\circ\theta_\star \ \middle| \ \beta\in\mathfrak{M}_\Theta\right\} \quad \text{for}\quad \mathfrak{M}_{\Theta}\coloneqq\DP(D_{\theta_\star(X)})\cap\big[\Theta\circ\theta_\star^{-1}\big].
\end{equation}   
Indeed: Note first that for each $\theta\in\Theta$, we have that $\theta(X)$ is IC iff $\theta\in\mathfrak{M}_\Theta\cdot\theta_\star$. (For this, recall that if $\theta(X) \equiv (\theta\circ f)(S)$ is IC then $\tilde{\beta}\coloneqq\theta\circ f\in\DP(D_S)$ by [the respective proofs of] Theorems \ref{thm:NICA_stat} and \ref{cor:NICA_MainCor}; thus also the residual $\beta_\star\coloneqq\theta_\star\circ f : D_S \rightarrow D_{\theta_\star(X)}$ (cf.\ Lemma \ref{lem:spat_supp} \ref{lem:spat_supp:it1}) is in $\DP(D_S)$, whence the map $\beta\coloneqq\theta\circ\theta_\star^{-1} = \tilde{\beta}\circ\beta_\star^{-1}$ is both monomial and in $\Theta\circ\theta_\star^{-1}$, i.e.\ $\beta\in\mathfrak{M}_\Theta$, as claimed.) In other words (recall Proposition \ref{prop:sig_cums}), we have that 
\begin{equation}\label{thm:consistency:aux_new2}
\Theta_\star = {\operatorname{arg\,min}}_{\theta\in\Theta}\ \bar{\kappa}_{\mathrm{IC}}(\theta(X)).
\end{equation}
(In the following, we import the setting and notation of Subsection \ref{subsubsect:theta_metrizable} for usage below.)

Provided now that \eqref{thm:consistency:aux1} holds, we find that for every $\varepsilon>0$ there is $m_0\geq 2$ with the property that each capping index $m\geq m_0$ comes with a mesh-threshold $k_0=k_0(m)\in\N$ which is such that for each observation at mesh-level $k\geq k_0$ we have with probability one that there is a sequence of transformations $(\theta_T)_{T\in\N}$ in $\Theta_\star$ such that:
\begin{equation}\label{thm:consistency:aux2}
\big\|\theta^{m|k}_T(X) - \theta_T(X)\big\|_\infty \ \leq \ \varepsilon \quad \text{ for almost all $T\in\N$}, \quad \text{where } \ \big(\theta_T(X)\big)\subset\mathrm{DP}_d\cdot S   
\end{equation}with probability one due to \eqref{thm:consistency:aux_new2} and Theorem \ref{thm:optimisation}. This readily implies \eqref{thm:consistency:eq2} as desired. 

To derive \eqref{thm:consistency:aux2} from \eqref{thm:consistency:aux1}, let $\varepsilon>0$ be arbitrary, assuming $\varepsilon<1$ wlog, and note that $D_X\subseteq K_{\nu_0}$ for some $\nu_0\in\N$ as $D_X$ is compact. Observe then that \eqref{pf:thm:consistency:add:eq1.1} provides the inclusion $B^{d_{\nu_0}}_\varepsilon(\theta)\supseteq B^{\rho_{\nu_0}}_{\varepsilon/2}(\theta)$ for each $\theta\in\Theta$, while the definition of $\tilde{d}$ yields $\tilde{B}_{\tepsilon}(\theta)\subseteq B^{\rho_{\nu_0}}_{\varepsilon/2}(\theta)$ for $\tepsilon\coloneqq 2^{-\nu_0}\varepsilon/2$, cf.\ \eqref{pf:thm:consistency:add:eq1}. Given \eqref{thm:consistency:aux1}, this $\tepsilon$ comes with associated $m_0, m, k_0, k\in\N$ and a $\mathbb{P}$-full set $\Omega'\equiv\Omega'_{m,k}\in\footnote{\ For convenience, we may assume the underlying probability space $(\Omega, \mathscr{F}, \mathbb{P})$ to be complete.}\mathscr{F}$ such that                   
\begin{equation}\label{thm:consistency:aux2.1}
\alpha^{m|k}(\omega)\coloneqq\lim_{\tau\rightarrow\infty}\sup\nolimits_{T\geq\tau}\tilde{d}(\theta_T^{m|k}(\omega), \Theta_\star) \ \leq \ \tepsilon/2  \qquad \text{for each } \ \omega\in\Omega'
\end{equation}           
(note: $\alpha^{m|k}$ exists as $(\alpha^{m|k}_\tau)_{\tau\in\N}$ is monotone and bounded). Thus, for each $\omega\in\Omega'\cap\Omega''$ (for $\Omega''\in\mathscr{F}$ the $\mathbb{P}$-full set on which the traces of $X$ are all contained in $D_X$; Lemma \ref{lem:spat_supp} \ref{lem:spat_supp:it2}) there is $\tau_0\,(=\tau_0(\omega))\in\N$ together with a sequence $(\theta_T)_{T\in\N}\,(\equiv(\theta_T(\omega))_{T\in\N})\subset\Theta_\star$ such that: $\theta_T^{m|k}(\omega)\subseteq \tilde{B}_{\tepsilon}(\theta_T)\, \big(\subseteq B_{\varepsilon/2}^{\rho_{\nu_0}}(\theta_T) \subseteq B^{d_{\nu_0}}_\varepsilon(\theta_T)\big)$ for each $T\geq\tau_0$, and hence  
\begin{equation}\label{thm:consistency:aux2.2}
\big\|\theta^{m|k}_T(\omega)\big(X(\omega)\big) - \theta_T\big(X(\omega)\big)\big\|_\infty \leq \big\|\theta^{m|k}_T(\omega) - \theta_T\big\|_{K_{\nu_0}} \!= d_{\nu_0}(\theta^{m|k}_T(\omega), \theta_T) \ \leq \ \varepsilon    
\end{equation}          
for each $T\geq\tau_0$, which gives \eqref{thm:consistency:aux2} as desired. To prove \eqref{thm:consistency:aux1} next, let $\tepsilon>0$ be arbitrary. 

Then for $\kappa_{m,k}(\theta)\coloneqq\widehat{Q}_m(\hat{X}^\theta_{\mathcal{I}_1^{(k)}})$ as in \eqref{sect:interpollim:data:eq1} with $\mathcal{J}\eqqcolon(\mathcal{J}_k\equiv\sqcup_{\nu=1}^\infty\mathcal{I}_\nu^{(k)}\mid k\in\N)$ the given protocol under consideration, there is $m_0\geq 2$ such that for each $m\geq m_0$ it holds that    
\begin{equation}\label{thm:consistency:aux3} 
\exists\, k_0\,(\equiv k_0(m))\in\N \ : \quad \underset{\theta\in\Theta}{\operatorname{arg\,min}}\ \kappa_{m,k}(\theta) \, \subseteq \, \Theta_\star^{\tepsilon/2}\coloneqq\bigcup\nolimits_{\theta\in\Theta_\star}\!\tilde{B}_{\tepsilon/2}(\theta), \quad \forall\, k\geq k_0.    
\end{equation} 
Indeed: The identity \eqref{thm:consistency:aux_new2} (together with the fact that $\bar{\kappa}_{\mathrm{IC}}(\theta_\star(X)) = 0$) implies that $\zeta_\tepsilon\coloneqq\min_{\theta\in\Theta\setminus C_\tepsilon}Q(\theta)>0$ for\footnote{\ Provided that $\tepsilon$ is small enough such that $C_\tepsilon\subsetneq\Theta$, which can be assumed without loss of generality.} $C_\tepsilon\coloneqq \Theta^{\tepsilon/2}_\star\cap\Theta$ and $Q$ as in \eqref{sect:capping:qobjectives}, while Lemma \ref{sect:capping:lem1} \ref{sect:capping:lem1:it2} provides an $m_0\geq 2$ with $\sup_{m\geq m_0}\|Q - Q_m\|_{\Theta} < \zeta_\tepsilon/2$. Fixing any $m\geq m_0$ and using that $(\mathcal{J}_k)_{k\in\N}$ is exhaustive (whence the sequence $(\mathcal{I}_1^{(k)})_{k\in\N}$ is refined), Lemma \ref{lem:interpolim} yields some $k_0\equiv k_0(m)\in\N$ with $\sup_{k\geq k_0}\|Q_m - \kappa_{m,k}\|_\Theta<\zeta_\tepsilon/2$. Hence for any fixed $k\geq k_0$ and each $\tilde{\theta}\in\Theta$ with $\kappa_{m,k}(\tilde{\theta}) = \min_{\theta\in\Theta}\kappa_{m,k}(\theta)$, it holds that 
\begin{equation}
Q(\tilde{\theta}) = \kappa_{m,k}(\tilde{\theta}) + (Q-\kappa_{m,k})(\tilde{\theta})\,<\, \zeta_\tepsilon/2 + \zeta_\tepsilon/2 = \zeta_\tepsilon \quad \text{and hence} \quad \tilde{\theta}\in \Theta^{\tepsilon/2}_\star,  
\end{equation}    
where we used the fact that $0\leq \min_{\theta\in\Theta}\kappa_{m,k}(\theta)\leq\kappa_{m,k}(\theta_\star) = 0$ (i.e., the argmins of $\kappa_{m,k}$ on $\Theta$ coincide with its roots), where this last property follows from Proposition \ref{prop:sig_cums} and the obvious fact that if $\theta_\star(X)$ is IC then so is $\hat{X}_{\mathcal{I}_1^{(k)}}^{\theta_\star}$. Together with \eqref{thm:consistency:aux3} the identity $\operatorname{arg\,min}_{\Theta}(\kappa_{m,k}) = \{\theta\in\Theta\mid \kappa_{m,k}(\theta)=0\}\eqqcolon\mathcal{N}(\kappa_{m,k})$ now implies that, for $\bar{\kappa}_{m,k}(\theta)\coloneqq\bar{\kappa}^{[m]}_{\mathrm{IC}}\big(\hat{X}^\theta_{\mathcal{I}^{(k)}_1}\big)$ as in \eqref{sect:capping:normalised_seriescapped},    
\begin{equation}\label{thm:consistency:aux5}
\mathcal{M}\coloneqq\underset{\theta\in\Theta}{\operatorname{arg\,min}}\ \bar{\kappa}_{m,k}(\theta) \, \subseteq \, \Theta^{\tepsilon/2}_\star \quad \text{ for each } \ k\geq k_0,
\end{equation}  
because $\operatorname{arg\,min}_\Theta\bar{\kappa}_{m,k}=\mathcal{N}(\bar{\kappa}_{m,k})$ (as above) and the zero sets $\mathcal{N}(\bar{\kappa}_{m,k})$ and $\mathcal{N}(\kappa_{m,k})$ coincide (by Definition \ref{def:sigcumulant}). Next we claim that, for $m$ and $k$ as above, 
\begin{equation}\label{thm:consistency:aux6} 
\lim_{\tau\rightarrow\infty}\sup\nolimits_{T\geq\tau}\tilde{d}\big(\theta^{m|k}_T\!,\, \mathcal{M}\big) \, = \, 0 \qquad \text{almost surely}, 
\end{equation} 
which by way of \eqref{thm:consistency:aux5} implies \eqref{thm:consistency:aux1} as desired. To see \eqref{thm:consistency:aux6}, observe first that 
\begin{equation}\label{thm:consistency:aux7}
\lim_{T\rightarrow\infty}\bar{\kappa}_{m,k}(\theta^\star_T) \, = \, 0 \quad \text{almost surely}, \quad \text{with} \ \ (\theta^\star_T)\equiv(\theta^{m|k}_T)
\end{equation}   
as in \eqref{thm:consistency:aux6}, which due to $\left.\eqref{lem:ergodicity_uniformconv:eq2}\right|_{X_\ast\equiv\tilde{X}_{\mathcal{J}_k}}$ follows by the same arguments that led to \eqref{sect:capping:lem1:aux9}. Invoking a proof by contradiction, assume now that \eqref{thm:consistency:aux6} does not hold. Then, pointwise on an event of positive probability, there will be $\delta_0>0$ together with a subsequence $(T_j)_{j\in\N}\subseteq\N$ such that $\tilde{d}(\theta^\star_{T_j}, \mathcal{M})\geq\delta_0$ for each $j\in\N$. But as $\Theta$ is compact, we (upon passing to a convergent subsequence) may assume that $(\theta^\star_{T_j})_{j\in\N}$ converges to some $\theta_0\in\Theta$. Then by continuity $\bar{\kappa}_{m,k}(\theta_0)=\lim_{j\rightarrow\infty}\bar{\kappa}_{m,k}(\theta^\star_{T_j})$, whence $\bar{\kappa}_{m,k}(\theta_0)=0$ by \eqref{thm:consistency:aux7} and thus $\theta_0\in\mathcal{M}$. The latter is a contradiction, however, as $(\theta^\star_{T_j})_{j\in\N}$ is bounded away from $\mathcal{M}$ (by $\delta_0$), proving \eqref{thm:consistency:aux6}.                      
\end{proof} 

\newpage
\subsection{Algorithm}\label{sect:algorithm}The computational procedures of this section can be summarized into the following practical algorithm whose consistency is established by Theorem \ref{thm:consistency}.\\[-0.5em] 

\begin{algorithm}[H]
\vspace{0.5em}
\begin{center} 
\begin{minipage}[c]{0.925\textwidth}
\begin{enumerate}
\reqnomode
\item[\phantom{\textbf{1.}}]\textbf{Goal:} For $X = f(S)$ with a contin.-/discrete-time process $S$ in $\R^d$, invert $X$ for $S$.\\[-0.75em] 
\item[\phantom{\textbf{1.}}]\emph{Hyperparameters:} candidate nonlinearities $\Theta$, capping order $m_0$ (as in \eqref{sect:capping:normalised_seriescapped}),
\item[\phantom{\textbf{1.}}]\phantom{Hyperparameters:} base length $n$ ($\coloneqq |\mathcal{I}_1^{(k)}|$, as in \eqref{sect:finsamlim:eq2}), observation horizon $T$ (as\\[-1em]
\item[\phantom{\textbf{1.}}]\phantom{Hyperparameters:} in \eqref{lem:ergodicity_uniformconv:eq1.2}, or $T\equiv \max\{j\geq 1\mid \mathcal{I}^{(k)}_j\leq T_k\}$ for $T_k$ as in Rem.\ \ref{rem:finite_samples_flexibility}).\footnote{\ Optional: weights $(w_{\bm{q}})$ as in Remark \ref{rem:thm_optimisation} \ref{rem:thm_optimisation:it5}.}\\[-0.75em]  
  
\item[\textbf{1.}]\textbf{Input:} sample observation $\mathfrak{x}\equiv(\mathfrak{x}_j)$ of $X$ (as in \eqref{sect:finsamlim:eq2}, with index $(k)$ omitted). 

\item[\textbf{2.}] Compute the estimated contrast $\hat{\phi}\coloneqq \hat{\kappa}^{m_0|n}_T$ as in \eqref{lem:ergodicity_uniformconv:eq1.2}, that is compute 
\begin{gather}
\hat{\phi}(\theta) = \sum_{\nu=2}^{m_0}\sum_{\bm{q}\in\mathfrak{C}_\nu}\bar{\varphi}_{\bm{q}}(\theta)^2  \quad\text{ with\footnotemark }\quad \bar{\varphi}_{\bm{i}}(\theta)\coloneqq\frac{\varphi_{\bm{i}}(\theta)}{\varphi_{11}(\theta)^{\!\tfrac{\eta_1(\bm{i})}{2}}\!\!\cdots\,\varphi_{dd}(\theta)^{\!\tfrac{\eta_d(\bm{i})}{2}}} 
\end{gather}
\begin{subequations}  
\begin{align}
\text{where }\qquad
\varphi_{\bm{i}}(\theta)&\coloneqq\big\langle \hat{\mathscr{C}}(\theta),\bm{i}\big\rangle \quad (\bm{i}\in[d]^\star)\label{Algo:SigNICA:eq1.1}\\
\text{and \hspace{0.5em}}\qquad\hat{\mathscr{C}}(\theta)&\coloneqq \log_{[m_0]}\!\!\left[T^{-1}\sum_{j=1}^T\sig_{[m_0]}\big(\hat{\mathfrak{x}}_j^\theta\big)\right]\!,\label{Algo:SigNICA:eq1.2}
\end{align}  
\end{subequations}
for $\hat{\mathfrak{x}}_j^\theta$ the piecewise-linear interpolation of the data $\theta(\mathfrak{x}_j)\equiv\big(\theta(X_t(\omega))\mid t\in\mathcal{I}_j\big)$. 

\item[\textbf{3.}] Compute a minimiser $\theta_\star$ of $\hat{\phi}$ over $\Theta$, that is find 
\begin{equation}\label{Algo:SigNICA:eq2}  
\theta_\star\,\in\,\underset{\theta\in\Theta}{\operatorname{arg\ min}}\ \hat{\phi}(\theta).
\end{equation} 

\item[\textbf{4.}] Compute the estimated source realisation $\hat{\mathfrak{s}}\coloneqq\theta_\star(\mathfrak{x})\equiv\big(\theta_\star(X_t(\omega))\mid t \in \mathcal{I}_1\big)$.\\[-1em]  

\item[\textbf{5.}]\textbf{Output:} $\hat{\mathfrak{s}}$ and $\theta_\star$.
\leqnomode 
\end{enumerate}
\end{minipage} 
\end{center}
    \caption{Nonlinear ICA via Signature Cumulants}\label{Algo:SigNICA}  
\end{algorithm}
\footnotetext{\ Recall Notation \ref{notation:index_sum} and that $\eta_\nu(\bm{i})$ equals the number of times the index-value $\nu$ appears in $\bm{i}$, for example $\eta_3(123433235) = 4$. Recall further that $\langle\hat{\mathscr{C}}(\theta),\bm{i}\rangle$ denotes the $\bm{i}^{\mathrm{th}}$ entry of the multiindexed list $\hat{\mathscr{C}}(\theta)$ (cf.\ \eqref{rem:free_algebra}).}{\ }\\[-1em]   

\noindent
The above algorithm involves two independent subroutines, namely the computation of the free logarithm of averages of signatures of piecewise-linearly interpolated data batches \eqref{Algo:SigNICA:eq1.2} followed by the subsequent extracion of its relevant [cross-shuffle-indexed] coefficients \eqref{Algo:SigNICA:eq1.1}, and the optimisation \eqref{Algo:SigNICA:eq2} of the contrast $\hat{\phi}$ over a given set $\Theta$ of candidate demixing transformations. The first of these routines can be conveniently implemented by use of the functionality provided with specialised signature libraries such as \cite{KGL}, while the second task can be performed with great flexibility by choosing $\Theta$ as an artificial neural network that has $\hat{\phi}$ as its loss function. The minimiser $\theta_\star$ can then be learnt as an optimal network configuration reached by training $(\Theta, \hat{\phi})$ via backpropagation, cf.\ e.g.\ Remark \ref{rem:thm_optimisation} \ref{rem:thm_optimisation:it0.2} and Section \ref{sect:experimentsII}.\\[-0.5em] 

Several example applications of the above method are detailed in Section \ref{sect:experiments} and on the public repository \cite{githubSigNICA}, where an implementation of the above algorithm, including a differentiable (i.e.\ backpropagatable) implementation of the contrast function $\hat{\phi}$, is also provided.

\section{Numerical Experiments}\label{sect:experiments}
\noindent
We present a series of numerical examples to illustrate the practical applicability of our ICA method on discrete- and continuous-time signals. A complete account of the following experiments and results, including their full parameter settings and all relevant implementations and estimates, is provided on the public repository \cite{githubSigNICA}.       

\subsection{A Performance Index for Nonlinear ICA}
As before, we consider stochastic processes $X$ and $S$ in continuous or discrete\footnote{\ See Section \ref{appendix:NICA_discrete} for an explicated treatment of the latter.} time such that
\begin{equation}\label{intext:BSS_relation:num}
X = f(S) \quad \text{ for some } \ f\in C^{2,2}(D_S).
\end{equation}
In order to assess how close an estimate $\hat{S}\equiv\hat{S}(X)$ of $S$ is to the true source $S$ in \eqref{intext:BSS_relation:num}, we propose to quantify the distance between $\hat{S}$ and the orbit\footnote{\  See \eqref{intext:monomial_orbit} for notation, and recall that the elements of $\DP\cdot S$ are in a minimal distance from $S$.} $\DP\cdot S$ by way of the following intuitive\footnote{\ Recall the classical facts (e.g.\ \cite{embrechts2002}) that Kendall's (and Spearman's) rank correlation coefficient $\rho_{\mathrm{K}}$ attains its extreme values $\pm 1$ iff one of its arguments is a monotone transformation of the other, with $\rho_{\mathrm{K}}(U,V)=0$ if its arguments $U$ and $V$ are independent.} performance statistic (cf.\ Remark \ref{rem:thm_optimisation} \ref{rem:thm_optimisation:it1} for applicability).    
\begin{definition}[Monomial Discordance]\label{def:MonConc}
\hspace{-1em} Given two time series $\mathcal{X}\coloneqq(X^1_t, \cdots, X^d_t)_{t\in\mathcal{I}}$ and $\mathcal{Y}\coloneqq(Y^1_t, \cdots, Y^d_t)_{t\in\mathcal{I}}$ in $\R^d$ for $\mathcal{I}$ finite, define the \emph{concordance matrix of $(\mathcal{X}, \mathcal{Y})$} as 
\begin{equation}
\mathcal{C}(\mathcal{X}, \mathcal{Y}) \ \coloneqq \ \left(\frac{1}{|\mathcal{I}|}\sum_{t\in\mathcal{I}}|\rho_{\mathrm{K}}(X_t^i, Y_t^j)|\right)_{\!\!(i,j)\in[d]^2} \ \in \ [0,1]^{d\times d}
\end{equation} 
where $\rho_{\mathrm{K}}$ is the Kendall\footnote{\ If preferred, $\rho_{\mathrm{K}}$ might alternatively be chosen as Spearman's rank correlation coefficient.} rank correlation coefficient. Furthermore, we define 
\begin{equation}\label{def:MonConc:eq2}
\varrho(\mathcal{X}, \mathcal{Y}) \ \coloneqq \ \frac{1}{\sqrt{d(d-1)}}\min_{P\in \mathrm{P}_{\!d}}\|C(\mathcal{X}, \mathcal{Y}) - P\|_2 \, \in \, [0,1]
\end{equation}
and call this quantity the \emph{monomial discordance} of $\mathcal{X}$ and $\mathcal{Y}$. 
\end{definition}\vspace{-1em} 
\begin{restatable}{proposition}{propmondiscordance}\label{prop:monconcordance}
Let $X$ and $S$ be as in \eqref{intext:BSS_relation:num} with $S$ IC, and $h$ be $C^{1}$-invertible on some open superset of $D_X$. Then for  $\mathcal{I}\subset\mathbb{I}$ finite and $\varrho$ as in \eqref{def:MonConc:eq2}, we have that:  
\begin{equation}\label{prop:monconcordance:eq1}
\big(h(X_t)\big)_{t\in\mathcal{I}} \ \in \ \DP\cdot (S_t)_{t\in\mathcal{I}} \quad\text{ iff }\quad \varrho\big((h(X_t))_{t\in\mathcal{I}}, \, (S_t)_{t\in\mathcal{I}}\big)=0.  
\end{equation} 
\end{restatable}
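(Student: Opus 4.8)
The plan is to reduce everything to the single deterministic map $\phi\coloneqq h\circ f$, which is $C^1$-invertible on an open superset of $D_S$ (a composition of such maps) and satisfies $h\cdot X_t=\phi(S_t)$ for every $t$, while $\mathcal{Y}_t=S_t$. Under this reduction the relation $(h\cdot X_t)_{t\in\mathcal{I}}\in\DP\cdot(S_t)_{t\in\mathcal{I}}$ becomes the statement that $\phi$ is monomial on $\bigcup_{t\in\mathcal{I}}\supp(S_t)$ (cf.\ the definition around \eqref{intext:monomial_orbit} together with Lemma \ref{lem:spat_supp}\ref{lem:spat_supp:it2}). Since $\varrho$ in \eqref{def:MonConc:eq2} is a normalised distance to $\mathrm{P}_d$ and the concordance matrix $C\coloneqq C(\mathcal{X},\mathcal{Y})$ has entries in $[0,1]$, we have $\varrho(\mathcal{X},\mathcal{Y})=0$ if and only if $C$ is itself a permutation matrix. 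Thus it suffices to prove that $\phi$ is monomial on $\bigcup_{t\in\mathcal{I}}\supp(S_t)$ if and only if $C\in\mathrm{P}_d$. Three facts about Kendall's $\rho_{\mathrm{K}}$ drive the argument: its invariance up to sign under coordinatewise strictly monotone maps; the equivalence $|\rho_{\mathrm{K}}(U,V)|=1\iff V$ is a.s.\ a strictly monotone function of $U$ (valid for non-atomic marginals); and $\rho_{\mathrm{K}}(U,V)=0$ whenever $U\perp V$.

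For the direction $\mathcal{X}\in\DP\cdot\mathcal{Y}\Rightarrow\varrho=0$, I would write $\phi=P_\pi\circ(g_1\times\cdots\times g_d)$ on each relevant component, so that $\phi_i(S_t)=g_i(S_t^{\pi(i)})$ with $g_i$ strictly monotone. Monotonicity and the comonotonicity characterisation give $|\rho_{\mathrm{K}}(\phi_i(S_t),S_t^{\pi(i)})|=1$ for every $t$, hence $C_{i\pi(i)}=1$; monotone invariance together with the independence of the coordinates of $S$ (here the standing IC assumption on the source is essential) yield $|\rho_{\mathrm{K}}(\phi_i(S_t),S_t^j)|=|\rho_{\mathrm{K}}(S_t^{\pi(i)},S_t^j)|=0$ for $j\neq\pi(i)$, hence $C_{ij}=0$. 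Therefore $C=P_\pi\in\mathrm{P}_d$ and $\varrho(\mathcal{X},\mathcal{Y})=0$.

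For the converse $\varrho=0\Rightarrow\mathcal{X}\in\DP\cdot\mathcal{Y}$, suppose $C=P_\pi$. Since each $|\rho_{\mathrm{K}}|\le 1$ and the average $C_{i\pi(i)}=\tfrac{1}{|\mathcal{I}|}\sum_t|\rho_{\mathrm{K}}(\phi_i(S_t),S_t^{\pi(i)})|$ equals $1$, every summand equals $1$; by the comonotonicity characterisation, for each $t$ the variable $\phi_i(S_t)$ is almost surely a strictly monotone function of $S_t^{\pi(i)}$, i.e.\ $\phi_i(s)=T_{i,t}(s_{\pi(i)})$ for $\mathbb{P}_{S_t}$-a.e.\ $s$. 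As $\phi_i$ is continuous and the positive-density set is dense in $\supp(S_t)$ (Lemma \ref{lem:spat_supp}\ref{lem:spat_supp:it4}, applied to each $S_t$), this identity forces $\phi_i$ to be independent of the coordinates $s_k$, $k\neq\pi(i)$, on $\mathrm{int}(\supp(S_t))$, so that $\partial_{s_k}\phi_i\equiv 0$ there. Because $\pi$ is the one fixed permutation read off the single matrix $C$, and $J_\phi$ is everywhere invertible, row $i$ of $J_\phi$ has its only nonzero entry in column $\pi(i)$; hence $J_\phi$ is monomial on $\bigcup_t\mathrm{int}(\supp(S_t))$. Applying Lemma \ref{lem:monomial_trafos}\ref{lem:monomial_trafos:it2} on each $\supp(S_t)=\overline{\mathrm{int}(\supp(S_t))}$ then upgrades this to $\phi$ being monomial on $\bigcup_{t\in\mathcal{I}}\supp(S_t)$, i.e.\ $\mathcal{X}\in\DP\cdot\mathcal{Y}$.

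The main obstacle is precisely this converse step: passing from the purely distributional information ``$|\rho_{\mathrm{K}}|=1$ at each observed time'' to a genuine pointwise structural statement about the smooth map $\phi$. The delicate points are \emph{(a)} invoking the $|\rho_{\mathrm{K}}|=1$ characterisation, which needs the marginals of $S_t$ to be non-atomic (guaranteed by the density assumptions used throughout); \emph{(b)} converting an almost-sure functional relation into the vanishing of honest partial derivatives of $\phi$ on an open set, which relies on continuity of $\phi$ and density of the positive-density region; and \emph{(c)} checking that the monomial pattern of $J_\phi$ is consistent across the different time slices, which is automatic since all conditions are read off the single matrix $C=P_\pi$. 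Once the monomial Jacobian pattern is established, Lemma \ref{lem:monomial_trafos} completes the argument.
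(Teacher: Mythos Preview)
Your proof is correct and rests on the same two facts about Kendall's $\rho_{\mathrm{K}}$ that the paper invokes: $|\rho_{\mathrm{K}}(U,V)|=1$ iff one argument is a (strictly) monotone transform of the other, and $\rho_{\mathrm{K}}(U,V)=0$ whenever $U\perp V$. The forward direction is identical to the paper's.

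For the converse you take a slightly different route. Having established that $\phi_i(S_t)$ is a.s.\ a monotone function of $S_t^{\pi(i)}$ for each $t\in\mathcal{I}$, you pass to the Jacobian: continuity of $\phi$ plus density of the positive-density set forces $\partial_{s_k}\phi_i\equiv 0$ for $k\neq\pi(i)$ on $\mathrm{int}(\supp(S_t))$, so $J_\phi$ is monomial there, and Lemma~\ref{lem:monomial_trafos}\ref{lem:monomial_trafos:it2} finishes. The paper instead keeps the time-indexed monotone functions $\alpha_{i|t}$ and glues them across $t$ via the classical pasting lemma, using that $h_i\circ f\in C^1(\mathcal{O}_S)$ to ensure the pieces agree on overlaps and assemble into a single injective $C^1$-map $\alpha_i$ on $\bigcup_{t\in\mathcal{I}}\supp(S_t^i)$. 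Your Jacobian route is arguably more in keeping with the paper's own toolkit (it reuses Lemma~\ref{lem:monomial_trafos} rather than importing an external topological lemma), while the paper's gluing argument is a touch more direct and avoids differentiating. Both require the same unstated regularity (non-atomic marginals of $S_t$, so that the $|\rho_{\mathrm{K}}|=1$ characterisation applies), which you correctly flag.
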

\begin{proof} 
See Appendix \ref{pf:monconcordance}.
\end{proof}
Hence the smaller the monomial discordance between $S$ and a transformation $h(X)$ of its observable, the closer to optimal will be the deviation between $h(X)$ and $S$.\\[-0.5em]

\noindent
Below we provide a brief synopsis of our experiments and the results that we obtained. For brevity, the truncated approximations \eqref{sect:capping:normalised_seriescapped} of the above contrast $\bar{\kappa}_{\mathrm{IC}}$ will be denoted $\phi_{m_0}$.     
\subsection{Nonlinear Mixings With Explicitly Parametrized Inverses}\label{sect:experimentsI} First we consider three families of $C^{2}$-diffeomorphisms on the plane whose inverses are explicitly parametrized.\\[-0.5em]

\noindent
More specifically: We sample two types of source processes in $\R^2$, namely: an IC Ornstein-Uhlenbeck process $S_\mathrm{ou}=(S^1_\mathrm{ou}, S^2_\mathrm{ou})$, and an IC copula-based time-series $S_\mathrm{cy}=(S^1_\mathrm{cy}, S^2_\mathrm{cy})$ that follows the dependence model \eqref{TimeSeriesCopulaModel:eq1}.\footnote{\ With $F^{S_\mathrm{cy}^i}_t$ chosen as the cdf of $\mathcal{N}(0,1)$ and $c$ chosen as the Clayton-density (cf.\ Proposition \ref{prop:TSCopulaProp1} \ref{prop:TSCopulaProp1:item1}).} Both $S_\mathrm{ou}$ and $S_\mathrm{cy}$ are contrastive by Prop.\ \ref{cor1:GPsAreVaried} \ref{cor1:GPsAreVaried:item2} and Cor.\ \ref{cor:TSCopulaProp} \ref{prop:TSCopulaProp1:item1}, respectively.\footnote{\ Note further that the Ornstein-Uhlenbeck processes $S_{\mathrm{ou}}$, while continuous-time by nature, are processed as discrete-time observations according to their classical Euler-Maruyama approximation. The copula-based time-series $S_{\mathrm{cy}}$, on the other hand, are simulated at their observation frequency and thus showcase the applicability of our method to discrete-time signals (in accordance with Section \ref{appendix:NICA_discrete}).} These sources are first mapped to the square $[-1,1]^2$ upon centering and scaling them to unit amplitude, and then transformed by one of three mixing maps $f_j : \R^2\rightarrow\R^2$ $(j=1,2,3)$ with increasing degree of `nonlinearity', see Figure \ref{fig:planar_maps}. (For an explicit definition of the $f_j$, see \cite{githubSigNICA}.) Figure \ref{fig:planar_tableau} shows the spatial trace of a sample realisation of $S_\mathrm{ou}$ and $S_\mathrm{cy}$ (panels (a) and (b)) next to an excerpt of the time-parametrised components of these realisations, together with their nonlinear mixtures $X_\eta^{(j)}\coloneqq f_j(S_\eta)$ for $j=1,2,3$ and $\eta=\text{`$\mathrm{ou}$'}$ (panels (c), (e), (g)) and $\eta=\text{`$\mathrm{cy}$'}$ (panels (d), (f), (h)).\\[-0.5em]              

\begin{figure}
\centering
\hspace*{-1.25em}
\includegraphics[trim={20em 2em 2em 0},clip,scale=0.3]{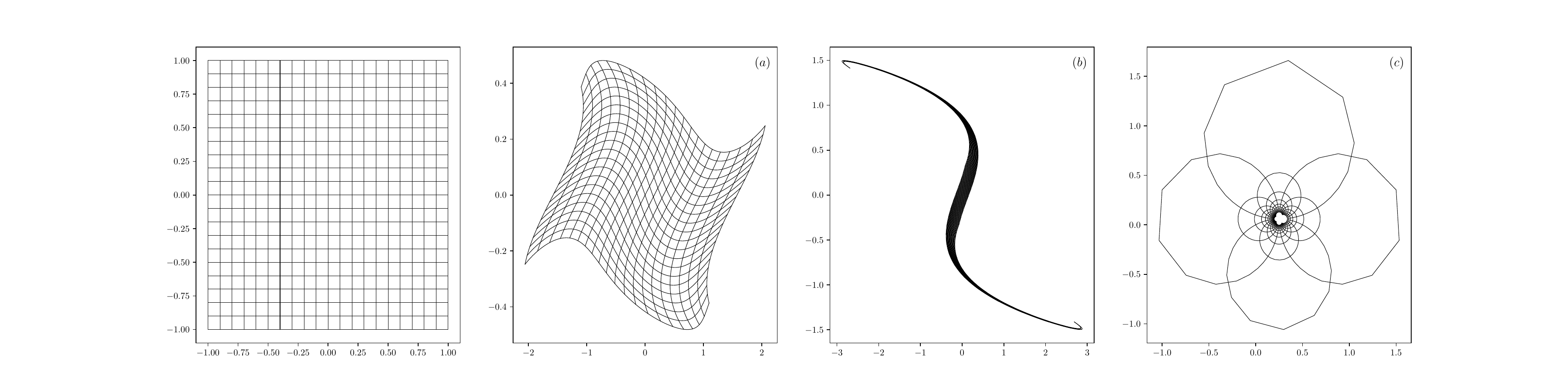}
\caption{\emph{The image of the square $[-1,1]^2$ (leftmost) under three increasingly nonlinear mixing transformations $f_1, f_2, f_3$, namely conjugates of the Hénon map ($f_1$ and $f_2$; panels (a) and (b), respectively)  and of the Möbius transformation ($f_3$; panel (c)).}}\label{fig:planar_maps}
\end{figure}

\noindent
Each of the `true' inverses $g^j\coloneqq f_j^{-1}$ $(j=1,2,3)$ are contained in an (injectively parametrized) family $\Theta_j\equiv\{g^j_\theta\in C^{2}(\R^2)\mid \theta\in\tilde{\Theta}_j\}$ of candidate de-mixing transformations $g^j_\theta$, where $\tilde{\Theta}_j\subseteq\R^2$ is some open parameter set. On these parameter sets, we consider the data-based objective functions 
\begin{equation}\label{sect:experimentsI:eq1}
\Phi^j_\eta\, : \, \tilde{\Theta}_j \rightarrow\R, \qquad \theta \,\mapsto\,\phi_{m_j}(g^j_\theta(X^{(j)}_\eta)), 
\end{equation}       
with $\phi_m\coloneqq \bar{\kappa}^{[m]}_{\mathrm{IC}}$ as in \eqref{sect:capping:normalised_seriescapped} and capped at the cumulant orders $m_1=m_2=m_3 = 6$, and compare the topography of the functions \eqref{sect:experimentsI:eq1} to that of the monotone discordances    
\begin{equation}\label{sect:experimentsI:eq2}
\delta^j_\eta\, : \, \tilde{\Theta}_j \rightarrow\R, \qquad \theta \,\mapsto\,\varrho(g^j_\theta(X^{(j)}_\eta), S_\eta) \qquad (\text{cf.\ \eqref{def:MonConc:eq2}}).
\end{equation}
Recall that the latter are `distance functions' that quantify how much a candidate source estimate $\hat{S}^\theta_\eta\coloneqq g^j_\theta(X^{(j)}_\eta)$ deviates [from the monomial orbit $\DP\cdot S_\eta$ of $S_\eta$, that is] from the true source $S_\eta$ up to order and monotone scaling of its components.\\

\noindent 
The results are displayed in the first three columns of Figure \ref{fig:explicitinvs_results}, with the `estimator's view' $\Phi^{1|2|3}_\mathrm{ou}$ of the demixing performance shown in the top-row panels and the `true view' $\delta^{1|2|3}_\mathrm{ou}$ of the demixing performance shown in the bottom-row panels.\footnote{\ For brevity, Figure \ref{fig:explicitinvs_results} shows the case $\eta=\text{$\mathrm{ou}$}$ only; the results for the case $\eta=\text{$\mathrm{cy}$}$ can be found in \cite{githubSigNICA}.} This shows clearly that within the given families $\Theta_j$ of candidate transformations, those candidate nonlinearities which map the data $X^{(j)}_\eta$ to a best-approximation of its source $S_\eta$ are precisely those that minimise the contrast \eqref{sect:experimentsI:eq1}, as asserted by Theorem \ref{thm:optimisation}.\\[-0.5em] 

\noindent 
An analogous experiment $(j=4)$ is performed for a mixing transformation $f_4 : \R^3\rightarrow \R^3$, see Figures \ref{fig:3D_transform} and \ref{fig:3D_mixings}. The results, obtained for a contrast capped at cumulant order $m_4=7$ and shown as the rightmost column of Figure \ref{fig:explicitinvs_results}, are again affirmative of Theorem \ref{thm:optimisation}.

\begin{figure} 
\centering
\hspace*{-1.25em}
\includegraphics[trim={15em 15em 5em 15em},clip,scale=0.35]{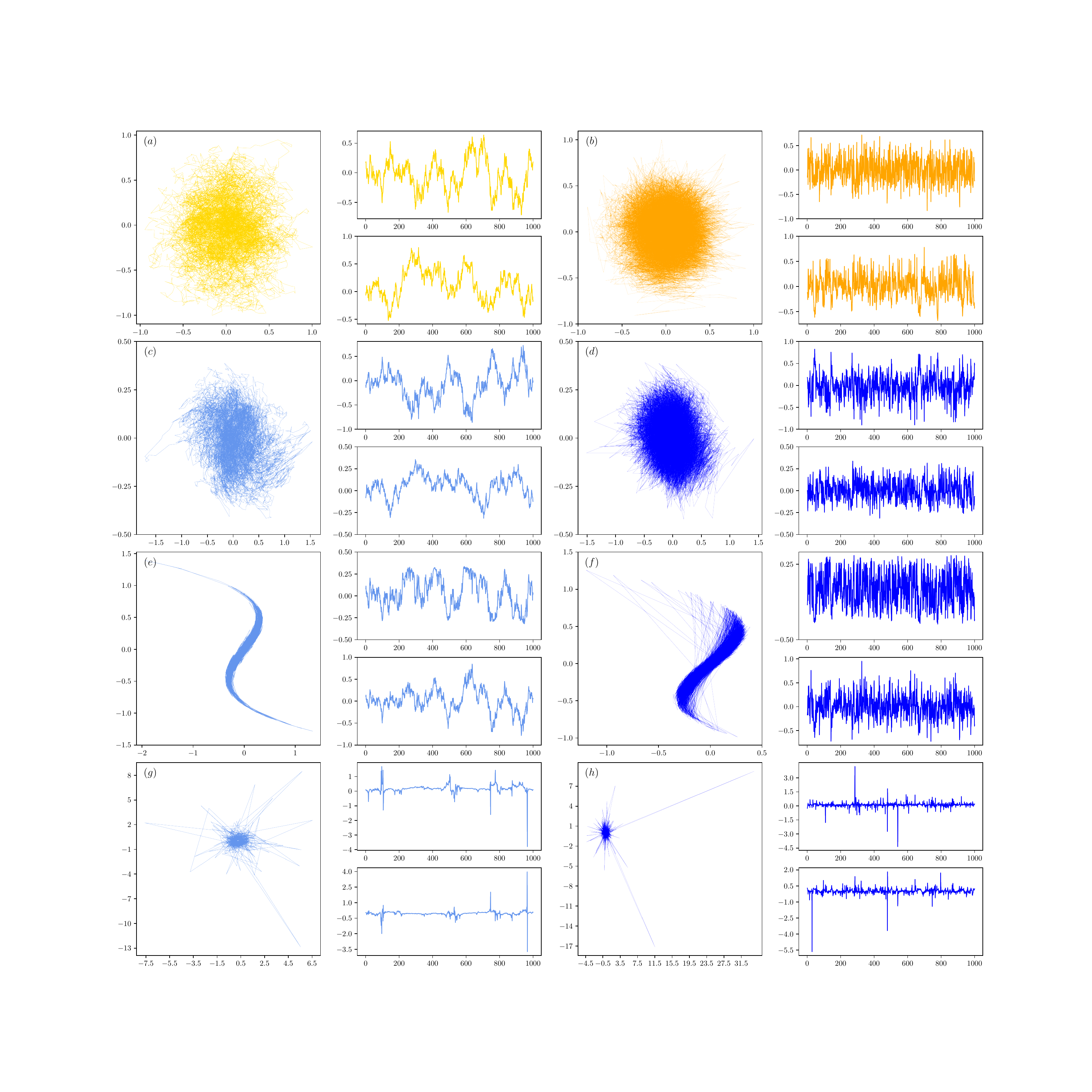}
\caption{\emph{Spatial traces and sampled components of three nonlinear mixtures of the sources $S_\mathrm{ou}$ and $S_\mathrm{cy}$ (panels (a) and (b), respectively). Depicted are the mixtures $X^{(1)}_\mathrm{ou}$ and $X^{(1)}_\mathrm{cy}$ ((c) and (d)), $X^{(2)}_\mathrm{ou}$ and $X^{(2)}_\mathrm{cy}$ ((e) and (f)), and $X^{(3)}_\mathrm{ou}$ and $X^{(3)}_\mathrm{cy}$ ((g) and (h)). The components of the mixtures, excerpted over 1000 data points each, are shown to the right of each panel.}}\label{fig:planar_tableau} 
\end{figure}

\begin{figure}
\centering
\hspace*{-2.5em}
\includegraphics[trim={15em 5em 15em 0em},clip,scale=0.3]{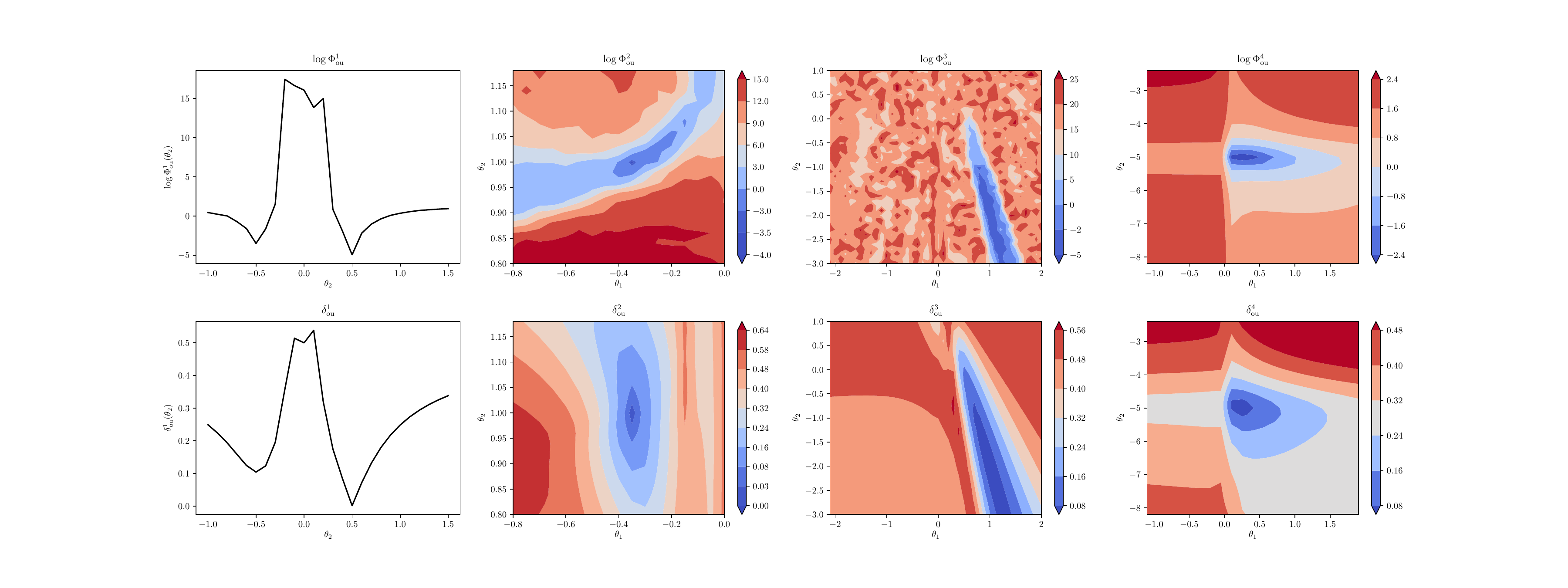}
\caption{\emph{Contour plot (leftmost column) and heatmaps of the log-transformed contrast functions \eqref{sect:experimentsI:eq1} (top row) and of the associated discordance functions \eqref{sect:experimentsI:eq2} (bottom row) for the mixings $X_\mathrm{ou}^{(j)}= f_j(S_\mathrm{ou})$, $j=1,\ldots, 4$. The parameters $\theta_\star^{(j)}\equiv(\theta_1^{(j)},\theta_2^{(j)})$ of the true inverses $f_j^{-1}\equiv g^j_{\theta_\star^{(j)}}\in\Theta_j$ are $\theta_\star^{(1)}=0.5$,\protect\footnotemark \ $\theta_\star^{(2)}=(-0.35,1)$, $\theta_\star^{(3)}=(1,-2)$, and $\theta_\star^{(4)}=(0.2,-5)$.}}\label{fig:explicitinvs_results}
\end{figure}

\begin{figure}
\centering
\subfloat{{\includegraphics[trim={10em 15em 5em 15em},clip,scale=0.225]{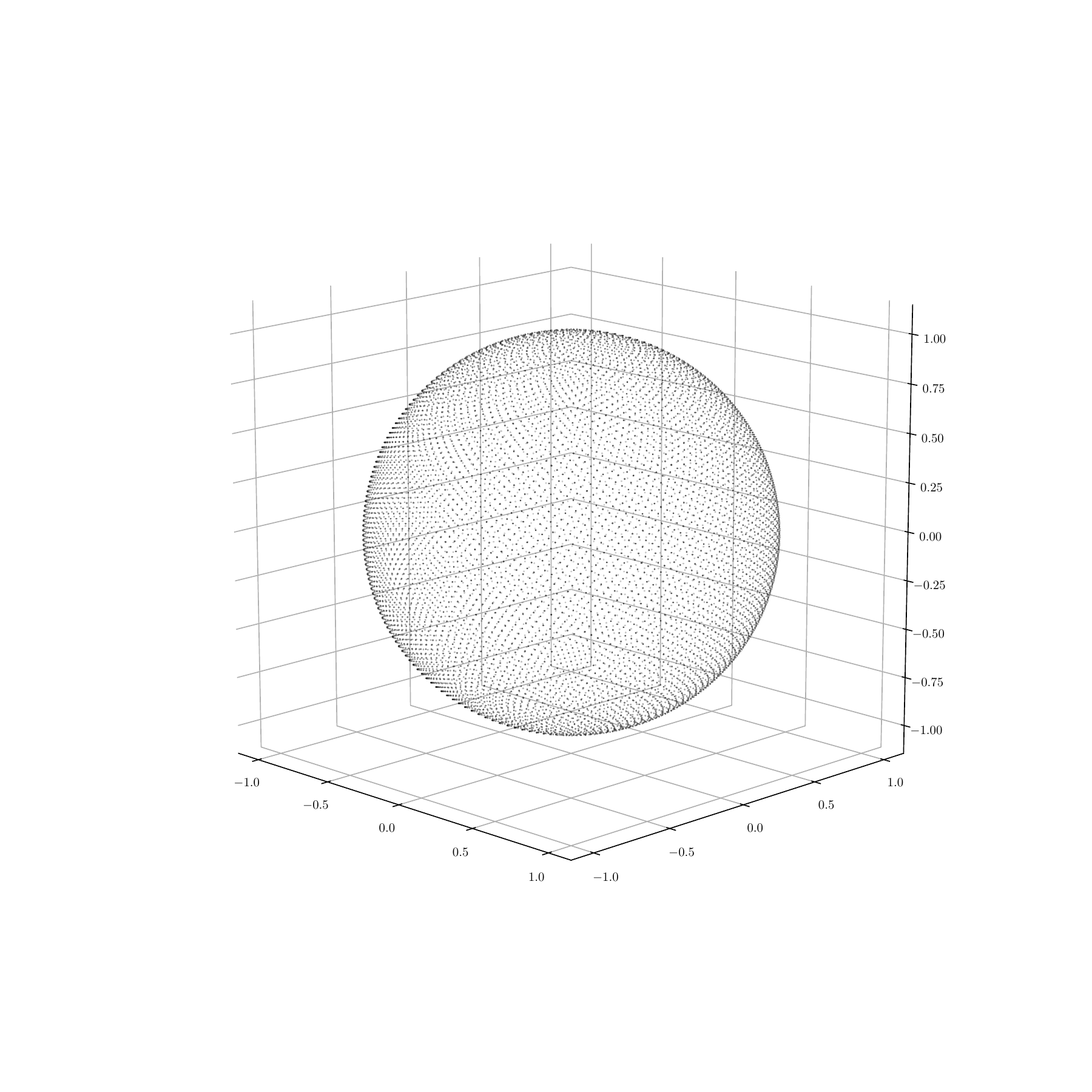} }}%
\qquad
\subfloat{{\includegraphics[trim={10em 15em 5em 15em},clip,scale=0.225]{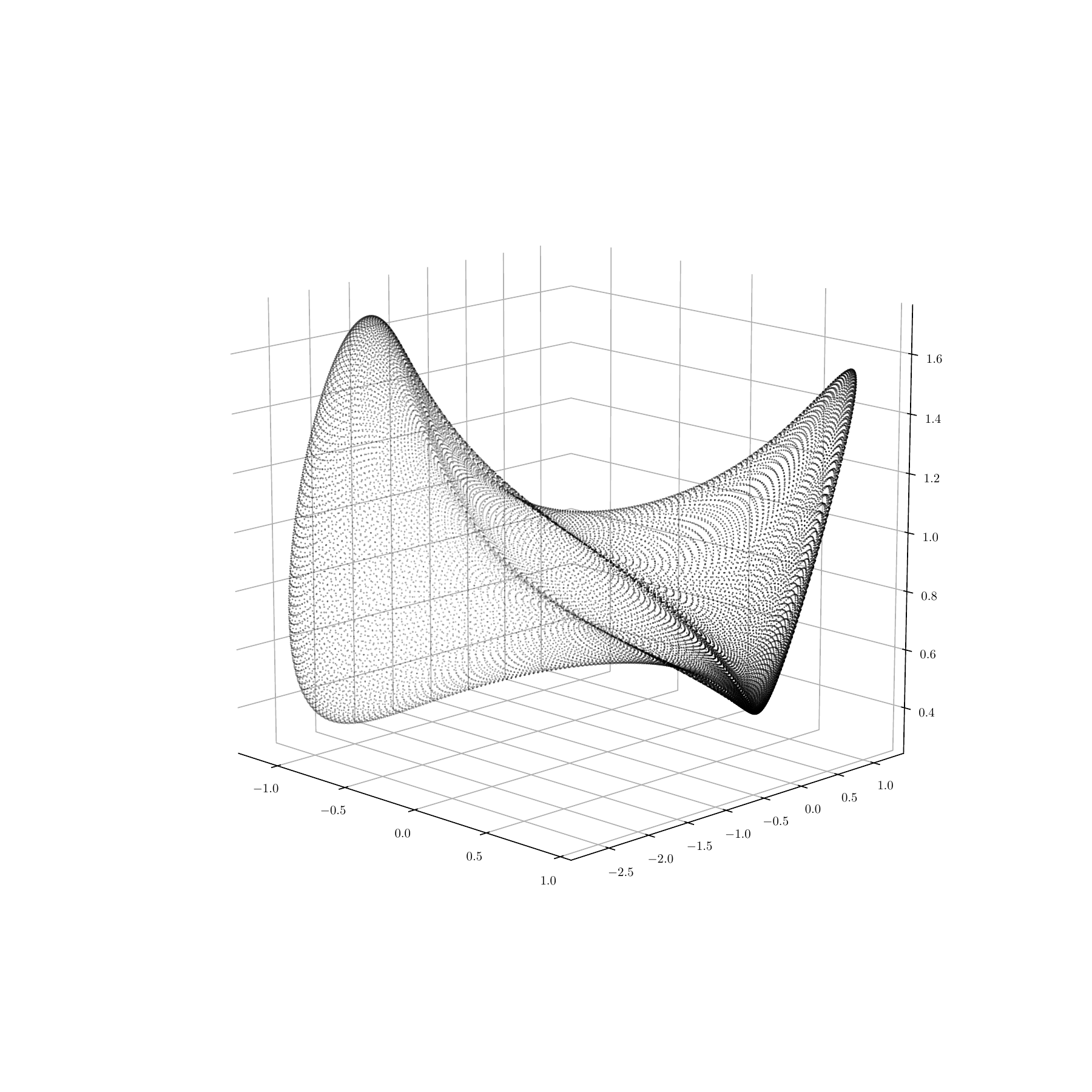} }}
\caption{\emph{Illustration of the three-dimensional mixing transform $f_4:\R^3\rightarrow\R^3$ via its action $f_4(S^2)$ (right panel) on the 2-sphere $S^2\equiv\{x\in\R^3\mid |x|=1\}$ (left panel).}}\label{fig:3D_transform}
\vspace*{\floatsep}
\hspace*{-6em}
\includegraphics[trim={15em 15em 5em 15em},clip,scale=0.225]{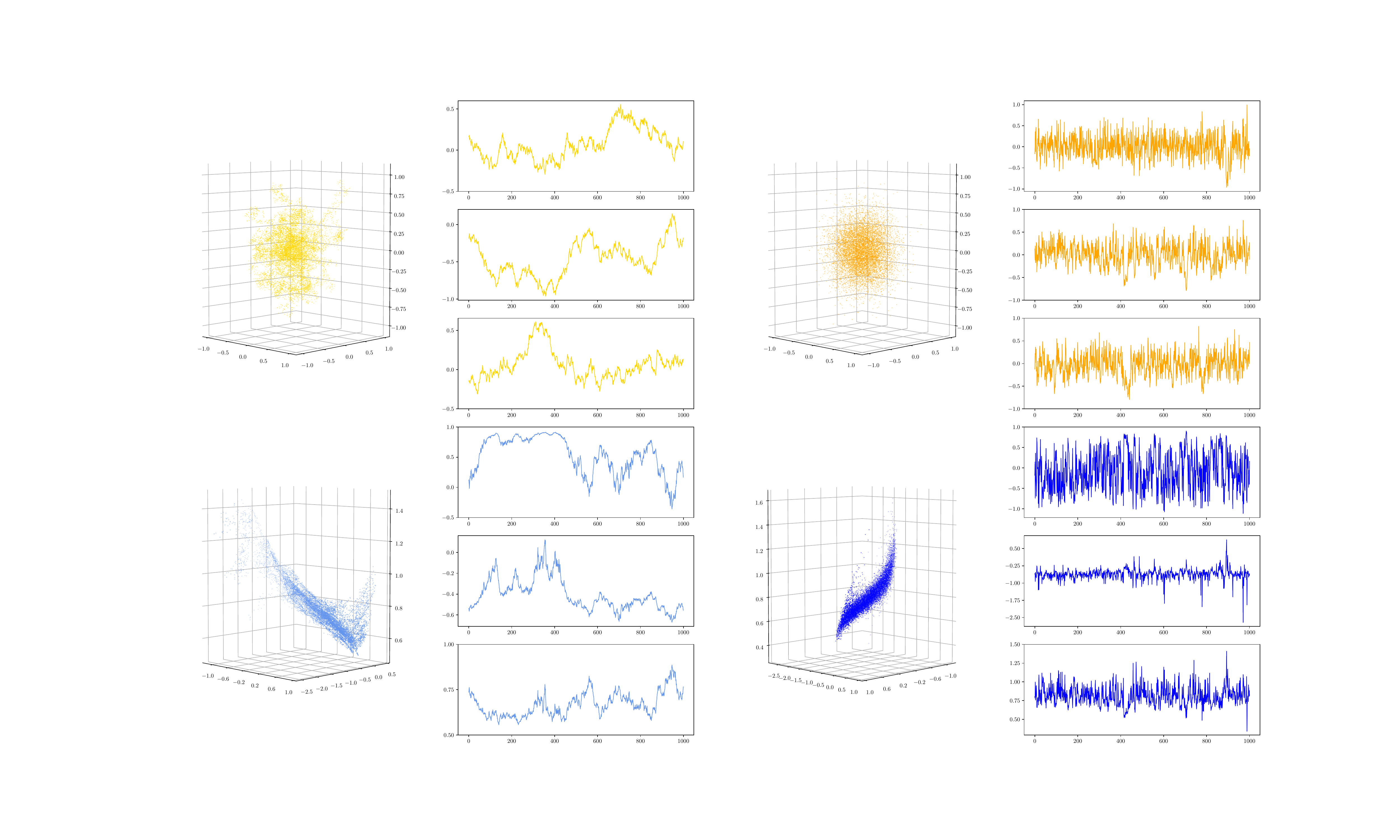}
\caption{\emph{Spatial trace and sampled components of a three-dimensional IC Ornstein-Uhlenbeck process $\tilde{S}_\mathrm{ou}$ (top left) and an IC copula-based time-series model $\tilde{S}_\mathrm{cy}$ (top right) and their respective nonlinear mixtures $f_4(\tilde{S}_\mathrm{ou})$ (bottom left) and $f_4(\tilde{S}_\mathrm{cy})$ (bottom right).}}\label{fig:3D_mixings}
\end{figure}

\noindent
\subsection{Nonlinear Mixings With Inverses Approximated By Neural Networks}\label{sect:experimentsII}The practical applicability of our ICA-method is illustrated by running the optimisation \eqref{thm:optimisation:eq1} over (approximate) demixing-transformations which are modelled by an artificial neural network.\\[-0.5em] 

\noindent
More specifically: We subject two Ornstein-Uhlenbeck sources $S^{(1)}$ and $S^{(2)}$ with two resp.\ four independent components to a two- resp.\ four-dimensional nonlinear mixing transform (see \cite{githubSigNICA} for details). The resulting mixtures $X^{(1)}$ and $X^{(2)}$ are then passed on to candidate demixing-nonlinearities $g^\nu_\theta\in\Theta_\nu$ which are given as elements of the parametrized families
\begin{equation}\label{sect:experimentsII:eq1}
\Theta_\nu \ \coloneqq \ \left\{g^{\nu}_\theta\,:\,\R^{2\nu}\rightarrow\R^{2\nu}\ \middle| \ g^\nu_\theta\text{ \ is an ANN with weights \ } \theta \in\tilde{\Theta}_\nu\right\} \qquad (\nu=1,2).
\end{equation}    
Here, the families of transformations $\Theta_\nu$ are spanned by the various configurations of some artificial neural network (ANN) instantiated over weight-vectors $\theta$ which are chosen from a given parameter set $\tilde{\Theta}_\nu$ in $\R^{m_\nu}$, where the number of weights $m_\nu$ is part of the pre-defined architecture of the ANN. Given these candidate-inverses, the optimisations \eqref{thm:optimisation:eq1} are run by 
\begin{equation}\label{sect:experimentsII:eq2}
\text{minimizing}\qquad\tilde{\Theta}_\nu \ \ni \ \theta \quad \longmapsto \quad \phi_{m_\nu}\!\big(g^\nu_\theta(X^{(\nu)})\big),
\end{equation}
i.e.\ by training each constituent ANN \eqref{sect:experimentsII:eq1} with the truncated contrast $\phi_{m_\nu}=\bar{\kappa}_{\mathrm{IC}}^{[m_\nu]}$ (cf.\ \eqref{sect:capping:normalised_seriescapped}) as its loss function, where the optimization steps are computed via backpropagation along the weights of the ANN. Technical details for the respective setups of \eqref{sect:experimentsII:eq1} and \eqref{sect:experimentsII:eq2} are reported in (\cite{githubSigNICA} and) Appendix \ref{appendix:sect:numerics_ann}.\\[-0.5em]  
 
\noindent
For the case $\nu=1$ we applied the mixing transformation depicted in Figure \ref{fig:NN2Dou} (leftmost panel), and for the case $\nu=2$ we followed the simulations of \cite{TCL,HYM} in using as a mixing transformation an invertible feedforward-neural network with four-nodal in- and output layers and two four-nodal hidden layers with $\tanh$ activation each.\footnotetext{\ Notice that: (a) by definition of $\Theta_1$, the function $\Phi^1_{\mathrm{ou}}$ depends on the one-dimensional parameter $\theta_2$ only; (b) as the concordance matrix of $\hat{S}_{-0.5}\coloneqq g^{(1)}_{-0.5}(X^{(1)}_\mathrm{ou})$ and $S_\mathrm{ou}$ is $\bigl( \begin{smallmatrix}0.053 & 0.929\\ 0.834 & 0.099\end{smallmatrix}\bigr)$ (indicating a close proximity between $\hat{S}_{-0.5}$ and $\DP\cdot S_\mathrm{ou}$, cf.\ Prop.\ \ref{prop:monconcordance}), the observation of $\Phi^{1}_{\mathrm{ou}}$ attaining a low local minimum at $-0.5$ is in accordance with Theorem \ref{thm:optimisation}.}\\[-0.5em]

\noindent
Denoting by $\theta_\nu^\ast\in\tilde{\Theta}_\nu$ the (local) optimum obtained by the minimisation of the objective \eqref{sect:experimentsII:eq2} and setting $\hat{S}^{(\nu)}\coloneqq g^\nu_{\theta_\nu^\ast}(X^{(\nu)})$ for the associated estimate of the source $S^{(\nu)}$ (cf.\ \eqref{thm:optimisation:eq1}), we as results to these experiments obtained the concordance matrices (cf.\ Definition \ref{def:MonConc}) 
\begin{align}
\mathcal{C}(\hat{S}^{(1)}, S^{(1)}) \ &\doteq \ \begin{pmatrix}
\bm{0.853} \quad & 0.065 \\
0.079 \quad & \bm{0.930}
\end{pmatrix} \qquad\text{ and } \label{sect:experimentsII:eq3.1}\\
\mathcal{C}(\hat{S}^{(2)}, S^{(2)}) \ &\doteq \ \begin{pmatrix}
\bm{0.834} \quad & 0.003 \quad & 0.037 \quad & 0.016 \\
0.148 \quad & \bm{0.725} \quad & 0.109 \quad & 0.069 \\
0.037 \quad & 0.034 \quad & \bm{0.803} \quad & 0.265 \\
0.077 \quad & 0.131 \quad & 0.072 \quad & \bm{0.787} 
\end{pmatrix}, \label{sect:experimentsII:eq3.2}
\end{align}
where we corrected for the permutation ambiguity between $\hat{S}$ and $S$ to simplify comparison.\\[-0.5em]

\noindent  
Both \eqref{sect:experimentsII:eq3.1} and \eqref{sect:experimentsII:eq3.2} indicate a good fit between $\hat{S}^{(\nu)}$ and $S^{(\nu)}$ in the sense that, to a good approximation, $\hat{S}^{(\nu)}$ and $S^{(\nu)}$ differ only up to (an inevitable permutation and) monotone scaling of their components,\footnote{\ Recall that the optimal deviation $\hat{S}^{(\nu)}\in\DP\cdot S^{(\nu)}$ between $\hat{S}^{(\nu)}$ and $S^{(\nu)}$ is achieved iff \eqref{sect:experimentsII:eq3.1} and \eqref{sect:experimentsII:eq3.2} are permutation matrices (Proposition \ref{prop:monconcordance}).} as stated by Theorem \ref{thm:optimisation}.  
A visual comparison of the original samples $S^{(1)}, S^{(2)}$ and their estimates $\hat{S}^{(1)}, \hat{S}^{(2)}$, see Figures \ref{fig:NN2Dou} and \ref{fig:NN4D}, confirms these results.\\[-0.5em]

\noindent
To reaffirm that the above results of finding good approximations to the source are not simply due to chance, we ran our experiments repeatedly with randomly chosen realisations and initial configurations for the data and the learning process \eqref{sect:experimentsII:eq1}$\,\&\,$\eqref{sect:experimentsII:eq2}, see \cite{githubSigNICA} and Figure \ref{fig:NN2Dbp}. The obtained discordances have mean $0.21$ and standard deviation $45\cdot 10^{-3}$ for the Ornstein-Uhlenbeck mixture (Fig.\ \ref{fig:NN2Dbp} (a)), and mean $0.18$ and standard deviation $55\cdot 10^{-3}$ for the copula-based time series mixture (Fig.\ \ref{fig:NN2Dbp} (b)). The associated average concordance matrix for this first (Ornstein-Uhlenbeck) type of mixtures is $\bigl( \begin{smallmatrix}0.78 & 0.10\\ 0.11 & 0.89\end{smallmatrix}\bigr)$, and for the latter (copula) type of mixtures it is $\bigl( \begin{smallmatrix}0.83 & 0.11\\ 0.11 & 0.88\end{smallmatrix}\bigr)$. The average discordance between the respective sources and their initial guesses $g_{\theta_0}(X)$ prior to applying  \eqref{sect:experimentsII:eq2} are at $0.59$ (a) and $0.56$ (b), respectively.\\[-0.5em] 

\begin{figure}
\centering
\hspace*{-1.25em}
\includegraphics[trim={0em 0 0em 0},clip,scale=0.3]{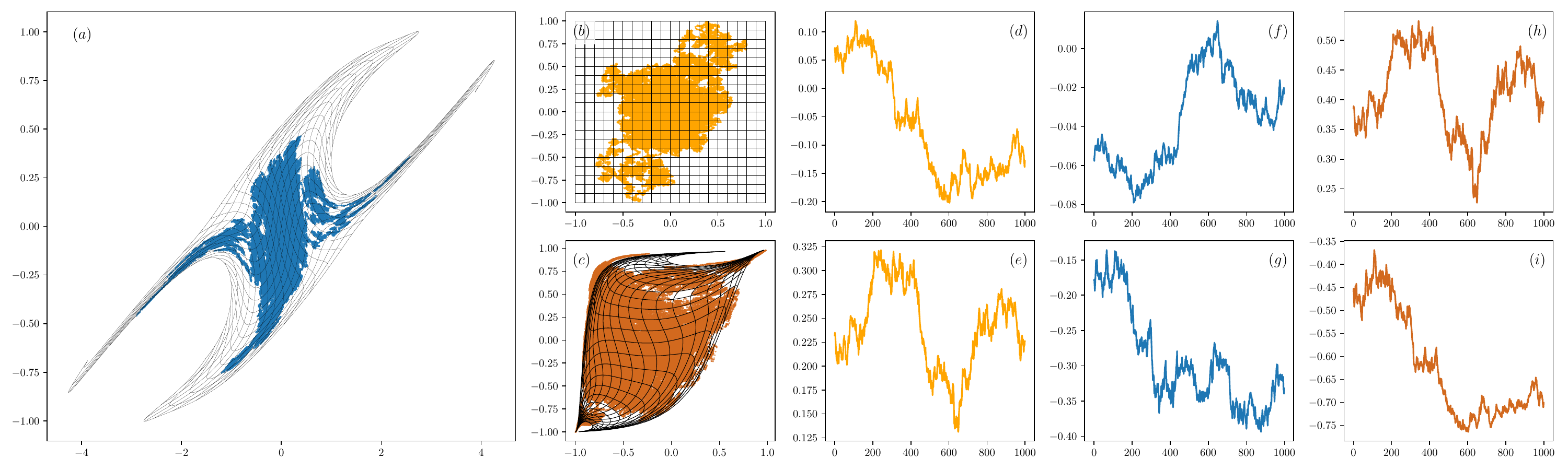}
\caption{\emph{Nonlinear mixture $X$ (sampled trace (a) and components (f), (g)) of an IC Ornstein-Uhlenbeck source $S$ ((b) and (d), (e)). Further shown is the residual $\restr{g\circ f}{[-1,1]^2}$ ((c); cf.\ \eqref{rem:method_in_practice:eq1}) for an estimate $g$ of $\restr{f^{-1}}{D_X}$. The function $g$ is found by optimising an articifial neural network $(g_\theta)$ via the loss function \eqref{sect:experimentsII:eq2}, and the resulting estimate $\hat{S}\coloneqq g(X)$ of $S$ is shown in brown ((c) and (h), (i)). To a good approximation, the source $S$ and its estimate coincide up to (a transposition and) a monotone scaling of their components, as quantified by \eqref{sect:experimentsII:eq3.1}.}}\label{fig:NN2Dou} 
\end{figure} 

\begin{figure}
\centering
\hspace*{-1.25em}
\includegraphics[trim={0em 0 0em 0},clip,scale=0.3]{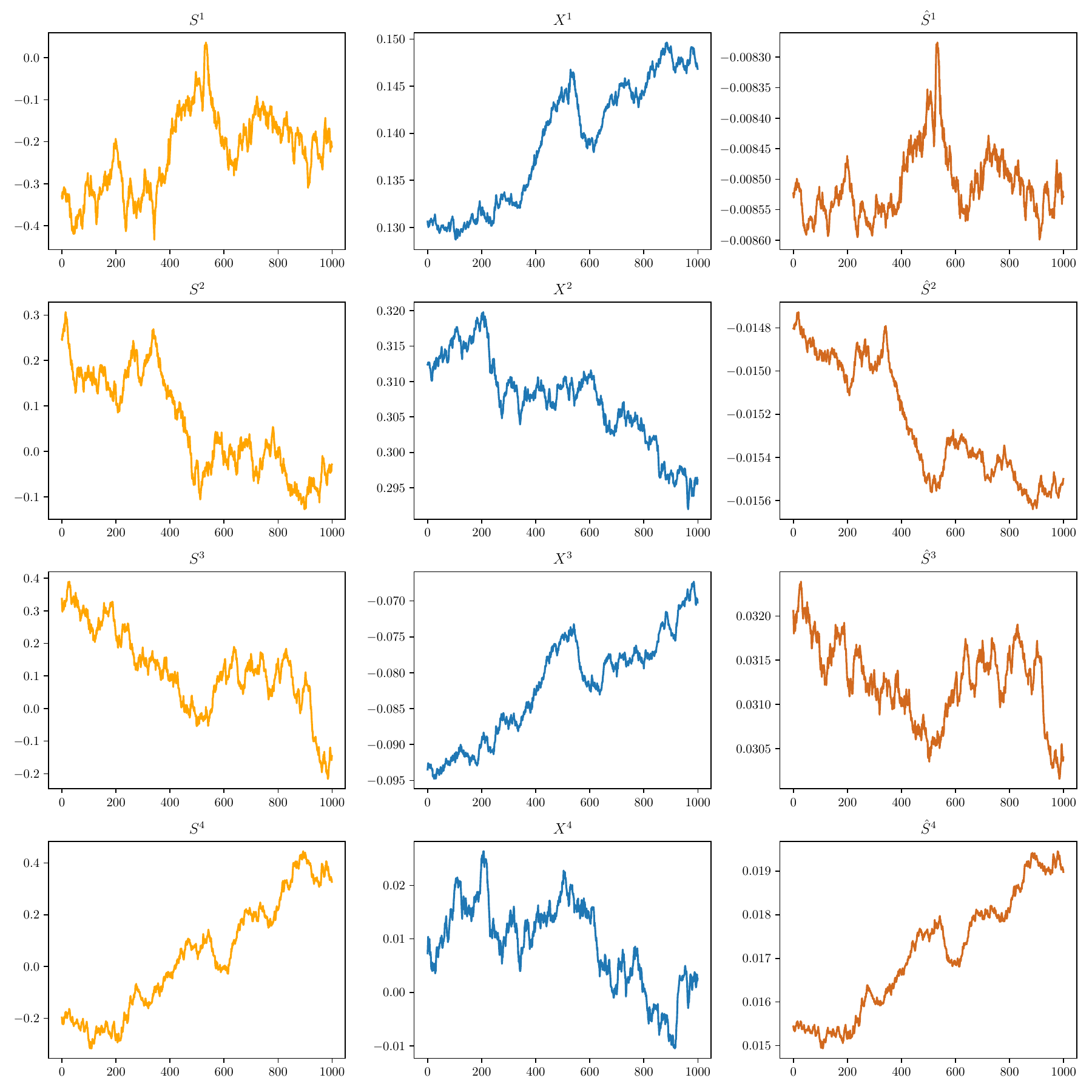}
\caption{\emph{Component processes, excerpted over 1000 data points each, of an IC Ornstein-Uhlenbeck process $S=(S^1,S^2,S^3,S^4)$ (orange), a nonlinear mixture $X=(X^1, X^2, X^3, X^4)$ (blue) of $S$, and an estimate $\hat{S}=(\hat{S}^1,\hat{S}^2,\hat{S}^3,\hat{S}^4)$ of $S$ (brown). The estimate $\hat{S}$ is obtained as $\hat{S}=g_{\theta_\star}(X)$ where $(g_\theta)$ is an ANN and $\theta_\star$ is a (local) minimum of the associated objective \eqref{sect:experimentsII:eq2}. To a good approximation, the processes $\hat{S}$ and $S$ coincide up to (a permutation and) a monotone scaling of their components. This is in accordance with Theorem \ref{thm:optimisation} and as quantified by \eqref{sect:experimentsII:eq3.2}.}}\label{fig:NN4D} 
\end{figure}

\noindent
These experiments underline the practical applicability of our proposed ICA-method.

To conclude, we note the following empirical findings.
\begin{remark}[Empirical Comments]\label{rem:method_in_practice}
\begin{enumerate}[label=(\roman*)]
\item\label{rem:method_in_practice:it1}Given an observable $X=f(S)$ together with a family $\Theta$ of candidate transformations on $\R^d$, the technical compatibility condition $\big(\restr{\DP(D_S)\cdot f^{-1}\big)}{D_X}\cap\restr{\Theta}{D_X}\neq\emptyset$ of Theorem \ref{thm:optimisation} can in practice typically not be guaranteed a priori. However, as indicated by the above findings \eqref{sect:experimentsII:eq3.1} and  \eqref{sect:experimentsII:eq3.2}, infringements of this (sufficient) technical condition might typically be innocuous, provided that at least 
\begin{equation}\label{rem:method_in_practice:eq1}
\big(\restr{\DP\cdot g\big)}{D_X}\cap\restr{\Theta}{D_X}\,\neq\, \emptyset \quad\text{for some \ $g$ \ with \ $\restr{g}{D_X}$ \! `close enough' to $\restr{f^{-1}}{D_X}$},
\end{equation}which will be satisfied if $\Theta$ is chosen large enough, say as a suitable ANN or another universal approximator. In a similar vein, our experiments indicate that the regularity condition $\Theta\subseteq C^{2}(D_X)$ may in practice be softened by merely requiring that the `approximate inverse' $g$ in \eqref{rem:method_in_practice:eq1} be `$C^{2}$-invertible on most of $D_X$' (cf.\ e.g.\ Figure \ref{fig:NN2Dou}, panel (c)) and the parametrization of $\Theta$ be `continuous' at (some) point $\tilde{g}\in\Theta$ with $\restr{\tilde{g}}{D_X}\in\DP\cdot\restr{g}{D_X}$, though this a priori reduces the optimisation \eqref{thm:optimisation:eq1} to the search for a (low) local minimum.                
\item\label{rem:method_in_practice:it2}We emphasize that the configurations of the neural networks and their backpropagation that we used in our experiments were ad hoc and not tuned for approximational optimality. Since the loss functions \eqref{sect:experimentsII:eq2} are typically non-convex with their topography crucially depending on the choice of \eqref{sect:experimentsII:eq1} (cf.\ e.g.\ Figure \ref{fig:explicitinvs_results}), we expect that the accuracy and efficiency of our estimates may be significantly improved by applying our ICA-method to ANN-based approximation schemes \eqref{sect:experimentsII:eq1}, \eqref{sect:experimentsII:eq2} which are more carefully designed.        
\end{enumerate} 
\end{remark} 

\begin{figure}
\centering
\hspace*{-1.25em}
\includegraphics[trim={0em 0 0em 0},clip,scale=0.5]{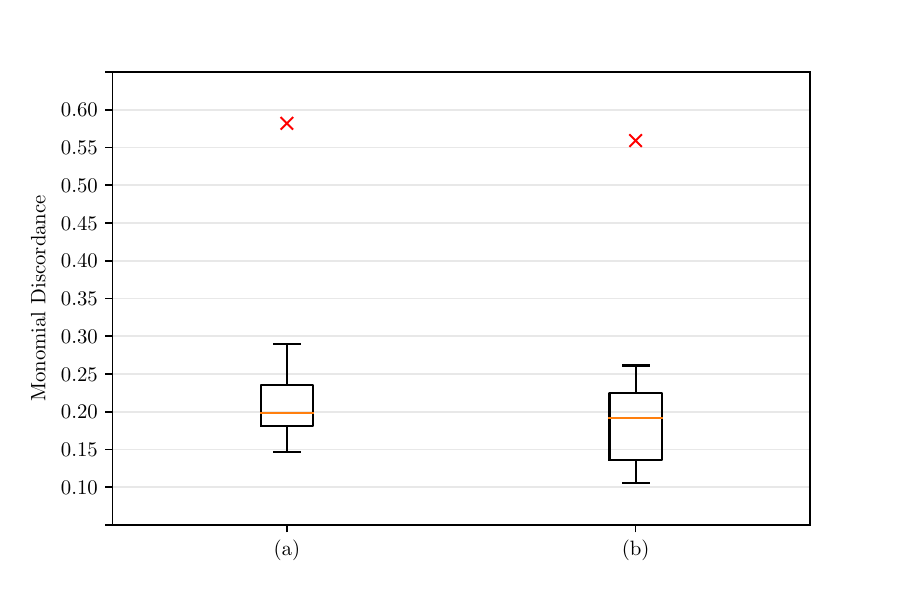}
\caption{\emph{Discrepancies (quantified by \eqref{def:MonConc:eq2}, boxplotted) between the true source $S$ and its \eqref{sect:experimentsII:eq2}-based estimates $\hat{S}$. The latter are computed from the nonlinear mixture $X$ shown in Figure \ref{fig:NN2Dou}. Displayed are the cases where: (a) the source $S$ an IC Ornstein-Uhlenbeck process, and (b) the source $S$ is a copula-based IC time-series (b), respectively. Shown in addition (red crosses) is the average discordance between $S$ and its initial estimates at the start of the optimisation \eqref{sect:experimentsII:eq2}. (The optimised discordances and their averages pre-learning are computed by applying our method to ten realisations of $S$ and ten initial configurations $g_{\theta_0}$ of the ANNs \eqref{sect:experimentsII:eq1} respectively, all drawn independently at random.)}}\label{fig:NN2Dbp} 
\end{figure}  

\newpage
\section{Conclusion}\label{sect:conclusion} 
\noindent
This paper has addressed the problem of Blind Source Separation via the classical approach of Independent Component Analysis. As our main contribution, we have formulated and proved a statistical method to recover multidimensional stochastic processes (in both continuous- as well as discrete-time) from observations of their nonlinear mixtures. Conceptually, our method assumes a source process with independent component processes and, by exploiting the temporal structure of this source, characterises its nonlinear transformations by the degree of intercomponental statistical dependence that they inflict on the source. Quantifying the latter by way of an efficiently computable contrast function derived from the signature cumulants of a stochastic process, the initial source separation problem may then be reformulated as a provably robust problem of optimisation-based function approximation which in practice can be conveniently implemented by, e.g., contemporary neural network-based learning schemes. A comprehensive consistency analysis ensures that the resulting method is usable in real-world situations (discretized time, one sample trajectory), which is further illustrated by a number of theoretical and numerical examples.                   

The mathematics of the identifiability theory established in this work appears flexible enough to allow for extensions in various further directions. For instance, by considering third-order in place of second-order finite-dimensional distributions it may be adapted to infer the identifiability of stochastic processes from their time-dependent nonlinear mixing transformations (`invertible flows'). By adapting the ideas of this paper further, it does now also seem within reach to prove the identifiability of stochastic sources from more general nonlinear relations, such as for instance in the setting of controlled differential equations where one may be interested to recover an (independent-component) stochastic control from its nonlinear response. As with most methods involving an optimisation over flexibly parametrisable nonlinearities, however, a significant practical caveat of our approach is the occurence of spurious local minima in the approximation of the demixing transformation. This leaves room for improvement that future research might explore: In addition to practical deliberations such as spanning the optimisation domain by more carefully designed learning architectures, or amplifying the contrast function by the addition of tunable hyperparameters such as weights attached to its summands, one may attempt to tame the critical optimisation task by adjusting it to (localised) polynomial approximations of the mixing nonlinearity and harvesting the additional algebraic structure that then results from the fact (\cite{COP}) that the signature transform `dualises' the action of polynomial transformations on its arguments.                                  

\section*{Acknowledgements}\noindent The authors would like to extend their gratitude to the Associate Editor, four anonymous referees, the Editor, and Aapo Hyvärinen for their very helpful comments and suggestions which helped to significantly improve the original version of this paper and its presentation. AS was financially supported by an Oxford-Cocker Graduate Scholarship and a Mathematical Institute Scholarship. HO is supported by the Hong Kong Innovation and Technology Commission (InnoHK Project CIMDA).   

\begin{appendix}
\section{Technical Lemmas, Proofs, and Remarks}\label{appendix:sect:add_proofs_and_rems}
\subsection{The Path-Space $\mathcal{C}_d$ is Cartesian}\label{appendix:sect:add_proofs_and_rems:cartesian_cd} 
Let us remark that the space $\mathcal{C}_d$ of paths from $\I$ to $\R^d$ (for $\I$ some fixed compact interval) inherits the Cartesian structure of $\R^d=\R\times\cdots\times\R$ via the canonical Banach-isometry $\psi : \mathcal{C}_1\times\cdots\times\mathcal{C}_1\rightarrow\mathcal{C}_d$ which is given by 
\begin{equation}\label{appendix:sect:add_proofs_and_rems:cartesian_cd:eq1}
\mathcal{C}_1^{\times d}\ \ni \ \big((\gamma^1_t)_{t\in\I}, \cdots, (\gamma^d_t)_{t\in\I}\big) \quad \longmapsto \quad \left[\I\,\ni\, t \ \mapsto \ \sum_{i=1}^d\gamma^i_t\cdot e_i\right] \ \in \ \mathcal{C}_d
\end{equation}for $(e_i)_{i\in[d]}$ the standard basis of $\R^d$. Indeed: It is clear that $\psi$ is a linear isometry with respect to the Banach norms $\|(\gamma^1, \cdots, \gamma^d)\|_\alpha\coloneqq \max_{i\in[d]}\|\gamma^i\|_\infty$ and $\|\gamma\|_\beta\coloneqq\max_{i\in[d]}\|\pi_i\circ\gamma\|_\infty$ on $\mathcal{C}_1^{\times d}$ and $\mathcal{C}_d$, respectively; as the norms $\|\cdot\|_\beta$ and $\|\cdot\|_\infty$ are equivalent on $\mathcal{C}_d$ (and hence induce the same Borel-structure on $\mathcal{C}_d$), the Banach isometry \eqref{appendix:sect:add_proofs_and_rems:cartesian_cd:eq1} extends to an isomorphism of Borel spaces
\begin{equation}
\big(\mathcal{C}_d, \|\cdot\|_\infty\big) \ \cong \ \big(\mathcal{C}_1\times\cdots\times\mathcal{C}_1, \|\cdot\|_\alpha\big).
\end{equation}
But since the factors $\mathcal{C}_1$ are separable metric spaces, so is $\mathcal{C}_1^{\times d}$, implying that (as $\|\cdot\|_\alpha$ induces the topology of componentwise convergence) $\mathcal{B}(\mathcal{C}_1^{\times d})=\mathcal{B}(\mathcal{C}_1)^{\otimes d}$, see e.g.\ \citep[Appendix M (p.\ 244)]{BIL}. Using now that $\mathcal{B}(\mathcal{C}_d)$ is generated by the family of cylinder sets, we (upon testing on those) obtain that the factor identity \eqref{def:stochastic_process:eq2} holds as desired.     

\subsection{Proof of Lemma \ref{lem:spat_supp}}\label{pf:lem:spat_supp}
\lemspatsupp*
\begin{proof}
\ref{lem:spat_supp:it1}\,: Recalling the support $\supp(\mu)$ of a Borel-measure $\mu : \mathcal{B}(\R^d)\rightarrow[0,1]$ to be defined as the smallest closed subset $C\subseteq\R^d$ having total mass $\mu(C)=1$, it is easy to see that $\supp(\mu\circ f^{-1}) = \overline{f(\supp(\mu))}$ for any continuous $f$ whose domain includes $C_\mu$. (Indeed: Denoting $C_\nu\coloneqq\supp(\nu)$ for brevity, we find $\mu(f^{-1}(C_{\mu\circ f^{-1}}))=1$ and hence $C_\mu\subseteq f^{-1}(C_{\mu\circ f^{-1}})$ and thus $\overline{f(C_\mu)}\subseteq C_{\mu\circ f^{-1}}$ (since both $f$ and the closure operation preserve set inclusion); conversely, for $\tilde{C}\coloneqq\overline{f(C_\mu)}$ we have $f^{-1}(\tilde{C}) \supseteq C_\mu$ and hence $\mu\circ f^{-1}(\tilde{C})=1$ and thus $C_{\mu\circ f^{-1}}\subseteq\tilde{C}$ as desired.) This implies that $D_{f(Y_t)} = \supp(\mathbb{P}_{Y_t}\circ f^{-1}) = \overline{f(D_{Y_t})}$ for each $t\in\I$, from which the assertion $D_{f(Y)} = \overline{f(D_Y)}$ is readily obtained via \eqref{def:spatial_support:eq1} and the continuity of $f$. (Indeed: On the one hand, $f(D_Y) \subseteq \overline{f(\bigcup_{t\in\I}D_{Y_t})} \subseteq \overline{\bigcup_{t\in\mathbb{I}}D_{f(Y_t)}} = D_{f(Y)}$ where the first inclusion holds by the fact that the continuous image of a closure is contained in the closure of the continuous image and the second inclusion holds by the above-established identity of fixed-time supports; on the other hand, the latter identity also implies that $D_{f(Y)} = \overline{\bigcup_{t\in\I}\overline{f(D_{Y_t})}} = \overline{\bigcup_{t\in\I}f(D_{Y_t})} = \overline{f(\bigcup_{t\in\I} D_{Y_t})} \subseteq \overline{f(D_Y)}$, where once more the last inclusion holds since the map $f$ and the closure operator both preserve set inclusions.)    

\ref{lem:spat_supp:it2}\,: By definition of $D_{Y_t}$, the event $\tilde{\Omega}_t\coloneqq Y_t^{-1}(D_{Y_t})$ has probability one for each $t\in\I$, and hence so does the countable intersection $\tilde{\Omega}\coloneqq\bigcap_{t\in\I\cap\mathbb{Q}}\tilde{\Omega}_t$. Now taking any $\tilde{\omega}\in\tilde{\Omega}$, we find by construction that the sample path $\gamma^{\tilde{\omega}}=Y(\tilde{\omega})$ satisfies $\operatorname{im}\!\big(\!\left.\gamma^{\tilde{\omega}}\right|_{\tilde{\I}}\big)\subset D_Y$ for $\tilde{\I}\coloneqq \I\cap\mathbb{Q}$. But since $\tilde{\I}$ is dense and the sample path $\gamma^{\tilde{\omega}}$ is continuous, we obtain $\mathrm{tr}(Y(\tilde{\omega}))=\operatorname{im}\!\big(\gamma^{\tilde{\omega}}\big)\subseteq D_Y$ by the fact that $D_Y$ is closed.

\ref{lem:spat_supp:it3}\,: We proceed by contradiction: If $\mathbb{P}(Y_t\in U)=0$ for each $t\in\mathbb{I}$, then $\mathbb{P}(Y_t\in U^c) = 1$ and thus (as $U^c$ is closed) $\supp(\mathbb{P}_{Y_t})\subseteq U^c$ for each $t\in\mathbb{I}$, yielding $D_Y\equiv\overline{\bigcup_{t\in\mathbb{I}}\supp(\mathbb{P}_{Y_t})}\subseteq U^c$ in contradiction to $U\subset D_Y$. 

\ref{lem:spat_supp:it4}\,: Recall that in our notation, $D_Y=\overline{\bigcup_{t\in\mathbb{I}}D_{Y_t}}^{|\cdot|}$ for $D_{Y_t}\coloneqq\supp(Y_t)$. Since $Y_t$ admits a continuous Lebesgue density $\chi_t\in C(\mathbb{R}^d)$ (vanishing identically outside of $D_{Y_t}$) by assumption, each of the sets $D_{Y_t}\stackrel{\mathrm{def}}{=}\overline{\chi_t^{-1}(\{0\}^c)}^{|\cdot|}$, $t\in\mathbb{I}$, is the closure of an open set. Denote $D_Y'\coloneqq\bigcup_{t\in\mathbb{I}}D_{Y_t}$. Then for each $u\in D_Y$ there is a sequence $(u_n)_n\subset D_Y'$ with $\lim_{n\rightarrow\infty}u_n=u$, and for each $n\in\mathbb{N}$ there is a sequence $(u_{n,m})_m \subset\mathrm{int}(D_{Y_{t_n}})\subseteq\mathrm{int}(D_Y)$ (some $t_n\in\mathbb{I}$) with $\lim_{m\rightarrow\infty}u_{n,m}=u_n$ by the fact that each element of $\langle D_{Y_t}\rangle_{t\in\mathbb{I}}$ is the closure of its interior. With this, it is easy to see that there is a subsequence $(m_n)_n\subset\mathbb{N}$ such that $\lim_{n\rightarrow\infty}u_{n,m_n} = u$, proving $u\in\overline{\mathrm{int}(D_Y)}$ as claimed. The remaining inclusion $\overline{\mathrm{int}(D_Y)}\subseteq D_Y$ is clear as $\mathrm{int}(D_Y)\subset D_Y$ and $D_Y$ is closed.

\ref{lem:spat_supp:it5}\,: Suppose that the set $\tilde{D}\coloneqq \bigcup_{(s,t)\in\Delta_2(\mathbb{I})}\dot{D}_s\cap\dot{D}_t$ is not dense in $D_S$. Then by \ref{lem:spat_supp:it4} the set $\tilde{D}$ is not dense in the interior of $D_S$, whence there exists $x_0\in D_S$ and $\varepsilon>0$ such that $B_\varepsilon(x_0)\subset D_S\setminus\tilde{D}$. Hence by  \ref{lem:spat_supp:it3}, there must then be some $t^\star\in\mathbb{I}$ such that $\mathcal{O}\coloneqq \dot{D}_{t^\star}\cap B_\varepsilon(x_0)\neq\emptyset$. 
Now since $\mathcal{O}\subset \tilde{D}^c$, we for each $x\in\mathcal{O}\subseteq\dot{D}_{t^\star}$ have that $x\in\dot{D}_{t^\star}\cap(\dot{D}_s^c)$ for all $s\neq t^\star$, the latter implying that $x\in\dot{D}_{t^\star}\setminus\big(\bigcup_{s\neq t^\star}\dot{D}_s\big)$ and hence $\upsilon_{t^\star}(x)>0$ and $\upsilon_s(x)=0$ for all $s\neq t\star$, contradicting the continuity of $s\mapsto \upsilon_s(x)$.   
\end{proof} 

\subsection{Probability Density of Projections of a Random Vector}
Given a $C^k$-distributed random vector $Z=(Z^1,\cdots, Z^n)$ in $\R^n$ with density $\varsigma$ together with some fixed subset $I\subseteq[n]$, say $I=\{i_1,\ldots, i_k\}$ with $k\coloneqq|I|$, denote by $Z_I\coloneqq\pi_I(Z)=(Z^{i_1},\cdots, Z^{i_k})$ the random vector in $\R^k$ which is given by the projection of $Z$ to its $I$-indexed subcoordinates. Then $Z_I$ is $C^k$-distributed with Lebesgue density $\varsigma_I$ given by 
\begin{equation}\label{rem:TrafoProjProbDens:eq1}
\varsigma_I \ = \ \int_{\mathbb{R}^{n-k}}\!\varsigma\ \mathrm{d}x_1\cdots\widehat{\mathrm{d}x_{i_1}}\cdots\widehat{\mathrm{d}x_{i_k}}\cdots\mathrm{d}x_n.
\end{equation}     
As an immediate consequence, we have the inclusion
\begin{equation}\label{rem:TrafoProjProbDens:eq2}
\supp(\varsigma) \ \subseteq \ \supp(\varsigma_{\,[k]})\times\supp(\varsigma_{\,[n]\setminus[k]})\qquad\text{ for each }\quad k\in [n].
\end{equation}
Indeed, setting $C_{k}\coloneqq\supp(\varsigma_{\,[k]})$ and $C'_{k}\coloneqq\supp(\varsigma_{\,[n]\setminus[k]})$, we note that $\mathbb{P}(Z\in C_k\times C'_k) = \mathbb{P}(Z_{[k]}\in C_k, \, Z_{[n]\setminus[k]}\in C'_k) \geq \mathbb{P}(Z_{[k]}\in C_k) + \mathbb{P}(Z_{[n]\setminus[k]}\in C_k') - 1 = 1$ and hence $C_k\times C'_k\supseteq\supp(\mathbb{P}_Z) = \supp(\varsigma)$, as claimed. 

\subsection{Analytical Characterisation of Separable and Pseudo-Gaussian Densities}\label{pf:lem:C2PseudoGaussian}
It will be convenient to have an analytical characterisation of the `pathological' types of densities from Definition \ref{def:PseudoGaussian}. To this end, we first declare what we mean by a symmetric set:\\[-0.5em] 

\noindent
Writing $(u,v)\equiv (u_1, \cdots, u_d, v_1, \cdots,v_d)$ for the coordinates on $\mathbb{R}^{2d}\cong \mathbb{R}^d\times\mathbb{R}^d$, a given subset $A\subseteq\mathbb{R}^{2d}$ will be called \emph{symmetric} if 
\begin{equation}
A \ = \ \tau(A) \qquad\text{ for the transposition }\quad \tau(u,v)\coloneqq (v,u).
\end{equation}A function $\varphi : G\rightarrow\mathbb{R}$, $G\subseteq\mathbb{R}^{2d}$, will be called \emph{symmetric} if $\varphi\circ\tau = \varphi$.\\[-0.5em]

(Since $\tau^2=\mathrm{id}$, it is clear that $A$ is symmetric iff $\tau(A)\subseteq A$. Also, if $A\subseteq\R^{2d}$ is symmetric then $A\subseteq \pi_{[d]}(A)\times\pi_{[d]}(A)$.) 

\begin{appendixlemma}\label{lem:C2PseudoGaussian}
Let $\zeta : G \rightarrow \mathbb{R}_{>0}$, with $G\subseteq\mathbb{R}^2$ open, be twice continuously differentiable. Then the following holds.
\begin{enumerate}[label=\upshape(\roman*)]
\item\label{lem:C2PseudoGaussian:it1}Provided that $G$ is convex, we have that:
$$\hfill\displaystyle
\partial_x\partial_y\log\zeta \,\equiv\, 0 \qquad\text{ if and only if }\qquad  \zeta \ \text{ is separable}\,;\hfill$$
\item\label{lem:C2PseudoGaussian:it2}$\zeta$ is strictly non-separable if and only if the open set
$$G'\coloneqq\{z\in G\mid \partial_x\partial_y\log\zeta(z)\neq 0\}\ \text{ is a dense subset of } G;$$
\item\label{lem:C2PseudoGaussian:it3}provided that $\mathcal{O}\subseteq G'$ is symmetric, open and convex, we have that: 
\begin{equation*}
\restr{[\partial_x\partial_y\log\zeta]}{\mathcal{O}}\ \text{ is separable and symmetric }\quad\text{ iff }\quad \restr{\zeta}{\mathcal{O}} \ \text{ is pseudo-Gaussian.}
\end{equation*}
\end{enumerate}  
\end{appendixlemma} 
\begin{proof}We use the global abbreviations $\xi\coloneqq\partial_x\partial_y\log\zeta$ and $\phi\coloneqq\log\zeta$.

\ref{lem:C2PseudoGaussian:it1}\,: \, The `if'-direction is clear, so suppose that $\partial_x\partial_y\phi =0$. Then, as $G$ is convex, $\phi\equiv\phi(x,y) = \phi_1(x) + \phi_2(y)$ and hence $\zeta = \exp(\phi) = \zeta_1(x)\cdot \zeta_2(y)$ for $\zeta_i\coloneqq \exp(\phi_i)$, as claimed.    

\ref{lem:C2PseudoGaussian:it2}\,: \, Since $\xi$ is continuous, the set $\{\xi=0\}$ is closed, whence the set $G'=G\cap\{\xi=0\}^{c}$ is open. To see that $G'$ is dense in $G$, take any $z\in G$ and note that, as $G$ is open, there is some $z$-centered open ball $B_z\subseteq G$. Since $\zeta$ is strictly non-separable, $\left.\zeta\right|_{B_z'}$ is not separable for any open $z$-centered sub-ball $B'_z\subseteq B_z$, whence by (i) there must be some $z'\in B_z'$ with $\xi(z')\neq 0$, implying $B_z'\cap G'\neq\emptyset$. 

The (contrapositive of the) converse implication in (ii) follows via (i). 

\ref{lem:C2PseudoGaussian:it3}\,: \, Let $\mathcal{O}\subseteq G'$ be symmetric, open and convex. $(\Leftarrow)$ is clear by Def.\ \ref{def:PseudoGaussian}.

$(\Rightarrow)$\,:\, Suppose that $\tilde{\xi}\coloneqq\restr{\xi}{\mathcal{O}}$ is separable and symmetric, i.e.\ assume that
\begin{equation}
\tilde{\xi}\equiv\tilde{\xi}(x,y) = f(x)\cdot g(y)\quad\text{ and }\quad \tilde{\xi}\circ\tau = \tilde{\xi}
\end{equation}
for some $f,g : \mathcal{O}_1\rightarrow\mathbb{R}$, with $\mathcal{O}_1\coloneqq\pi_1(\mathcal{O})$. Then $\tilde{\xi}\equiv\tilde{\xi}(x,y) = \mathrm{sgn}(\tilde{\xi})\cdot\eta(x)\cdot\eta(y)$ for some function $\eta:\mathcal{O}_1\rightarrow\mathbb{R}$, where $\epsilon\coloneqq\mathrm{sgn}(\tilde{\xi})$ denotes the sign of $\tilde{\xi}$ (i.e., $\mathrm{sgn}(\tilde{\xi})=\mathbbm{1}_{(0,\infty)}(\tilde{\xi}) - \mathbbm{1}_{(-\infty, 0)}(\tilde{\xi})$). Indeed, the symmetry of $\tilde{\xi}$ implies that $\tilde{\xi}^2 = \tilde{\xi}(x,y)\cdot\tilde{\xi}(y,x) = \tilde{\eta}(x)\cdot\tilde{\eta}(y)$ for the map $\tilde{\eta}(z)\coloneqq f(z)\cdot g(z)$; consequently, $\tilde{\xi} = \epsilon\cdot\sqrt{\tilde{\xi}^2} = \epsilon\cdot\eta(x)\cdot\eta(y)$ for the map $\eta\equiv\eta(z)\coloneqq\sqrt{|\tilde{\eta}(z)|}$. Now since $\mathcal{O}$ is a connected subset of $G'$, the sign of $\tilde{\xi}$ is constant, i.e.\ $\epsilon=\pm 1$. Integrating $\xi=\partial_x\partial_y\phi$ thus implies that 
\begin{equation}\label{lem:C2PseudoGaussian:aux1}
\begin{aligned}
\phi \ &= \, \int\!\partial_y\phi\,\mathrm{d}y + f_1(x) = \int\!\!\!\int\!\tilde{\xi}\,\mathrm{d}x + f_2(y)\,\mathrm{d}y + f_1(x)\\
\ &= \,\epsilon\cdot f_3(x)\cdot f_3(y) + \tilde{f}_2(y) + \tilde{f}_1(x) 
\end{aligned} 
\end{equation}for $f_3\equiv f_3(z)\coloneqq \int_{z_0}^z\!\eta(s)\,\mathrm{d}s$ (some priorly fixed $z_0\in\mathcal{O}_1$) and some additional continuous functions $f_i, \tilde{f}_i: \mathcal{O}_1\rightarrow\mathbb{R}$. Note that since $\mathcal{O}$ is convex, the integrated identity of functions \eqref{lem:C2PseudoGaussian:aux1} holds pointwise on all of $\mathcal{O}$. Exponentiating \eqref{lem:C2PseudoGaussian:aux1} now yields the claim.           
\end{proof}

\subsection{A Function Which is Strictly Non-Separable but Not Regularly Non-Separable}\label{example:strictlynonsep_not_regnonsep}
Consider the function $\varphi_0\equiv\varphi_0(x,y)$ given by
\begin{equation*}
\varphi_0(x,y) = \begin{cases}\exp\!\big(-1/(x-y)^{2}\big), \quad &x < y\\
 \hfill 0,\hfill  &x=y \\ 
 \exp\!\big(-1/(x-y)^{2}\big), \quad &y < x,\end{cases}\quad\text{ and define }\quad \varphi\coloneqq e^{\varphi_0}.
\end{equation*}   
Then clearly $\varphi\in C^2(\tilde{U}^{\times 2};\mathbb{R}_{>0})$ for $\tilde{U}\coloneqq (0,1)$, and as the mixed-log-derivatives $\partial_x\partial_y\log(\varphi) = \partial_x\partial_y\varphi_0$ vanish nowhere on the dense subset $\tilde{U}^2\setminus\Delta_{\tilde{U}}\subset\tilde{U}^{\times 2}$ the function $\varphi$ is also strictly non-separable on $\tilde{U}^2$ by Lemma \ref{lem:C2PseudoGaussian} (ii). However, since $\left.(\partial_x\partial_y\varphi_0)\right|_{\Delta_{\tilde{U}}}=0$ everywhere on $\tilde{U}$, the function $\varphi$ is clearly not regularly non-separable. 

\subsection{A Lemma on Monomial Transformations}\label{pf:lem:monomial_trafos}
The following helps us to infer the desired recovery of a source from the existence of a `well-behaved' subset of its spatial support.    
\begin{appendixlemma}\label{lem:monomial_trafos}
Let $\varrho\in C^{1}(G)$ for some $G\subseteq\R^d$ open. Then the following holds. 
\begin{enumerate}[label=\upshape(\roman*)]
\item\label{lem:monomial_trafos:it1} Let $G$ also be connected and such that $G\cap\pi_i^{-1}(\{\eta\})$ is connected for each $\eta\in\pi_i(G)$ and all $i\in[d]$. Then $\varrho\in\DP(G)$ if and only if the Jacobian $J_\varrho$ of $\varrho$ is monomial on $G$, i.e.\ such that $\{J_\varrho(u)\mid u\in G\}\subseteq\operatorname{M}_d$.
\item\label{lem:monomial_trafos:it2} Let $D$ be a convex subset of $G$ with the property that $D$ is the closure of some dense subset $\mathcal{O}$ of $D$. If $J_\varrho$ is invertible on $D$ and monomial on $\mathcal{O}$, then $\varrho$ is monomial on $D$.    
\end{enumerate} 
\end{appendixlemma}
\begin{proof}
\ref{lem:monomial_trafos:it1}\,: The `only-if' direction is clear, so let us prove that $\varrho\in\DP(G)$ if 
\begin{equation}\label{lem:monomial_trafos:aux1}
J_\varrho(u) \ = \ \big(\beta_\mu(u)\cdot\delta_{\nu,\sigma^u(\mu)}\big)_{\!\mu,\nu\in[d]}\ \in \ \mathrm{M}_d, \qquad \forall\, u\equiv(u_\mu)\in G,
\end{equation}
for some $\beta_\mu : G\rightarrow \mathbb{R}_\times$ and $\{\sigma^u\mid u\in G\}\subseteq S_d$. To this end, we first note that 
\begin{equation}\label{lem:monomial_trafos:aux2}
\sigma^u = \sigma^{u'}\quad\text{ for any }\quad u,u'\in G.
\end{equation}

Indeed: Fix any $u_0\in G$ and note that since the Jacobian $J_\varrho\equiv(J_\varrho^{ij})_{ij} : G \rightarrow\operatorname{GL}_d$ of $\varrho$ is continuous, we for each $\varepsilon_{u_0}>0$ can find a $\delta_{u_0}>0$ such that ($B_{\delta_{{u_0}}}\!(u_0)\subseteq G$ and)
\begin{equation}
\|J_{\varrho}(u_0) - J_\varrho(u)\| < \varepsilon_{u_0} \quad\text{ for all }\quad u\in B_{\delta_{{u_0}}}\!(u_0).
\end{equation}                        
Taking $\|\cdot\|$ as the infinity-norm $\|A\|\equiv\|A\|_\infty\coloneqq\max_{1\leq i\leq d}\sum_{j=1}^d|a_{ij}|$  and $\varepsilon_{u_0}\coloneqq\min_{1\leq 1\leq d}\sum_{j=1}^d|J_\varrho^{ij}(u_0)|>0$, the fact that $J_{\varrho}(u_0), J_\varrho(u)\in\mathrm{M}_d$ then readily implies that   
\begin{equation}\label{lem:monomial_trafos:aux4}
\sigma^u=\sigma^{u_0}\quad\text{ for all }\quad u\in B_{\delta_{u_0}}\!(u_0).
\end{equation} 
Let now $u,u'\in G$ be arbitrary. Since, as a connected subset of $\R^d$, the domain $G$ is also path-connected, the points $u$ and $u'$ can be joined by a continuous path $\gamma$ with $u=\gamma(0)$ and $u'=\gamma(1)$ and trace $\bar{\gamma}\coloneqq\gamma([0,1])\subseteq G$. We claim that 
\begin{equation}\label{lem:monomial_trafos:aux4.1}
\sigma^{u_1} = \sigma^{u_2} \quad\text{ for any }\quad u_1, u_2\in\bar{\gamma},
\end{equation}yielding \eqref{lem:monomial_trafos:aux2} in particular. And indeed: Since $\bar{\gamma}\subset\bigcup_{u\in\bar{\gamma}}B_{\delta_u}(u)$ for $\{\delta_u\}$ as in \eqref{lem:monomial_trafos:aux4}, the compactness of $\bar{\gamma}$ yields that, for an $m\in\N$,
\begin{equation}\label{lem:monomial_trafos:aux5}
\bar{\gamma}\ \subseteq \ \bigcup_{j\in[m]}B_{\delta_{u_j}}\!(u_j)\qquad\text{ for some } \  u_1, \ldots, u_m\in\bar{\gamma}.
\end{equation}    
But since the trace $\bar{\gamma}$ is connected, the $\{u_j\}$ from \eqref{lem:monomial_trafos:aux5} can be renumerated such that $B_{\delta_{u_i}}\!(u_i)\cap B_{\delta_{u_{i+1}}}\!(u_{i+1})\neq\emptyset$ for each $1\leq i < m$, which implies \eqref{lem:monomial_trafos:aux4.1} (and hence \eqref{lem:monomial_trafos:aux2}) by way of \eqref{lem:monomial_trafos:aux4}.    

Hence on $G$, the Jacobian \eqref{lem:monomial_trafos:aux1} of $\varrho$ is in fact of the form 
\begin{equation}\label{lem:monomial_trafos:aux6}
\restr{J_\varrho}{G} \ = \ \big(\beta_\mu\cdot\delta_{\nu,\sigma(\mu)}\big)_{\!\mu,\nu\in[d]}\quad\text{ for some }\ \beta_\mu\in C(G;\mathbb{R}_\times)\ \text{ and } \ \sigma\in S_d.
\end{equation}
The assertion that $\varrho\equiv(\varrho_i)\in\DP(G)$ now follows from \eqref{lem:monomial_trafos:aux6} and the mean value theorem (MVT): Given any $u_0=(u_0^1, \ldots, u_0^d)\in G$, the MVT implies\footnote{\ Indeed: For $u,u_0,v$ as above, define $\varphi(t)\coloneqq v\cdot\varrho(u_0 + t\cdot\eta)$ for $t\in I_\delta\equiv(-\delta, 1+\delta)$ and $\eta\coloneqq u-u_0$ and $\delta>0$ s.t.\ $\{u_0+t\cdot\eta\mid t\in I_\delta\}\subset G$ (such a $\delta$ exists as $G$ is open). Then $\varphi\in C^1(I_\delta)$, whence \eqref{lem:monomial_trafos:aux7} follows from the (classical) MVT applied to the difference $\varphi(1)-\varphi(0)$.} that for each $u=(u_1, \ldots, u_d)\in G$ which is connected to $u_0$ via the line segment $\overline{u_0,u}\equiv\{u_0 + t\cdot(u-u_0)\mid t\in[0,1]\}\subset G$ and any $v\in\R^d$, there exists a point $\xi\in\overline{u_0,u}$ such that
\begin{equation}\label{lem:monomial_trafos:aux7}
v\cdot(\varrho(u) - \varrho(u_0)) \ = \ v\cdot J_\varrho(\xi)\cdot(u-u_0).
\end{equation}Hence if for any fixed $i\in[d]$ we take $u$ with $u_{\sigma(i)} = u_0^{\sigma(i)}$ and choose $v=e_i$ (for $(e_i)_{i\in[d]}$ the standard basis of $\R^d$), then by way of \eqref{lem:monomial_trafos:aux7} and \eqref{lem:monomial_trafos:aux6} we find that  
\begin{equation}\label{lem:monomial_trafos:aux8}
\varrho_i(u) - \varrho_i(u_0) = \big[J_\varrho(\xi)\cdot(u-u_0)\big]_i = 0 \qquad (i\in[d]).
\end{equation}This implies that for any given $u_0\equiv(u^1_0, \ldots, u^d_0)\in G$ we have $\varrho_i(u) = \varrho_i(u^{\sigma(i)}_0)$ for all $u\in G_{u_0|i}\coloneqq\{u\in G\mid \text{$\exists\,$ polygonal path in $\pi_{\sigma(i)}^{-1}(\{u_0^{\sigma(i)}\})$ connecting $u$ and $u_0$}\}$. But since by assumption the slices $G_{\eta}^j\coloneqq G\cap\pi_{j}^{-1}(\{\eta\})$ are each (polygonally-)connected for any $\eta\in\R$ and $j\in[d]$, we have that $G_{u_0|i}=G_{u_0^{\sigma(i)}}^{\sigma(i)}$. As $u_0\in G$ was arbitrary, we thus find that    
\begin{equation}\label{lem:monomial_trafos:aux9}
\varrho_i(u) = \varrho_i(u_{\sigma(i)}) \quad\text{ for each } \ u\equiv(u_1,\ldots,u_d)\in G \qquad (i\in[d]), 
\end{equation}           
hence the diffeomorphism\footnote{\ Note that since by \eqref{lem:monomial_trafos:aux9} and \eqref{lem:monomial_trafos:aux6} each $\varrho_i$ is a continuously differentiable map from $\pi_{\sigma(i)}(G) (\subseteq\R)$ to $\R$ with nowhere-vanishing derivative, each $\varrho_i$ is a univariate local diffeomorphism and thus in fact a global diffeomorphism on $\pi_{\sigma(i)}(G)$ (cf.\ e.g.\ \citep[Ex.\ 1.3.3]{GUP}).} $\varrho\equiv(\varrho_1,\cdots,\varrho_d)$ is monomial on $G$ as claimed.

\ref{lem:monomial_trafos:it2}\,: This is a corollary to the above proof of \ref{lem:monomial_trafos:it1}. Indeed, let $\mathcal{O}\subseteq D$ be dense with
\begin{equation}
J_\varrho(u) \in \operatorname{M}_d \quad \text{for each } \ u\in\mathcal{O}.
\end{equation} 
Then for any fixed $z\in D$, the fact that $\mathcal{O}$ is dense in $D$ ensures that there will be a sequence $(u^{(k)})_{k\in\N}\subset \mathcal{O}$ with $\lim_{k\rightarrow\infty}u^{(k)}=z$, implying that 
\begin{equation}\label{lem:monomial_trafos:aux12}
J_\varrho(z) \ = \ \lim_{k\rightarrow\infty} J_\varrho(u^{(k)}) 
\end{equation}due to $\varrho$ being continuously differentiable on $G$. Hence and because $J_\varrho(z)\in\operatorname{GL}_d$, we obtain that in fact $J_\varrho(z)\in\operatorname{M}_d$ by \eqref{lem:monomial_trafos:aux12} and the fact that the subset $\operatorname{M}_d$ is closed in $\operatorname{GL}_d$. The claim now follows from (the proof of) \ref{lem:monomial_trafos:it1} [for $G\coloneqq D$] upon noting that, due to its convexity, the set $D$ is polygonally-connected and satisfies the slice requirements of statement \ref{lem:monomial_trafos:it1}.      
\end{proof} 
\subsection{Proof of Lemma \ref{lem:LogReg}}\label{pf:lem:LogReg}
Recall that $Y, Y^\ast$ are defined by \eqref{thm:NICA_stat:aux1} and $\rho$ is given by \eqref{thm:NICA_stat:aux3}. 
\lemLogReg*
\begin{proof}
We note first that since the support $\supp(\nu)$ of a (Borel) probability measure $\nu:\mathcal{B}(E)\rightarrow[0,1]$ is defined as the smallest closed set $C\subseteq E$ having total mass $\nu(C)=1$, it is easy to see that $\supp(\tilde{f}_\ast\nu) = \overline{\tilde{f}(\supp(\nu))}$ for any $\tilde{f}$ continuous (cf.\ the proof of Lem.\ \ref{lem:spat_supp} \ref{lem:spat_supp:it1}). For the above case, this implies $\supp(\mu) = \overline{(f\times f)(\supp(\zeta))} = \overline{(f\times f)(\bar{D})}$, whence $\partial(\supp(\mu))$ is a Lebesgue-nullset (as is $\partial \bar{D}$, by assumption, and hence also the boundary of its $C^2$-image $(f\times f)(\bar{D})$; the latter boundary in turn contains $\partial\overline{(f\times f)(\bar{D})}$ (as the boundary of the closure of a set is always contained in the boundary of that set) and hence $\mu>0$ a.e.\ on $\supp(\mu)$. Since also $\supp(\mu^\ast) = \supp(\mathbb{P}_{X_s})\times\supp(\mathbb{P}_{X_t})$, we further obtain $\supp(\mu) = \supp(\mathbb{P}_{(X_s, X_t)}) \subseteq \supp(\mu^\ast)$ by \eqref{rem:TrafoProjProbDens:eq2}, which implies that the RHS of \eqref{thm:NICA_stat:aux4} is defined a.e.\ on $\supp(\mu)$ indeed. 

Note now that since by definition the function $\rho$ equals the conditional probability of the event $\{C=1\}$ given $\bar{Y}$, we have  
\begin{equation}\label{thm:NICA_stat:aux5}
\rho\cdot\frac{\mathrm{d}\mathbb{P}_{\bar{Y}}}{\mathrm{d}y} = \mathbb{P}(C=1\,|\,\bar{Y})\cdot\frac{\mathrm{d}\mathbb{P}_{\bar{Y}}}{\mathrm{d}y} = \frac{\mathrm{d}\mathbb{P}_{\bar{Y}}(\,\cdot\,|\,C=1)}{\mathrm{d}y}\cdot\mathbb{P}(C=1)
\end{equation}
almost everywhere, where the first factor on the RHS of \eqref{thm:NICA_stat:aux5} denotes (a regular version of) the conditional density of $\bar{Y}$ given $C=1$.\footnote{\ Indeed, abbreviating $\ell\coloneqq \rho\cdot\frac{\mathrm{d}\mathbb{P}_{\bar{Y}}}{\mathrm{d}y}$ and $r\coloneqq \frac{\mathrm{d}\mathbb{P}(\,\cdot\,|\,C=1)}{\mathrm{d}y}\cdot\mathbb{P}(C=1)$, we for any $A\in\mathcal{B}(\mathbb{R}^{2d})$ find
\begin{align*}
\int_{\mathbb{R}^{2d}}\!r\cdot\mathbbm{1}_A\,\mathrm{d}y \ &= \mathbb{P}(C=1)\!\!\int_A\!\mathbb{P}_{\bar{Y}}(\mathrm{d}y\,|\,C=1) = 
\mathbb{P}((C,\bar{Y})\in\{1\}\times A)\\
&= \mathbb{P}((\bar{Y}, C)\in A\times \{1\}) = \int_A\!\mathbb{P}(C=1\,|\,\bar{Y}=y)\,\mathbb{P}_{\bar{Y}}(\mathrm{d}y) = \int_{\mathbb{R}^{2d}}\!\ell\cdot\mathbbm{1}_A\,\mathrm{d}y 
\end{align*}which implies $r=\ell$ (a.e.) by the fundamental lemma of calculus of variations. (Note that the second and the fourth of the above equations hold by definition of conditional distributions.)} Next we observe that 
\begin{equation}\label{thm:NICA_stat:aux6}
\frac{\mathrm{d}\mathbb{P}_{\bar{Y}}(\,\cdot\,|\,C=1)}{\mathrm{d}y}\cdot\mathbb{P}(C=1) \ = \ \tfrac{1}{2}\mu \qquad\text{(a.e.)}.
\end{equation} 
Indeed, denote by $\eta$ the LHS of \eqref{thm:NICA_stat:aux6} and let $A\in\mathcal{B}(\mathbb{R}^{2d})$ be arbitrary. Then, since by construction $\mathbb{P}_{(C,\bar{Y})} = \mathbb{P}_C\otimes\mathbb{P}^{\bar{Y}}_C$ and $\mathbb{P}^{\bar{Y}}_{C=1} \equiv \mathbb{P}_{\bar{Y}}(\,\cdot\,|\,C=1) = \mathbb{P}_Y$ and $\mathbb{P}(C=1)=\tfrac{1}{2}$,    
\begin{align*}
\int_{\mathbb{R}^{2d}}\!\eta\cdot\mathbbm{1}_A\,\mathrm{d}y \ &= \  \mathbb{P}(\bar{Y}\in A\,|\,C=1)\cdot\mathbb{P}(C=1) = \mathbb{P}(\bar{Y}\in A, \ C=1)\\
&= \ \mathbb{P}_{(C, \bar{Y})}(\{1\}\times A) = \mathbb{P}(C=1)\cdot\mathbb{P}_Y(A) = \int_{\mathbb{R}^{2d}}\!\tfrac{1}{2}\mu\cdot\mathbbm{1}_A\,\mathrm{d}y  
\end{align*}
from which \eqref{thm:NICA_stat:aux6} follows by the fundamental lemma of calculus of variations. Combining \eqref{thm:NICA_stat:aux5} with the fact that $\frac{\mathrm{d}\mathbb{P}_{\bar{Y}}}{\mathrm{d}y} = \tfrac{1}{2}\mu + \tfrac{1}{2}\mu^\ast$ and \eqref{thm:NICA_stat:aux6} now yields the identity $(\mu+\mu^\ast)\cdot\rho = \mu$ (a.e.), from which equation \eqref{thm:NICA_stat:aux4} follows immediately.
\end{proof}

\subsection{A Separation Lemma}The following is a core lemma for the proof of Theorem \ref{thm:NICA_stat}. 

\begin{appendixlemma}\label{lem:PermDiagForThm}
Let $U\subseteq\mathbb{R}^{d}$ be open and $\varphi_i\in C^2(U^{\times 2};\mathbb{R}_{>0})$, $i\in[d]$, with $\varphi_i(x)\equiv\varphi_i(x_i, x_{i+d})$, be a family of regularly non-separable, positive functions of which all but at most one are a.e.\ non-Gaussian. Set $\xi_i\coloneqq\partial_{x_i}\partial_{x_{i+d}}\log\varphi_i$ for each $i\in[d]$.
Then for any continuous $B : U \rightarrow \operatorname{GL}_d(\mathbb{R})$ for which the composition $\Lambda : U^{\times 2}\rightarrow\mathbb{R}^{d\times d}$ given by 
\begin{equation}\label{lem:PermDiagForThm:eq1}
\Lambda(u,v) \ \coloneqq \ B(u)^\intercal\cdot\operatorname{diag}_{i\in[d]}\!\big[\xi_i(u_i, v_i)\big]\!\cdot\!B(v)
\end{equation}$($in the coordinates $(u,v)\equiv(u_1, \ldots, u_d, v_1, \ldots, v_d)\in U^{\times 2})$ has identically-vanishing off-diagonal elements, it holds that the function $B$ is monomial on $U$, i.e.\ that 
\begin{equation}\label{lem:PermDiagForThm:eq2}
B(u) \ \in \ \operatorname{M}_d\qquad\text{ for each }\ u\in U.
\end{equation}       
\end{appendixlemma} 
\begin{proof}
Set $\check{U}\coloneqq U\times U$, and for a given $(u,v)\in\check{U}$, denote $\hat{\Lambda}_{u,v}\coloneqq\operatorname{diag}_{i\in[d]}\!\big[\xi_i(u_i, v_i)\big]$ and $\Lambda_{u,v}\coloneqq\Lambda(u,v)$ and $B_u\coloneqq B(u)$, and assume (wlog, upon re-enumeration) that $\varphi_i$ is a.e.\ non-Gaussian for each $i\in[d-1]$. 

Note that by the fact that $B$ is $\operatorname{GL}_d$-valued and continuous, the identity \eqref{lem:PermDiagForThm:eq2} holds if 
\begin{equation}\label{lem:PermDiagForThm:aux0}
\exists\, \tilde{U}\subseteq U \ \text{dense}\quad\text{ s.t.\ }\quad B_u\,\in\mathrm{M}_{d} \quad \text{for all } u\in \tilde{U}
\end{equation}
(cf.\ the argument around \eqref{lem:monomial_trafos:aux12} for details). Our proof consists of constructing a set $\tilde{U}$ for which \eqref{lem:PermDiagForThm:aux0} holds. Let to this end $i\in[d]$ be fixed, and recall that $\varphi_i$ being regularly non-separable implies that there is a closed nullset\footnote{\ Notice that if $\varphi_i$ is regularly non-separable and a.e.\ non-Gaussian, there (by Definition \ref{def:PseudoGaussian}) will be a closed nullset $\tilde{\mathcal{N}}_i\subset\pi_{(i, i+d)}(\check{U})\subseteq\mathbb{R}^2$ (s.t.\ $\tilde{\mathcal{N}}_i\cap\{(x,x)\mid x\in\mathbb{R}\}$ has Hausdorff-measure zero on the diagonal $\Delta_\mathbb{R}\coloneqq\{(x,x)\mid x\in\mathbb{R}\}$) on whose complement $\varphi_i$ is strictly non-Gaussian and non-separable and s.t.\ $\restr{\varphi_i}{\Delta_{\mathbb{R}}\setminus\tilde{\mathcal{N}}_i}$ vanishes nowhere. Hence for the (relatively) closed nullsets $\mathcal{N}_i\coloneqq \pi_{(i, i+d)}^{-1}(\tilde{\mathcal{N}}_i)\cap \check{U}\subset\mathbb{R}^{2d}$, the (relatively) closed union $\mathcal{N}\coloneqq\bigcup_{i\in[d]}\mathcal{N}_i$ works as desired.} $\mathcal{N}\subset \check{U}$ s.t.\ for the open and dense\footnote{\ Recall that for (any) $\check{U}\subseteq\R^m$ open and $\mathcal{N}\subseteq\R^m$ a Lebesgue nullset, the complement $\check{U}\setminus\mathcal{N}$ is dense in $\check{U}$. Indeed: If for $\check{U}_\circ\coloneqq \check{U}\setminus\mathcal{N}$ we had $\mathrm{clos}(\check{U}_\circ)\subsetneq \check{U}$, then there would be $u\in \check{U}$ with $B_\delta(u)\subseteq \check{U}\setminus\mathrm{clos}(\check{U}_\circ) \subseteq \mathcal{N}$ for some $\delta>0$, contradicting that the Lebesgue measure of $\mathcal{N}$ is zero.} subset $\check{U}_\circ\coloneqq\check{U}\setminus\mathcal{N}$ of $\check{U}$, each restriction $\restr{\varphi_i}{\check{U}_\circ}$ is such that 
\begin{gather} 
\restr{\varphi_i}{\check{U}_\circ} \ \text{ is strictly non-Gaussian for } i\neq d, \quad\text{ and for each } i\in[d]:\label{lem:PermDiagForThm:aux0.1.1}\\
\restr{\varphi_i}{\check{U}_\circ} \ \text{ is strictly non-separable }\quad\text{ with }\quad \restr{\xi_i}{(\Delta_U\cap\check{U}_\circ)}\neq 0 \ \text{ everywhere}.\label{lem:PermDiagForThm:aux0.1} 
\end{gather}
Given \eqref{lem:PermDiagForThm:aux0.1}, Lemma \ref{lem:C2PseudoGaussian} (ii) (together with the elementary topological facts that $(a)\,:$ a subset which lies densely inside a dense subspace is itself dense again, and $(b)\,:$ the intersection of two open dense subsets is again an open dense subset) implies that the intersection
\begin{equation}\label{lem:PermDiagForThm:aux0.2}
\check{U}_\ast\coloneqq \bigcap_{i\in[d]}\big\{z\in\check{U}_\circ \ \big| \ \xi_i(z)\neq 0 \big\}\quad\text{ is an open dense subset of } \ \check{U}. 
\end{equation}Consequently, the coordinate-projections $U'\coloneqq\pi_{[d]}(\check{U}_\ast)$ and $V'\coloneqq\pi_{(d+1, \ldots, 2d)}(\check{U}_\ast)$ are open and dense subsets of $U\,(=\pi_{[d]}(\check{U}))$. We now claim that the sets      
\begin{equation}\label{lem:PermDiagForThm:aux0.3}
\begin{gathered}
\tilde{U}_i\coloneqq\Big\{u\in U'\ \big| \ \exists\,(\emptyset\neq)\,\mathcal{V}_u\subseteq V' \text{ open}\footnotemark \ : \ \left.q_u^i\right|_{\mathcal{V}_u} \text{ is non-constant} \Big\},\\
\text{defined by the function }\qquad q^i_u(v)\coloneqq\frac{\xi_i(u,u)\cdot\xi_i(v,v)}{\xi_i(u,v)^2},   
\end{gathered}
\end{equation}\footnotetext{\ Note that for $\tilde{U}_i$ to be well-defined, we in addition require each $\mathcal{V}_u$ to be s.t.\ $\{u\}\times\mathcal{V}_u\subset\check{U}_\ast$.}are dense in $U'$ --- and hence (cf.\ fact $(a)$) are also dense in $U$ --- for each $i\in[d-1]$. 

To see that this holds, we proceed via proof by contradiction and assume that $\tilde{U}_i$ is not dense in $U'$. In this case, there exists $(\bar{u},r)\in U'\times\mathbb{R}_{>0}$ with $B_r(\bar{u})\subset U'\setminus\tilde{U}_i$ (recall that $U'$ is open). Moreover: Since we have $\Delta_U\cap\check{U}_\circ\subset \check{U}_\ast$ (by \eqref{lem:PermDiagForThm:aux0.1}) and $\check{U}_\ast$ is open, we can even find a convex open neighbourhood $\mathcal{V}_{\bar{u}}\subseteq B_r(\bar{u})$ of $\bar{u}$ such that the whole square $\mathcal{Q}_{\bar{u}}\equiv\mathcal{V}_{\bar{u}}\times\mathcal{V}_{\bar{u}}$ is contained in $\check{U}_\ast$. Now by construction, we for each slice $\{u\}\times\mathcal{V}_{\bar{u}}\subset\mathcal{Q}_{\bar{u}}$, $u\in\mathcal{V}_{\bar{u}}$, must have that $\restr{q^i_u}{\mathcal{V}_{\bar{u}}}$ is a constant function, say $\restr{q^i_u}{\mathcal{V}_{\bar{u}}}\!\equiv c_u$ for some $c_u\in\mathbb{R}$, so that in particular (for $\varrho : \mathcal{V}_{\bar{u}}\ni u\mapsto c_u\in\mathbb{R}$)       
\begin{equation}\label{lem:PermDiagForThm:aux0.4}
\xi_i(u,u)\cdot\xi_i(v,v) = \varrho(u)\cdot\xi_i(u,v)^2,\quad\forall\, (u,v)\in\mathcal{Q}_{\bar{u}}.
\end{equation}          
But since the square $\mathcal{Q}_{\bar{u}}$ contains its diagonal, i.e.: $\mathcal{Q}_{\bar{u}}\supset\{(u,u)\mid u\in\mathcal{V}_{\bar{u}}\}\eqqcolon\Delta_{\mathcal{V}_{\bar{u}}}$, we can evaluate the relation \eqref{lem:PermDiagForThm:aux0.4} for the points in $\Delta_{\mathcal{V}_{\bar{u}}}$, yielding that $\varrho\equiv 1$. Consequently,
\begin{equation}
\restr{\xi_i}{\mathcal{Q}_{\bar{u}}}\!(u,v) = \epsilon\cdot\varsigma(u)\cdot\varsigma(v),\quad\forall\, (u,v)\in\mathcal{Q}_{\bar{u}},
\end{equation}for $\varsigma:\mathcal{V}_{\bar{u}}\ni x\mapsto \varsigma(x)\coloneqq\sqrt{|\xi_i(x,x)|}$ and with $\epsilon$ denoting the sign of $\restr{\xi_i}{\mathcal{Q}_{\bar{u}}}$. Notice that since $\xi_i$ is continuous and $\mathcal{Q}_{\bar{u}}$ is connected, $\epsilon$ will be constant (i.e.\ $\epsilon\equiv\pm 1$). But since $\mathcal{Q}_{\bar{u}}\subset\check{U}_\ast$ is symmetric, open and convex, Lemma \ref{lem:C2PseudoGaussian} (iii) then implies that  
\begin{equation*}
\restr{\varphi_i}{\mathcal{Q}_{\bar{u}}} \quad \text{ is pseudo-Gaussian}, \quad\text{ in contradiction to }\eqref{lem:PermDiagForThm:aux0.1.1}.
\end{equation*}             
This proves that each of the above sets $\tilde{U}_i$, $i\in[d-1]$, must be dense in $U$. 

But since each of the dense subsets $\tilde{U}_i$ is also open by the fact that the quotients in \eqref{lem:PermDiagForThm:aux0.3} are continuous in $(u,v)$, we (once more by the above fact (b)) find that their intersection
\begin{equation}\label{lem:PermDiagForThm:aux0.5}
\tilde{U}\coloneqq\bigcap_{i\in[d-1]}\tilde{U}_i \quad\text{ is a dense subset of } \ U.
\end{equation}
We claim that the above set $\tilde{U}$ satisfies \eqref{lem:PermDiagForThm:aux0}.  

To see this, note first that \eqref{lem:PermDiagForThm:eq1} yields\footnote{\ Cf.\ the proof of Lemma \ref{lem:jacobian_system} for details.} the system of matrix equations
\begin{equation}\label{lem:PermDiagForThm:aux1}
\begin{cases}
B_u^\intercal\cdot\hat{\Lambda}_{u,u}\cdot B_{u} &= \ \Lambda_{u,u}\\
B_u^\intercal\cdot\hat{\Lambda}_{u,v}\cdot B_{v} &= \ \Lambda_{u,v} \\
B_v^\intercal\cdot\hat{\Lambda}_{v,v}\cdot B_{v} &= \ \Lambda_{v,v}
\end{cases}\qquad\text{for each}\quad (u,v)\in U^{\times 2};
\end{equation}we prove \eqref{lem:PermDiagForThm:aux0} by defining a map $\eta:\tilde{U}\rightarrow V'$ with the property that \eqref{lem:PermDiagForThm:aux1} when evaluated at $(u,v)\coloneqq(u,\eta(u))$ yields $B_u\in\mathrm{M}_d$ by necessity. Let to this end $u\in\tilde{U}$ be arbitrary. Then by the definition of $\tilde{U}$ (recalling \eqref{lem:PermDiagForThm:aux0.3}), we for each $i\in[d]$ can find some open $\mathcal{V}_u^i\subset V'$ such that the intersection $\mathcal{V}_u\coloneqq\bigcap\nolimits_{i\in[d]}\mathcal{V}_u^i$ is non-empty and the continuous map 
\begin{equation}\label{lem:PermDiagForThm:aux2}
q_u \coloneqq q_u^1\times\cdots\times q_u^d \ : \ \mathcal{V}_u \ \rightarrow \ \mathbb{R}^d \qquad (q_u^i \text{ as in \eqref{lem:PermDiagForThm:aux0.3}}) 
\end{equation} 
is non-constant in its first $(d-1)$ components. Hence\footnote{\ Recall that by their definition in \eqref{lem:PermDiagForThm:aux0.3}, each component map $q^i_u$ in \eqref{lem:PermDiagForThm:aux2} is a function of the $i$-th component of its argument vector only, i.e.\ $q^i_u : v \mapsto q^i_u(v)=q^i_u(v_i)$ for each $v\equiv(v_i)\in\R^d$. (Remember that $\xi_i\equiv\xi(x_i, x_{i+d})$ by definition, cf.\ the hypothesis of Lemma \ref{lem:PermDiagForThm}.)} by the intermediate-value theorem, the set $\pi_{[d-1]}(q_u(\mathcal{V}_u))\subseteq\mathbb{R}^{d-1}$ has non-empty interior, which implies that for 
$\nabla^\times\coloneqq\{(v_i)\in\mathbb{R}^d\mid \exists\,i, j\in[d], i\neq j\,:\, |v_i|=|v_j|\}$ (the closed nullset of all vectors in $\mathbb{R}^d$ having two components differing at most up to a sign), the preimage
\begin{equation}\label{lem:PermDiagForThm:aux3}
\tilde{\mathcal{V}}_u\coloneqq q_u^{-1}(\mathbb{R}^d\setminus\nabla^{\times}) \ \subset \ \mathcal{V}_u
\end{equation} 
will be non-empty. This observation gives rise to maps of the form     
\begin{equation}
\eta \, : \, \tilde{U}\rightarrow V', \quad u \mapsto \eta(u)\in\tilde{\mathcal{V}}_u,
\end{equation}  
and as we will now see, any such map is of the desired type that we announced above. Indeed: Taking any $(u,v)\in\bigcup_{\tilde{u}\in \tilde{U}}\{\tilde{u}\}\times\tilde{\mathcal{V}}_{\tilde{u}}\subseteq\check{U}_\ast$, we obtain from \eqref{lem:PermDiagForThm:aux1} that\footnote{\ Note that the invertibility of $\Lambda_{u,v}$ is obtained from the choice of $(u,v)$.}       
\begin{equation}\label{lem:PermDiagForThm:aux4}
B_u^{-1}\cdot\bar{\Lambda}_{u,v}\cdot B_u \ = \ \tilde{\Lambda}\qquad\text{ for }\quad \bar{\Lambda}_{u,v}\equiv\operatorname{diag}_{i\in[d]}[\lambda^i_{u,v}]\coloneqq\hat{\Lambda}_{u,u}\cdot\hat{\Lambda}_{u,v}^{-2}\cdot\hat{\Lambda}_{v,v}
\end{equation}and the diagonal matrix $\tilde{\Lambda}\coloneqq\Lambda_{u,v}^{-1}\cdot\Lambda_{v,v}\cdot\Lambda_{u,v}^{-1}\cdot\Lambda_{u,u}$. Observing now (recalling \eqref{lem:PermDiagForThm:aux0.3}) that 
\begin{equation}
(\lambda_{u,v}^1, \cdots, \lambda_{u,v}^d) \, = \, q_u(v) \quad\text{ for the function } \ q_u \ \text{ defined in \eqref{lem:PermDiagForThm:aux2}},
\end{equation}we immediately obtain by the definition \eqref{lem:PermDiagForThm:aux3} of $\tilde{\mathcal{V}}_u$ that 
\begin{equation}\label{lem:PermDiagForThm:aux5}
\text{the eigenvalues } \quad \lambda^1_{u,v}, \ldots, \lambda^d_{u,v} \ \text{ of } \ \bar{\Lambda}_{u,v} \ \quad\text{ are pairwise distinct}. 
\end{equation}Hence by the elementary fact that diagonal matrices with pairwise distinct eigenvalues are stabilised by monomial matrices only, observation \eqref{lem:PermDiagForThm:aux5} by way of the similarity equation \eqref{lem:PermDiagForThm:aux4} finally implies $B_u\in\mathrm{M}_d$ --- and hence \eqref{lem:PermDiagForThm:aux0} --- as desired.    
\end{proof} 

\subsection{A Second Separation Lemma} The next lemma underlies the proof of Theorem \ref{cor:NICA_MainCor}.  

\begin{appendixlemma}\label{lem:jacobian_system}
Suppose that in addition to \eqref{intext:BSS_relation} there are $\p_0, \p_1, \p_2\in\Delta_2(\I)$ and $(u,v)\in\R^{2d}$ such that $S$ is $C^2$-regular at $(\p_0, (u,v))$, $(\p_1, (u,u))$ and $(\p_2, (v,v))$ with density $\zeta_{\p_0}$, $\zeta_{\p_1}$ and $\zeta_{\p_2}$, respectively. Then for any map $h$ which is $C^{2}$-invertible on some open superset of $D_X$ and such that $h(X)$ has independent components, we have that
\begin{equation}\label{lem:jacobian_system:eq1}
B_\varrho(u)^{-1}\cdot\bar{\Lambda}_{\p_0,\p_1,\p_2}(u,v)\cdot B_\varrho(u) \ = \ \tilde{\Lambda}_{\p_0,\p_1,\p_2}(u,v)
\end{equation}for $B_\varrho$ the inverse Jacobian of $\varrho\coloneqq h\circ f$ and the diagonal matrices
\begin{equation}\label{lem:jacobian_system:eq2}
\begin{aligned}
\bar{\Lambda}_{\p_0,\p_1,\p_2}(u,v) \,&\coloneqq\, \Lambda_{\xi_{\mathfrak{p}_1}}(u,u)\cdot\Lambda_{\xi_{\mathfrak{p}_0}}(u,v)^{-2}\cdot\Lambda_{\xi_{\mathfrak{p}_2}}(v,v)\qquad\qquad\text{ and }\\[-0.5em]
\tilde{\Lambda}_{\p_0,\p_1,\p_2}(u,v)\,&\coloneqq\,\Lambda_{q_{\mathfrak{p}_1}}(u,u)\cdot\Lambda_{q_{\mathfrak{p}_0}}(u,v)^{-2}\cdot\Lambda_{q_{\mathfrak{p}_2}}(v,v), 
\end{aligned}
\end{equation}where $\Lambda_{\xi_{\p_\nu}}\!(x)\coloneqq\operatorname{diag}[\xi^1_{\p_\nu}\!(x), \cdots, \xi^d_{\p_\nu}\!(x)]$ for $\xi_{\mathfrak{p}_\nu}^i\!(x)\coloneqq \partial_{x_i}\partial_{x_{i+d}}\log\zeta_{\mathfrak{p}_\nu}\!(x)$ and $\Lambda_{q_{\mathfrak{p}_\nu}}\!(x)\coloneqq\mathrm{diag}[q^1_{\p_\nu}\!(x), \cdots, q^d_{\p_\nu}\!(x)]$ given by the LHS of \eqref{thm:NICA_stat:aux14.0} (in dependence of $\mathfrak{p}_\nu$).                             
\end{appendixlemma}  
\begin{proof}
Copying the argumentation that led to \eqref{thm:NICA_stat:aux15}, we obtain the congruence relations 
\begin{equation}
\Lambda_{q_{\mathfrak{p}_\nu}}(\tilde{u},\tilde{v}) \ = \ B_\varrho^\intercal(\tilde{u})\cdot\Lambda_{\xi_{\mathfrak{p}_\nu}}\!(\tilde{u},\tilde{v})\cdot B_\varrho(\tilde{v}) \quad \text{ for each } \ (\tilde{u},\tilde{v})\in\{\zeta_{\p_\nu}>0\}
\end{equation}and $\nu=0,1,2$. Evaluating these at the points $(u,v), (u,u)$ and $(v,v)$, we arrive at the system 
\begin{align}
B_\varrho(u)^\intercal\cdot\Lambda_{\xi_{\mathfrak{p}_0}}(u,v)\cdot B_\varrho(v) \ &= \ \Lambda_{q_{\mathfrak{p}_0}}(u,v)\label{cor:NICA_MainCor:aux8.1}\\
B_\varrho(u)^\intercal\cdot\Lambda_{\xi_{\mathfrak{p}_1}}(u,u)\cdot B_\varrho(u) \ &= \ \Lambda_{q_{\mathfrak{p}_1}}(u,u)\label{cor:NICA_MainCor:aux8.2}\\
B_\varrho(v)^\intercal\cdot\Lambda_{\xi_{\mathfrak{p}_2}}(v,v)\cdot B_\varrho(v) \ &= \ \Lambda_{q_{\mathfrak{p}_2}}(v,v).\label{cor:NICA_MainCor:aux8.3}
\end{align}  
From \eqref{cor:NICA_MainCor:aux8.1} we then find that
\begin{equation} 
B_\varrho(v) = \Lambda^{-1}_{\xi_{\mathfrak{p}_0}}(u,v)\cdot\big(B_\varrho(u)^\intercal\big)^{-1}\!\cdot\Lambda_{q_{\mathfrak{p}_0}}(u,v), 
\end{equation}  
which, when plugged into \eqref{cor:NICA_MainCor:aux8.3}, yields
\begin{equation}\label{cor:NICA_MainCor:aux9}
\begin{aligned}
B_\varrho(u)^{-1}\cdot\Lambda_{\xi_{\mathfrak{p}_2}}(v,v)\cdot\Lambda^{-2}_{\xi_{\mathfrak{p}_0}}(u,v)\cdot\big(B_\varrho(u)^\intercal\big)^{-1} = \Lambda_{q_{\mathfrak{p}_2}}(v,v)\cdot\Lambda_{q_{\mathfrak{p}_0}}(u,v)^{-2}
\end{aligned}
\end{equation} 
and hence, upon left-multiplying both sides of \eqref{cor:NICA_MainCor:aux8.2} by the matrix product \eqref{cor:NICA_MainCor:aux9},     
\begin{equation}
B_\varrho(u)^{-1}\cdot\Lambda_{\xi_{\mathfrak{p}_2}}\!(v,v)\Lambda^{-2}_{\xi_{\mathfrak{p}_0}}\!(u,v)\Lambda_{\xi_{\mathfrak{p}_1}}\!(u,u)\cdot B_\varrho(u) = \Lambda_{q_{\mathfrak{p}_2}}\!(u,u)\Lambda_{q_{\mathfrak{p}_2}}\!(v,v)\Lambda_{q_{\mathfrak{p}_0}}\!(u,v)^{-2}.
\end{equation}
This last equation is identical to \eqref{lem:jacobian_system:eq1}, as desired. 
\end{proof}

\subsection{Proof of Theorem \ref{cor:NICA_MainCor} for $\beta$-Contrastive Sources}\label{cor:NICA_MainCor:beta}
\begin{proof}
Suppose that $S$ is $\beta$-contrastive. In this case, we consider the set 
\begin{equation}\label{cor:NICA_MainCor:aux3.1}
\mathcal{D}_0\coloneqq \left\{u\in\mathrm{int}(D_S) \ \middle| \ \exists\,(\delta,\mathfrak{p})\in\mathbb{R}_{>0}\times\Delta_2(\mathbb{I}) \ \text{ satisfying \  \eqref{cor:NICA_MainCor:aux4} and \eqref{cor:NICA_MainCor:aux5}}\right\}
\end{equation}
with properties \eqref{cor:NICA_MainCor:aux4}, \eqref{cor:NICA_MainCor:aux5} that for a given $(\delta,\p)\in\mathbb{R}_{>0}\times\Delta_2(\mathbb{I})$ are defined as
\begin{gather}\label{cor:NICA_MainCor:aux4}
B_\delta(u) \subset D_S \quad \text{with}\quad\restr{\Lambda_{\xi_{\mathfrak{p}}}\!}{B_\delta(u)\times B_\delta(u)}\subset\operatorname{GL}_d(\mathbb{R}), \qquad \text{and \ }\\ 
\begin{gathered}\label{cor:NICA_MainCor:aux5}
\text{there is } \ \p'\in\Delta_2(\mathbb{I}) \text{ \ s.t.\ \ for \ $\mathcal{U}\coloneqq B_\delta(u)$ and each $i\in[d]$, \ the diagonal restriction}\\[-0.5em]
\xi^{i\,|\,\mathcal{U}}_{\mathfrak{p}'}\ \text{ vanishes nowhere \ and \ is such that } \ \xi_{\mathfrak{p}'}^{i\,|\,\mathcal{U}} \notin \big\langle\xi_{\mathfrak{p}}^{i\,|\,\mathcal{U}}\big\rangle_\R.
\end{gathered} 
\end{gather}  
Let us show first that $\mathcal{D}_0$ is dense in the interior $D_S^\circ\coloneqq\mathrm{int}(D_S)$. Indeed: Since $D_S^\circ$ is open, assuming that $\mathcal{D}_0$ is not dense in $D_S^\circ$ implies that there exists $(u_\ast, r)\in D_S^\circ\times\R_{>0}$ with $B_r(u_\ast)\subseteq D_S^\circ\setminus\mathcal{D}_0$. Now since $S$ is $\beta$-contrastive, there will be some $(\tilde{u}_\ast,r_1)\in B_r(u_\ast)\times(0,r)$ with $B_{r_1}(\tilde{u}_\ast)\subseteq B_r(u_\ast)$ such that both \eqref{def:log-regular:eq1} and \eqref{def:log-regular:eq2} hold everywhere on $\tilde{U}\coloneqq B_{r_1}(\tilde{u}_\ast)$ for some $\p, \tilde{\p}'\in\Delta_2(\I)$; thus also $\eqref{cor:NICA_MainCor:aux5}$ holds for $\mathcal{U}=\tilde{U}$ and $(\p,\p')\coloneqq(\tilde{\p},\tilde{\p}')$. And since the functions $\xi_{\tilde{\p}}^{i}$ are continuous at $(\tilde{u}_\ast,\tilde{u}_\ast)$, there (due to \eqref{def:log-regular:eq1}) will further be some $r_2>0$ such that $\xi^i_{\tilde{\p}}$ vanishes nowhere on $B_{r_2}(\tilde{u}_\ast)^{\times 2}\subset D_S^{\times 2}$ for each $i\in[d]$; hence also \eqref{cor:NICA_MainCor:aux4} holds for $(\delta,\p)\coloneqq(r_2,\tilde{\p})$. But this yields that both  \eqref{cor:NICA_MainCor:aux4} and \eqref{cor:NICA_MainCor:aux5} hold for $u\coloneqq\tilde{u}_\ast$ and $\delta\coloneqq\min(r_1,r_2)$ and $(\p, \p')\coloneqq(\tilde{\p}, \tilde{\p}')$, which implies that $\tilde{u}_\ast\in D_S^\circ\setminus\mathcal{D}_0$ is an element of $\mathcal{D}_0$. 

As this is obviously a contradiction, the set \eqref{cor:NICA_MainCor:aux3.1} must be dense in $D_S^\circ$.                       

Now since the interior $D_S^\circ$ is dense in $D_S$ by assumption, the theorem's assertion follows if we can show that \eqref{cor:NICA_MainCor:aux3} holds for $\mathcal{D}\coloneqq D_S^\circ$. But since in turn $\mathcal{D}_0$ is dense in $D_S^\circ$, we obtain that \eqref{cor:NICA_MainCor:aux3} holds for $\mathcal{D}\coloneqq D_S^\circ$ if we can show that \eqref{cor:NICA_MainCor:aux3} holds for $\mathcal{D}\coloneqq \mathcal{D}_0$.\footnote{\ Cf.\ \eqref{lem:PermDiagForThm:aux0} and the argument around \eqref{lem:monomial_trafos:aux12} for details.}     

Let to this end $u\in\mathcal{D}_0$ be fixed with $\p,\p'\in\Delta_2(\I)$ and $\mathcal{U}\equiv B_\delta(u)\subset D_S$ as in \eqref{cor:NICA_MainCor:aux4} and \eqref{cor:NICA_MainCor:aux5}. Then by Lemma \ref{lem:jacobian_system} we have that
\begin{equation}\label{cor:NICA_MainCor:aux9.2}
B_\varrho(\tilde{u})^{-1}\cdot\bar{\Lambda}_\nu(\tilde{u},\tilde{v})\cdot B_\varrho(\tilde{u}) \ = \ \tilde{\Lambda}_\nu(\tilde{u},\tilde{v}) \qquad\text{ for each } \ (\tilde{u},\tilde{v})\in\mathcal{U}\times\mathcal{U}
\end{equation} 
with $\nu=1,2$ and diagonal matrices $\bar{\Lambda}_1, \bar{\Lambda}_2, \tilde{\Lambda}_1, \tilde{\Lambda}_2 \in\operatorname{GL}_d(\R)$ given by
\begin{equation} 
\bar{\Lambda}_1 \coloneqq \bar{\Lambda}_{\p,\p,\p}, \quad \tilde{\Lambda}_1 \coloneqq \tilde{\Lambda}_{\p,\p,\p} \qquad \text{ and }\qquad  
\bar{\Lambda}_2 \coloneqq \bar{\Lambda}_{\p,\p,\p'}, \quad \tilde{\Lambda}_2 \coloneqq \tilde{\Lambda}_{\p,\p,\p'} 
\end{equation}    
with matrices $\bar{\Lambda}_{\p_0,\p_1,\p_2}$ and $\bar{\Lambda}_{\p_0,\p_1,\p_2}$ as defined in \eqref{lem:jacobian_system:eq2}. Combining the cases $\nu=1$ and $\nu=2$ of \eqref{cor:NICA_MainCor:aux9.2}, we for any $C\in\mathbb{R}$ obtain that
\begin{equation}\label{cor:NICA_MainCor:aux9.3}
B_\varrho(\tilde{u})^{-1}\cdot\left[\bar{\Lambda}_1(\tilde{u},\tilde{v}) + C\cdot\bar{\Lambda}_2(\tilde{u},\tilde{v})\right]\cdot B_\varrho(\tilde{u}) \ = \ \tilde{\Lambda}_1(\tilde{u},\tilde{v}) + C\cdot\tilde{\Lambda}_2(\tilde{u},\tilde{v})
\end{equation} 
for each $(\tilde{u},\tilde{v})\in\mathcal{U}\times\mathcal{U}$. Hence (and since $u\in\mathcal{D}_0$ was chosen arbitrarily), the identity \eqref{cor:NICA_MainCor:aux9.3} implies \eqref{cor:NICA_MainCor:aux3} if there is a pair $(C,\tilde{v})\in\mathbb{R}\times\mathcal{U}$ for which the diagonal entries of $[\bar{\Lambda}_1(u,\tilde{v}) + C\cdot\bar{\Lambda}_2(u,\tilde{v})]\eqqcolon\mathrm{diag}[\lambda^1_{u,\tilde{v}}, \cdots, \lambda^d_{u,\tilde{v}}]$ are pairwise distinct. We now prove this, i.e.\ we show that
\begin{gather} 
\text{there is } \ C\in\mathbb{R} \ \text{ for which we can find some } \ \tilde{v}\in \mathcal{U} \ \text{ s.t.\ the diagonal entries }\notag\\[-0.5em]
\lambda^i_{u,\tilde{v}} = \frac{\xi^i_{\mathfrak{p}}(u,u)}{\xi^i_{\mathfrak{p}}(u,\tilde{v})^2}\cdot(\xi^i_{\mathfrak{p}}(\tilde{v},\tilde{v}) + C\cdot\xi^i_{\mathfrak{p}'}(\tilde{v},\tilde{v})), \ i\in[d], \ \text{ are pw.\ distinct.}\label{cor:NICA_MainCor:aux10}
\end{gather}  
Notice that, as detailed in the proof of Lemma \ref{lem:PermDiagForThm}, the fact that by construction each of the functions $q_i : \mathcal{U}\times\mathcal{U}\ni(\tilde{u},\tilde{v})\mapsto\lambda^i_{\tilde{u},\tilde{v}}$ $(i\in[d])$ are continuous implies that \eqref{cor:NICA_MainCor:aux10} holds if 
\begin{equation}\label{cor:NICA_MainCor:aux11}
\begin{gathered}
\exists\, C\in\mathbb{R} \ \text{ such that } \ \vartheta_i \,:\, \mathcal{U}\ni\tilde{v}\,\mapsto\, q_i(u,\tilde{v}) \ \text{ is non-constant for each } \ i\in[d]. 
\end{gathered}
\end{equation}  
To prove \eqref{cor:NICA_MainCor:aux11}, notice that since for each $i\in[d]$ we have the decomposition
\begin{equation}
\vartheta_i = \theta_i + C\cdot\theta_i'\qquad\text{with } \quad \theta_i(\tilde{v})\coloneqq\frac{\xi^i_{\mathfrak{p}}(u, u)\xi^i_{\mathfrak{p}}(\tilde{v},\tilde{v})}{\xi^i_{\mathfrak{p}}(u,\tilde{v})^2}  
\end{equation} 
and $\theta_i':\mathcal{U}\times\mathcal{U}\rightarrow\mathbb{R}$ defined likewise but with the right factor in the above enumerator replaced by $\xi^i_{\mathfrak{p}'}(\tilde{v},\tilde{v})$, we find that if, for each $i\in[d]$, the functions  
\begin{equation}\label{cor:NICA_MainCor:aux13}
\theta_i \ \text{ or } \ \theta_i' \ \text{ are non-constant}, \quad\text{ then }\quad \eqref{cor:NICA_MainCor:aux11} \ \text{ holds.}
\end{equation}  
Indeed: If \emph{either} $\theta_i$ \emph{or} $\theta_i'$ is non-constant in $\tilde{v}$, then clearly their linear combination $\vartheta_i$ will be non-constant in $\tilde{v}$ for each $C\neq 0$. If $\theta_i$ or $\theta_i'$ are \emph{both} non-constant in $\tilde{v}$, then there might be some $C_i\in\mathbb{R}$ such that $\theta_i + C_i\cdot\theta_i'$ is constant in $\tilde{v}$ (define $C_i\coloneqq 1$ otherwise); in this case, setting $C\coloneqq\max_{i\in[d]}C_i +  1$ implies that $\vartheta_i = \theta_i + C\cdot\theta_i' = (\theta_i + C_i\cdot\theta_i') + (C-C_i)\cdot\theta_i'$ is non-constant in $\tilde{v}$ for each $i\in[d]$, as desired. 

To see that the premise of \eqref{cor:NICA_MainCor:aux13} holds, assume otherwise that there is $i\in[d]$ for which the function $\theta_i : \mathcal{U}\rightarrow\mathbb{R}$ is constant in $\tilde{v}$, say 
\begin{equation}
\theta_i(\tilde{v}) \eqqcolon \varsigma_i \quad\text{ for all } \ \tilde{v}\in\mathcal{U}. 
\end{equation} 
Then, as $\theta_i$ vanishes nowhere in consequence of \eqref{cor:NICA_MainCor:aux4}, we find that
\begin{equation}\label{cor:NICA_MainCor:aux15}
\Big[\xi^i_{\mathfrak{p}}(u,\cdot)\Big]^2 = \frac{\xi^i_{\mathfrak{p}}(u, u)\cdot\xi^{i\,|\,\mathcal{U}}_{\mathfrak{p}}}{\theta_i} = c_i\cdot\eta \qquad\text{on } \ \ \mathcal{U}
\end{equation}    
for the constant $c_i\coloneqq\xi^i_{\mathfrak{p}}(u,u)\cdot\varsigma_i^{-1}$ and the function $\eta : \mathcal{U}\rightarrow \mathbb{R}$ given by $\eta(\tilde{v})\coloneqq\xi^i_{\mathfrak{p}}(\tilde{v}, \tilde{v})$. Now if the function $\theta_i'$ were constant as well, say $\theta_i'\equiv\varsigma_i'\,(\neq 0)$, then we would likewise obtain that $\big[\xi^i_{\mathfrak{p}}(u,\cdot)\big]^2 = c_i'\cdot\eta'$ on $\mathcal{U}$, for the non-zero constant $c_i'\coloneqq\xi^i_{\mathfrak{p}}(u,u)\cdot(\varsigma_i')^{-1}$ and the function $\eta' : \mathcal{U}\rightarrow \mathbb{R}$ given by $\eta'(\tilde{v})\coloneqq\xi^i_{\mathfrak{p}'}(\tilde{v}, \tilde{v})$. Combined with \eqref{cor:NICA_MainCor:aux15}, we find that 
\begin{equation}   
c_i'\cdot\eta' = c_i\cdot\eta \quad \text{ and hence }\quad \xi^{i\,|\,\mathcal{U}}_{\mathfrak{p}'} = \mathrm{const.}\cdot\xi^{i\,|\,\mathcal{U}}_{\mathfrak{p}},
\end{equation} 
the latter contradicting \eqref{cor:NICA_MainCor:aux5}. This proves the premise of \eqref{cor:NICA_MainCor:aux13} and hence \eqref{cor:NICA_MainCor:aux3} for $\mathcal{D}=\mathrm{int}(D_S)$. 
\end{proof}

\subsection{Non-Convex Spatial Supports}\label{sect:nonconvexsupport}
In certain contexts of interest it may happen that the source process $S$ in \eqref{intext:BSS_relation} does not conform to Assumption \ref{sect:assumption_convexity}, i.e.\ that not every connected component of $D_S$ is convex. (As to the existence of IC processes with non-convex spatial support, think for instance of a $d$-dimensional Brownian motion (ran up to its exit time) that starts within a non-convex domain and gets killed when it hits the boundary of that domain.)\\[-0.5em]

\noindent
For such general geometries of $D_S$, we have the (not necessarily disjoint) decomposition 
\begin{equation}\label{sect:nonconvexsupport:eq1}
D_S \ = \bigcup_{\substack{\emptyset\,\neq\, C_S \,\subseteq\, D_S,\\ C_S \ \text{maximally convex}}} \hspace{-1em} C_S
\end{equation}    
of the set $D_S$ into its \emph{convex components} $C_S$ which, in analogy to the connected components of $D_S$, are defined as the maximally [wrt.\ set inclusion $\subseteq$] convex subsets of $D_S$.\\[-0.5em]

Denote by $\mathcal{C}(A)$ the set of all convex components of a set $A\subseteq\R^d$. Assumption \ref{sect:assumption_convexity} then requires that: $|\mathcal{C}(D)|=1$ for each connected component $D$ of $D_S$. Assume now that the source $S$ in \eqref{intext:BSS_relation} is a process with general support \eqref{sect:nonconvexsupport:eq1}, possibly violating Assumption \ref{sect:assumption_convexity}. A quick inspection of the proofs of Theorems \ref{thm:NICA_stat}, \ref{cor:NICA_MainCor} then shows that if $S$ is $\{\alpha, \beta, \gamma\}$-contrastive then \eqref{thm:NICA_stat:aux0} holds for each $D\in\mathcal{C}(D_S)$; in fact, from the proof of \eqref{lem:monomial_trafos:aux2} we see that $\left.J_{h\circ f}\right|_G \subseteq \{\Lambda\cdot P \mid \Lambda\in\Delta_d\}$ (inclusion holding pointwise on $G$) for any connected $G\subseteq D_S$, where $P\equiv P_G\in\mathrm{P}_d$ is some fixed $G$-dependent permutation matrix. This in combination with Lemma \ref{lem:monomial_trafos} proves the following generalisation of the characterisations \eqref{thm:NICA_stat:eq3} and \eqref{cor:NICA_MainCor:eq1}: 

\begin{appendixthm}\label{sect:nonconvexsupport:thm}
Let the process $S$ in \eqref{intext:BSS_relation} be $\alpha$-, $\beta$- or $\gamma$-contrastive with a spatial support $D_S$ for which Assumption \ref{sect:assumption_convexity} may not be satisfied. Then for any transformation $h$ which is $C^{2}$-invertible on some open superset of $D_X$, we have that $h(X)$ is IC only if: 
\begin{equation}\label{sect:nonconvexsupport:thm:eq1}
\forall\, D\subseteq D_S \ \text{ connected} \, : \, \left.(h\circ f)\right|_C \in \mathrm{DP}_d(C) \ \ \text{ for each } C\in\mathcal{C}(D).
\end{equation} 
\end{appendixthm}     

The stronger conclusions \eqref{thm:NICA_stat:eq3} and \eqref{cor:NICA_MainCor:eq1} may at times be fully rehabilitated even for non-convex geometries [of the connected components] of \eqref{sect:nonconvexsupport:eq1}, as the following example shows.    

\begin{example}[Monomial Inversion for Sources with Non-Convex Spatial Support]\label{sect:nonconvexsupport:example}Let $S=(S^1, S^2)$ be a process in $\R^2$ with a C-shaped\footnote{\ We thank one of our referees for suggesting this as a simple yet non-trivial example geometry for \eqref{sect:nonconvexsupport:thm:eq1}.} spatial support as shown in Figure \ref{fig:nonconvex-example}. If further $S$ is an $\alpha$-, $\beta$- or $\gamma$-contrastive source within the ICA-context \eqref{intext:BSS_relation}, then (although $D_S$ does not satisfy Assumption \ref{sect:assumption_convexity}) the classical conclusion \eqref{thm:NICA_stat:eq3} holds, i.e.\ with probability one:
\begin{equation}\label{sect:nonconvexsupport:example:eq1}
h(X) \ \in \ \mathrm{DP}_{\!d}\cdot S \quad\text{if and only if}\quad h(X) \ \text{ has independent components},
\end{equation}
for any transformation $h$ which is $C^{2}$-invertible on some open superset of $D_X$. 

\begin{figure}[htpb] 
\centering
\hspace*{-1.25em}
\includegraphics[trim={0em 0 0em 0},clip,scale=0.4]{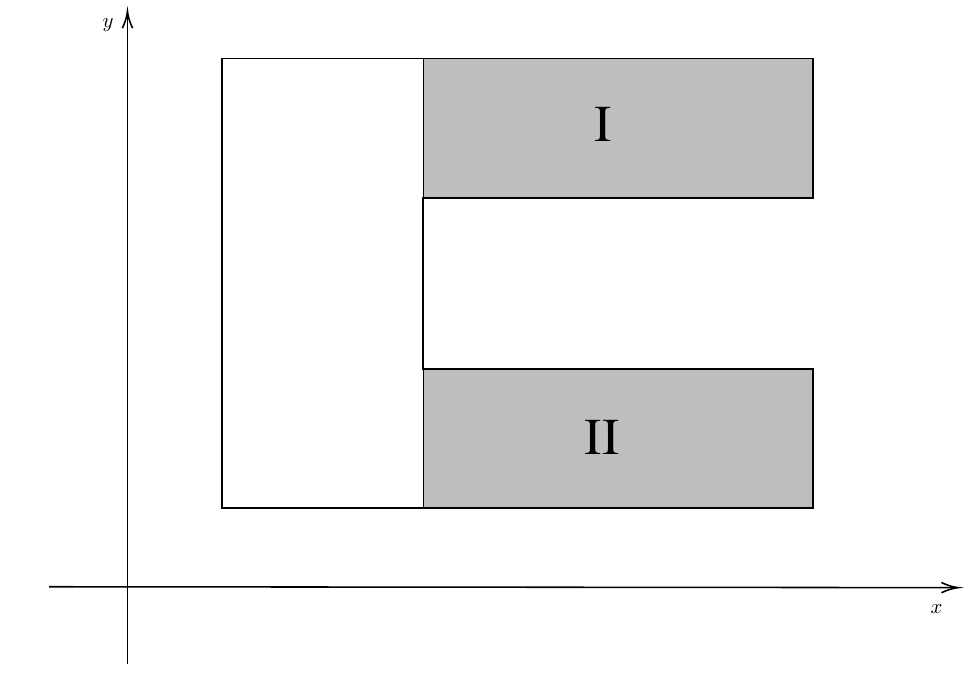}
\caption{\emph{A connected but non-convex spatial support $D_S$ (with arms $\mathrm{I}, \mathrm{II}\subset D_S$).}}\label{fig:nonconvex-example} 
\end{figure}

\emph{Proof of \eqref{sect:nonconvexsupport:example:eq1}.} Let $h : D_X \rightarrow \R^2$ be any $C^{2}$-diffeomorphism for which $\tilde{S}\coloneqq h(X)$ has independent components (as always, such an $h$ exists by the ICA-assumption \eqref{intext:BSS_relation}). By design, $D_S$ has convex subsets $C_1\coloneqq\mathrm{I}, C_2\coloneqq\mathrm{II}$ (with $C_3\coloneqq D_S\setminus(\mathrm{I}\cup\mathrm{II})$) of the form $C_1 = I_0\times I_1$ and $C_2 = I_0\times I_2$ and $C_3 = J_1\times J_2$ for some intervals $I_0, I_1, I_2, J_1, J_2\subset\R$, with $I_1$ and $I_2$ disjoint. Denoting $\varrho\coloneqq h\circ f$, the characterisation \eqref{sect:nonconvexsupport:thm:eq1} implies that $\varrho$ acts monomially on each $C_i$, that is\footnote{\ We may assume here for convenience that the [$D_S$-global, as the above $D_S$ is connected] $\varrho$-supporting permutation $P$ (cf.\ Definition \ref{def:monomial_trafos}) is simply the identity.}: 
\begin{equation}\label{sect:nonconvexsupport:example:aux1}
\varrho(x,y) = (\alpha_i(x), \beta_i(y)), \ \ \forall\, (x,y)\in C_i \qquad (i=1,2,3)  
\end{equation}   
for some $\alpha_i : I_0\rightarrow\R$, $\beta_i : I_i\rightarrow\R$ $(i=1,2)$ and $\alpha_3:J_1\rightarrow\R$, $\beta\equiv\beta_3 : J_2\rightarrow\R$, each strictly monotone; note that in fact $\beta_i = \restr{\beta}{I_i}$ for $i=1,2$. Our aim is to conclude \eqref{sect:nonconvexsupport:example:eq1} from a combination of \eqref{sect:nonconvexsupport:example:aux1} and the fact that $\tilde{S}=\varrho(S)\equiv(\tilde{S}^1_t, \tilde{S}^2_t)_{t\in\I}$ is IC. Clearly, for \eqref{sect:nonconvexsupport:example:eq1} to hold it is sufficient to show that $\alpha_1=\alpha_2$. To this end, suppose that $\alpha_1\neq \alpha_2$. Then by continuity there is some non-empty open $U\subset I_0$ with $\alpha_1(U)\cap\alpha_2(U)=\emptyset$.\footnote{\ Since by continuity of $\varrho$ we have $\alpha_1(\inf I_0) = \alpha_2(\inf I_0) = \alpha_3(\sup J_1)$, the set $U$ can be chosen such that $\alpha_1(u_0) = \alpha_2(u_0)$ for $u_0\coloneqq\inf U$.} By \eqref{sect:nonconvexsupport:example:aux1} and since for any fixed $t\in\I$ the vector $S_t\equiv(S^1_t, S^2_t)$ is IC, we have for each subset $\tilde{U}\subseteq U$ that              
\begin{equation}
a\coloneqq\mathbb{P}(\tilde{S}^1_t\in\alpha_2(\tilde{U})\mid \tilde{S}^2_t\in \beta(I_1)) = \mathbb{P}(S^1_t\in\alpha_1^{-1}(\alpha_2(\tilde{U}))\mid S^2_t\in I_1) = \mathbb{P}(S^1_t\in\alpha_1^{-1}(\alpha_2(\tilde{U}))
\end{equation}
and likewise $b\coloneqq \mathbb{P}(\tilde{S}^1_t\in\alpha_2(\tilde{U})\mid \tilde{S}^2_t\in \beta(I_2)) = \mathbb{P}(S^1_t\in\tilde{U}\mid S^2_t\in I_2) = \mathbb{P}(S^1_t\in\tilde{U})$.  Now since $\tilde{S}^1_t$ and $\tilde{S}^2_t$ are independent, we must have $a=b$ and thus (and for $\gamma\coloneqq\alpha_2^{-1}\circ\alpha_1$) find
\begin{equation}\label{sect:nonconvexsupport:example:aux2}
\mathbb{P}(S^1_t\in\tilde{U}) = \mathbb{P}(\gamma(S^1_t)\in\tilde{U}) \quad\text{for each } \tilde{U}\subseteq U.
\end{equation} 
Since the process $S$ is ($\alpha$-, $\beta$- or $\gamma$-)contrastive, we can choose $t\in\I$ and the above set $U$ such that $S^1_t$ admits a $C^1$-density $\zeta_t$ that is positive on $U$. Consequently (cf.\ Remark \ref{rem:TrafoProjProbDens}), the transformed variable   
$\gamma(S^1_t)$ admits a $C^1$-density too, say $\zeta_t^\gamma$, and \eqref{sect:nonconvexsupport:example:aux2} [via \eqref{rem:TrafoProjProbDens:eq0A}] implies that 
\begin{equation}\label{sect:nonconvexsupport:example:aux3}
\zeta_t = (\zeta_t\circ\eta)\cdot\tfrac{\mathrm{d}\eta}{\mathrm{d}x} \quad\text{on \ $U$}  
\end{equation}
for $\eta\coloneqq\gamma^{-1}$. The identity \eqref{sect:nonconvexsupport:example:aux3} is an ordinary differential equation on $U$ of the form $\dot{\eta}(x) = F(x, \eta(x))$ with initial value $\eta(u_0)=u_0$ (by choice of $U$) and right-hand side $F\equiv F(x,u)\coloneqq\zeta_t(x)/\zeta_t(u)$. Since $F$ is locally Lipschitz on $U^{\times 2}$, the solution to the initial value problem $\{\eqref{sect:nonconvexsupport:example:aux3}, \, \eta(u_0)=u_0$ is unique, which implies that $\eta = \mathrm{id}_U$. The latter, however, implies that $\alpha_1(U)=\alpha_2(U)$, in contradiction to the choice of $U$. This overrules the assumption $\alpha_1\neq\alpha_2$ and hence proves \eqref{sect:nonconvexsupport:example:eq1}, as desired. \hfill $\square$    
\end{example}

The strategy behind Example \ref{sect:nonconvexsupport:example} may be generalised to restore the stronger conclusion \eqref{thm:NICA_stat:eq3} (instead of only \eqref{sect:nonconvexsupport:thm:eq1}) for some sources for which Assumption \ref{sect:assumption_convexity} is not satisfied.

\subsection{An Error in the Proof of \citep[Theorem 1]{HYM}}\label{rem:hyvarinenmorioka2} 
The (fixable) error occurs in the proof of \citep[Lemma 2 (Supplement)]{HYM} (we use their notation for the following): Lemma 2 in \cite{HYM} requires that for each $\bar{\bm{\mathrm{u}}}^1\in\R^d$ there exists an $\bar{\bm{\mathrm{u}}}^2\in\R^d$ such that the diagonal entries $\psi_i(\bar{\mathrm{u}}^1_i,\bar{\mathrm{u}}^2_i)\equiv\psi_i(a,b)$ $(i=1,\ldots,d$; cf.\ \citep[Eq.\ (42) (Supplement)]{HYM}) of the matrix $\bm{\mathrm{D}}_{11}\bm{\mathrm{D}}_{12}^{-2}\bm{\mathrm{D}}_{22}(\bar{\bm{\mathrm{u}}}^1,\bar{\bm{\mathrm{u}}}^2)$  are pairwise distinct. For this, \citep{HYM} rely on (the continuity of the $\psi_i$ and) an indirect proof -- via contradiction to the exclusion of \citep[Def.\ 2 eq.\ (4)]{HYM} -- of the assertion that $(A):$ ``The function $\psi_i(a,\cdot)$ is non-constant for [almost] every given $a\in\R$.'' Yet instead of leading the negation of $(A)$ -- i.e., $(\neg A)$: ``The function $\psi_i(a,\cdot)$ is constant for some $a\in\R$'' -- to a contradiction, their proof by contradiction departs from the \emph{stronger} assumption that $(B):$ ``The function $\psi_i(a,\cdot)$ is constant \emph{for each} $a\in\R$'' (which implies, but is not generally implied by, $(\neg A)$). The reductio ad absurdum of $(B)$ provided in \citep{HYM} is therefore insufficient to prove $(A)$. The (negation \citep[Thm.\ 1 Assmpt.\ 3.]{HYM} of the) given factorisation property \citep[Def.\ 2 eq.\ (4)]{HYM}, on the other hand, is \emph{too weak} to contradict $(\neg A)$ as would be required for the (indirect) proof of $(A)$. To see this, it suffices to note the existence of functions $q : \R^2\rightarrow\R_\times$ which are not of the (excluded) global product form \citep[Def.\ 2 eq.\ (4)]{HYM} but factorize locally on a given `strip' $R\coloneqq I\times\R$, for some $I\subset\R$. Indeed: Given such a function\footnote{\ And assuming $q\equiv q_{x,y}$ to be of the form \citep[Def.\ 1 eq.\ (3)]{HYM} for a probability density $p_{x,y}$ on $\R^2$, which can be established by  normalisation and straightforward decay modifications.} $q$, the assumption of $(\neg A)$ (in the indirect proof of $(A)$), with $\psi_i$ and $q$ related by \citep[eq.\ (42)]{HYM} and $\psi_i(a,\cdot)$ constant iff $a\in I$ (with $I\neq\R$ in general), leads to the \emph{$R$-local} identity  
\begin{equation}\label{rem:hyvarinenmorioka2:eq1}
q(a,b) = c\alpha(a)\alpha(b) \quad \text{for } \ (a,b)\in I\times\R\equiv R \qquad (\text{cf.\ \ }\text{\citep[derivation of (45)]{HYM}}).
\end{equation} 
Now since $(\neg A)$ does not imply $R\neq \mathbb{R}^2$ in general, the identity \eqref{rem:hyvarinenmorioka2:eq1} does generally \emph{not} contradict the (global) non-factorizability assumption \citep[Thm.\ 1 Assumption 3.]{HYM}, leaving $(A)$ -- thus \citep[Lemma 2]{HYM} and hence \citep[Theorem 1]{HYM} -- unproved. Clearly however, as mentioned in Remark \ref{rem:hyvarinenmorioka}, the desired contradiction can be re-obtained by allowing for only such $q$ for which the factorisation property \eqref{rem:hyvarinenmorioka2:eq1} does not hold on any open subset $R$ in $\R^2$.\\[-0.5em]            

Suitable functions $q$ as required above can be easily constructed via cut-off functions. Indeed: Assuming for simplicity that $I$ is compact, take any (continuous) $\alpha:\R\rightarrow\R_\times$ and let $\chi : \R^2\rightarrow(0,1]$ be a smooth function with $\restr{\chi}{\tilde{R}}\equiv 1$ for $\tilde{R}$ the (closure of) some bounded non-rectangular open superset of $R$. Then $q\equiv q(x,y)\coloneqq \alpha(x)\alpha(y)\cdot\chi(x,y)$ satisfies \eqref{rem:hyvarinenmorioka2:eq1} but not \citep[Def.\ 2 eq.\ (4)]{HYM}.     
       
\subsection{Proof of Proposition \ref{prop:TSCopulaProp1}}\label{pf:prop:TSCopulaProp1}
\begin{proof}[Proof of Proposition \ref{prop:TSCopulaProp1}]
We verify that $S$ satisfies the conditions of Definition \ref{def:psG_nonsep_stochproc}. Take any $i\in[d]$ and $(s,t)\in\mathcal{P}$. Since by assumption the density $\zeta^i_{s,t}$ of $(S^i_s, S^i_t)$ exists and satisfies \eqref{TimeSeriesCopulaModel:eq1}, we find 
\begin{equation}\label{prop:TSCopulaProp1:aux2}
\xi^i_{s,t}\coloneqq\partial_x\partial_y\log\zeta^i_{s,t} = \zeta^i_s\cdot\zeta^i_t\cdot\big[(\partial_x\partial_y\log c_i)\circ\phi^i_{s,t}\big] \quad \text{on } \ \{\zeta^i_{s,t}>0\}\supseteq \tilde{D}_{s,t}^{\times 2}
\end{equation} 
for $\tilde{D}_{s,t}\coloneqq\dot{D}_s\cap\dot{D}_t$ and the map $\phi^i_{s,t}\coloneqq\mathfrak{s}^i_s\times\mathfrak{s}^i_t : \tilde{D}_{s,t}^{\times 2}\rightarrow[0,1]^{\times 2}$ with $\mathfrak{s}^i_r\coloneqq F_r^{S^i}$.\footnote{\ Recall that, by convention, we write $\zeta^i_{s,t}(x)\equiv\zeta^i_{s,t}(x_i, x_{i+d})$ for $x=(x_1, \cdots, x_{2d})\in\mathbb{R}^{2d}$.} Notice that $\phi^i_{s,t}$ is a differentiable injection since the function $\mathfrak{s}^i_r=\mathfrak{s}^i_r(x)\stackrel{\mathrm{def}}{=}\int_{-\infty}^{x_i}\!\zeta_r^i(u)\,\mathrm{d}u$ ($r\in\mathbb{I}$) has positive derivative on $\dot{D}_r$. Hence and since $\restr{(\zeta^i_s\cdot\zeta^i_t)}{\tilde{D}_{s,t}^{\times 2}} > 0$ by construction, the $\alpha$-contrastivity of $S$ follows by way of \eqref{TimeSeriesCopulaModel:eq1} and \eqref{prop:TSCopulaProp1:aux2} and assumption \eqref{prop:TSCopulaProp1:aux3}. Indeed: Setting $D_{(s,t)}\coloneqq\tilde{D}_{s,t}$ for $(s,t)\in\mathcal{P}$, we see that Definition \ref{def:psG_nonsep_stochproc} \ref{def:psG_item1} holds by the assumption on $\mathcal{P}$, while Definition \ref{def:psG_nonsep_stochproc} \ref{def:psG_item2} is immediate by \eqref{TimeSeriesCopulaModel:eq1}, \eqref{prop:TSCopulaProp1:aux2} and \eqref{prop:TSCopulaProp1:aux3} and the above-noted fact that $\phi^i_{s,t}:\tilde{D}_{s,t}^{\times 2}\rightarrow\phi^i_{s,t}(\tilde{D}_{s,t}^{\times 2})$ is a diffeomorphism (indeed: note that, by virtue of Lemma \ref{lem:C2PseudoGaussian} \ref{lem:C2PseudoGaussian:it2}, the regular non-separability of the functions $\left.{\zeta^i_{s,t}}\right|_{D_{(s,t)}^{\times 2}}$ is due to \eqref{prop:TSCopulaProp1:aux2} vanishing nowhere, while in light of \eqref{TimeSeriesCopulaModel:eq1} their a.e.-non-Gaussianity follows directly from the definition \eqref{def:PseudoGaussian:eq1} and assumption \eqref{prop:TSCopulaProp1:aux3} by which the $c^i_{s,t}$ are strictly non-Gaussian).            
\end{proof} 

\subsection{Proof of Lemma \ref{prop:GPsAreVaried}}\label{pf:prop:GPsAreVaried}
\begin{proof}[Proof of Lemma \ref{prop:GPsAreVaried}] 
Let $i\in[d]$ be fixed, and $(s,t)\in\Delta_2(\I)$ be arbitrary. Since $S^i\sim\mathcal{GP}(\mu_i, \kappa^{ii})$, we find that\footnote{\ We write $Z\sim\mathcal{N}(\mu,\Sigma)$ to say that $Z$ is normally distributed with mean $\mu\in\R^d$ and covariance $\Sigma\in\R^{d\times d}$.} $(S^i_s, S^i_t)\sim\mathcal{N}(\mu^i_{s,t}, \Sigma^i_{s,t})$ with $\mu^i_{s,t}\coloneqq(\mu_i(s), \mu_i(t))$ and $\Sigma^i_{s,t}\coloneqq(\kappa^{ii}(t_\nu, t_{\tilde{\nu}}))_{\nu,\tilde{\nu}=1,2}$ for $t_1\coloneqq s$ and $t_2\coloneqq t$, so the density $\zeta^i_{s,t}$ of $(S^i_s, S^i_t)$ exists, is smooth on $\R^2$ and reads
\begin{equation}
\begin{gathered}
\zeta^i_{s,t}(x,y) = c_i(s,t)\cdot\exp\!\big(\varphi_i(s,t,x,y)\big) \qquad\text{ with }\\
\varphi_i(s,t,x,y) = \frac{\rho_{s,t}}{(1-\rho_{s,t}^2)\sigma_{s}\sigma_{t}}\cdot xy \ + \ \eta_i(s,t,x) \ + \ \tilde{\eta}_i(s,t,y) 
\end{gathered}
\end{equation}
for $c_i(s,t) = (4\pi^2\sigma_s^2 \sigma_t^2(1-\rho_{s,t}^2))^{-1/2}$ and $-\tfrac{1}{2}((x,y)-\mu^i_{s,t})^\intercal\cdot[\Sigma^i_{s,t}]^{-1}\cdot((x,y)-\mu^i_{s,t}) - \frac{\rho_{s,t}}{(1-\rho_{s,t}^2)\sigma_{s}\sigma_{t}}\cdot xy\eqqcolon\eta_i(s,t,x) + \tilde{\eta}_i(s,t,y)$ and with the (auto-)correlations
\begin{equation}
\sigma_r\coloneqq\sqrt{\kappa^{ii}(r,r)} \qquad\text{ and }\qquad \rho_{s,t}\coloneqq \frac{\kappa^{ii}(s,t)}{\sqrt{\kappa^{ii}(s,s)\kappa^{ii}(t,t)}} = \frac{\kappa^i_{s,t}}{\sqrt{k^i_{s,t}}}.  
\end{equation}
Consequently, the mixed log-derivatives $\xi^i_{s,t}\coloneqq\partial_x\partial_y\log(\zeta^i_{s,t})$ are given as
\begin{equation}\label{prop:GPsAreVaried:eq3}
\xi^i_{s,t}(x,y) = \partial_x\partial_y\varphi_i(s,t,x,y) = \frac{\kappa^i_{s,t}}{k^i_{s,t} - (\kappa^i_{s,t})^2}. 
\end{equation} 
Since by Definition \ref{def:log-regular} the process $S$ is $\gamma$-contrastive iff there are $\p_0, \p_1, \p_2\in\Delta_2(\mathbb{I})$ with $\left(\psi(\xi^i_{\p_0}, \xi^i_{\p_1}, \xi^i_{\p_2})\right)_{i\in[d]}= \left(\frac{\xi^i_{\p_1}\xi^i_{\p_2}}{(\xi^i_{\p_0})^2}\right)_{i\in[d]}\in(\mathbb{R}^d\setminus\nabla^{\times})$, the lemma now follows from \eqref{prop:GPsAreVaried:eq3}.        
\end{proof} 

\subsection{Proof of Proposition \ref{cor1:GPsAreVaried}}\label{pf:cor1:GPsAreVaried}
\begin{proof}[Proof of Proposition \ref{cor1:GPsAreVaried}] 
We apply Lemma \ref{prop:GPsAreVaried} by showing that for each case there are $\p_0, \p_1, \p_2\in\Delta_2(\mathbb{I})$ for which \eqref{prop:GPsAreVaried:eq1} holds. Write $\Xi_i$ for the $i^{\mathrm{th}}$ component of \eqref{prop:GPsAreVaried:eq1} and let $|(s,t)|\coloneqq |t - s|$.  

\ref{cor1:GPsAreVaried:item1}\,: Fix any $\p_0, \p_1\in\Delta_2(\mathbb{I})$ with $|\p_0|\neq|\p_1|$ and take $\p_2 \coloneqq \p_0$. Then for each $i\in[d]$ we have $\Xi_i = \tilde{\Xi}_i^{(1)}\!/\,\tilde{\Xi}_i^{(0)}$ for the factors 
\begin{equation}\label{cor1:GPsAreVaried:aux1}
\tilde{\Xi_i}^{(\nu)} = \frac{\kappa^i_{\p_\nu}}{1 - (\kappa^i_{\p_\nu})^2} = \left([\kappa^i_{\p_\nu}]^{-1} - \kappa^i_{\p_\nu}\right)^{-1} = \tfrac{1}{2}\left(\sinh\!\left(\left[\frac{|\p_\nu|}{\alpha_i}\right]^{\gamma_i}\right)\right)^{-1}.
\end{equation}
Hence we have the parametrisation $\alpha_i\mapsto\Xi_i\equiv\Xi_i(\alpha_i)$ given by 
\begin{equation}
\Xi_i = \sinh\!\left(\left[\frac{|\p_0|}{\alpha_i}\right]^{\gamma_i}\right)\cdot\left[\sinh\!\left(\left[\frac{|\p_1|}{\alpha_i}\right]^{\gamma_i}\right)\right]^{-1},
\end{equation}
which for $|\p_1|\neq|\p_0|$ and $\gamma_i$ fixed is differentiable and strictly monotone in $\alpha_i>0$. Denoting by $\phi_i$ the associated (differentiable) inverse of $\alpha_i\mapsto\Xi_i(\alpha_i)$ (for $\gamma$ and $\p_0, \p_1$ fixed), we find that $(\Xi_i(\alpha_i))_{i\in[d]}\in(\mathbb{R}^d\setminus\nabla^{\times})$ for any $(\alpha_i)_{i\in[d]}$ not contained in $\mathcal{N}_{\gamma}\coloneqq (\phi_1\times\cdots\times\phi_d)(\nabla^{\times})$.
 
\ref{cor1:GPsAreVaried:item2}\,: It is well-known that the covariance function $\kappa^i$ of \eqref{cor1:GPsAreVaried:eq2} reads 
\begin{equation}\label{cor1:GPsAreVaried:aux2.1}
\kappa^i(s,t) = \gamma_i\big(e^{-\theta_i|s-t|}- e^{-\theta_i(s+t)}\big)\quad\text{for}\quad\gamma_i\coloneqq\frac{\sigma_i^2}{2\theta_i}. 
\end{equation}   
Suppose $\mathbb{I}=[0,1]$ wlog. Then for $\p_\nu\equiv(t_\nu, 2 t_\nu)\in\Delta_2(\mathbb{I})$ ($\nu=0,1$) with $t_0\neq t_1$ and $\p_2\coloneqq\p_0$, we obtain $\kappa^i_{\p_\nu}=\gamma_i(e^{-\theta_i t_\nu} - e^{-3\theta_i t_\nu})$ and $k^i_{\p_\nu}=\kappa^i_{\p_\nu}\cdot \gamma_i e^{\theta_i t_\nu}\cdot(1 - e^{-4\theta_i t_\nu})$ and, thus, $\Xi_i = \tilde{\Xi}_i^{(1)}\!/\,\tilde{\Xi}_i^{(0)}$ for each $i\in[d]$, with the factors 
\begin{equation}\label{cor1:GPsAreVaried:aux2.2}
\tilde{\Xi}_i^{(\nu)} = \frac{\kappa^i_{\p_\nu}}{k^i_{\p_\nu} - (\kappa^i_{\p_\nu})^2} = \left(\gamma_i e^{\theta_i t_\nu}\cdot(1 - e^{-4\theta_i t_\nu}) - \kappa^{i}_{\p_\nu}\right)^{-1} = \left(\frac{\sigma_i^2}{\theta_i}\sinh(\theta_i t_\nu)\right)^{-1}.
\end{equation}   
We hence have the parametrisation $\theta_i\mapsto\Xi_i\equiv\Xi_i(\theta_i) = \frac{\sinh(\theta_i t_0)}{\sinh(\theta_i t_1)}$, which due to $t_0\neq t_1$ is strictly monotone and differentiable in $\theta_i>0$. Denoting by $\tilde{\phi}_i$ the associated (differentiable) inverse of $\theta_i\mapsto\Xi_i(\theta_i)$, we obtain that $(\Xi_i(\theta_i))_{i\in[d]}\in(\mathbb{R}^d\setminus\nabla^{\times})$ provided the parameter vector $(\theta_i)_{i\in[d]}\in\mathbb{R}_{>0}^d$ is not contained in the nullset $\tilde{\mathcal{N}}\coloneqq(\tilde{\phi}_1\times\cdots\times\tilde{\phi}_d)(\nabla^{\times})$.       

\ref{cor1:GPsAreVaried:item3}\,: Choosing again $\p_\nu\equiv(t_\nu, 2 t_\nu)\in\Delta_2(\mathbb{I})$ ($\nu=0,1$) with $t_0\neq t_1$ and $\p_2\coloneqq\p_0$, we for each $i\in[d]$ find that $\Xi_i = \tilde{\Xi}_i^{(1)}\!/\,\Xi_i^{(0)}$ for the factors 
\begin{equation}\label{cor1:GPsAreVaried:aux2.3}
\tilde{\Xi}_i^{(\nu)} = \frac{4^{H_i-\tfrac{1}{2}}\cdot t_\nu^{2H_i}}{4^{H_i}\cdot t_\nu^{4H_i} - 4^{2H_i - 1}\cdot t_\nu^{4H_i}} = \left(2(1-4^{H_i-1})\cdot t_\nu^{2H_i}\right)^{-1},
\end{equation}
whence it holds that
\begin{equation}
\Xi_i = \left(\frac{t_0}{t_1}\right)^{\!2H_i}\quad\text{ for each } \ i\in[d].
\end{equation}
But since due to $t_0\neq t_1$ the assignment $h\mapsto (\frac{t_0}{t_1})^{2h}$ is clearly injective, we clearly obtain that $(\Xi_i)_{\in[d]}$ is not in $\nabla^{\times}$ whenever $(H_i)_{i\in[d]}$ is not in $\nabla^{\times}$, as claimed.

\ref{cor1:GPsAreVaried:item4}\,: As the numbers $\theta_{ij}\coloneqq\eta_i(r_0)\cdot\eta_j(r_1)$, $(i,j)\in[d]\times[d]$, are pairwise distinct and (thus) non-zero, the continuity of the functions $\vartheta_{ij} : \mathbb{I}^{\times 2}\ni (s,t)\mapsto \eta_i(s)\cdot\eta_j(t)$ allows us to find pairs $(s_0, t_0), (s_1, t_1)\in\Delta_2(\mathbb{I})$ such that for the rectangle $R\coloneqq[s_0, t_0]\times[s_1, t_1]\subseteq\mathbb{I}^{\times 2}$ the associated integrals
\begin{equation}\label{cor1:GPsAreVaried:aux3}
\int_{R}\!\vartheta_{ij}\,\mathrm{d}s\,\mathrm{d}t, \quad (i,j)\in[d]^{\times 2}, \quad \text{ are pairwise distinct and non-zero}. 
\end{equation}  
Clearly then, \eqref{cor1:GPsAreVaried:aux3} implies that for $\iota_{i}(s,t)\coloneqq\int_s^t\!\eta_i(r)\,\mathrm{d}r$, the numbers  
\begin{equation}\label{cor1:GPsAreVaried:aux4}
\iota_i(s_0, t_0)\cdot\iota_j(s_1, t_1), \ (i,j)\in[d]\times[d], \ \text{ are pairwise dinstinct}
\end{equation} 
(Note further that by \eqref{cor1:GPsAreVaried:aux3}, $s_0$ and $s_1$ may be chosen such that in addition to \eqref{cor1:GPsAreVaried:aux4} it holds $\iota_i(0, s_\nu)\neq 0$ for each $i\in[d]$.)
Now by setting $\p_2\coloneqq\p_0$ with $\p_\nu\coloneqq(s_\nu, t_\nu)$ for $\nu=0,1$ (notice that $\p_0\neq\p_1$ by \eqref{cor1:GPsAreVaried:aux4}), we once more find that $\Xi_i=\tilde{\Xi}_i^{(1)}\!/\,\Xi_i^{(0)}$ for each $i\in[d]$, this time for the factors
\begin{equation}
\tilde{\Xi}_i^{(\nu)} = \frac{\kappa^i_{\p_\nu}}{k^i_{\p_\nu} - (\kappa^i_{\p_\nu})^2} = \frac{\iota_i(0, s_\nu)}{\iota_i(0,s_\nu)\cdot\iota_i(0,t_\nu) - \iota_i(0, s_\nu)^2} = \left(\iota_i(s_\nu, t_\nu)\right)^{-1}.
\end{equation}   
Consequently, the entries of $(\Xi_i)_{i\in[d]} = \left(\frac{\iota_i(s_0, t_0)}{\iota_i(s_1,t_1)}\right)_{i\in[d]}$ are pairwise distinct. Indeed, assuming otherwise that $\Xi_i=\Xi_j$ for some $i\neq j$, we find that
\begin{equation}
\iota_i(s_0, t_0)\cdot\iota_j(s_1, t_1) = \iota_j(s_0, t_0)\cdot\iota_i(s_1, t_1), \qquad\text{contradicting \eqref{cor1:GPsAreVaried:aux4}}. 
\end{equation} 
Hence $(\Xi_i)_{i\in[d]}\in(\mathbb{R}^d\setminus\nabla^{\times})$ as desired.  
\end{proof}
\subsection{Proof of Proposition \ref{prop:SDE_GBM}}\label{pf:prop:SDE_GBM}
\propGBM*
\begin{proof}
A straightforward application of Itô's lemma yields that for any $(s,t)\in\Delta_2(\mathbb{I})$, the density $\zeta^i_{s,t}$ of $(S^i_s, S^i_t)$  is given by 
\begin{equation}
\zeta_{s,t}^i(x,y) = \rho_{s,t}^i(\log(x),\log(y))\cdot (xy)^{-1} = c_i(s,t,x,y)\cdot\exp\!\big(\varphi_i(s,t,x,y)\big)
\end{equation}     
for the functions $c_i:\Delta_2(\mathbb{I})\times\mathbb{R}_{>0}^2\rightarrow\mathbb{R}$ and $\varphi_i : \Delta_2(\mathbb{I})\times\mathbb{R}_{>0}^2\rightarrow\mathbb{R}$ defined by  
\begin{equation}\label{prop:SDE_GBM:aux6}
\begin{aligned}
c_i(s,t,x,y)&=\frac{(4\pi^2\cdot\det(\mathfrak{s}^i_{s,t}))^{-1/2}}{xy} \qquad\qquad\text{ and}\\
\varphi_i(s,t,x,y)&=-\frac{1}{2}\left([\phi^{-1}(x,y) - \mathfrak{m}^i_{s,t}]^\intercal\cdot(\mathfrak{s}_{s,t}^i)^{-1}\cdot[\phi^{-1}(x,y) - \mathfrak{m}^i_{s,t}]\right)\\
&= \beta^i_{s,t}\log(x)\log(y) + \eta_i(s,t,x) + \tilde{\eta}_i(s,t,y) 
\end{aligned} 
\end{equation} 
with $\eta_i, \tilde{\eta}_i$ given by $\eta_i(s,t,x) + \tilde{\eta}_i(s,t,y)\coloneqq \varphi_i(s,t,x,y) - \beta^i_{s,t}$ and 
\begin{equation}\label{prop:SDE_GBM:aux7}
\beta^i_{s,t}\coloneqq\frac{\kappa_i(s,t)}{\kappa_i(s,s)\kappa_i(t,t) - \kappa_i^2(s,t)} = \frac{\int_0^s\!\sigma_i^2(r)\,\mathrm{d}r}{\left(\int_0^s\!\sigma_i^2(r)\,\mathrm{d}r\right)\big(\int_0^t\!\sigma_i^2(r)\,\mathrm{d}r\big) - (\int_0^s\!\sigma_i^2(r)\,\mathrm{d}r)^2}.
\end{equation}
Consequently, the spatial support of $S$ is $D_S = \mathbb{R}_+^d = \pi_{[d]}\big(\supp\big[\mathbb{R}^{2d}\ni(u,v)\mapsto \zeta^i_{s,t}(u_i, v_i)\big]\big)$ (any $i\in[d]$), and the mixed log-derivatives of $\zeta^i_{s,t}$ read 
\begin{equation}\label{prop:SDE_GBM:aux8}
\xi^i_{s,t}\coloneqq \partial_x\partial_y\log(\zeta^i_{s,t}) = \partial_x\partial_y\varphi^i(s,t,\cdot,\cdot) = \frac{\beta^i_{s,t}}{xy}.
\end{equation} 
Hence by Definition \ref{def:log-regular}, the process $S$ is $\gamma$-contrastive iff 
\begin{equation}\label{prop:SDE_GBM:aux9}
\exists\, \p_0, \p_1, \p_2\in\Delta_2(\mathbb{I})\quad\text{ with }\quad (\Xi_i)_{i\in[d]}\coloneqq\left(\frac{\beta^i_{\p_1}\beta^i_{\p_2}}{(\beta^i_{\p_0})^2}\right)_{\!i\in[d]}\!\in\,(\mathbb{R}^d\setminus\nabla^{\times}). 
\end{equation} 
Having $\p_\nu\equiv(s_\nu, t_\nu) \in\Delta_2(\mathbb{I})$ $(\nu=0,1)$ arbitrary and $\p_2\coloneqq\p_0$ hence yields that
\begin{equation}
\Xi_i = \frac{\beta^i_{\p_1}}{\beta^i_{\p_0}} = \frac{\int_{s_0}^{t_0}\!\sigma_i^2(r)\,\mathrm{d}r}{\int_{s_1}^{t_1}\!\sigma_i^2(r)\,\mathrm{d}r} \qquad\text{ for each } \ i\in[d]. 
\end{equation}
Thus by choosing $\p_0$ and $\p_1$ as done in the proof of Proposition \ref{cor1:GPsAreVaried} (iv), we obtain $(\Xi_i)_{i\in[d]}\in(\mathbb{R}^d\setminus\nabla^{\times})$ as desired. 
\end{proof} 

\subsection{On Signature Cumulants}\label{rem:sigcumulants}This remark is to further illuminate the concept of the signature cumulant from Definition \ref{def:sigcumulant} as essentially that of a natural (`moment-like') multi-indexed $\log$-compressed coordinate vector for the law of a stochastic process.\\[-0.5em]

\noindent    
As further detailed in Section \ref{sect:expected_signature_moments}, the idea behind the classical concept of `moments' of a random variable -- both for random vectors in $\R^d$ and stochastic processes alike -- is that they provide a set of deterministic \emph{coordinates} for the law of that random variable. 

For the simplest case of a scalar random variable, i.e.\ a random vector in $\R^1$, the most established such coordinates are the sequence of its central moments, that is an ordered list of expectations over increasingly nonlinear functionals [specifically: (centered) monomials] of that random variable. Analogous coordinates for a random vector in $\R^d$, namely its multivariate (central) moments, are obtained if one considers the expectation over the multivariate (centered) monomials of that random vector.\footnote{\ The only difficulty with this multidimensional generalisation is that the resulting `coordinate vector' is no longer a linearly ordered list (isomorphic to (the coefficients of) a formal power series in a single variable) but rather a multiindexed family of numbers (isomorphic to a formal power series in several variables); cf.\ Sect.\ \ref{rem:sig_cumulants_generalise}.}

The expected signature is yet a further generalisation of this classical concept of coordinatisation, this time from random elements in $\R^d$ to random elements in $\mathcal{C}_d$, i.e.\ continuous-time stochastic processes: As before one considers a multiindexed family of numbers \eqref{def:expected_signature:eq1} given as expectations over increasingly nonlinear functionals of the process, but this time the functionals in question are no longer (multivariate) monomials on $\R^d$ but rather iterated integrals on (elements of) $\mathcal{C}_d$. Owing to the more complex (time-ordered) global structure that is innate to the realisations of a stochastic process (its sample paths), these path-space functionals are better suited [than classical multivariate monomials] to capture the (geometrical) complexity of the open sets (wrt.\ the uniform topology) of $\mathcal{C}_d$ whose numerical valuation constitutes the law of the process. The family of expectations \eqref{def:expected_signature:eq1} over these functionals of the process, also referred to as \emph{noncommutative moments} due to their non-invariance under index-permutations, does hence form a more informative global statistic of the stochastic process than can be provided by the classical moments of the process at any fixed time-point of its evolution. In fact, these expected signature coefficients do provide a high-resolution description of the stochastic process which is so fine-grained that in total they are able to characterise the law \cite{CHO} of the process. Another of their main advantages, leveraged below, is that these coefficients are interrelated by way of a rich algebraic and combinatorial global structure, which (akin to classical moments for the case of random vectors) makes core aspects of stochastic process statistics amenable to a lucid algebraic description. 

These relations reveal, however, that considered in isolation the family of coefficients \eqref{def:expected_signature:eq1}, that is the expected signature of the stochastic process, appears somewhat `bloated' in that it exhibits a certain level of internal algebraic redundancy. In fact, see Remark \ref{sect:sigmoments:logtransform} for details, it turns out that the expected signature is `close to an exponential', which allows for the information it contains to be efficiently compressed by a logarithmic `change of coordinates'. What results is the multi-indexed coordinate vector \eqref{def:sigcumulant:eq1}, an algebraically accessible reservoir of conveniently organised statistical information that characterises the law of a stochastic process.                               

\subsection{Basic Cross-Shuffle Combinatorics}\label{rem:cross_shuffles}
Using the notation of Sections \ref{sect:independence_criterion}, \ref{sect:capping} and \ref{rem:sig_cumulants_generalise} throughout this remark, consider the family of \emph{cross-shuffles} $\mathfrak{C}\equiv\bigsqcup_{k=2}^d\mathcal{W}_k = \bigsqcup_{\nu=2}^\infty\mathfrak{C}_\nu$.\\[-0.5em]

By definition \eqref{shuffle_product} of the shuffle product, each element $\bm{q}\in\mathfrak{C}_\nu$ (seen as an element of \eqref{rem:free_algebra}) is a homogeneous polynomial of degree $\nu$ whose monomial coefficients are all $1$, i.e.\ there is $c_{\bm{q}}\in\N$  such that $\bm{q} = \bm{q}_1 + \cdots + \bm{q}_{c_{\bm{q}}}$ with $\bm{q}_j\in[d]^\ast$ for each $j\in[c_{\bm{q}}]$. Partitioning
\begin{equation}
\mathfrak{C}_\nu = \bigsqcup_{k=2}^d\mathfrak{C}_{\nu|k} \quad \text{ for } \quad \mathfrak{C}_{\nu|k}\coloneqq \mathfrak{C}_\nu\cap\mathcal{W}_k,  
\end{equation}          
the definition of $\mathcal{W}_k$ yields that for any given $\bm{q}\in\mathfrak{C}_\nu$ the pair $\vartheta_{\bm{q}}\equiv(k_{\bm{q}},\mu_{\bm{q}})\in[d]_{\geq 2}\times[m-1]$, with $k_{\bm{q}}\coloneqq\max\{i\in[d]\mid i\in\bm{q}_1\}$ being the largest letter contained in $\bm{q}$ and $\mu_{\bm{q}}\coloneqq\sum_{i\in\bm{q}_1}\delta_{i,k_{\bm{q}}}$ denoting the number of times this largest letter appears in one (and hence any) of the monomials of $\bm{q}$, determines $\bm{q}$ uniquely (in $\mathfrak{C}_\nu$) up to a $\shuffle$-left-factor of word length $\nu-\mu_{\bm{q}}$. (Indeed: Given $\bm{q}\in\mathfrak{C}_\nu$, the number $k_{\bm{q}}\in\N$ is the (unique) index s.t.\ $\bm{q}\in\mathfrak{C}_{\nu|k_{\bm{q}}}\subset\mathcal{W}_{k_{\bm{q}}}$, whence $\bm{q} = \bm{w}\shuffle (k_{\bm{q}})^{\ast\mu_{\bm{q}}}$ for some $\bm{w}\in[k_{\bm{q}}-1]^\ast$ with $|\bm{w}| = \nu-\mu_{\bm{q}}$.)\\[-0.5em] 

Since the shuffle product \eqref{shuffle_product} of two words $\bm{i}, \bm{j}\in[d]^\ast$ is precisely the sum over the $c_{\bm{i}\shuffle\bm{j}}$ $\big(=\frac{(|\bm{i}|+|\bm{j}|)!}{|\bm{i}|!|\bm{j}|!}\big)$ ways of \emph{interleaving} $\bm{i}$ and $\bm{j}$, any two monomials in $\bm{i}\shuffle\bm{j}$ are composed of exactly the same letters and differ only in the order in which their letters appear. Consequently,
\begin{enumerate}[label=(\alph*)]
\item\label{rem:cross_shuffles:it1} any two $\bm{q}, \bm{q}'\in\mathfrak{C}_\nu$ have a monomial in common iff $\bm{q}=\bm{q}'$\,;
\item\label{rem:cross_shuffles:it2} given any $\bm{q}\in\mathfrak{C}_\nu$ with its unique (up to the order of summands) decomposition $\bm{q}=\bm{q}_1+\ldots+\bm{q}_{c_{\bm{q}}}$ into monic monomials $\bm{q}_1, \ldots, \bm{q}_{c_{\bm{q}}}\in[d]^\ast$, these monomials $\bm{q}_1, \ldots, \bm{q}_{c_{\bm{q}}}$ are pairwise distinct.      
\end{enumerate} 
(Note that point \ref{rem:cross_shuffles:it2} follows inductively: Let $\bm{q}\equiv\bm{w}\shuffle(k_{\bm{q}})^{\ast\mu_{\bm{q}}}\in\mathfrak{C}_\nu$ with $(k_{\bm{q}},\mu_{\bm{q}})=\vartheta_{\bm{q}}$. The assertion clearly holds if $\mu_{\bm{q}}=1$ or (by symmetry) $|\bm{w}|=1$. Fixing any $\bm{q}$ as above, assume that \ref{rem:cross_shuffles:it2} holds for any $\bm{r}\equiv\bm{w}'\shuffle(k_{\bm{q}})^{\ast\mu_{\bm{r}}}\in\mathfrak{C}$ with $\mu_{\bm{r}}=\mu_{\bm{q}}-1$ or $|\bm{w}'|=|\bm{w}|-1$. Note that 
\begin{equation}
\bm{q} = (\bm{w}'\ast\texttt{i})\shuffle(k_{\bm{q}})^{\ast(\mu_{\bm{q}}'+1)} = \bm{r}_0\ast \texttt{i} + \bm{r}_1\ast k_{\bm{q}} \qquad\quad (\bm{w}\eqqcolon\bm{w}'\ast\texttt{i}, \ \texttt{i}\in[k_{\bm{q}}-1]\setminus\{\epsilon\})
\end{equation}  
for the polynomials $\bm{r}_0\coloneqq\bm{w}'\shuffle(k_{\bm{q}})^{\ast\mu_{\bm{q}}}$ and $\bm{r}_1\coloneqq\bm{w}\shuffle(k_{\bm{q}})^{\ast(\mu_{\bm{q}}-1)}$ (by the recursive formulation of the shuffle product, e.g.\ \citep[p.\ 25\,f.]{REU}). Now since the monomials of $\bm{r}_0$ and $\bm{r}_1$ are all monic and pairwise distinct by induction hypothesis, it is clear that the same applies to $\bm{r}_0\ast\texttt{i}$ and $\bm{r}_1\ast k_{\bm{q}}$ and, hence (as $\texttt{i}\neq k_{\bm{q}}$), to $\bm{q}$. Thus by induction, assertion \ref{rem:cross_shuffles:it2} holds for $\bm{q}$ as desired.) 

\subsection{Nonlinear ICA for Discrete-Time Signals}\label{appendix:NICA_discrete}
\noindent
As detailed in this section, our approach towards the identifiability of nonlinearly mixed stochastic processes also covers the case of discrete-time signals with almost no further modifications.\\[-0.5em]

\noindent
Assume throughout that $X_\ast\equiv (X_j)_{j\in\mathbb{Z}}$ is some discrete time-series in $\R^d$ such that
\begin{equation}\label{appendix:NICA_discrete:eq1}
X_\ast \ = \ f(S_\ast) \,\equiv\, (f(S_j))_{j\in\Z} 
\end{equation}   
for some IC discrete time-series $S_\ast\equiv(S_j)_{j\in\Z}$ in $\R^d$ and $f\in C^{2,2}(D_{S_\ast};\R^d)$. Here, a time series $Y_\ast\equiv (Y_j)_{j\in\Z}$ in $\R^d$, with $Y_j\equiv(Y_j^1, \ldots, Y_j^d)$ for each $j\in\Z$, is called \emph{IC} if its componental time-series $(Y_j^1)_{j\in\Z}, \ldots, (Y_j^d)_{j\in\Z}$ are mutually independent.\\[-0.5em]

Denote further $D_{Y_\ast}\coloneqq\overline{\bigcup_{j\in\Z}\supp(Y_j)}^{|\cdot|_2}$ for the \emph{spatial support} of $Y_\ast$, and write $\Delta_2(\Z)\coloneqq\{(j_1,j_2)\in\Z^2\mid j_1 < j_2\,\}$ for the set of all strictly ordered pairs of integers.  
\begin{appendixdef}[$\bar{\alpha}, \bar{\beta}, \bar{\gamma}$-Contrastive]\label{def:alpha_bar_contrastive}
A discrete time-series $S_\ast\equiv(S_j)_{j\in\Z}$ in $\R^d$ with spatial support $D_{S_\ast}$ will be called \emph{$\bar{\alpha}$-contrastive} if $S_\ast$ is IC and there exists $\mathcal{P}\subseteq\Delta_2(\Z)$ together with a collection $(D_\p)_{\p\in\mathcal{P}}$ of open subsets in $\R^d$ such that    
\begin{enumerate}[label=(\roman*)]
\item\label{def:alpha_bar_contrastive:it1} the union $\bigcup_{\p\in\mathcal{P}}D_\p$ is dense in $D_{S_\ast}$, and 
\item\label{def:alpha_bar_contrastive:it2} for each $(i, (j_1, j_2))\in[d]\times\mathcal{P}$, the vector $(S^i_{j_1}, S^i_{j_2})$ is $C^2$-distributed with density $\zeta^i_{j_1,j_2}$ such that 
\begin{equation*}
\begin{aligned} 
&\restr{\zeta^i_{j_1, j_2}}{D_{(j_1, j_2)}^{\times 2}} \text{ is regularly non-separable for all $i\in[d]$, \quad and}\\[-0.5em]
&\restr{\zeta^i_{j_1, j_2}}{D_{(j_1, j_2)}^{\times 2}} \text{ is almost everywhere non-Gaussian for all but at most one $i\in[d]$} 
\end{aligned} 
\end{equation*}   
\end{enumerate}(cf.\ Definition \ref{def:psG_nonsep_stochproc}). The notions of \emph{$\bar{\beta}$- and $\bar{\gamma}$-contrastive} time series are defined in analogous adaptation of Definition \ref{def:log-regular}.         
\end{appendixdef}            

Analogous to before (see Assumption \ref{sect:assumption_convexity}), for the rest of Section \ref{appendix:NICA_discrete} we adopt the convenience assumption that each connected component of $D_{S_\ast}$ be convex. 
\begin{appendixthm}\label{appendix:NICA_discrete:thm}
For $X_\ast$ and $S_\ast$ as in \eqref{appendix:NICA_discrete:eq1} with spatial supports $D_{X_\ast}$ and $D_{S_\ast}$ respectively, let the time-series $S_\ast$ in  be $\bar{\alpha}$-, $\bar{\beta}$- or $\bar{\gamma}$-contrastive. Then, for any transformation $h$ which is $C^{2}$-invertible on some open superset of $D_{X_\ast}$, we have with probability one that: 
\begin{equation}\label{appendix:NICA_discrete:thm:eq1}
\restr{(h\circ f)}{\tilde{Z}} \ \in \ \mathrm{DP}_{\!d}(\tilde{Z}), \ \forall\,Z\subseteq D_{X_\ast} \text{ connected} \qquad\text{if and only if}\qquad h(X_\ast) \ \text{ is IC},
\end{equation}
where for any connected subset $Z$ of $D_{X_\ast}$ we denoted $\tilde{Z}\coloneqq f^{-1}(Z)$.     
\end{appendixthm}   
\begin{proof}
Let $S_\ast$ be $\bar{\alpha}$-contrastive, and $h\in C^{2,2}(D_{X_\ast};\R^d)$ be such that $h(X_\ast)$ is IC. Then
\begin{equation}
(h\times h)(X_{j_1}, X_{j_2}) \ \text{ is \ IC} \quad \text{for any fixed } \ (j_1, j_2)\in\mathcal{P},   
\end{equation} 
which in consequence of Definition \ref{def:alpha_bar_contrastive} \ref{def:alpha_bar_contrastive:it2} implies that 
\begin{equation}
\text{the Jacobian of } \ \ \varrho\,\coloneqq\, h\circ f \ \ \text{ is monomial \ on } D_{(j_1, j_2)},
\end{equation}
as detailed in the proof of Theorem \ref{thm:NICA_stat}. The equivalence \eqref{appendix:NICA_discrete:thm:eq1} thus follows from Def.\ \ref{def:alpha_bar_contrastive} \ref{def:alpha_bar_contrastive:it1} and Lemma \ref{lem:monomial_trafos}. The case of $S_\ast$ being $\bar{\beta}$- or $\bar{\gamma}$-contrastive follows similarly via Thm.\ \ref{cor:NICA_MainCor}.           
\end{proof} 
\noindent
Since a discrete time-series $Y_\ast$ in $\R^d$ is IC if and only if its piecewise-linear interpolation\footnote{\ $\ldots$ along any (countable) dissection of, say, $[0,1]$.} $\hat{Y}_\ast$ is IC in $\mathcal{C}_d$, the assertion of Theorem \ref{thm:optimisation} remains valid as stated\footnote{\ With the addition that in \eqref{thm:optimisation:eq1}, the monomial transformations $\alpha$ with $h(X) = \alpha(S)$ then depend on $(j,\omega)$ via the connected component of $D_{S_\ast}$ that the given realisation of $S_j = S_j(\omega)$ is contained in (details below).} if $(S, \alpha, \beta, \gamma, X)$ is replaced by $(S_\ast, \bar{\alpha}, \bar{\beta}, \bar{\gamma}, X_\ast)$ and the argument $h(X)$ in \eqref{thm:optimisation:eq1} is replaced by the piecewise-linear interpolation of $h(X_\ast)$. This shows that the identifiability theory of Sections \ref{sect:core_theory}, \ref{sect:beta_and_gamma} and \ref{sect:independence_criterion} directly applies to the discrete-time setting \eqref{appendix:NICA_discrete:eq1}, as detailed in the following remarks.  

\subsubsection{Identifiability in the Discrete-Time Case}\label{rem:discrete}
Following Lemma \ref{lem:monomial_trafos} and Theorems \ref{thm:NICA_stat} and \ref{cor:NICA_MainCor}, we have seen that the inversion (up to monomial ambiguity) of the mixing transformation $f : D_S\rightarrow D_X$ given (nothing but) $X$ is possible if there is a dense open subset $\mathcal{D}$ in $D_S$ which is `identifiability enforcing' in the sense that     
\begin{equation}\label{rem:discrete:eq:3}
J_{h\circ f}(u) \in \mathrm{M}_d \quad \text{for each } u\in\mathcal{D} 
\end{equation}                      
for any $h\in C^{2,2}(D_X)$ such that $h(X)$ is IC; see e.g.\ the proof of Theorem \ref{thm:NICA_stat}. As shown in the proofs of these theorems, such a set $\mathcal{D}$ can be induced by a subset of $C^2$-regular
\begin{equation}\label{rem:discrete:eq:2}
\text{distributions of}\quad \big\{(S_s, S_t) \ \big| \ (s,t)\in\Delta_2(\I)\big\} 
\end{equation} 
if the source process $S$ is $\alpha$-, $\beta$- or $\gamma$-contrastive in the sense of Defs.\ \ref{def:psG_nonsep_stochproc} and \ref{def:log-regular}. (The associated examples for $\mathcal{D}$ in these cases are $\bigcup_{\mathfrak{p}\in\mathcal{P}}D_{\mathfrak{p}}$ (Def.\ \ref{def:psG_nonsep_stochproc}), and $\mathrm{int}(D_S)$ or $\mathcal{U}$ (Def.\ \ref{def:log-regular}).) Now importantly, this approach towards identifiability makes no essential use of $S$ being time-continuous: Both Definitions \ref{def:psG_nonsep_stochproc}, \ref{def:log-regular} \emph{and} their consequential derivations of \eqref{rem:discrete:eq:3} only use that
\begin{equation}\label{rem:discrete:eq:2.1}
\begin{gathered}
S = (S_t)_{t\in\I} \quad \text{is a family of random variables  $S_t\equiv(S^i_t)$ in $\R^d$}\\[-0.25em]
\text{with} \quad \I \text{ \ a totally ordered subset of \,} \R \  \text{ and such that}\quad\\[-0.5em]
\text{the families} \quad (S^1_t)_{t\in\I}, \ldots, (S^d_t)_{t\in\I} \quad \text{are statistically independent},  
\end{gathered}
\end{equation} 
as a quick inspection of the proofs of Theorems \ref{thm:NICA_stat}, \ref{cor:NICA_MainCor} shows. For the special case where $\I$ is countable -- i.e.\ $S=(S_t)_{t\in\I}\equiv(S_j)_{j\in\mathbb{J}}$ $(\mathbb{J}\subseteq\mathbb{Z}$) is a discrete time-series -- we may thus `abstract \eqref{2ndfdds} from the topology on $\I$' without losing the power of our approach, as emphasised below. This means that we may simply consider the discrete `lattice' of joint laws given by 
\begin{equation}\label{rem:discrete:eq:2'}\tag{\ref*{rem:discrete:eq:2}'}
\text{the distributions of} \quad \big\{(S_{j_1}, S_{j_2}) \ \big| \ (j_1,j_2)\in \Delta_2(\Z)\big\}
\end{equation} 
in lieu of the uncountable collection \eqref{rem:discrete:eq:2}.  
In this case then still, regular dense subsets $\mathcal{D}$ of $D_S$ with the desired property \eqref{rem:discrete:eq:3} can be induced from a $C^2$-distributed subselection\footnote{\ That is, a set of (the distributions of) vectors $\{(S_{k}, S_{\ell})\mid (k,\ell)\in\mathcal{P}\}$, for some $\mathcal{P}\subseteq\Delta_2(\Z)$, such that $(S_k^i, S_\ell^i)$ is $C^2$-distributed (cf.\ Definition \ref{def:density_reg}) for each $(k,\ell)\in\mathcal{P}$ and each $i\in[d]$. (As the components of $(S_j)$ are mutually independent, $(S_k^i, S_\ell^i)$ is $C^2$-distributed \emph{for each} $i\in[d]$ iff the full vector $(S_k, S_\ell)$ is $C^2$-distributed.)} of \eqref{rem:discrete:eq:2'} \emph{by the exact same argumentation} that we have used in the proofs of Theorems \ref{thm:NICA_stat} and \ref{cor:NICA_MainCor}, as it is evident that these proofs do \emph{not} involve the topology on $\I$ but only its order. 

The associated (by direct analogy to the continuous-time case) premises for the existence of such a subselection of \eqref{rem:discrete:eq:2'} are formulated as Definition \ref{def:alpha_bar_contrastive}, which thus appears as the natural `discretization' of Definitions 6 and 7. 

There are three points in the paper where the assumption of continuous-time (specifically: the time-continuity of the samples of $S$) does make a subtle but not entirely trivial difference: 
\begin{enumerate}[label=(\roman*)]
\item\label{rem:discrete:it1}From the `pre-identifiability' property \eqref{rem:discrete:eq:3} for a (discrete- or continuous-time) stochastic process $S$ of the general form \eqref{rem:discrete:eq:2.1}, we obtain by Lemma \ref{lem:monomial_trafos} \ref{lem:monomial_trafos:it2} (recalling Definition \ref{def:monomial_trafos}) that the residual $h\circ f$ is monomial \emph{on every connected component} of $D_S$. Consequently: 
\begin{itemize}   
\item[(a)]If the source $S$ is time-continuous, then its sample path $S(\omega)\equiv(S_t(\omega))_{t\in\I}$ is a connected subset of $D_S$ with probability one (cf.\ Lemma \ref{lem:spat_supp} \ref{lem:spat_supp:it2}). The identifiability equations \eqref{thm:NICA_stat:eq3} and \eqref{cor:NICA_MainCor:eq1} of Theorems \ref{thm:NICA_stat} and \ref{cor:NICA_MainCor} then state that, almost surely, the components of (the sample paths of) the estimated source $\hat{S}(\omega)\coloneqq h(X(\omega))$ and those of the original source $S(\omega)$ coincide up to a pathwise-fixed permutation $\tau$ and some monotone scaling $(\alpha_1,\cdots,\alpha_d)$. That is, it states that with probability one there are $\tau$ and $(\alpha_i)$ such that 
\begin{equation}\label{rem:discrete:eq:6}
\hat{S}_t^i = \alpha_i(S^{\tau(i)}_t) \quad \text{ \emph{for each} $t\in\I$} \quad (i\in[d])  
\end{equation} where both $\tau$ and $(\alpha_i)$ are uniquely determined by ($X$ and) $h$ and $\omega$ via the connected component of $D_S$ that the source realisation $S(\omega)$ is contained in. 
\item[(b)]If the source $S$ is time-discrete, then its realisations $S(\omega)\equiv(S_j(\omega))_{j\in\mathbb{Z}}$ are generally not connected in $\R^d$ and might thus be spread over different connected components of $D_S$, again almost surely. Hence in this case we have with probability one that
\begin{equation}\label{rem:discrete:eq:7}
\hat{S}^i_t = \alpha_i^{(t)}(S^{\tau_t(i)}_t) \quad \text{ for each $t\in\I$} \quad (i\in[d])
\end{equation}  
where the permutations $\tau_t$ and scalings $(\alpha_i^{(t)})$ are no longer pathwise-fixed but do now depend on ($X$ and) $h$ and $\omega$ \emph{and} \emph{$t$} via the connected component of $D_S$ that each fixed-time realisation $S_t(\omega)$ is contained in;\footnote{\ That is, $\big(\tau_t, (\alpha_i^{(t)})\big) = \big(\tau_r, (\alpha_i^{(r)})\big)\,\big[\equiv \big(\tau_r(\omega), \alpha_1^{(r)}(\omega), \ldots, \alpha_d^{(r)}(\omega)\big)\big]$ if $S_t(\omega)$ and $S_r(\omega)$ are in the same connected component of $D_S$, and possibly $\big(\tau_t, (\alpha_i^{(t)})\big) \neq \big(\tau_r, (\alpha_i^{(r)})\big)$ if not.} this can be read off Theorem \ref{appendix:NICA_discrete:thm}.   
\end{itemize} 
Clearly the `minimal deviations' \eqref{rem:discrete:eq:6} and \eqref{rem:discrete:eq:7} between $\hat{S}$ and $S$ coincide if $D_S$ is connected, but if it is not connected they are generally different.\footnote{\ Notice that, as specified in Prop.\ \ref{prop:monconcordance}, the test statistic \eqref{def:MonConc:eq2} from Def.\ \ref{def:MonConc} is originally tailored to the connected case \eqref{rem:discrete:eq:6}, but upon straightforward modification it may of course also be used for the case \eqref{rem:discrete:eq:7}.} 
\end{enumerate} 
So throughout the identifiability sections of this paper (Sections \ref{sect:intro_BSS} to \ref{sect:example_sources}), the assumption of sample continuity of $S$ is merely a convenience assumption which we included because of its sufficiency for the pathwise ($t$-independent) identification \eqref{rem:discrete:eq:6}. Except for this distinction between the `connected case' \eqref{rem:discrete:eq:6} and the `disconnected case' \eqref{rem:discrete:eq:7} our identifiability results (Theorems \ref{thm:NICA_stat} and \ref{cor:NICA_MainCor}) apply without further changes, as summarised in Theorem \ref{appendix:NICA_discrete:thm}.\\[-1em] 

\noindent
Another modification for the discrete-time case, this time of a purely technical nature, concerns the optimisation in Theorem \ref{thm:optimisation}, more specifically the applicability of the contrast $\bar{\kappa}_{\mathrm{IC}}$: 

\begin{enumerate}[label=(\roman*), resume]
\item\label{rem:discrete:it2}By its definition the function $\bar{\kappa}_{\mathrm{IC}}$ can only take time-continuous processes as its arguments, but for those it is only the \emph{order} of their time-indexed values that matters (cf.\ Lemma \ref{sect:sigmoments:lem1} \ref{sect:sigmoments:lem1:it2.1}). Consequently:     
\begin{itemize}
\item[(a)]If the source $S$ is time-continuous then so is the candidate transformation $h(X)$, making $\bar{\kappa}_{\mathrm{IC}}(h(X))$ well-defined and Theorem \ref{thm:optimisation} readily applicable as stated.
\item[(b)]If the source $S$ is time-discrete, say $S = (S_j)_{j\in\mathbb{Z}}$, then we may perform the injection $S \mapsto \overline{S}$ where $\overline{S}$ is the piecewise-linear interpolation of $S$ along any (fixed) strictly ordered bounded subset $\mathcal{I}\equiv\{t_j\}\subset\R$, see Section \ref{rem:pwlinterpol}.\footnote{\ Remember that the choice of $\mathcal{I}$ is arbitrary up to order and cardinality (Lemma \ref{sect:sigmoments:lem1} \ref{sect:sigmoments:lem1:it2.1}), that is the interpolation $\overline{Y}$ only needs to preserve the time order of the data points $(Y_j)\equiv Y$; individual values $\overline{Y}_t$ for $t\notin\mathcal{I}$ are irrelevant. This also ensures \eqref{rem:discrete:eq:8} is well-defined, i.e.\ independent of the choice of interpolant of its arguments.} 
Likewise, let us for any $Y=(Y_j)_{j\in\mathbb{Z}}$ denote by $\overline{Y}$ the piecewise-linear interpolation of $Y$ along $\mathcal{I}$, and define by 
\begin{equation}\label{rem:discrete:eq:8} 
\bar{\kappa}_{\mathrm{IC}}(Y)\coloneqq\bar{\kappa}_{\mathrm{IC}}(\overline{Y})  
\end{equation}  
an extension of $\bar{\kappa}_{\mathrm{IC}}$ to discrete-time processes. Note that $\overline{Y}$ is a time-continuous process with $Y_j = \overline{Y}_{t_j}$ for each $j\in\mathbb{Z}$, and further that $Y$ is IC iff $\overline{Y}$ is IC. Thus if $X_j = f(S_j)$ on $\mathbb{Z}$ then also $\overline{X}_{t_j} = f(\overline{S}_{t_j})$ on $\mathcal{I}$, and if $S=(S_j^i)_j$ is $\bar{\alpha}$-, $\bar{\beta}$-, or $\bar{\gamma}$-contrastive then by Theorem \ref{appendix:NICA_discrete:thm}, 
\begin{equation}\label{rem:discrete:eq:9}
\boxed{h_i(X_j) = \alpha_i^{(j)}(S_j^{\tau(i)}) \ \ (\forall\, j\in\mathbb{Z})} \quad \text{iff} \quad \bar{\kappa}_{\mathrm{IC}}\big((h(X_j))_j\big) = 0
\end{equation} for any $h=(h_1,\ldots, h_d)\in C^{2,2}(D_X)$, where the boxed equation holds in the sense of \eqref{rem:discrete:eq:7}. This is the discrete-time version of Theorem \ref{thm:optimisation}.                  
\end{itemize}                        
\end{enumerate}  
In summary, we emphasize that if the source $S$ is time-discrete with its spatial support admitting a dense open subset $\mathcal{D}$ as in \eqref{rem:discrete:eq:3} --- which, for instance, will be induced by the countably indexed reservoir of joint distributions \eqref{rem:discrete:eq:2'} if $S$ is $\bar{\alpha}$-, $\bar{\beta}$- or $\bar{\gamma}$-contrastive --- then our ICA approach [Theorems \ref{thm:NICA_stat}, \ref{cor:NICA_MainCor} and \ref{thm:optimisation}] is directly applicable [in the form of Theorem \ref{appendix:NICA_discrete:thm} and \eqref{rem:discrete:eq:9}] with no additional subtleties other than points (i) and (ii) above.\footnote{\ In particular, the interpolation $Y\mapsto\overline{Y}$ (`discrete to continuous') used in \eqref{rem:discrete:eq:8} only serves to find dependence-minimising transformations $h$ via $\min_h\bar{\kappa}_{\mathrm{IC}}(h(Y))$. This interpolation is thus merely `operational' (for the use of $\bar{\kappa}_{\mathrm{IC}}$) and not related to the identifiability of $S$ itself; in particular, it does not add any geometrical or topological intricacies or complications to the latter.}\\[-1em]

\noindent
Finally, let us explicate the practically important identifiability situation where the given data is a discrete-time approximation (``observation'') of a continuous-time process $X = f(S)$, for $S$ some $\alpha$-, $\beta$- or $\gamma$-contrastive source in $\R^d$.
\begin{enumerate}[label=(\roman*), resume]
\item \label{rem:discreteobs}In this last setting, the given data is of the form $X_{\mathcal{I}}\coloneqq(X_t)_{t\in\mathcal{I}}$ for $\mathcal{I}$ discrete. For the general case that the corresponding discrete time-series $S_{\mathcal{I}}\coloneqq (S_t)_{t\in\mathcal{I}}$ is not itself $\bar{\alpha}$-, $\bar{\beta}$- or $\bar{\gamma}$-contrastive, we may not be able to exactly (i.e.\ up to minimal ambiguity) recover $S_{\mathcal{I}}$ from $X_{\mathcal{I}}$ as we did above. However, we are still guaranteed the asymptotic identification
\begin{equation}\label{rem:discreteobs:eq1}
\forall\,\varepsilon>0 \ : \ \exists\,\delta>0 \quad \text{s.t.} \quad \sup\nolimits_{t\in \mathcal{I}}\big|\hat{\theta}_{\mathcal{I}}(X_t) - \tilde{\alpha}(S_t)\big| \ \leq \varepsilon \qquad \text{if } \ \|\mathcal{I}\| \leq \delta 
\end{equation}          
for some monomial $\tilde{\alpha}\in\mathrm{DP}_{\!d}(D_S)$ depending on $\mathcal{I}$ and on the realisation of $S$ (via the connected component of $D_S$ that this realisation is contained in), cf.\ point \ref{rem:discrete:it1}, with $\hat{\theta}_{\mathcal{I}}\in\operatorname{arg\,min}_{\theta\in\Theta}\bar{\kappa}_{\mathrm{IC}}\big(\theta(X_{\mathcal{I}})\big)$ [in the sense of \eqref{rem:discrete:eq:8} and \eqref{thm:consistency:eq1}] and $\Theta$ as in Theorem \ref{thm:consistency}, and where \eqref{rem:discreteobs:eq1} holds on some ($\mathcal{I}$-dependent) $\mathbb{P}$-full set. (Notice that if $\Theta$ admits a unique minimizer of $\bar{\kappa}_{\mathrm{IC}}$, then the above $\tilde{\alpha}$ is independent of $\mathcal{I}$.) This is a special case of Theorem \ref{thm:consistency} for $(m_0,k,T)=(\infty, k, \infty)$, see also its proof in Section \ref{sect:consistlim}.  
\end{enumerate}   

\subsubsection{Consistency in the Discrete-Time Case}\label{rem:discrete:consistency}Throughout Section \ref{sect:consistency} we assumed that the data-generating signal $X$ from \eqref{sect:finsamlim:eq3} is continuous-time on $[0,1]$. If this is not the case, then the `refinement assumption' $\lim_{k\rightarrow\infty}\|\mathcal{I}_1^{(k)}\|=0$ in \eqref{sect:finsamlim:eq3} can be dropped and naturally replaced by the compensating assumption that 
\begin{equation}
\text{there is \ $k_0$ \quad s.t.\ \quad the observation} \ (X_t)_{t\in\mathcal{I}_1^{(k_0)}} \ \text{is \ $\bar{\alpha}$-, $\bar{\beta}$- or $\bar{\gamma}$-contrastive.}
\end{equation} 
This renders Section \ref{sect:interpollim} void and removes the necessity to, as in Section \ref{sect:sampling}, consider $k$-dependent protocols $\mathcal{J}_k$ with ever growing base lengths $|\mathcal{I}^{(k)}_1|$. Sections \ref{sect:capping} and \ref{sect:ergodicity}, however, stay applicable as stated and Theorem \ref{thm:consistency} remains valid -- by the same proof -- up to the following straightforward modifications. (In essence, we only need to remove the $k$-dependence from Section \ref{sect:consistency}.)\\[-0.5em]

Let $X_\ast$ and $S_\ast$ be as in \eqref{appendix:NICA_discrete:eq1} with $\tilde{X}_\ast=(\tilde{X}_j)_{j\in\mathcal{J}}$ for some $\mathcal{J}\subseteq\Z$ which admits a partition 
\begin{equation}
\mathcal{J}=\bigsqcup\nolimits_{\nu\in\N}\mathcal{I}_\nu \quad \text{with} \quad\mathcal{I}_1 < \mathcal{I}_2 < \ldots \ \text{ s.t. }\ (X_j)_{j\in\mathcal{I}_1}\eqqcolon(\tilde{X}_j)_{j\in\mathcal{I}_1} \ \text{is \ $\bar{\alpha}$-, $\bar{\beta}$- or $\bar{\gamma}$-contrastive.}  
\end{equation}  
Let further $\Theta$ be as in Theorem \ref{thm:optimisation} and such that Assumption \ref{sect:capping:assumptions} holds\footnote{\ Following the piecewise-linear interpolation of $(X_j)_{j\in\mathcal{I}_1}$ as in point \ref{rem:discrete:it2} (b) above.} for $(X_j)_{j\in\mathcal{I}_1}$. We then call the discrete process $\tilde{X}_\ast$ an \emph{ergodic observation of $X_\ast$} if $\tilde{X}_\ast$ is signature ergodic to length $|\mathcal{I}_1|\eqqcolon n$; the remaining notions of Definition \ref{def:ergodic_observation} are adopted analogously. 

Let finally $\|\cdot\|_{[n]}$ denote the uniform norm on $\R^{d\times n}$ (so that $\|(x_j)\|_{[n]} = \max_{j\in[n]}|x_j|$).    

\begin{appendixthm}\label{thm:consistency:discrete}
Let $X_\ast, S_\ast$ and $\Theta$, $\tilde{X}_\ast$ be as above, and suppose that there is $\theta_\star\in\Theta$ such that $\theta_\star(X_\ast)$ is IC. Suppose further that $\tilde{X}_\ast$ is an ergodic* \emph{[}resp.\ weakly ergodic on $\Theta$\emph{]} observation of $X_\ast$. Then for any error bound $\varepsilon>0$ there exists a capping threshold $m_0\geq 2$ such that for any fixed $m\geq m_0$ the following holds: For every sequence $(\hat{\theta}^\star_T)$ in $\Theta$ such that
\begin{equation}\label{thm:consistency:discrete:eq1}
\hat{\kappa}^{m|n}_{T}\!\big(\hat{\theta}_T^\star\big)\ \leq \  \min_{\theta\in\Theta}\,\hat{\kappa}^{m|n}_{T}(\theta) \ + \ \eta_T \quad (T\in\N), \quad \text{for \ \ $\hat{\kappa}^{m|n}_{T}$ as in $\left.\eqref{lem:ergodicity_uniformconv:eq1.2}\right|_{X_\ast\coloneqq\tilde{X}_\ast}$ \ and \ $n\coloneqq|\mathcal{I}_1|$} 
\end{equation} 
and some $(\eta_T)\subset\R_+$ with $\lim_{T\rightarrow\infty}\eta_T=0$ almost surely \emph{[}resp.\ in probability\emph{]}, it holds that       
\begin{equation}\label{thm:consistency:discrete:eq2}
\lim_{\tau\rightarrow\infty}\max\left\{\sup_{T\geq\tau}\Big[\mathrm{dist}_{\|\cdot\|_{[n]}}\!\big(\hat{\theta}^\star_T(X_\ast),\,\mathrm{DP}_d\cdot S_\ast\big)\Big], \, \varepsilon\right\} \ = \ \varepsilon    
\end{equation}  
almost surely \emph{[}resp.\ in probability\emph{]}. If $\tilde{X}_\ast$ is ergodic on $\Theta$ and the spatial support of $X_\ast$ is not necessarily compact, then \eqref{thm:consistency:eq2} holds almost surely with the above threshold $m_0$ depending on the realisation of $\tilde{X}_\ast$.              
\end{appendixthm}        

\subsection{Proof of Proposition \ref{prop:monconcordance}}\label{pf:monconcordance}
\propmondiscordance*
\begin{proof}
This is a direct consequence of the fact that Kendall's (and Spearman's) rank correlation coefficient $\rho_{\mathrm{K}}$ attains its extreme values $\pm 1$ iff one of its arguments is a monotone transformation of the other (cf.\ e.g.\ \citep[Theorem 3 (7.), (8.)]{embrechts2002}), combined with the fact that $\rho_{\mathrm{K}}(U,V)=0$ if $U$ and $V$ are independent (cf.\ e.g.\ \citep[Theorem 3 (2.)]{embrechts2002}).

Indeed, note for the `if'-direction in \eqref{prop:monconcordance:eq1} that $\varrho\big((h(X_t))_{t\in\mathcal{I}}, \, (S_t)_{t\in\mathcal{I}}\big)=0$ implies that there is $\sigma\in S_d$ with $|\rho_{\mathrm{K}}(h_i(X_t), S_t^j)| = \delta_{j,\sigma(i)}$ for each $t\in\mathcal{I}$ and $i\in[d]$, which by the above-mentioned property of $\rho_{\mathrm{K}}$ yields that $(h_i\circ f)(S_t) = \alpha_{i|t}(S^{\sigma(i)}_t)$, and hence $\restr{(h_i\circ f)}{\supp(S_t)} = \restr{\alpha_{i|t}}{\supp(S_t)}$, for some function $\alpha_{i|t} : \supp(S_t)\rightarrow\mathbb{R}$ with $\supp(S_t^{\sigma(i)})\ni x_{\sigma(i)}\mapsto\alpha_{i|t}(x_{\sigma(i)})\equiv\alpha_{i|t}(x)$ monotone. But since for each $i\in[d]$ we have $h_i\circ f\in C^1(\mathcal{O}_S)$ for some $\mathcal{O}_S\supset D_S$ open, the classical pasting lemma (see, for instance, \citep[Corollary 2.8]{leeManifolds}) guarantees that the functions $(\alpha_{i|t}\mid t\in\mathcal{I})$ can be `glued together' to an injective $C^1$-map $\alpha_{i} : \bigcup_{t\in\mathcal{I}}\supp(S^i_t)\rightarrow\R$ with $\restr{\alpha_{i}}{\supp{(S_t^i)}} = \restr{\alpha_{i|t}}{\supp{(S_t^i)}}$ for each $t\in\mathcal{I}$, implying that $(h(X_t))_{t\in\mathcal{I}} = \big(P\cdot(\alpha_{\sigma^{-1}(1)}\times\cdots\times\alpha_{\sigma^{-1}(d)})(S_t)\big)_{t\in\mathcal{I}}$ for $P=(\delta_{\sigma(i),j})_{ij}\in\operatorname{P}_{\!d}$, as claimed.     
\end{proof} 

\subsection{Implementation Details for Section \ref{sect:experimentsII}}\label{appendix:sect:numerics_ann}
The following enumeration ($\nu=1,2$) refers to the mixtures $X^{(2)} = f_\nu(S^{(2)})$ considered in Section \ref{sect:experimentsII}.\\[-0.5em]

\noindent
For the case $\nu=1$, we applied as $f_1$ the mixing transformation depicted in Figure \ref{fig:NN2Dou} (leftmost panel), and as the parametrising ANN $\Theta_1$ we chose a feedforward neural network with a two-nodal in- and output layer and two hidden layers consisting of 4 resp.\ 32 neurons with $\tanh$ activation each; the cumulant series \eqref{prop:sig_cums:eq1} was capped at maximal cumulant order $m_1=6$. For the case $\nu=2$, we followed the simulations of \cite{TCL,HYM} in using as a mixing transformation $f_2$ an invertible feedforward-neural network with four-nodal in- and output layers and two four-nodal hidden layers with $\tanh$ activation each, and as the parametrising ANN chose a feedforward network with a four-nodal in- and output layer and one hidden layer of 1024 neurons and uniformly-weighted Leaky ReLU activations; the contrast function \eqref{prop:sig_cums:eq1} was capped at the maximal cumulant order $m_2=5$. For both $\nu=1,2$, the resulting loss functions \eqref{sect:experimentsII:eq2} were optimised using stochastic gradient descent (Adam) with non-vanishing $\ell_2$-penalty.

\section{Proofs and Remarks for Section \ref{sect:consistency}}\label{appendix:sect:proof_section8}
The following subsections make tacit use of the notation introduced in Appendix \ref{sect:expected_signature_moments}.  
\subsection{Proof of Lemma \ref{sect:capping:lem1}}\label{pf:sectcappinglem}
\begin{appendixlemma}\label{sect:capping:lem1}
Let $X$ and $\Theta$ be as described in Assumption \ref{sect:capping:assumptions}. Then the following holds:
\begin{enumerate}[label=\upshape(\roman*)]
\item\label{sect:capping:lem1:it1} the functions $Q, Q_m : \Theta \rightarrow \R$ given in \eqref{sect:capping:qobjectives} are continuous\,; 
\item\label{sect:capping:lem1:it2} the capped objectives $Q_m$ approximate $Q$ uniformly as $m$ goes to infinity, in symbols:  
\begin{equation}
\lim_{m\rightarrow\infty}\|Q - Q_m\|_\Theta\ = \ 0 \qquad \text{ for } \quad \|q\|_\Theta\coloneqq\sup_{\theta\in\Theta}|q(\theta)|\,; 
\end{equation}
\item\label{sect:capping:lem1:it3} if $Q$ is uniquely minimized at $\theta_\star\in\Theta$, i.e.\ such that $Q(\theta)> Q(\theta_\star)$ if $\theta\neq\theta_\star$, then any (`minimising') sequence $(\theta_m^\star)$ in $\Theta$ such that $Q_m(\theta_m^\star)\leq\inf_{\theta\in\Theta}Q_m(\theta) + \eta_m$ for some $\eta_m\geq 0$ with $\lim_{m\rightarrow\infty}\eta_m=0$ a.s., converges to $\theta_\star$ almost surely as $m\rightarrow\infty$.      
\end{enumerate} 
\end{appendixlemma} 
\begin{proof} 
\ref{sect:capping:lem1:it1}\,:\, For $\nu\geq 2$ fixed, consider the function $q_\nu : \Theta\rightarrow V_\nu$ given by 
\begin{equation}\label{sect:capping:lem1:aux1}
q_\nu(\theta)\,\coloneqq\, \sum_{\bm{q}\in\mathfrak{C}_\nu}\kappa_{\bm{q}}\big(\theta(X)\big)\cdot\frac{\bm{q}}{\sqrt{c_{\bm{q}}}}
\end{equation}
with $c_{\bm{q}}$ the number of monomials in $\bm{q}$ (cf.\ Remark \ref{rem:cross_shuffles}). As the sets $\{c_{\bm{q}}^{-1/2}\cdot\bm{q}\mid\bm{q}\in\mathfrak{C}_\nu\}$ are each finite and orthonormal wrt.\ the Euclidean structure on $V_\nu$ (cf.\ \eqref{rem:freealg_identify_tensor}), we have that $Q_m = \sum_{\nu=2}^m\|q_\nu\|_\nu^2$ for each $m\geq 2$ and thus obtain the continuity of $Q_m$ from the continuity of \eqref{sect:capping:lem1:aux1}. To convince ourselves of the latter, fix any index $\bm{q}\in\mathfrak{C}_\nu$ and let $(\theta_j)_{j\in\N}$ be an arbitrary convergent sequence in $\Theta$, with limit $\lim_{j\rightarrow\infty}\theta_j\eqqcolon\tilde{\theta}$. By a classical interpolation inequality (see e.g.\ \citep[Proposition 5.5. (i)]{FVI}) we for any $2>p'>p$ have that    
\begin{equation}\label{sect:capping:lem1:aux2}
\|\tilde{\theta}(X) - \theta_j(X)\|_{p'\text{-$\mathrm{var}$}} \ \leq \ C\cdot \|\tilde{\theta}(X) - \theta_j(X)\|_\infty^{1-p/p'} \ \longrightarrow \ 0 \quad\text{ a.s.}\quad \ (\text{as } \ j\rightarrow\infty) 
\end{equation} 
for the a.s.\ finite (by \eqref{sect:capping:thetax_reg1}) random variable $C\coloneqq 2^{1-p/p'}\sup_{j\geq 1}\!\big[\|\tilde{\theta}(X) - \theta_j(X)\|_{p\text{-$\mathrm{var}$}}\big]^{p/p'}$, where the convergence in \eqref{sect:capping:lem1:aux2} then follows by the compact convergence $\theta_j\rightarrow\tilde{\theta}$ and the fact that almost every realisation of $X$ has a compact trace in $D_X$ (Lemma \ref{lem:spat_supp} \ref{lem:spat_supp:it2}).

Hence by the $p'$-variation continuity of $\sig$ (Lemma \ref{sect:sigmoments:lem1} \ref{sect:sigmoments:lem1:it2}) followed by dominated convergence (cf.\ \eqref{sect:capping:thetax_reg2}) and the fact that $\log_{[\nu]}\equiv\pi_{[\nu]}\circ\log = \log_{[\nu]}\circ\pi_{[\nu]}$ is continuous (Lemma \ref{sect:sigmoments:lem1} \ref{sect:sigmoments:lem1:it3}), we see that the convergence \eqref{sect:capping:lem1:aux2} implies  that, for any $\bm{q}\in\mathfrak{C}_\nu$, 
\begin{equation}\label{sect:capping:lem1:aux3}
\begin{aligned}
\kappa_{\bm{q}}\!\big(\theta_j(X)\big) &\stackrel{\mathrm{def}}{=} \big\langle\log\!\big[\E[\sig(\theta_j(X))]\big],\,\bm{q}\big\rangle \ = \ \big\langle\log_{[\nu]}\!\big[\E\!\big[(\pi_{[\nu]}\circ\sig)\!\big(\theta_j(X)\big)\big]\big],\,\bm{q}\big\rangle\\
&\longrightarrow \ \big\langle\log_{[\nu]}\!\big[\E\!\big[(\pi_{[\nu]}\circ\sig)\!\big(\tilde{\theta}(X)\big)\big]\big],\,\bm{q}\big\rangle \, = \,\kappa_{\bm{q}}\!\big(\tilde{\theta}(X)\big) \quad \text{ as \ } j\rightarrow\infty.
\end{aligned}
\end{equation} 
Since our topology on $\Theta$ is metrizable, the sequential convergence \eqref{sect:capping:lem1:aux3} characterizes the continuity of $\Theta\ni\theta\mapsto\kappa_{\bm{q}}(\theta(X))$, which yields that \eqref{sect:capping:lem1:aux1} (hence $Q_m$) is continuous as desired. The continuity of $Q$ thus follows from assertion (ii) of this lemma, i.e.\ from the claim 
\begin{equation}\label{sect:capping:lem1:aux4}
Q_m \rightarrow Q \quad\text{uniformly \ on \ $\Theta$} \qquad \text{as \ $m\rightarrow\infty$}.
\end{equation}
\ref{sect:capping:lem1:it2}\,:\, To see that \eqref{sect:capping:lem1:aux4} holds, observe that since $\sup_{\theta\in\Theta}\vertiii{\mathfrak{S}(\theta(X))-1}_\lambda\leq 1$ for some $\lambda>2$ by assumption, Lemma \ref{sect:sigmoments:lem1} \ref{sect:sigmoments:lem1:it3} yields that the set $\mathcal{L}\coloneqq\log\!\big(\{\mathfrak{S}(\theta(X))\mid\theta\in\Theta\}\big)\equiv\{\ell(\theta)\mid\theta\in\Theta\}$ of signature cumulants is $\vertiii{\cdot}_\rho$-bounded for some $\rho>1$. Hence by Lemma \ref{sect:sigmoments:lem1} \ref{sect:sigmoments:lem1:it4},          
\begin{equation}\label{sect:capping:lem1:aux5}
\varsigma_m \, \coloneqq \, \sup_{\bm{\ell}\in\mathcal{L}}\sum_{\nu>m}\|\pi_\nu(\bm{\ell})\|_\nu \ \longrightarrow \ 0 \qquad \text{ as } \ m\rightarrow\infty. 
\end{equation} 
Writing now $\bm{q}=\bm{w}_1(\bm{q}) + \ldots + \bm{w}_{c_{\bm{q}}}(\bm{q})$ for the decomposition of a polynomial $\bm{q}\in\mathfrak{C}_\nu$ into its (monic) monomials $\bm{w}_j(\bm{q})\in[d]^\star_\nu$ (cf.\ Remark \ref{rem:cross_shuffles} \ref{rem:cross_shuffles:it2}), we have that for each $\bm{q}\in\mathfrak{C}_\nu$ the monomials $\bm{w}_1(\bm{q}), \ldots, \bm{w}_{c_{\bm{q}}}(\bm{q})$ are pairwise distinct (Rem.\ \ref{rem:cross_shuffles} \ref{rem:cross_shuffles:it2}), and further that the union $\bigcup_{\bm{q}\in\mathfrak{C}_\nu}\{\bm{w}_1(\bm{q}), \ldots, \bm{w}_{c_{\bm{q}}}(\bm{q})\} \subset [d]^\ast_\nu$ is disjoint (Rem.\ \ref{rem:cross_shuffles} \ref{rem:cross_shuffles:it1}). Hence and since we have\footnote{\ To ease notation, we in \eqref{sect:capping:lem1:aux6} drop the argument of the cumulants, i.e.\ denote $\kappa_{q}\equiv\kappa_{q}(\theta\cdot X)$.}  
\begin{equation}\label{sect:capping:lem1:aux6}
\kappa_{\bm{q}}^2 \,\stackrel{\mathrm{def}}{=}\, \left(\kappa_{\bm{w}_1(\bm{q})} + \ldots + \kappa_{\bm{w}_{c_{\bm{q}}}(\bm{q})}\right)^{\!2} \ \leq \ c_{\bm{q}}\cdot\sum_{j=1}^{c_{\bm{q}}}\kappa_{\bm{w}_j(\bm{q})}^2      
\end{equation}  
by the Cauchy-Schwarz inequality, we for each $m\geq 2$ obtain the estimate 
\begin{equation}\label{sect:capping:lem1:aux7}  
\begin{aligned}
\|Q - Q_m\|_\Theta \ &\leq \ \sup_{\theta\in\Theta}\sum_{\nu>m}\sum_{\bm{q}\in\mathfrak{C}_\nu}c_{\bm{q}}^{-1}\cdot\kappa_{\bm{q}}\!\big(\theta(X)\big)^2 \\
&\leq \ \sup_{\theta\in\Theta}\sum_{\nu>m}\sum_{{\bm{w}\in[d]^\ast_\nu}}\kappa_{\bm{w}}\!\big(\theta(X)\big)^2 \ = \ \sup_{\bm{\ell}\in\mathcal{L}}\sum_{\nu>m}\|\pi_\nu(\bm{\ell})\|_\nu^2.
\end{aligned}       
\end{equation} 
Hence, and since $\lim_{\nu\rightarrow\infty}\|\pi_\nu(\bm{\ell})\|_\nu=0$ uniformly on $\mathcal{L}$ by \eqref{sect:capping:lem1:aux5}, there will be an $m_0\geq 2$ such that $\sup_{\bm{\ell}\in\mathcal{L},\,\nu\geq m_0}\|\pi_\nu(\bm{\ell})\|_\nu < 1$ and therefore, by \eqref{sect:capping:lem1:aux7}, $\|Q - Q_m\|_\Theta\leq \varsigma_m$ for all $m\geq m_0$, implying \eqref{sect:capping:lem1:aux4} as claimed. 

\ref{sect:capping:lem1:it3}\,:\, Let $\theta_\star\in\Theta$ be as above, and $\varepsilon>0$ be arbitrary. Since $\Theta$ is compact so is its closed subset\footnote{\ The topology (of compact convergence) on $\Theta$ is metrizable (cf.\ Appendix \ref{subsubsect:theta_metrizable}), and $B_\varepsilon(\theta_\star)$ denotes the open ball of radius $\varepsilon$ defined wrt.\ any applicable metric on $\Theta$.} $C_\varepsilon\coloneqq\Theta\setminus B_\varepsilon(\theta_\star)$, and for $\zeta_\varepsilon\coloneqq\inf_{\theta\in C_\varepsilon}Q(\theta) = Q(\theta_\epsilon) > Q(\theta_\star)$ (for some $\theta_\varepsilon\in C_\varepsilon$; recall that $Q$ is continuous) and any $\theta\in\Theta$ we have the obvious implication that:
\begin{equation}\label{sect:capping:lem1:aux8}
\text{if}\quad Q(\theta)\,<\,\zeta_\varepsilon, \qquad \text{then}\quad \theta\,\in\, B_\varepsilon(\theta_\star). 
\end{equation} 
Let now $(\theta_m^\star)\subset\Theta$ be a minimising sequence of the required kind. As then $Q(\theta_\star)\leq Q(\theta_m^\star)$ and $Q_m(\theta_m^\star)\leq Q_m(\theta_\star) + \eta_m$ for each $m\geq 2$, we find that $Q(\theta_\star)\leq Q_m(\theta_m^\star) + \big(Q(\theta_m^\star) - Q_m(\theta_m^\star)\big) \leq Q_m(\theta_\star) + \big(Q(\theta_m^\star) - Q_m(\theta_m^\star) + \eta_m\big) = Q(\theta_\star) + r_m$ for $r_m\coloneqq \big(Q_m(\theta_\star) - Q(\theta_\star) + Q(\theta_m^\star) - Q_m(\theta_m^\star) + \eta_m\big).$ Hence $Q(\theta_\star)\leq Q(\theta_m^\star) \leq Q(\theta_\star) + r_m$ and therefore
\begin{equation}\label{sect:capping:lem1:aux9}
\lim_{m\rightarrow\infty}|Q(\theta_\star) - Q(\theta_m^\star)| \ \leq\ \lim_{m\rightarrow\infty} r_m \ = \ 0 \qquad (\text{a.s.}), 
\end{equation}
where the last identity is due to the uniform convergence \ref{sect:capping:lem1:it2} (and our assumption on $(\eta_m)$). Consequently $Q(\theta^\star_m)<\zeta_\varepsilon$ for almost all $m$, which in light of \eqref{sect:capping:lem1:aux8} implies that $\theta_m^\star\in B_\varepsilon(\theta_\star)$ for almost all $m$ (a.s.), as desired.             
\end{proof}

\subsection{Linear Interpolation of Discrete-Time Data}\label{rem:pwlinterpol}
Let $\I$ be a compact interval; say $\I=[0,1]$ wlog. Any dissection $\mathcal{I}\equiv\{t_0, \ldots, t_{n-1}\mid t_0 < \ldots < t_{n-1}\}$ of $\I$ can be uniquely assigned the family of \emph{$\mathcal{I}$-centered hat functions} $\tau_0, \ldots, \tau_{n-1}\in C(\I;\R)$ characterised by:
\begin{equation}\label{rem:pwlinterpol:eq2}
\tau_j \ \text{ is \ $\mathcal{I}$-piecewise affine } \qquad \text{and}\qquad \tau_j(t_\nu)\,=\,\delta_{j\nu} \ \text{ for each $\nu\in[n-1]_0$}
\end{equation} 
for all $j\in[n-1]_0$. A path in $\mathcal{C}\equiv C(\I;\R^d)$ will be called \emph{$\mathcal{I}$-piecewise linear} if it lies in 
\begin{equation}\label{rem:pwlinterpol:eq3}
\mathcal{C}_{\mathcal{I}} \ \coloneqq \ \big\{v_0\cdot\tau_0 + \ldots + v_{n-1}\cdot\tau_{n-1} \ \big| \ v_0, \ldots, v_{n-1}\in\R^d\big\}
\end{equation}  
(the `vectorial span' of \eqref{rem:pwlinterpol:eq2}). Clearly, the set \eqref{rem:pwlinterpol:eq3} is a closed linear subspace of $(\mathcal{C}, \|\cdot\|_\infty)$, in fact of $(\mathcal{BV}, \vertiii{\cdot}_{p\text{-$\mathrm{var}$}})$ (cf.\ below), and each element $\hat{x}=(\hat{x}_t)\in\mathcal{C}_{\mathcal{I}}$ is of the form 
\begin{equation}\label{rem:pwlinterpol:eq4}
\hat{x}_t \ = \ \hat{x}_{t_{j-1}} + \ \frac{t - t_{j-1}}{t_j - t_{j-1}}\cdot(\hat{x}_{t_j} - \hat{x}_{t_{j-1}}) \quad \text{ for } \quad t\in[t_{j-1}, t_j] \qquad (j\in[n-1]). 
\end{equation}
The space $\mathcal{C}_{\mathcal{I}}$ is the co-domain of two natural operators, namely the linear projection 
\begin{equation}\label{rem:pwlinterpol:proj}
\hat{\pi}_{\mathcal{I}} \ : \ \mathcal{C} \, \twoheadrightarrow \, \mathcal{C}_{\mathcal{I}}, \quad \hat{\pi}_{\mathcal{I}}(x)\coloneqq x_{t_0}\cdot\tau_0 + \ldots + x_{t_{n-1}}\cdot\tau_{n-1}\equiv \hat{x}_{\mathcal{I}}    
\end{equation} 
as well as the (continuous wrt.\ both $\|\cdot\|_\infty$ and $\vertiii{\cdot}_p$; set $Z\coloneqq\R^d$) linear injection 
\begin{equation}\label{rem:pwlinterpol:inj}
\hat{\iota}_{\mathcal{I}} \ : \ Z^{\times n} \,\hookrightarrow\, \mathcal{C}_{\mathcal{I}}, \quad \hat{\iota}_{\mathcal{I}}(v_0, \ldots, v_{n-1})\coloneqq v_0\cdot\tau_0 + \ldots + v_{n-1}\cdot\tau_{n-1}. 
\end{equation}
It is clear that the linear operator $\hat{\pi}_{\mathcal{I}}$ is bounded on $\mathcal{C}$ with operator norm $\|\hat{\pi}_{\mathcal{I}}\|= 1$. 
\begin{remark}\label{rem:pwlinterpol:rem1}
\begin{enumerate}[label=(\roman*)]
\item  
As any two points in $\R^d$ uniquely determine the affine path-segment that joins them, the $\mathcal{I}$-piecewise linear projection $\hat{x}_{\mathcal{I}}$ of a path $x$ can be seen as the `unbiased continuous-time approximation' of $x$ given the observations $(x_t \mid t\in\mathcal{I})$.\footnote{\ Likewise, the injection \eqref{rem:pwlinterpol:inj} can be seen as the `unbiased $\mathcal{I}$-centered \emph{continuous-time localisation}' of a sequence $(v_1, \ldots, v_n)\in Z^{\times n}$.} 
\item\label{rem:pwlinterpol:rem1:it2} For any $(z_j)_{j\in[n]}\in Z^{\times n}$ and any $\I$-dissection of cardinality $|\mathcal{I}|=n$, \begin{equation}
\big\|\hat{\iota}_{\mathcal{I}}(z_1,\ldots, z_n)\big\|_{1\text{-$\mathrm{var}$}} = \sum_{j=1}^{n-1}|z_{j+1}-z_j| \ \leq \ 2\|(z_1,\ldots,z_n)\|_1.
\end{equation}  
\end{enumerate}    
\end{remark} 
\noindent
Denote by $\mathcal{BV}_p\coloneqq\{x\in\mathcal{C}\mid \eqref{rem:p-varseminorm} \text{ is finite}\}$ the space of all continuous paths of bounded $p$-variation $(p\geq 1)$, and remark that each $\mathcal{BV}_p$ is a Banach space wrt.\ the norm $\vertiii{x}_{p\text{-$\mathrm{var}$}}\coloneqq \|x\|_{p\text{-$\mathrm{var}$}} + |x(0)|$ (e.g.\  \citep[Thm.\ 5.25 (i)]{FVI}). 
\begin{appendixlemma}\label{rem:pwlinterpol:lem}
For $(\mathcal{I}_n)_{n\in\N}$ a refined sequence of dissections of a compact interval $\I$, 
\begin{equation}\label{rem:pwlinterpol:lem:eq1}
\lim_{n\rightarrow\infty}\hat{\pi}_{\mathcal{I}_n} \ = \ \mathrm{id}_{\mathcal{C}} \quad \text{ pointwise on } \ \mathcal{C}(\I;\R^d) 
\end{equation}
where for each argument the above convergence is understood to take place in $(\mathcal{C}, \|\cdot\|_\infty)$. In addition, the family of operators $(\hat{\pi}_{\mathcal{I}_n}\mid n\in\N)$ is equicontinuous, whence in particular the convergence \eqref{rem:pwlinterpol:lem:eq1} is uniform on compact subsets of $\mathcal{C}$. The family of operators  $(\hat{\pi}_{\mathcal{I}_n}\mid n\in\N)$ remains equicontinuous if $(\mathcal{C}, \|\cdot\|_\infty)$ is replaced by $(\mathcal{BV}_p, \vertiii{\cdot}_{p\text{-$\mathrm{var}$}})$ for any $p\geq 1$.     
\end{appendixlemma}  
\begin{proof} 
The pointwise convergence \eqref{rem:pwlinterpol:lem:eq1} is an easy consequence of definition \eqref{rem:pwlinterpol:proj} and the fact that every element of $\mathcal{C}$ is uniformly continuous on $\I$. The equicontinuity of the family of linear operators $\big(\hat{\pi}_{\mathcal{I}_n}\mid n\in\N\big)$ is immediate by the fact that this family is uniformly bounded (by $1$) in the operator norm. That a pointwise convergent sequence of equicontinuous functions on a metric space (with values in a complete metric space) converges uniformly on compact subsets of its domain is a well-known fact from real analysis (e.g.\ \citep[Exercise 7.16]{RUD}). As shown in \citep[Prop.\ 5.20]{FVI}, the operator family $(\hat{\pi}_{\mathcal{I}} : \mathcal{BV}_p\rightarrow\mathcal{BV}_p \mid n\in\N)$ remains uniformly bounded in the operator norm (and hence is equicontinuous) if the latter is defined wrt.\ the $p$-variation norm $\vertiii{\cdot}_{p\text{-$\mathrm{var}$}}$ on the Banach space $\mathcal{BV}_p$.     
\end{proof}

This subsection concludes with a proof that for protocol-indexed discrete time-series \eqref{sect:finsamlim:eq2}, the ergodicity notions of Remark \ref{def:sigergodicity:rem} \ref{def:sigergodicity:rem:it2} and Definition \ref{def:sigergodicity} coincide.\\[-0.5em]  

\noindent
Note to this end that for $n\leq \hat{n}$, we may embed any $\mathcal{I}\equiv\{t_0<\cdots <t_{n-1}\}$ monotonically into $\mathcal{E}_{\hat{n}}$ via  
\begin{equation}\label{def:sigergodicity:rem:eq3}
t_j \, \mapsto \, \hat{t}_j\coloneqq q_j/(\hat{n}-1)\in\mathcal{E}_{\hat{n}} \quad \text{for}\quad q_j\coloneqq\Big\lceil \tfrac{t_j-t_0}{t_{n-1}-t_0}\cdot (\hat{n}-1)\Big\rceil,
\end{equation} 
where $\lceil\cdot\rceil$ is the ceiling function; lifting \eqref{def:sigergodicity:rem:eq3} to a map $\varphi_{\mathcal{I}}^{\hat{n}}\,:\, [t_0, t_{n-1}]\rightarrow [0,1]$ via piecewise-linear extension defines a strictly monotonous continuous injection of intervals. The embedding \eqref{def:sigergodicity:rem:eq3} of $\mathcal{I}$ will be denoted $\mathcal{I}_{\mathcal{E}_{\hat{n}}}\,\big(\!=\{\hat{t}_j\mid j\in[n-1]_0\} = \varphi^{\hat{n}}_{\mathcal{I}}(\mathcal{I})\big)$.\\[-0,5em] 

\noindent
Given $(X, \mathcal{J}_k)$ as in \eqref{sect:finsamlim:eq2} with maximal observation length $\hat{n}_k$, define the \emph{equidistant augmentation of $(X, \mathcal{J}_k)$} as the time-series $\bar{X}^\ast_{\mathcal{J}_k}\coloneqq (\hat{X}_t^\ast)_{t\in\overline{\mathcal{J}}_k}$ given by (for some fixed $q>1$)  
\begin{equation}
\begin{gathered}
\overline{\mathcal{J}}_k \coloneqq \bigsqcup_{\nu\in\N} \overline{\mathcal{I}}_\nu^{(k)} \quad\text{with}\quad \overline{\mathcal{I}}_\nu^{(k)}\coloneqq q(\nu-1) + \mathcal{E}_{\hat{n}_k} \equiv\Big\{\bar{t}^{\,(k|\nu)}_0 < \cdots < \bar{t}^{\,(k|\nu)}_{\hat{n}_k-1}\Big\}, \\
\text{and} \qquad \hat{X}^\ast \coloneqq \sum_{\nu=1}^\infty\hat{\iota}_{q(\nu-1) + \big[\mathcal{I}^{(k)}_\nu\big]_{\mathcal{E}_{{\hat{n}_k}}}}\!\!\!\!\!\!\big(X_s\mid s\in\mathcal{I}_\nu^{(k)}\big)\cdot\mathbbm{1}_{\big[\bar{t}^{\,(k|\nu)}_0\!,\,\bar{t}^{\,(k|\nu)}_{\hat{n}_k - 1}\big]}   
\end{gathered}     
\end{equation}     
(where $\hat{\iota}_{q(\nu-1) + \big[\mathcal{I}^{(k)}_\nu\big]_{\mathcal{E}_{{\hat{n}_k}}}}\!\!\!\!\!\!(X_s\mid s\in\mathcal{I}_\nu^{(k)})$ is the piecewise-linear interpolation \eqref{rem:pwlinterpol:inj} of the observation $(X_t)_{t\in\mathcal{I}_\nu^{(k)}}$ along the $\overline{\mathcal{I}}_\nu^{(k)}$-embedded (via \eqref{def:sigergodicity:rem:eq3}) equidistant dissection $q(\nu-1) + \big[\mathcal{I}^{(k)}_\nu\big]_{\mathcal{E}_{{\hat{n}_k}}}$ of $\big[\bar{t}^{\,(k|\nu)}_0\!,\,\bar{t}^{\,(k|\nu)}_{\hat{n}_k - 1}\big]$). Then the following holds.
\begin{appendixlemma}\label{def:sigergodicity:rem:lem_unnecessary}
A time-series $(X_t)_{t\in\mathcal{J}_k}$ for $\mathcal{J}_k$ as in \eqref{sect:finsamlim:eq2}, is [weakly] $m^{\mathrm{th}}$-order signature ergodic in the sense of \eqref{def:sigergodicity:rem:eq2.1} iff its equidistant augmentation $\bar{X}^\ast_{\mathcal{J}_k}$ is [weakly] $m^{\mathrm{th}}$-order signature ergodic to length $\hat{n}_k$ in the sense of Definition \ref{def:sigergodicity}.   
\end{appendixlemma}   
\begin{proof}
This follows from Lemma \ref{sect:sigmoments:lem1} \ref{sect:sigmoments:lem1:it2.1}. Indeed: Fix any $\nu$. Denoting $\bar{\mathcal{J}}\coloneqq q(\nu-1) + \big[\mathcal{I}^{(k)}_\nu\big]_{\mathcal{E}_{{\hat{n}_k}}}\, \big(=\varphi(\mathcal{I}^{(k)}_\nu)$ for $\varphi\equiv q(\nu-1) + \varphi^{\hat{n}_k}_{\mathcal{I}_\nu^{(k)}}$, with $\varphi^{\hat{n}_k}_{\mathcal{I}_\nu^{(k)}}$ defined as in \eqref{def:sigergodicity:rem:eq3} above$\big)$, note that $\bar{\mathcal{J}}\subseteq\overline{\mathcal{I}}^{(k)}_\nu\eqqcolon\bar{\mathcal{I}}$ and hence $\hat{\pi}_{\bar{\mathcal{J}}} = \hat{\pi}_{\bar{\mathcal{I}}}\circ\hat{\pi}_{\bar{\mathcal{J}}}$ (directly by \eqref{rem:pwlinterpol:proj}). Consequently,    
\begin{equation}\label{def:sigergodicity:rem:lem_unnecessary:aux1}
\hat{X}^\ast_{(\nu)}\coloneqq\restr{\hat{X}^\ast}{\big[\bar{t}^{\,(k|\nu)}_0\!,\,\bar{t}^{\,(k|\nu)}_{\hat{n}_k - 1}\big]} = \hat{\iota}_{\bar{\mathcal{J}}}(X_s\mid s\in\mathcal{I}^{(k)}_\nu) = \hat{\pi}_{\bar{\mathcal{J}}}\big(X_{\varphi^{-1}}\big) = \hat{\pi}_{\bar{\mathcal{I}}}\big(\hat{X}^\ast_{(\nu)}\big)       
\end{equation} 
and therefore, for $\phi$ and $\tilde{\phi}$ as in \eqref{def:sigergodicity:eq2} and \eqref{def:sigergodicity:rem:eq2.1} respectively, and $Y\coloneqq \bar{X}^\ast_{\mathcal{J}_k}$ and $\hat{n}\coloneqq\hat{n}_k$, 
\begin{equation}\label{def:sigergodicity:rem:lem_unnecessary:aux2}
\phi(Y_{(\hat{n}(\nu-1),\hat{n}\nu]}) = \sig_{[m]}\big(\hat{\pi}_{\bar{\mathcal{I}}}\big(\hat{X}^\ast_{(\nu)}\big)\big) = \tilde{\phi}\big(\hat{X}^\ast_{(\nu)}\big).
\end{equation}
Now since $\hat{X}^\ast_{(\nu)} \stackrel{\eqref{def:sigergodicity:rem:lem_unnecessary:aux1}}{=} \hat{\pi}_{\bar{\mathcal{J}}}(X_{\varphi^{-1}}) \stackrel{\eqref{rem:pwlinterpol:proj}}{=} \sum_{t\in\varphi(\mathcal{I})}X_{\varphi^{-1}(t)}\cdot \tau^{[\bar{\mathcal{J}}]}_t = \sum_{s\in\mathcal{I}}X_{\varphi^{-1}(\varphi(s))}\cdot\tau^{[\bar{\mathcal{J}}]}_{\varphi(s)} = \sum_{s\in\mathcal{I}}X_s\cdot\big(\tau_s^{[\mathcal{I}]}\circ\varphi^{-1}\big) = \hat{\pi}_{\mathcal{I}}(X)\circ\varphi^{-1} = \hat{\pi}_{\mathcal{I}}(\hat{X}_{\mathcal{I}})\circ\tilde{\varphi}$ for $\mathcal{I}\coloneqq\mathcal{I}^{(k)}_\nu$ and $\tilde{\varphi}\coloneqq \varphi^{-1}$, we have
\begin{equation}\label{def:sigergodicity:rem:lem_unnecessary:aux3}
\tilde{\phi}\big(\hat{X}^\ast_{(\nu)}\big) = \tilde{\phi}\big(\hat{\pi}_{\mathcal{I}}(\hat{X}_{\mathcal{I}})\big) = \tilde{\phi}\big(\hat{X}_{\mathcal{I}_\nu^{(k)}}\big) 
\end{equation}
where the first of these equalities is due to Lemma \ref{sect:sigmoments:lem1} \ref{sect:sigmoments:lem1:it2.1}. As the above the choice of $\nu$ was arbitrary, we by combination of \eqref{def:sigergodicity:rem:lem_unnecessary:aux2} and \eqref{def:sigergodicity:rem:lem_unnecessary:aux3} obtain that the identities \eqref{def:sigergodicity:eq2} and \eqref{def:sigergodicity:rem:eq2.1} are equivalent. This concludes the proof.                 
\end{proof}   

\subsection{Proof of Lemma \ref{lem:interpolim}} 
\label{pf:interpolim}
\leminterpolim*
\begin{proof}
Recalling that $\mathrm{tr}(X)\subseteq D_X$ with probability one (Lemma \ref{lem:spat_supp} \ref{lem:spat_supp:it2}), notice that
\begin{equation}\label{lem:interpolim:aux1}
\lim_{n\rightarrow\infty}\,\sup_{\theta\in\Theta}\Big\|\hat{X}^\theta_{\mathcal{I}_n} - \theta(X)\Big\|_{\tilde{p}\text{-$\mathrm{var}$}} \ = \ 0 \qquad \text{almost surely} 
\end{equation}
for any $\tilde{p}>p$ with $p\geq 1$ as in \eqref{sect:capping:thetax_reg1}. Indeed: Denoting by $x\coloneqq (X_t(\omega))_{t\in\I}\subseteq D_X$ (inclusion with probab.\ one) a given realisation of $X$, we remark first that (cf.\ Section \ref{rem:pwlinterpol} for notation)
\begin{equation}\label{lem:interpolim:aux2}
\Theta_x \ \coloneqq \ \big\{x^\theta\equiv\big(\theta(x_t)\big)_{\!t\in\I}\ \big| \ \theta\in\Theta\big\} \text{ \ \ is a compact subset of \ \ }(\mathcal{BV}_{\tilde{p}}, \vertiii{\cdot}_{\tilde{p}\text{-$\mathrm{var}$}})
\end{equation}for any $\tilde{p}> p$. To see that \eqref{lem:interpolim:aux2} holds, observe that \citep[Lemma 5.27 (i)]{FVI} (together with \eqref{sect:capping:thetax_reg1}) implies that, for any $\tilde{p}>p$, each of the functions
\begin{equation}\label{lem:interpolim:aux3}
\alpha_n, \, \alpha \ : \ \Theta \rightarrow \mathcal{BV}_{\tilde{p}}, \quad \alpha_n(\theta)\coloneqq\hat{\pi}_{\mathcal{I}_n}\!\big(\theta(x)\big) \ \text{ and } \ \alpha(\theta)\coloneqq \theta(x) \qquad (n\in\N)
\end{equation}    
are continuous. In particular, $\Theta_x=\alpha(\Theta)$ is compact (as continuous image of a compact set). 

In addition, \citep[Lemma 5.27 (i)]{FVI} (by virtue of Lemma \ref{rem:pwlinterpol:lem} \eqref{rem:pwlinterpol:lem:eq1} and \citep[Proposition 5.20 (5.13)]{FVI}) implies that $\lim_{n\rightarrow\infty}\alpha_n=\alpha$ pointwise on $\Theta$. Hence by \eqref{lem:interpolim:aux2} and the last assertion of Lemma \ref{rem:pwlinterpol:lem} (which implies that $(\hat{\pi}_{\mathcal{I}_n})$ converges uniformly on $\Theta_x$; see the proof of Lemma \ref{rem:pwlinterpol:lem} for details), we obtain that $\lim_{n\rightarrow\infty}\alpha_n=\alpha$ uniformly on $\Theta$, which in turn yields \eqref{lem:interpolim:aux1} by the fact that $\hat{X}^\theta_{\mathcal{I}_n}=\hat{\pi}_{\mathcal{I}_n}(\theta(X))$ for each $\theta\in\Theta$.         

Given \eqref{lem:interpolim:aux1} for any fixed $\tilde{p}\in(p,2)$, the $\tilde{p}$-variation continuity of $\mathfrak{sig}$ (Lemma \ref{sect:sigmoments:lem1} \ref{sect:sigmoments:lem1:it2}) together with the equicontinuity of $(\hat{\pi}_{\mathcal{I}_n}:\mathcal{BV}_{\tilde{p}}\rightarrow\mathcal{BV}_{\tilde{p}}\mid n\in\N)$ (Lemma \ref{rem:pwlinterpol:lem}) yields that 
\begin{equation}\label{lem:interpolim:aux4}
\lim_{n\rightarrow\infty}\sup_{\theta\in\Theta}\left\|\sig_m(\hat{X}^\theta_{\mathcal{I}_n}) - \sig_m\!\big(\theta(X)\big)\right\|_m \ = \ 0 \quad \text{almost surely} \qquad (m\in\N).
\end{equation}  
Indeed, the above holds path-wise, with probability one, by Lemma \ref{appendixlem:equicont_uniformconv} (applied to $\Theta$ as above, $B=\mathcal{BV}_{\tilde{p}}$, $V= V_{[m]}$, $\Psi=\sig_m$, $\alpha$ and $\alpha_n$ as in \eqref{lem:interpolim:aux3} and $\tau=\alpha$). 

Thus for $\mathfrak{S}_{m|n}(\theta)\coloneqq\mathbb{E}\big[\sig_m(\hat{X}^\theta_{\mathcal{I}_n})\big]$ and $\mathfrak{S}_{m}(\theta)\coloneqq\mathbb{E}\big[\sig_m\!\big(\theta(X)\big)\big]$ we have that  
\begin{equation}\label{lem:interpolim:aux5}
\lim_{n\rightarrow\infty}\mathfrak{S}_{m|n}(\theta)\ = \ \mathfrak{S}_m(\theta) \quad \ \text{ uniformly on } \ \Theta
\end{equation} 
due to \citep[Theorem 22 (p.~241)]{graves1946} (note that the hypothesis in loc.cit.\ of $(\mathfrak{S}_{m|n})_n$ to be ``absolutely continuous uniformly'' is met in light of \citep[Thm.\ 11 (p.~192)]{graves1946} and assumption \eqref{sect:capping:thetax_reg2}).

Finally, the fact that $\log_{[m]}\equiv\pi_{[m]}\circ\log$ is continuous (Lemma \ref{sect:sigmoments:lem1} \ref{sect:sigmoments:lem1:it3}) together with the uniform convergence \eqref{lem:interpolim:aux5} of $\mathfrak{S}_{[m]|n}\coloneqq\sum_{\nu=0}^m\mathfrak{S}_{\nu|n}$ towards $\mathfrak{S}_{[m]}\coloneqq\sum_{\nu=0}^m\mathfrak{S}_\nu$, yields that 
\begin{equation}\label{lem:interpolim:aux6}
\kappa^{[m]}_n\coloneqq \log_{[m]}\circ\,\mathfrak{S}_{[m]|n} \ \ \stackrel{n\rightarrow\infty}{\longrightarrow} \ \ \log_{[m]}\circ\,\mathfrak{S}_{[m]}\eqqcolon\kappa^{[m]} \quad \ \text{ uniformly on } \ \Theta.  
\end{equation}
In particular, $\kappa_{\bm{q}}(\hat{X}^\theta_{\mathcal{I}_n}) = \langle\kappa^{[m]}_n(\theta),\, \bm{q}\rangle \rightarrow \langle\kappa^{[m]}(\theta),\, \bm{q}\rangle = \kappa_{\bm{q}}(\theta(X))$ uniformly on $\Theta$ for each $\bm{q}\in V_{[m]}$, which by definitions \eqref{sect:interpollim:data:eq1} (of $\widehat{Q}_m$) and \eqref{sect:capping:qobjectives} (of $Q_m$) yields \eqref{lem:interpolim:eq1} as desired.               
\end{proof} 
\begin{appendixlemma}\label{appendixlem:equicont_uniformconv}
Let $\Theta$ be a compact metric space, $B$ and $V$ be Banach spaces, $\Psi : B \rightarrow V$ be a continuous map, and $\alpha,\,\alpha_n, \, \tau : \Theta\rightarrow B$, $n\in\N$, be continuous functions such that
\begin{equation}
\alpha_n = p_n\circ\tau, \ \ \ n\in\N, \quad \text{ with } \quad  \big(p_n : \tau(\Theta) \rightarrow B \ \big| \ n\in\N\big) \ \text{ equicontinuous}  
\end{equation} 
and $\lim_{n\rightarrow\infty}\alpha_n=\alpha$ pointwise on $\Theta$. Then $\lim_{n\rightarrow\infty}\Psi\circ\alpha_n = \Psi\circ\alpha$ uniformly on $\Theta$. 
\end{appendixlemma} 
\begin{proof}
Let $(\theta_n)_{n\in\N}$ be any convergent sequence in $\Theta$, say $\theta_n\rightarrow \theta$ for some $\theta\in\Theta$. Then 
\begin{equation}\label{appendixlem:equicont_uniformconv:aux1}
\lim_{n\rightarrow\infty}\Phi_n(\theta_n) \ = \ \Phi(\theta) \qquad \text{for} \qquad \Phi_n\coloneqq\Psi\circ\alpha_n \ \text{ and } \ \Phi\coloneqq\Psi\circ\alpha,
\end{equation}
since $\|\Phi(\theta) - \Phi_n(\theta_n)\|_V \leq \|\Phi(\theta) - \Phi_n(\theta)\|_V + \|\Phi_n(\theta) - \Phi_n(\theta_n)\|_V $ with $\lim_{n\rightarrow\infty}\|\Phi(\theta) - \Phi_n(\theta)\|_V=0$ and $\lim_{n\rightarrow\infty}\|\Phi_n(\theta) - \Phi_n(\theta_n)\|_V=0$. For the latter convergence, take any $\varepsilon>0$ and let $\delta_1>0$ be such that $\sup_{b\in B_{\delta_1}(\alpha(\theta))}\|\Psi(\alpha(\theta)) - \Psi(b)\|_V\leq\varepsilon$, and $\delta_2>0$ be such that $\rho_n\coloneqq\sup_{b\in B_{\delta_2}(\tau(\theta))\cap\tau(\Theta)}\|p_n(\tau(\theta)) - p_n(b)\|_B \leq \delta_1$ for all $n\in\N$. Taking $n_0\geq 1$ such that $\sup_{n\geq n_0}\|\tau(\theta) - \tau(\theta_n)\|_B\leq\delta_2$ then implies that 
\begin{equation}\label{appendixlem:equicont_uniformconv:aux2} 
\sup_{n\geq n_0}\|\alpha_n(\theta) - \alpha_n(\theta_n)\|_B = \sup_{n\geq n_0}\|p_n(\tau(\theta)) - p_n(\tau(\theta_n))\|_B \ \leq \ \sup_{n\geq n_0}\rho_n \leq \delta_1   
\end{equation} 
and therefore $\sup_{n\geq n_0}\|\Phi_n(\theta) - \Phi_n(\theta_n)\|_V\leq\varepsilon$, as required.   

Conclude by observing that \eqref{appendixlem:equicont_uniformconv:aux1} implies $\Phi_n\rightarrow\Phi$ uniformly on $\Theta$, as desired.

Indeed, assume otherwise that $\Phi_n\nrightarrow \Phi$ uniformly, i.e.\ that there is $\tilde{\varepsilon}>0$ such that 
\begin{equation}\label{appendixlem:equicont_uniformconv:aux3}
\forall\, k\in\N \ : \ \exists\, n_k\in\N_{\geq n} \ \text{ with } \ \sup_{\theta\in\Theta}\|\Phi(\theta) - \Phi_{n_k}(\theta)\|_V > \tilde{\varepsilon}.    
\end{equation} 
Then \eqref{appendixlem:equicont_uniformconv:aux3} informs the choice of a subsequence $(\theta_{n_k})_k\subseteq\Theta$, with $(n_k)_k\subseteq\N$ increasing, s.t.\ 
\begin{equation}\label{appendixlem:equicont_uniformconv:aux4}
\|\Phi(\theta_{n_k}) - \Phi_{n_k}(\theta_{n_k})\|_V \ > \ \tilde{\varepsilon}\quad \text{ for each } \ k\in\N.
\end{equation}As $\Theta$ is compact, we may assume, by passing to a further subsequence if necessary, that this subsequence converges, say to $\tilde{\theta}\in\Theta$. The continuity of $\Phi$ then implies $\lim_{k\rightarrow\infty}\Phi(\theta_{n_k})=\Phi(\tilde{\theta})$, while property \eqref{appendixlem:equicont_uniformconv:aux1} combined with a doubling argument (as in \citep[Sect.\ 3.5*: remark on p.\ 98]{remmert1998}) yields $\lim_{k\rightarrow\infty}\Phi_{n_k}(\theta_{n_k}) = \Phi(\tilde{\theta})$. Hence $\lim_{k\rightarrow\infty}\|\Phi(\theta_{n_k}) - \Phi_{n_k}(\theta_{n_k})\|_V=0$, in contradiction to \eqref{appendixlem:equicont_uniformconv:aux4}.       
\end{proof} 

\subsection{Proof of Proposition \ref{lem:ergodicity_theta}}\label{pf:ergodicity_theta} 
\propergodicitytheta* 
\begin{proof}
For $\tilde{m}\geq 1$ and $\theta\in\Theta$ and $w\in V_{\tilde{m}}$ all arbitrary but fixed, let $\phi=\phi_{\tilde{m}}$ be as in \eqref{def:sigergodicity:eq2} and set $\xi\coloneqq \langle \phi\circ\theta^{\times n}, \,w\rangle$. Set further $\hat{\xi}_T(z)\coloneqq\frac{1}{T}\sum_{j=1}^T\xi(z_{n(j-1)+1}, \ldots, z_{nj})$ for any given sequence $z=(z_\nu)_{\nu\in\N}$ in $\R^d$. The lemma then asserts that, under the given integrability and ergodicity conditions, 
\begin{equation}\label{lem:ergodicity_theta:aux1}
\E\big[\xi(X_1, \ldots, X_n)\big] \, = \, \lim_{T\rightarrow\infty} \hat{\xi}_T(X_\ast) \quad \text{a.s.\ \quad [resp.\footnotemark \ in probab.]}. 
\end{equation}\footnotetext{\ For simplicity of exposition, we present the case of almost sure convergence first and give the changes necessary for the case of convergence in probability at the end of this proof.}
To see that \eqref{lem:ergodicity_theta:aux1} holds, note first that for $\hat{X}_1\coloneqq\hat{\iota}_{\mathcal{E}_n}(X_1, \ldots, X_n)$ (cf.\ Def.\ \ref{def:sigergodicity} and \eqref{rem:pwlinterpol:inj}),
\begin{equation}\label{lem:ergodicity_theta:aux2}
\xi(X_{[n]}) \, = \, \varphi(\hat{X}_1) \qquad\text{for}\qquad \varphi(x)\coloneqq\left\langle\sig_{[\tilde{m}]}\!\big(\hat{\pi}_{\mathcal{E}_n}\!\big(\tilde{\theta}(x)\big)\big), \, w\right\rangle 
\end{equation}
where $\tilde{\theta}$ is any fixed continuous extension of $\theta$ to $\widehat{D}\coloneqq\mathrm{conv}(D_{X_\ast})$, the convex hull of $D_{X_\ast}$. (Recall that such a $\tilde{\theta}$ exists by Tietze's extension theorem.) Since the function $\varphi : \widehat{\mathcal{BV}}_n\rightarrow \R$ defined by \eqref{lem:ergodicity_theta:aux2} on the compact\footnote{\ By \citep[Prop. 1.7]{FLO} and the facts that: (a) the convex hull operator on $\R^d$ preserves compactness, and (b) the Cartesian product of compact sets is compact (noting that $\widehat{\mathcal{BV}}_n\cong\widehat{D}^{\times n}$).} subset  
\begin{equation}
\widehat{\mathcal{BV}}_n\,\coloneqq\,\Big\{x\in\mathcal{C}_{\mathcal{E}_n} \ \Big| \ x_t\in\widehat{D} \ \text{ for each } \ t\in\mathcal{E}_n\Big\} \ \subset \ \mathcal{BV} \qquad \text{(cf.\ \eqref{rem:pwlinterpol:eq3})} 
\end{equation} 
is continuous (by \citep[Prop.\ 5.20]{FVI} and Lemma \ref{sect:sigmoments:lem1} \ref{sect:sigmoments:lem1:it2}), the universality property of the signature (e.g.\ Lemma \ref{sect:sigmoments:lem1} \ref{sect:sigmoments:lem1:it6}) implies that
there is a sequence $(\bm{\ell}_j)_{j\in\N}$ in $V^\circ$ such that
\begin{equation}\label{lem:ergodicity_theta:aux4}
\varphi \ = \ \lim_{j\rightarrow\infty}\langle\sig(\cdot),\,\bm{\ell}_j\rangle  \qquad \text{in } \ \big(C(\widehat{\mathcal{BV}}_n),\,\|\cdot\|_\infty\big).
\end{equation}   
For $\big(\hat{\E}_T^{(m)}(X_\ast)\big)_{T\in\N}\coloneqq\big(\frac{1}{T}\sum_{\nu=1}^T\sig_{[m]}(\hat{X}_\nu)\big)_{T\in\N}$ with $\hat{X}_\nu\coloneqq\hat{\iota}_{\mathcal{E}_n}(X_{n(\nu-1)+1}, \ldots, X_{n\nu})$, our assumption on $X_\ast$ gives that, for each $m\in\N$,
\begin{equation}\label{lem:ergodicity_theta:aux5}
\E\big[\sig_{[m]}(\hat{X}_1)\big] = \lim_{T\rightarrow\infty}\hat{\E}^{(m)}_T(X_\ast) \quad \text{a.s.\ \quad in \ $\operatorname{conv}(\sig_{[m]}(\widehat{\mathcal{BV}}_n))$}.
\end{equation}    
Hence upon combining \eqref{lem:ergodicity_theta:aux2} and \eqref{lem:ergodicity_theta:aux4}, and using that dominated convergence applies as both sides of \eqref{lem:ergodicity_theta:aux4} are bounded (cf.\ Lemma \ref{sect:sigmoments:lem1} \ref{sect:sigmoments:lem1:it2}), we find that with probability one, 
\begin{equation}\label{lem:ergodicity_theta:aux7}
\begin{aligned}
\E\big[\xi(X_1, \ldots, X_n)\big] &= \lim_{j\rightarrow\infty}\big\langle\E\big[\sig(\hat{X}_1)\big],\,\bm{\ell}_j\big\rangle = \lim_{j\rightarrow\infty}\lim_{T\rightarrow\infty}\big\langle\hat{\E}_T^{(d_{\ell_j})}\!(X_\ast),\,\bm{\ell}_j\big\rangle \\
&=\ \lim_{T\rightarrow\infty}\lim_{j\rightarrow\infty}\frac{1}{T}\,\sum_{\nu=1}^{T}\big\langle\sig_{[d_{\bm{\ell}_j}]}(\hat{X}_\nu), \, \bm{\ell}_j\big\rangle \\
&=\ \lim_{T\rightarrow\infty}\,\frac{1}{T}\sum_{\nu=1}^{T}\lim_{j\rightarrow\infty}\big\langle\sig(\hat{X}_\nu),\,\bm{\ell}_j\big\rangle \, \stackrel{\eqref{lem:ergodicity_theta:aux4}}{=} \, \lim_{T\rightarrow\infty}\,\frac{1}{T}\sum_{\nu=1}^{T}\xi(\hat{X}_\nu),
\end{aligned}   
\end{equation}      
where we denoted $d_{\bm{\ell}}$ for the degree of the index-polynomial $\bm{\ell}\in V^\circ$. Notice that the interchange of limits in the second line of \eqref{lem:ergodicity_theta:aux7} is permissible as the convergence in \eqref{lem:ergodicity_theta:aux4} is uniform (see, e.g., \citep[Theorem 7.11]{rudin1976}). This shows the almost-sure case of \eqref{lem:ergodicity_theta:aux1}.  

To prove that \eqref{lem:ergodicity_theta:aux1} holds in probability if \eqref{lem:ergodicity_theta:aux5} holds in probability for each $m\in\N$ (which is true by assumption if $X_\ast$ is weakly signature ergodic), we resort to a subsequence argument, recalling that (as the topology of weak convergence is metrizable) a sequence converges in probability iff each of its subsequences admits yet another subsequence that converges almost surely. To this end, abbreviate $\mu_{m,T}\coloneqq \hat{\E}_T^{(m)}(X_\ast)$ and assume that
\begin{equation}\label{lem:ergodicity_theta:aux8}
\mu_m\coloneqq\E[\mathfrak{sig}_{[m]}(\hat{X}_1)] \, = \, \lim_{T\rightarrow\infty}\mu_{m,T} \quad \text{in probability} \qquad \text{for each } \ m\in\N.
\end{equation}
Then for any fixed subsequence $(T_k)_{k\in\N}\subset\N$, there is a subsequence $T^{(1)}_k\! < T^{(1)}_{k+1}$ of $(T_k)$ such that $\lim_{k\rightarrow\infty}\mu_{1, T^{(1)}_k}=\mu_1$ almost surely. But since, by \eqref{lem:ergodicity_theta:aux8}, $\lim_{k\rightarrow\infty}\mu_{2,T^{(1)}_k}=\mu_2$ in probability, there will be a subsequence $T^{(2)}_k\! < T^{(2)}_{k+1}$ of $(T^{(1)}_k)$ such that $\lim_{k\rightarrow\infty}\mu_{2, T^{(2)}_k}=\mu_2$ almost surely (thus $\lim_{k\rightarrow\infty}\mu_{1, T^{(2)}_k} = \mu_1$ a.s.\ in particular). Iterating this procedure, Cantor's diagonal trick (e.g.\ \citep[Proof of Theorem I.24]{reedsimon1972}) thus allows for the choice of a subsequence $T_k^{(\infty)}\!\! < T_{k+1}^{(\infty)}$ of $(T_k)$ such that $\lim_{k\rightarrow\infty}\mu_{m,T^{(\infty)}_k}=\mu_m$ almost surely for each $m\in\N$.     

Repeating the above calculation \eqref{lem:ergodicity_theta:aux7} then shows that the subsequence $(\hat{\xi}_{T^{(\infty)}_k}(X_\ast))_{k\in\N}$ of $(\hat{\xi}_{T_k}(X_\ast))_{k\in\N}$ converges almost surely to $\E[\xi(X_{[n]})]$, as desired.    
\end{proof}

\subsection{Proof of Lemma \ref{lem:ergodicity_uniformconv}}\label{pf:ergodicity_uniformconv}
\lemergodicityuniformconv* 
\begin{proof}
Let $Z\coloneqq\R^d$ and $\mathcal{E}_n$ be as in Def.\ \ref{def:sigergodicity}, and for every $\theta\in\Theta$ denote
\begin{equation}\label{lem:ergodicity_uniformconv:aux1}
\xi_\theta \,\coloneqq\,\sig_{[m]}\circ\hat{\iota}_{\mathcal{E}_n}\circ\theta^{\times n} \ : \ Z^{\times n} \, \longrightarrow \, V_{[m]}\cap V_{(1)}.
\end{equation}   
The parametrisation-invariance of $\sig$ (Lemma \ref{sect:sigmoments:lem1} \ref{sect:sigmoments:lem1:it2.1}) gives that 
\begin{equation}\label{lem:ergodicity_uniformconv:aux2}
\hat{\mathfrak{S}}_T^{m|n}\!(\theta) = \frac{1}{T}\sum_{j=1}^T\xi_\theta(\bar{X}_j) \ \eqqcolon \ \hat{\E}_T[\xi_\theta(X_\ast)] \quad\text{ for }\quad \bar{X}_j\coloneqq\big(X_{n(j-1)+1}, \ldots, X_{nj}\big),  
\end{equation}  
and the continuity of $\log_{[m]}$ (Lemma \ref{sect:sigmoments:lem1} \ref{sect:sigmoments:lem1:it3}) yields that \eqref{lem:ergodicity_uniformconv:eq2} follows from the convergence  
\begin{equation}\label{lem:ergodicity_uniformconv:aux3}
\lim_{T\rightarrow\infty}\,\sup_{\theta\in\Theta}\big\|\E[\xi_\theta(\bar{X}_1)] - \hat{\E}_T[\xi_\theta(X_\ast)]\big\|_{[m]} = \ 0 \qquad \text{a.s.\ \ \ [in probab.]}   
\end{equation}
for the norm $\|\cdot\|_{[m]}\coloneqq\sum_{\nu=1}^m\|\cdot\|_\nu$, followed by an application of the continuous mapping theorem (e.g.\ \citep[Theorem 2.3]{vdv1998}). As \eqref{lem:ergodicity_uniformconv:aux3} is equivalent to the coordinatewise convergences  
\begin{equation}\label{lem:ergodicity_uniformconv:aux4}
\lim_{T\rightarrow\infty}\,\sup_{w\in {[d]}^\ast_k}\sup_{\theta\in\Theta}\left|\E[\langle\xi_\theta(\bar{X}_1), w\rangle] - \langle\hat{\E}_T[\xi_\theta(X_\ast)], w\rangle\right| = \ 0 \qquad \text{for } \ k\in[m]   
\end{equation} 
almost surely (resp.\ in prob.), we can see that \eqref{lem:ergodicity_uniformconv:aux4} holds by fixing any $w\in {[d]}^\ast_k$ and showing
\begin{equation}\label{lem:ergodicity_uniformconv:aux5}
\lim_{T\rightarrow\infty}\,\sup_{\theta\in\Theta}\left|\E\big[\tilde{\xi}_\theta(\bar{X}_1)\big] - \hat{\E}_T\!\big[\tilde{\xi}_\theta(X_\ast)\big]\right| = \ 0 \quad [\text{a.s./in prob.}] \quad\text{for} \quad \tilde{\xi}_\theta\coloneqq\langle\xi_\theta,w\rangle
\end{equation}              
and $\hat{\E}_T[\tilde{\xi}_\theta(X_\ast)]\coloneqq \langle\hat{\E}_T[\xi_\theta(X_\ast)], w\rangle = T^{-1}\sum_{j=1}^T\tilde{\xi}_\theta(\bar{X}_j)$. To this end, note that the function 
\begin{equation}
\tilde{\xi} \ : \ Z^{\times n}\times\Theta \longrightarrow \R, \quad (z,\theta) \,\mapsto \, \tilde{\xi}_\theta(z),
\end{equation} 
is continuous in $\theta$ for every $z\in Z^{\times n}$, as is seen directly from \eqref{lem:ergodicity_uniformconv:aux1} (recalling the continuity of $\hat{\iota}_{\mathcal{E}_n}:Z^{\times n}\rightarrow\mathcal{BV}$ (Rem.\ \ref{rem:pwlinterpol:rem1} \ref{rem:pwlinterpol:rem1:it2}) and Lemma \ref{sect:sigmoments:lem1} \ref{sect:sigmoments:lem1:it2}). Also, by assumption, $\Theta$ is compact with $\E[\sup_{\theta\in\Theta}|\tilde{\xi}_\theta(\bar{X}_1)|]<\infty$ and $\hat{\E}_T[\xi_\theta(X_\ast)]\rightarrow\E[\xi_\theta(\bar{X}_1)]$ pointwise, which altogether implies that $\mathcal{F}\coloneqq\{\tilde{\xi}(\cdot,\theta)\mid\theta\in\Theta\}$ is Glivenko-Cantelli via \citep[Lem.\ 6.1, Thm.\ 6.1]{WEL}, i.e.\ that
\begin{equation}\label{lem:ergodicity_uniformconv:aux7}
\lim_{T\rightarrow\infty}\,\sup_{\varphi\in\mathcal{F}}\Bigg|\E[\varphi(\bar{X}_1)] - \frac{1}{T}\sum_{j=1}^T\varphi(\bar{X}_j)\Bigg| \ = \ 0, 
\end{equation}
where the mode of the convergence in \eqref{lem:ergodicity_uniformconv:aux7} (almost surely or in probability) coincides with the mode of the pointwise convergence $\hat{\E}_T[\xi_\theta(X_\ast)]\rightarrow\E[\xi_\theta(\bar{X}_1)]$ on $\Theta$ (cf.\ \citep[(Proof of) Theorem 6.1]{WEL}). As \eqref{lem:ergodicity_uniformconv:aux7} is identical to \eqref{lem:ergodicity_uniformconv:aux5}, the proof is finished.               
\end{proof} 

\subsection{Some Sufficent Conditions for Signature-Ergodicity}\label{pf:sigergodicity1}
Let $X_\ast\equiv(X_j)_{j\in\N}$ be a sequence of $\R^d$-valued random variables.  
\begin{appendixdef}\label{appendixdef:mixing}
The sequence $X_\ast$ is called \emph{$\alpha$-mixing} if for the sub-$\sigma$-algebras $\mathcal{X}_k^\ell\coloneqq\sigma\big(X_\nu\mid k\leq\nu\leq\ell\big)$ it holds that $\lim_{\nu\rightarrow\infty}\alpha_\nu(X_\ast)=0$ for the sequence
\begin{equation}\label{eq:alphamixing}
\alpha_\nu(X_\ast)\ \coloneqq \sup_{A\in\mathscr{X}_1^j,\, B\in \mathscr{X}_{j+\nu}^\infty,\,j\in\mathbb{N}}\big|\mathbb{P}(A\cap B) - \mathbb{P}(A)\mathbb{P}(B)\big|\,,
\end{equation}
and $X_\ast$ is called \emph{$\phi$-mixing} if it holds that $\lim_{\nu\rightarrow\infty}\phi_\nu(X_\ast)=0$ for the sequence
\begin{equation}\label{eq:phimixing}
\phi_\nu(X_\ast)\ \coloneqq \sup_{A\in\mathscr{X}_1^j,\, B\in\mathscr{X}_{j+\nu}^\infty,\, \mathbb{P}(A)>0,\,j\in\mathbb{N}}\big|\mathbb{P}(B\,|\,A) - \mathbb{P}(B)\big|.
\end{equation} 
\end{appendixdef}Note that $\phi$-mixing implies $\alpha$-mixing, and see e.g.\ \cite{BR2} for further information. 
 
\begin{appendixdef}\label{appendixdef:seasonal_increments}
The sequence $X_\ast$ will be said to have \emph{$n$-seasonal increments}, $n\in\N$, if the sequence $\Delta(X_\ast)\coloneqq(X_{j+1}-X_j)_{j\in\N}$ of its increments is suff.\ integrable and such that
\begin{equation}\label{appendixdef:seasonal_increments:eq1}
\Delta(X_\ast)_{[n]} \ \stackrel{\mathrm{d}}{=} \ \Delta(X_\ast)_{(n(j-1):\,nj]} \quad \text{ for each } j\in\N.
\end{equation}
We further say that a time series $(X_j)_{j\in\N}$ has \emph{$(m,n)$-stationary sigmoments} if the $[m]^{\mathrm{th}}$-signature moments of the batches $(X_1, \ldots, X_n), (X_{n+1}, \ldots, X_{2n}), \ldots$ exist and are equal, i.e.\ if for $\phi=\phi_m$ as in \eqref{def:sigergodicity:eq2} we have: $\E[\phi_m(X_{[n]})] = \E[\phi_m(X_{(n(j-1):nj]})]$ for each $j\in\N$.  
\end{appendixdef} 
{\renewcommand\footnote[1]{}}      
\begin{appendixlemma}\label{lem:sigergodicity1}
For $X_\ast\equiv (X_j)_{j\in\N}$ uniformly integrable and $n\in\N$, the following holds.
\begin{enumerate}[label=\upshape(\roman*)]
\item\label{lem:sigergodicity1:it1} If $X_\ast$ is $\alpha$-mixing and has $(m,n)$-stationary sigmoments $(m\in\N)$, then ${X_\ast}$ is $m^{\mathrm{th}}$-order weakly signature-ergodic to length $n$; 
\item\label{lem:sigergodicity1:it2} if $X_\ast$ is $\phi$-mixing with $\sum_{\nu=1}^\infty\phi_{1 + (\nu-1)n}^{1/2}(X_\ast)\tfrac{\log \nu}{\nu}<\infty$ and has $n$-seasonal increments, then $X_\ast$ is signature-ergodic to length $n$.       
\end{enumerate}
The assertions \ref{lem:sigergodicity1:it1} and \ref{lem:sigergodicity1:it2} persist if $X_\ast$ is replaced by $\theta(X_\ast)=(\theta(X_j))_{j\in\N}$ for any measurable $\theta:D_{X_\ast}\rightarrow\R^d$. 
\end{appendixlemma} 
\begin{proof}
Starting from definition \eqref{sect:sigmoments:signature}, a direct calculation yields that for any $\ell_1<\ell_2$,
\begin{equation}\label{lem:sigergodicity1:aux1}
\sig_m(\hat{\iota}_{\mathcal{E}}(X_{\ell_1}, \ldots, X_{\ell_2})) \ = \ \sum_{(i_1,\ldots,i_m)\in (\ell_1:\,\ell_2]^{\times m}}c_{i_1\cdots i_m}\cdot\Delta_{i_1}\otimes\cdots\otimes\Delta_{i_m}   
\end{equation}
for certain $c_{i_1\cdots i_m}\in\R$ and increments $\Delta_j\coloneqq X_j - X_{j-1}$, where $\mathcal{E}\equiv\mathcal{E}_{\ell_1,\ell_2}$ is the equidistant (or any other) $\I$-dissection of cardinality $\ell_2-\ell_1+1$. Let now $n\in\N$ be fixed. If we introduce the shift-map $\vartheta(i)\coloneqq i+n$ (with $\vartheta^0\coloneqq \mathrm{id}$ and $\vartheta^j\coloneqq\vartheta\circ\vartheta^{j-1}$) for convenience and denote 
\begin{equation}
Y_j \ \coloneqq \ \sig_m(\hat{\iota}_{\mathcal{E}}(\theta\cdot X_{n(j-1)+1}, \ldots, \theta\cdot X_{nj})) \qquad (j\in\N) 
\end{equation} 
for brevity, then the above shows that each $Y_j$ is a measurable function of the arguments $X_{\vartheta^{j-1}(1)}, \ldots, X_{\vartheta^{j-1}(n)}$. This in turn implies the inclusion of $\sigma$-algebras 
\begin{equation}
\mathscr{Y}_p^q\coloneqq\sigma(Y_p,\ldots,Y_q) \ \ \subseteq \ \ \sigma(X_{\vartheta^{p-1}(1)}, \ldots, X_{\vartheta^{p-1}(n)}, \ldots, X_{\vartheta^{q-1}(1)}, \ldots, X_{\vartheta^{q-1}(n)})   
\end{equation}
for any  $p\leq q$, whence in particular $\mathscr{Y}_1^j\subseteq\mathscr{X}_1^{\vartheta^{j-1}(n)}$ and $\mathscr{Y}_{j+\nu}^\infty\subseteq\mathscr{X}_{\vartheta^{j+\nu-1}(1)}^\infty$ for all $\nu\in\N$. Since $\vartheta^{j-1}(n)=jn$ and $\vartheta^{j+\nu-1}(1) = jn + \vartheta^{\nu-1}(1)$, we can use Definition \ref{appendixdef:mixing} to for $Y_{\ast}\coloneqq (Y_j)_{j\in\N}$  and $\gamma\in\{\alpha, \phi\}$ conclude that 
\begin{equation}
\gamma_\nu(Y_\ast) \ \leq \ \gamma_{\vartheta^{\nu-1}(1)}(X_\ast) =\gamma_{1 + (\nu-1)n}(X_\ast) \quad\text{ for each $\nu\in\N$},  
\end{equation}
which shows that if $X_\ast$ is $\alpha$-mixing ($\phi$-mixing) then so is $Y_\ast$. 

The proof of statement \ref{lem:sigergodicity1:it1} is finished by a coordinatewise application of the weak law of large numbers for non-stationary $\alpha$-mixing time series given in \citep[Theorem 7.15]{VDV}. 

As to \ref{lem:sigergodicity1:it2}, we note similarly that if $X_\ast$ has $n$-seasonal increments and is $\phi$-mixing at the assumed rate, then $Y_\ast$ is stationary (by \eqref{lem:sigergodicity1:aux1}) and $\phi$-mixing with 
\begin{equation}
\sum_{\nu=1}^\infty\phi^{1/2}_\nu(Y_\ast)\frac{\log\nu}{\nu} \ \leq \ \sum_{\nu=1}^\infty\phi^{1/2}_{1 + (\nu-1)n}(X_\ast)\frac{\log\nu}{\nu} \ < \ \infty\,,
\end{equation}
whence assertion \ref{lem:sigergodicity1:it2} follows from a coordinatewise application of \citep[Corollary 1]{KUC}. 

This proof of the statements \ref{lem:sigergodicity1:it1} and \ref{lem:sigergodicity1:it2} goes through without changes if the sequence $(X_j)_{j\in\N}$ is replaced by $(\theta\cdot X_j)_{j\in\N}$ for any (Borel-)measurable map $\theta:D_{X_\ast}\rightarrow\R^d$.      
\end{proof} 

\subsection{Complementary Remarks and Proofs for Theorem \ref{thm:consistency}}\label{pf:thm:consistency:add}
Throughout this subsection, the setting and notation from the proof of Theorem \ref{thm:consistency} (pp.\ \pageref{thm:consistency:aux1}) applies. 

\subsubsection{The Compact-Open Topology on $\Theta$ is Metrizable}\label{subsubsect:theta_metrizable} Since $D_X$ is a closed subset of $\R^d$, there are $\{K_\nu\}\subseteq D_X$ compact with $K_\nu\subseteq K_{\nu+1}$ and $D_X = \bigcup_{\nu\in\N_0}K_\nu$, and the topology of compact convergence on $C(D_X;\R^d)$ coincides with the compact-open topology on $C(D_X; \R^d)$, e.g.\ \citep[Theorem 46.8]{munkres2000}. Defining $\|\theta\|_K\coloneqq\sup_{u\in K}|\theta(u)|$, this topology is induced by the metric (see, e.g., \citep[Proposition VII.1.6]{conway1978complexvariable})
\begin{equation}\label{pf:thm:consistency:add:eq1}
\tilde{d}(\theta, \tilde{\theta})\,\coloneqq\, \sum_{\nu=0}^\infty 2^{-\nu}\rho_\nu(\theta,\tilde{\theta}) \quad \text{with} \quad \rho_\nu(\theta,\tilde{\theta})\coloneqq \frac{\|\theta-\tilde{\theta}\|_{K_\nu}}{1 + \|\theta - \tilde{\theta}\|_{K_\nu}}\,;   
\end{equation}    
we choose $K_\nu\coloneqq \overline{B_{r_\nu}(0)}$ for any $r_\nu\uparrow\infty$ monotonously with $r_0\coloneqq 0$ for convenience.\\[-0.5em]

Note that the metrics (on $C(K_\nu;\R^d)$) $\rho_\nu$ and $d_\nu(\theta,\tilde{\theta})\coloneqq\|\theta-\tilde{\theta}\|_{K_\nu}$ are equivalent for all $\nu\in\N_0$. Specifically, for each $\nu\in\N_0$ we have $d_\nu(\theta, \tilde{\theta}) \leq d_{\nu+1}(\theta, \tilde{\theta})$ for any $\theta, \tilde{\theta}\in\Theta$, and
\begin{equation}\label{pf:thm:consistency:add:eq1.1}
d_\nu(\theta, \tilde{\theta}) \, \leq \, 2\rho_\nu(\theta, \tilde{\theta}) \qquad \text{if}\quad \rho_\nu(\theta,\tilde{\theta})\leq\tfrac{1}{2}. 
\end{equation} 
For $\eta=d_\nu, \rho_\nu, \tilde{d}$, denote $B^\eta_r(\theta_\ast)\coloneqq\{\theta\in C(D_X;\R^d)\mid \eta(\theta, \theta_\ast) < r\}$ and $\tilde{B}_r(\theta_\ast)\coloneqq B_r^{\tilde{d}}(\theta_\ast)$.\\      

Below are the proofs of Theorem \ref{thm:consistency} for the cases $(X, \mathcal{J})$ ergodic resp.\ weakly ergodic.         
\subsubsection{Proof of Theorem \ref{thm:consistency} for Ergodic Observations}
Let $(\tilde{X}, \mathcal{J})$ be an ergodic observation of $X$, where now $D_X$ is not necessarily compact. In this case, Theorem \ref{thm:consistency} asserts that each $\varepsilon>0$ comes with a $\mathbb{P}$-full set $\tilde{\Omega}_\varepsilon\in\mathscr{F}$ such that for each $\omega\in\tilde{\Omega}_\varepsilon$ the following holds:     
\begin{equation}\label{pf:thm:consistency:add:eq2.1}
\begin{gathered}
\exists\, m_0\equiv m_0(\omega)\geq 2 \ : \ \forall\, m\geq m_0 \ \text{ there is } \ k_0\equiv k_0(m)\in\N \ \text{ s.t.\ } \ \forall\, k\geq k_0:\\
\lim_{\tau\rightarrow\infty}\max\left\{\sup_{T\geq\tau}\Big[\mathrm{dist}_{\|\cdot\|_\infty}\!\big(\hat{\theta}^\star_T( X(\omega)),\,\mathrm{DP}_d\cdot S(\omega)\big)\Big], \, \varepsilon\right\} \ = \ \varepsilon\\
\text{for any } \ (\theta^\star_T)_{T\in\N}\equiv(\theta^\star_T(m, k, \omega))_{T\in\N}\equiv(\theta^{m|k}_T(\omega))_{T\in\N}\subset\Theta \ \text{ as in \eqref{thm:consistency:eq1}.} 
\end{gathered}
\end{equation} 
The above proof of \eqref{thm:consistency:aux1}, which did not involve any compactness assumption on $D_X$, remains valid without any changes, so that \eqref{pf:thm:consistency:add:eq2.1} holds if it can be derived from \eqref{thm:consistency:aux1}.
 
To this end, let $\Omega''\in\mathscr{F}$ be the $\mathbb{P}$-full set on which the traces of $X$ are all contained in $D_X$ and \eqref{thm:optimisation:eq1} holds, and for each $n\in\N$ denote by $\Omega_n$ the $\mathbb{P}$-full set on which \eqref{thm:consistency:aux1} holds for $\tepsilon=\tfrac{1}{n}$. Set $\tilde{\Omega}\coloneqq\Omega''\cap\bigcap_{n\in\N}\Omega_n$ (another $\mathbb{P}$-full set) and let $\varepsilon>0$ be arbitrary. Take any $\omega\in\tilde{\Omega}$. Then $\mathrm{tr}(X(\omega))\subset K_{\nu_0}$ for some $\nu_0\in\N$, whence for any $n_0\in\N$ with $n_0^{-1}\leq 2^{-\nu_0}\varepsilon/4$ we have
\begin{equation}
\begin{gathered}
\exists\, m_0\equiv m_0(n_0)\geq 2 \ : \ \forall\, m\geq m_0 \ \text{ there is } \ k_0\equiv k_0(m)\in\N \ \text{ s.t.\ } \ \forall\, k\geq k_0: \\
\alpha^{m|k}(\omega) \, \leq \, n_0^{-1} \quad \text{and hence}\quad \sup\nolimits_{T\geq\tau_0}\big\|\theta^{m|k}_T(\omega)(X(\omega)) - \theta_T(X(\omega))\big\|_\infty \,\leq\, \varepsilon   
\end{gathered}
\end{equation}               
(for some $\tau_0\,(\equiv\tau_0(\omega))\in\N$ and some $(\theta_T)_{T\in\N}\,(\equiv(\theta_T(\omega))_{T\in\N})\subset\Theta_\star$) by the exact same argumentation that led us to \eqref{thm:consistency:aux2.2}. \hfill $\square$   

\subsubsection{Proof of Theorem \ref{thm:consistency} for Weakly Ergodic Observations} Let $(\tilde{X}, \mathcal{J})$ be a weakly ergodic observation of $X$. Adopting the setting and notation from pp.~\pageref{thm:consistency} f., suppose now that  
\begin{equation}\label{pf:thm:consistency:add:eq4}
\begin{gathered}
\forall\,\tepsilon>0\,:\, \exists\, m_0\geq 2\,:\, \text{for each } m\geq m_0 \text{ there is } k_0\equiv k_0(m) \text{ such that\,:}\\
\lim_{\tau\rightarrow\infty}\alpha_\tau^{m|k} \vee \tepsilon = \tepsilon \quad \text{in probability,\footnotemark \ \ \ for each } k\geq k_0.    
\end{gathered} 
\end{equation}\footnotetext{\ Remark that the (usual) notion of convergence in probability is well-defined for $\Theta$-valued random variables since the topology of compact convergence on $\Theta$ is metrizable, second-countable (e.g.\ \cite{mccoy1978}) and, hence, separable.}Spelled out, \eqref{pf:thm:consistency:add:eq4} implies that for any given $(\varepsilon', \delta')\in(0,\infty)^2$ and $(m,k)\,(\equiv(m,k)_{\tepsilon}$ as in \eqref{pf:thm:consistency:add:eq4} for $\tepsilon\coloneqq\varepsilon'/2$) 
\begin{equation}\label{pf:thm:consistency:add:eq4.1}
\text{there is } \ \ \tau_*\equiv\tau_*(\varepsilon',\delta')\in\N \quad \text{such that} \quad \sup\nolimits_{\tau\geq\tau_*}\mathbb{P}(\alpha_\tau^{m|k}\geq\varepsilon')\,\leq\,\delta'.
\end{equation} 
(Indeed: for any $m,k$ as in \eqref{pf:thm:consistency:add:eq4} with $\tepsilon\coloneqq\varepsilon'/2$, it holds $\mathbb{P}(\alpha^{m|k}_\tau\geq\varepsilon') \leq \mathbb{P}((\alpha^{m|k}_\tau\vee\tfrac{\varepsilon'}{2}) \geq \varepsilon') = \mathbb{P}(|(\alpha^{m|k}_\tau\vee\tfrac{\varepsilon'}{2}) - \tfrac{\varepsilon'}{2}|\geq\tfrac{\varepsilon'}{2})\rightarrow 0$ as $\tau\rightarrow\infty$.)             
In particular, for any given $\p\equiv(\varepsilon, \delta)\in(0,\infty)^2$ there will be $m_{\p}\in\N$ such that for every $m'\geq m_{\p}$ there is $k_\p\equiv k_\p(m')$ with the property that: for any $k'\geq k_\p$ there is $\tau_\p'\equiv\tau_\p'(k')\in\N$ with  
\begin{equation}\label{pf:thm:consistency:add:eq4.2}
\varrho_{\p} \, \coloneqq \, \sup\nolimits_{\tau\geq\tau_\p'}\mathbb{P}\big(\sup\nolimits_{T\geq\tau}\mathrm{dist}_{\|\cdot\|_\infty}\!\big(\theta^{m'|k'}_T(X), \Theta_\star\cdot X\big)\geq\varepsilon\big) \ \leq \ \delta,    
\end{equation}
which due to $\sup_{\tau\geq\tau_{\p'}}\mathbb{P}\big(\sup_{T\geq\tau}\mathrm{dist}_{\|\cdot\|_\infty}\!(\theta^{m'|k'}_T\!(X), \mathrm{DP}_d\cdot S)\geq\varepsilon\big)\leq\varrho_{\p}$ implies that the asserted convergence \eqref{thm:consistency:eq2} holds in probability. To see that \eqref{pf:thm:consistency:add:eq4.2} holds, fix $\varepsilon, \delta>0$ and note
\begin{equation}\label{pf:thm:consistency:add:eq5}
\Big\{\sup\nolimits_{T\geq\tau}\mathrm{dist}_{\|\cdot\|_\infty}\!\big(\theta_T(X), \Theta_\star\cdot X\big)\geq\varepsilon\Big\}\cap\Omega' \ \ \subseteq \ \ \bigcup\nolimits_{\nu\in\N}A_\nu^{\hat{\theta}_\tau}\cap B_\nu 
\end{equation}   
for any given sequence $\hat{\theta}\equiv(\theta_T)$ of $\Theta$-valued random variables, $\tau\in\N$, and for the events\footnote{\ As the functions $\varphi_\theta : \tilde{\theta}\mapsto d_\nu(\tilde{\theta},\theta)$ are continuous, their infimum $\varphi(\tilde{\theta})\coloneqq\inf_{\theta\in\Theta_\star}\varphi_\theta(\tilde{\theta}) = d_\nu(\tilde{\theta},\Theta_\ast)$ is upper semicontinuous and hence Borel-measurable, whence the sets $A_\nu^{\hat{\theta}_\tau} = \{\sup_{T\geq\tau}\varphi(\theta_T)\geq\varepsilon\}$ are measurable. As $X$ has continuous realisations, we have $\sup_{t\in\I}|X_t| = \sup_{t\in\I\cap\mathbb{Q}}|X_t|$ so that $B_\nu^\theta$ is measurable.} $A^{\hat{\theta}_\tau}_\nu\coloneqq\big\{\sup_{T\geq\tau}d_\nu(\theta_T, \Theta_\star)\geq\varepsilon\big\}$ and $B_\nu\coloneqq\{\sup_{t\in\I}|X_t|\geq r_{\nu-1}\}$, where $r_\nu$ denotes the radius of the $0$-centered closed ball $K_\nu$. Noting that $A^{\hat{\theta}_\tau}_\nu \subseteq A^{\hat{\theta}_\tau}_{\nu+1}$ and $B_{\nu+1}\subseteq B_\nu$ for all $\nu\in\N$, we from  \eqref{pf:thm:consistency:add:eq5} obtain that        
\begin{equation}\label{pf:thm:consistency:add:eq6}
\mathbb{P}\big(\sup\nolimits_{T\geq\tau}\mathrm{dist}_{\|\cdot\|_\infty}\!\big(\theta_T(X), \Theta_\star\cdot X\big)\geq\varepsilon\big) \ \leq \ \mathbb{P}(A_{\nu_0}^{\hat{\theta}_\tau}) \ + \ \mathbb{P}(B_{\nu_0+1}) 
\end{equation}  
for any fixed $\nu_0\in\N$. Denoting $\mu_X\coloneqq\E[\sup_{t\in\I}|X_t|]$, Markov's inequality implies that 
\begin{equation}\label{pf:thm:consistency:add:eq7}
\mathbb{P}(B_{\nu_0+1}) \ \leq \ \frac{\mu_X}{r_{\nu_0}} \quad \longrightarrow \quad 0 \qquad (\nu_0\rightarrow\infty), 
\end{equation}  
while \eqref{pf:thm:consistency:add:eq1.1} implies $B^{d_\nu}_\varepsilon(\theta) \supseteq B^{\rho_\nu}_{\varepsilon/2}(\theta)$ for each $\theta\in\Theta$ (if $\varepsilon<1$, assumable wlog) and hence
\begin{equation}\label{pf:thm:consistency:add:eq8}
\mathbb{P}(A_{\nu_0}^{\hat{\theta}_\tau}) \ \leq \ \mathbb{P}(\sup\nolimits_{T\geq\tau}\rho_{\nu_0}(\theta_T, \Theta_\star) \geq \varepsilon/2) \ \leq \ \mathbb{P}(\sup\nolimits_{T\geq\tau}\tilde{d}(\theta_T, \Theta_\ast)\geq 2^{-\nu_0}\varepsilon/2).
\end{equation}     
Given \eqref{pf:thm:consistency:add:eq7} and \eqref{pf:thm:consistency:add:eq8}, we may now fix an $\nu_0\in\N$ large enough such that $\mathbb{P}(B_{\nu_0+1})\leq \delta/2$, and for this choice of $\nu_0$ obtain an $m_\p\in\N$, as guaranteed by \eqref{pf:thm:consistency:add:eq4} for $\tepsilon=\tepsilon_\star$ with $\tepsilon_\star\coloneqq 2^{-\nu_0}\varepsilon/4$, such that for every $m\geq m_\p$ there is $k_\p\,(\equiv k_\p(m))$ with the property that: for any $k\geq k_\p$ there is $\tau_\ast\equiv\tau_\ast(2\tepsilon_\star, \delta/2)\in\N$, as guaranteed by \eqref{pf:thm:consistency:add:eq4.1}, such that 
\begin{equation}
\sup\nolimits_{\tau\geq\tau_\ast}\!\mathbb{P}(A^{\hat{\theta}_\tau}_{\nu_0}) \stackrel{\eqref{pf:thm:consistency:add:eq8}}{\leq} \sup\nolimits_{\tau\geq\tau_\ast}\!\mathbb{P}(\alpha_\tau^{m|k}\geq \tepsilon_\star) \ \leq \ \delta/2 \qquad\text{for } \ \hat{\theta}=(\theta^{m|k}_T).    
\end{equation} 
Taken altogether, the estimate \eqref{pf:thm:consistency:add:eq6} then allows us to conclude that  
\begin{equation}
\sup\nolimits_{\tau\geq\tau_\ast}\mathbb{P}(\sup\nolimits_{T\geq\tau}\mathrm{dist}_{\|\cdot\|_\infty}\!\big(\theta_T(X), \Theta_\star\cdot X\big)\geq\varepsilon) \ \leq \ \delta/2 + \delta/2 \ \leq \ \delta,
\end{equation}
which (via \eqref{thm:consistency:aux_new2} and Thm.\ \ref{thm:optimisation}) yields the desired conclusion \eqref{thm:consistency:eq2} for the weakly ergodic case. 

It hence remains to prove \eqref{pf:thm:consistency:add:eq4}, for which we may follow the previous lines of pp.~\pageref{thm:consistency:aux3} with only slight adaptations. Indeed: Since in the weakly ergodic case the $\Theta$-uniform estimator convergence \eqref{lem:ergodicity_uniformconv:eq2} holds in probability, we obtain -- by way of the very same argumentation as for \eqref{thm:consistency:aux7} -- that
\begin{equation}\label{pf:thm:consistency:add:eq11}
\lim_{T\rightarrow\infty}\bar{\kappa}_{m,k}(\theta^\star_T) \, = \, 0 \quad \text{in probability}, \quad \text{with} \ \ (\theta^\star_T)\equiv(\theta^{m|k}_T)
\end{equation}
as in \eqref{thm:consistency:eq1} for $m,k$ as in \eqref{thm:consistency:aux3} for some (arbitrary but) fixed $\tepsilon>0$. From this we obtain that in the present context, the convergence \eqref{thm:consistency:aux6} holds in probability. Indeed: Assuming otherwise implies the existence of $\varepsilon_0, \delta_0>0$ such that
\begin{equation}\label{pf:thm:consistency:add:eq12}
\mathbb{P}(\mathrm{dist}(\theta^\star_{T_j}, \mathcal{M})\geq \varepsilon_0) \,\geq\, \delta_0 \quad \text{ for each $j\in\N$}, 
\end{equation}
for some sequence $(T_j)_{j\in\N}\subset\N$. As a subsequence of $(\bar{\kappa}_{m,k}(\theta^\star_T))_{T\in\N}$, we by way of \eqref{pf:thm:consistency:add:eq11} find that $(\bar{\kappa}_{m,k}(\theta^\star_{T_j}))_{j\in\N}$ converges to $0$ in probability, whence there is yet another subsequence $(T_{j_\ell})_{\ell\in\N}$ of $(T_j)_{j\in\N}$ such that $\lim_{\ell\rightarrow\infty}\bar{\kappa}_{m,k}(\theta^\star_{T_{j_\ell}})=0$ almost surely. Applying the (essentially) same argument which brought `$\eqref{thm:consistency:aux7} \Rightarrow\eqref{thm:consistency:aux6}$' now yields that $\lim_{\ell\rightarrow\infty}\mathrm{dist}(\theta^\star_{T_{j_\ell}}, \mathcal{M})=0$ almost surely and hence in probability, contradicting \eqref{pf:thm:consistency:add:eq12}.              

\noindent
As this proves $\lim_{\tau\rightarrow\infty}\sup\nolimits_{T\geq\tau}\mathrm{dist}(\theta^{m|k}_T\!,\, \mathcal{M})=0$ in probability, we for any $\epsilon>0$ obtain 
\begin{equation}
\begin{aligned}
\mathbb{P}\big(|\alpha^{m|k}_\tau\vee\tepsilon - \tepsilon|\geq\epsilon\big) \ \leq \ \mathbb{P}\big(\alpha^{m|k}_\tau\geq\tepsilon\big) \ \leq \ \mathbb{P}\big(\sup\nolimits_{T\geq\tau}\mathrm{dist}(\theta^{m|k}_T\!,\, \mathcal{M})\geq \tepsilon/2\big) \ \rightarrow \ 0  
\end{aligned}  
\end{equation}    
as $\tau\rightarrow\infty$, where the last inequality is due to \eqref{thm:consistency:aux5}. This shows \eqref{pf:thm:consistency:add:eq4} as required.
\hfill $\square$

\section{A `Moment-Like' Coordinate Description for the Laws of Stochastic Processes}\label{sect:expected_signature_moments}
\subsection{The Expected Signature: A Coordinate Vector for Stochastic Processes}\label{sect:subsect:sigmoments}Many results in statistics, including Corollary \ref{cor:Comon} via \eqref{intext:Comon:CF}, are based on the well-known fact that the distribution of a random vector $Z=(Z^1, \cdots, Z^d)$ in $\R^d$ can be characterised by a set of coordinates with respect to a basis of nonlinear functionals on $\R^d$. More specifically, any such vector $Z$ can be assigned its `moment coordinates' $\left(\mathfrak{m}_{\bm{i}}(Z)\right)_{\bm{i}\in[d]^\star}\subset\bar{\R}$ defined by 
\begin{align}\label{sect:classicmoments:eq2}
\mathfrak{m}_{i_1\cdots i_m}(Z) \ &\coloneqq \ \mathbb{E}\left[Z^{i_1}\cdots Z^{i_m}\right] \ = \ \int_{\R^d}\!x_{i_1}\cdots x_{i_m}\,\mathbb{P}_Z(\mathrm{d}x). 
\end{align}
As the linear span of the monomials $\{x_{\bm{i}}\equiv x_{i_1}\cdots x_{i_m}\mid \bm{i}\equiv(i_1,\ldots,i_m)\in[d]^\star\}$ is uniformly dense in the spaces of continuous functions over compact subsets of $\R^d$, the coordinatisation  \eqref{sect:classicmoments:eq2} is faithful in the sense that, under certain conditions \cite{KLS}, the (coefficients of) the moment vector $\mathfrak{m}(Z)\equiv(m_{\bm{i}}(Z))_{\bm{i}\in[d]^\star}$ determine the distribution of $Z$ uniquely.\\     

\noindent
Now, if instead of a random vector in $\R^d$ one seeks to find a convenient coordinatisation for the distribution of a stochastic process $Y$ in $\R^d$, i.e.\ a random path in $\mathcal{C}_d$, then one can -- perhaps surprisingly -- resort to a natural generalisation of \eqref{sect:classicmoments:eq2}, which is known as the expected signature of $Y$: Analogous to how the monomials $\{x_{\bm{i}}\mid \bm{i}\in[d]^\star\}$ are a basis\footnote{\ Cf.\ the trivial fact the monomials $x_1=\langle\cdot\,, e_1\rangle, \ldots, x_d=\langle \cdot\,,e_d\rangle$ determine each vector in $\R^d$ uniquely.} of nonlinear functionals on $\R^d$ that provides coordinates $(\mathfrak{m}_{i_1\cdots i_m}\mid\eqref{sect:classicmoments:eq2})\subset\bar{\R}$ for a random vector in $\R^d$, there is a basis $\{\chi_{i_1\cdots i_m} \mid (i_1,\ldots, i_m)\in[d]^\star\}$ of nonlinear functionals on (regular enough subspaces of) $\mathcal{C}_d$ which provides coordinates $(\sigma_{\bm{i}})_{\bm{i}\in[d]^\star}\subset\bar{\R}$ for (the law of) a random path $Y$ in $\mathcal{C}^d$. This path-space basis is defined as follows:\\[-0.5em] 

\noindent
Given a path $x=(x^1_t, \cdots, x^d_t)_{t\in\I}\in\mathcal{C}_d$ of bounded variation in $\R^d$ (assuming $\I=[0,1]$ wlog), consider the noncommutative moments of $x$, that is the iterated Stieltjes-integrals
\begin{equation}\label{sect:sigmoments:eq1}
\chi_{i_1\cdots i_m}(x) \ \coloneqq \ \int_{0\leq t_1 \leq t_2 \leq \cdots \leq t_m \leq 1}\mathrm{d}x^{i_1}_{t_1}\mathrm{d}x^{i_2}_{t_2}\cdots\mathrm{d}x^{i_m}_{t_m}, \qquad (i_1,\ldots, i_m)\in[d]^\star, 
\end{equation}  
of $(x^{i_1}, \cdots, x^{i_m})\in\mathcal{C}_m$ over the standard $m$-simplex $\{(t_1,\ldots,t_m)\in\I^m\mid t_1\leq\cdots\leq t_m\}$ (for $\epsilon$ the empty index in $[d]^\star$, we set $\chi_\epsilon\equiv 1$).\footnote{\ For $x$ defined on a general compact interval $\I\subset\R$, set $\chi_{i_1\cdots i_m}(x)\coloneqq |\I|^{-m}\int_{\Delta_m(\I)}\mathrm{d}x^{i_1}_{t_1}\mathrm{d}x^{i_2}_{t_2}\cdots\mathrm{d}x^{i_m}_{t_m}$, with the $m$-simplex $\Delta_m(\I)$ over $\I$ defined as above.} Then, the nonlinear functionals $x \mapsto \chi_{i_1\cdots i_m}(x)$ define a dual basis for the vector $x\in\mathcal{C}_d$, in the sense that the coefficients $(\chi_{i_1\cdots i_m}(x)\mid\eqref{sect:sigmoments:eq1})$ determine the path $x$ uniquely; see \cite{chen1977iterated,fliess1981, HLY}.\footnote{\ Up to a negligible indeterminacy known as `tree-like equivalence', see \cite{HLY}.} The resulting family of coordinates
\begin{equation}\label{sect:sigmoments:signature}
\mathfrak{sig}(x) \ \coloneqq \ \big(\chi_{i_1\cdots i_m}(x) \ \big| \ (i_1,\ldots,i_d)\in[d]^\star\big)
\end{equation}
is known as the \emph{signature} of the path $x$.   

\begin{remark}\label{rem:sig_basis}
Similarly still to the monomial dual basis $\{x_{i_1}\cdots x_{i_m}\}$ on $\R^d$, the linear span of the above functionals $\{\chi_{i_1\cdots i_m}\mid\eqref{sect:sigmoments:eq1}\}$ is closed under pointwise multiplication and hence forms an algebra over the space of applicable paths in $\mathcal{C}_d$, from which one obtains that their linear span is uniformly dense in the space of continuous functions over (certain) compact subsets of $\mathcal{C}_d$ (Stone-Weierstrass), see e.g.\ \citep[Thm.\ 2.15]{FLO} (and Lemma \ref{sect:sigmoments:lem1} \ref{sect:sigmoments:lem1:it6}).
\end{remark} 
\noindent
As a consequence of Remark \ref{rem:sig_basis},\footnote{\ Recall that by Riesz representation theorem, a (signed) Borel measure on a compact metric space $K$ acts as a continuous linear functional over the space $C(K)$ of continuous functions on $K$ and is hence uniquely determined by its (dual) functional action on a dense subset of $C(K)$.} one can infer in analogy to \eqref{sect:classicmoments:eq2} that the dual coefficients 
\begin{equation} 
\begin{aligned}\label{sect:sigmoments:eq2}
\sigma_{i_1\cdots i_m}(Y) \ &\coloneqq \ \int_{\mathcal{C}_d}\!\chi_{i_1\cdots i_m}(x)\,\mathbb{P}_Y(\mathrm{d}x) \qquad \qquad (i_1,\ldots, i_m\in[d], \ m\geq 0) \\
 &\phantom{:}= \ \mathbb{E}\!\left[\int_{{0\leq t_1 \leq t_2 \leq \cdots \leq t_m \leq 1}}\!\mathrm{d}Y^{i_1}_{t_1}\mathrm{d}Y^{i_2}_{t_2}\cdots\mathrm{d}Y^{i_m}_{t_m}\right]
\end{aligned}
\end{equation} 
define a complete set of coordinates for the distribution of a stochastic process $Y=(Y^1_t, \cdots, Y^d_t)_{t\in\I}$ in $\R^d$ that has compact support (and sample paths of bounded variation).\\[-0.5em]

\noindent
The signature-based coordinatisation \eqref{sect:sigmoments:eq2} of a random path in $\mathcal{C}_d$ can thus be regarded as a natural generalisation of the moment-based coordinatisation \eqref{sect:classicmoments:eq2} of a random vector in $\R^d$.\\[-0.5em]

\noindent
The assumption of compact support is of course much too restrictive on a non-locally compact space like $\mathcal{C}_d$, but under additional decay conditions \cite{CHL} or by using a normalization \citep[Theorem 5.6]{CHO} it can be shown that the coefficients $(\sigma_{i_1\cdots i_k}(Y)\mid \eqref{sect:sigmoments:eq2})$ indeed characterize the distribution of $Y$ uniquely even if the compactness assumption is dropped. Extending the definition of \eqref{sect:sigmoments:signature} to paths less regular (`rougher') than of bounded variation is at the centre of the Theory of Rough Paths (\citep{FVI, FLO, LYQ}).\\  

The first application of the coordinates \eqref{sect:sigmoments:eq2} in statistics was given in \cite{PVA} for SDE parameter estimation, with more recent applications including the development of non-commutative cumulants \cite{bonnier2019signature} and Hurst parameter estimation \cite{diehl2016pathwise}. 

\subsection{A Coordinate Space for the Laws of Stochastic Processes}\label{rem:sig_cumulants_generalise} In order to make the information provided by \eqref{sect:sigmoments:eq1} and \eqref{sect:sigmoments:eq2} amenable to mathematical analysis, it will be convenient to regard $\mathfrak{sig}(x)$ and $(\sigma(Y)_{\bm{i}}\mid \bm{i}\in[d]^\star)$ as elements of a suitable topological space.\\[-0.5em]

\noindent
To this end, we denote by $[d]^\ast$ the free monoid on the alphabet $[d]=\{1,\ldots, d\}$, and identify each multiindex $(i_1,\cdots,i_m)\in[d]^{\star}$ in \eqref{def:expected_signature:eq1} with the \emph{word} $\texttt{i}_1\cdots\texttt{i}_m\in[d]^\ast$ it defines.\footnote{\ In light of this, the set $[d]^\star_+$ from Notation \ref{notation:index_sum} is an additive subgroup of the free algebra over $[d]^\ast$.}\\[-0.5em] 

From this view, both $\mathfrak{sig}(x)$ and $\mathfrak{S}(Y)\equiv(\sigma_{\bm{i}})_{\bm{i}\in[d]^\star}$ can then be treated as \emph{formal power series in the variables $\{1,\ldots, d\}$}, i.e.\ as elements of the \emph{free algebra} 
\begin{equation}\label{rem:free_algebra}
\R[d]^\ast\coloneqq\{\bm{t} : [d]^\ast\rightarrow\R\mid \bm{t}\text{ is a map}\}\equiv\left\{\sum\nolimits_{w\in[d]^\ast}\bm{t}(w)\cdot w\ \middle| \ \mathfrak{t}\in\R[d]^\ast\right\};
\end{equation}indeed: $\mathfrak{S}(Y) \cong \sum_{w\in[d]^\star}\bm{t}_\sigma(w)\cdot w \in\R[d]^\ast$ with $\bm{t}_\sigma(\texttt{i}_1\cdots\texttt{i}_m)\coloneqq \sigma_{i_1\cdots i_m}$. For convenience, we may henceforth write $\bm{t}(w)\eqqcolon\langle\bm{t}, w\rangle$ $(w\in[d]^\star)$ for a given $\bm{t}\in\R[d]^\star$.\\[-0.5em]

\noindent 
The space $\R[d]^\ast$ thus serves as a graded coordinate space for (the laws of applicable) continuous stochastic processes in $\R^d$.

\begin{remark}\label{rem:hopf}
The coordinate space $\R[d]^\ast$ is not just an $\R$-vector space but a twofold \emph{bialgebra} (in fact: a \emph{Hopf algebra}), namely wrt.\ the two multiplications given by (a) the concatenation product $\ast$ (the $\R$-bilinear extension of the word-concatenation on $[d]^\ast$), and (b) the shuffle product $\shuffle$ from \eqref{shuffle_product}, see \citep[pp.\ 29 and 31]{REU} for details. The bi-algebra structures associated to these two products are an algebraic reflection of the duality between \eqref{sect:classicmoments:eq2} and \eqref{sect:sigmoments:eq1}, cf.\ also \eqref{intext:coords_indep_class}.
\end{remark}

\subsubsection{The Log Transform}\label{sect:sigmoments:logtransform}Accordingly, the expected signature $\mathfrak{S}(Y)$ of $Y$ can be seen as a coordinate vector of $Y$ wrt.\ the monomial standard basis $\mathfrak{B}\coloneqq\{\texttt{i}_1\cdots\texttt{i}_k\mid i_1, \ldots, i_k\in[d], \ k\geq 0\}$ of $\R[d]^\ast$.
The vector $\mathfrak{S}(Y)$ itself, however, is contained in a nonlinear subspace of $\R[d]^\star$; more specifically, $\mathfrak{S}(Y)$ is `close to an exponential'.\footnote{\ Algebraically, $\mathfrak{S}(Y)$ lies in the convex hull of the Lie-group $\{\sum_{w\in[d]^\star}\chi_w(x)\cdot w\mid x\in\mathcal{BV}\}\subset\R[d]^\ast$, where $\mathcal{BV}\coloneqq\{x \in\mathcal{C}_d\mid \|x\|_{1\mathrm{\text{-}var}} < \infty\}$ and $\chi_w : \mathcal{BV}\rightarrow\R$ are the functionals in \eqref{sect:sigmoments:eq1} (e.g.\ \citep[Cor.\ 3.5]{REU}).}\\[-0.5em]

\noindent
It is hence reasonable to expect a more parsimonious coordinatisation of $Y$ wrt.\ $\mathfrak{B}$ to be achieved by, instead of the vector $\mathfrak{S}(Y)$, considering the $\mathfrak{B}$-coordinates of the faithful linearisation $\Phi(\mathfrak{S}(Y))$ that is effected by the log-transform $\Phi(\bm{t})\equiv\log(\bm{t})$ defined by  
\begin{equation}\label{rem:log_transform} 
\log(\bm{t})\coloneqq\sum_{m\geq 1}\frac{(-1)^{m-1}}{m}(\bm{t}-\epsilon)^{\ast m}
\end{equation}for $\bm{t}\in\R[d]^\ast$ with $\langle \bm{t}, \epsilon\rangle =1$; this linearised coordinate description \mbox{is accounted for by Def.\ \ref{def:sigcumulant}.}
It is sometimes convenient or required to instead of the infinite series \eqref{rem:log_transform} consider only one of its (finite) partial sums; a canonical way to achieve this is via truncation, see Remark \ref{rem:coordspace_quotient}.      
 
\begin{remark}[Signature Cumulants Generalise Classical Cumulants]\label{rem:classic_cumulants}In the same way that the expected signature generalises the classical concept of moments, the signature cumulant generalises the classical concept of cumulants from vector-valued to path-valued random variables, cf.\ \cite{bonnier2019signature}: The (classical) cumulants of a random vector $Z$ in $\R^d$ read\footnote{\ For $\pi_{\mathrm{Sym}}(\texttt{i}_1\cdots\texttt{i}_m) \coloneqq \sum_{\tau\in S_m}\texttt{i}_{\tau(1)}\cdots\texttt{i}_{\tau(m)}$ the projection onto the ($[d]$-adic closure of) the subspace spanned by all symmetric polynomials in $\R[d]^\ast$.} 
\begin{equation}\label{rem:classic_cumulants:eq1}
\kappa_{i_1\cdots i_m}^Z \, = \, \langle\pi_{\mathrm{Sym}}(\log[\mathfrak{m}(Z)]), \, \texttt{i}_1\cdots\texttt{i}_m\rangle
\end{equation}
and hence are identical to the signature cumulant of the linear process $Y\coloneqq (Z\cdot t)_{t\in[0,1]}$. 

\noindent
Given this relation between \eqref{def:sigcumulant:eq1} and \eqref{rem:classic_cumulants:eq1}, Proposition \ref{prop:sig_cums} appears as a natural generalisation of the well-known fact that the (classical) cumulant relations \eqref{intext:coords_indep_class}, that is 
\begin{equation}\label{intext:coords_indep_class2}
\kappa_{\tilde{w}}^Z \ = \ 0  \quad \text{ for all } \quad \tilde{w}\in\bigsqcup_{k=2}^d\big\{\tilde{u}\ast\tilde{v} \, \mid \, \tilde{u}\in [k-1]^\ast\setminus\{\epsilon\}, \ \tilde{v}\in\{k\}^\ast\setminus\{\epsilon\}\big\},  
\end{equation}are characteristic of a random vector $Z$ in $\R^d$ to have mutually independent components.  
\end{remark}     

\subsubsection{The Coordinate Space and Its Topology}\label{rem:sig_cumulants_generalise:subsect:coordspace}As of yet, the coordinate space \eqref{rem:free_algebra} provides only a `purely algebraic container' for the coordinate tuples \eqref{sect:sigmoments:signature} and \eqref{sect:sigmoments:eq2}. Statistical analysis, however, typically concerns convergence and thus requires a topology. 

A convenient such topology on $\R[d]^\ast$ can be defined by 
\begin{equation}\label{rem:freealg_identify_tensor}
\text{identifying } \quad \R[d]^\ast \quad \text{with the tensor algebra} \quad V^\infty\coloneqq\prod_{m=0}^\infty V_m   
\end{equation}    
where $V_0\coloneqq\R$ and $V_m\coloneqq V_1^{\otimes m}$ for $V_1\equiv(\R^d, |\cdot|_2)$, via $[d]^\ast\ni \texttt{i}_1\cdots\texttt{i}_m \, \leftrightarrow \, e_{i_1}\otimes\cdots\otimes e_{i_m}\in V_m$ and $\epsilon \,\leftrightarrow \, 1\in\R$ (with $(e_i)_{i\in[d]}$ the standard basis in $V_1$); in other words, we identify the free algebra \eqref{rem:free_algebra} with the Cartesian product $V^\infty$ which we then endow with its natural tensor algebra structure with $1$ (cf.\ e.g.\ \citep[Sect.\ 2.2.1, Rem.\ 1.24 f.]{FLO} for details). Denote by $\|\cdot\|_m$ the Euclidean (i.e., $|\cdot|_2$-induced) tensor norm on $V_m$, and write $\pi_m : V^\infty\rightarrow V_m \ (\hookrightarrow V)$, $\pi_m((v_j)_{j\geq 0}) = v_m$, for the canonical projection of $V^\infty$ onto its $m^{\mathrm{th}}$ factor. Let further $V_{[m]}\coloneqq\prod_{\nu=0}^m V_\nu$ be the truncated tensor algebra, and $\pi_{[m]}\coloneqq\sum_{\nu=0}^m\pi_\nu$ the truncation map.

\begin{remark}[Truncation]\label{rem:coordspace_quotient}
Notice that $V_{[m]}$ comes equipped with a natural algebra structure, namely the one realised as the quotient of $V^\infty$ by the ideal $\prod_{\nu>m} V_\nu$; the map $\pi_{[m]}$ is then the canonical quotient epimorphism. Note in particular  that
\begin{equation}\label{rem:coordspace_quotient:eq1}
\pi_{[m]}\big(\!\log(\bm{t})\big) = \sum_{k=1}^m\frac{(-1)^{k-1}}{k}\big[\pi_{[m]}(\bm{t}-\epsilon)\big]^{\underline{\ast} k}\eqqcolon \log_{[m]}(\bm{t})\,,
\end{equation} 
defining a (bijective) polynomial map $\log_{[m]} : V_{(1)}\rightarrow V_{[m]}$ $(\hookrightarrow V^\infty$; the space $V_{[m]}$ is embedded as a (closed) linear subspace of $V^\infty$ but not as a subalgebra). In the above, $\underline{\ast}$ denotes the multiplication in the algebra $V_{[m]}$, i.e.: $\displaystyle\pi_{[m]}(\bm{t}_1)\,\underline{\ast}\,\pi_{[m]}(\bm{t}_2)\stackrel{\mathrm{def}}{=}\pi_{[m]}(\bm{t}_1\ast\bm{t}_2)$, $\forall\, \bm{t}_1, \bm{t}_2\in V^\infty$.   
\end{remark}    

\noindent
Our topological coordinate space (for (random) paths and their laws) is 
\begin{equation}\label{rem:topcoordspace}
V \ \coloneqq \ \left\{\bm{t}\in V^\infty \ \middle| \ \vertiii{\bm{t}}_\lambda\coloneqq{\textstyle\sum\nolimits_{m\geq 0}}\|\pi_m(\bm{t})\|_m\cdot\lambda^m \, < \, \infty, \ \forall\,\lambda>0 \right\}
\end{equation}
equipped with the locally convex topology induced by the (fundamental) family of norms $(\vertiii{\cdot}_\lambda\mid\lambda>0)$; cf.\ \citep[Section 2]{CHL}, where the locally $m$-convex algebra \eqref{rem:topcoordspace} was first introduced to the analysis of signatures and their expectation. (Note that the subspace topology on $V_m\subset V$ coincides with the (Euclidean) topology on $(V_m, \|\cdot\|_m)$.) The factorial decay 
\begin{equation}\label{rem:coordspace:factodecay}
\big|\chi_{i_1\cdots i_m}(x)\big| \ \lesssim \ \|x\|^m_{1\text{-var}}/m!
\end{equation}
of the functionals \eqref{sect:sigmoments:eq1} implies that $\mathfrak{sig}(x), \mathfrak{S}(Y)\in V$, cf.\ also Lemma \ref{sect:sigmoments:lem1} below.\\[-0.5em]  

\noindent
For convenience, we also introduce the dilation maps
\begin{equation}
\delta_\lambda \ : \ V \rightarrow V, \quad (v_m)_{m\geq 0} \, \mapsto \, (\lambda^m\cdot v_m)_{m\geq 0}, \qquad (\lambda>0) 
\end{equation}
as well as the subspaces $V_{(c)}\coloneqq\{\bm{t}\in V \mid \pi_0(\bm{t})=c\}$ and $V^\infty_{(c)}$ (defined analogously), and recall that the space $\mathcal{BV}\coloneqq\mathcal{C}_d\cap\mathrm{BV}$ of continuous $\R^d$-valued paths of bounded variation can be endowed with the \emph{$p$-variation topology} (any $p\geq 1$) defined via the $p$-variation seminorm 
\begin{equation}\label{rem:p-varseminorm}
\|x\|_{p\text{-$\mathrm{var}$}}\ \coloneqq \ \Bigg[\sup_{\mathcal{D}}\sum_{(t_k)\in\mathcal{D}}\big|x_{t_k} - x_{t_{k-1}}\big|^p\Bigg]^{\!1/p} 
\end{equation}where the sup is taken over the set $\mathcal{D}$ of all dissections of $[0,1]$; e.g.\ \citep[Sect.\ 1.2]{FLO} for details. Let further $\mathcal{T}$ denote the subset of tree-like paths in $\R^d$, see e.g.\ .\\[-0.5em] 

The next lemma collects basic facts on $V$, \eqref{sect:sigmoments:signature} and \eqref{rem:log_transform} that are useful for Section \ref{sect:consistency}.   

\begin{appendixlemma}\label{sect:sigmoments:lem1}
Let $V$ and $\mathfrak{sig}$ and $\log$ be as above, and $\rho>1$. Then the following holds: 
\begin{enumerate}[label=\upshape(\roman*)]
\item\label{sect:sigmoments:lem1:it1} the space $V$ is a separable and metrizable Hausdorff space;
\item\label{sect:sigmoments:lem1:it2} the signature transform $x \mapsto \mathfrak{sig}(x)$ defines a map $\mathfrak{sig}:\mathcal{BV}\rightarrow V$ which for any $1\leq p < 2$ is continuous wrt.\ the $p$-variation topology on $\mathcal{BV}$;
\item\label{sect:sigmoments:lem1:it2.1} the signature is invariant under order-preserving time-domain reparametrisations of its arguments, i.e.\ $\sig(x) = \sig(x_\varphi)$ for $x_\varphi\equiv(x_{\varphi(t)})_{t\in\mathbb{J}}$ with $\varphi\in C(\mathbb{J};\I)$ strictly monotone; 
\item\label{sect:sigmoments:lem1:it6} for each $\varphi\in C(\mathcal{K})$ with $\mathcal{K}/\mathcal{T}\subset\mathcal{BV}$ compact, there is a sequence of index-polynomials $(\bm{\ell}_j)_{j\in\N}$ in $V^\circ\coloneqq\bigoplus_{m\geq 0}V_m\subset V^\infty$ such that $\varphi = \lim_{j\rightarrow\infty}\langle\sig(\cdot),\,\bm{\ell}_j\rangle$ wrt.\ $\|\cdot\|_\infty$; 
\item\label{sect:sigmoments:lem1:it3} the capped logarithm $\log_{[m]}: V_{(1)}\rightarrow V$ from \eqref{rem:coordspace_quotient:eq1} satisfies $\log_{[m]} = \log_{[m]}\circ\,\pi_{[m]}$ and is continuous for each $m\geq 0$, and $\log$ from \eqref{rem:log_transform} maps subsets of $\{\bm{t}\in V_{(1)} \mid \vertiii{\bm{t}-1}_\lambda\leq 1\}$ to subsets of $\{\bm{\ell}\in V^\infty\mid \vertiii{\bm{\ell}}_\rho \leq \sum_{m\geq 0}(2\rho/\lambda)^m\}$ for any $\lambda>2\rho$;
\item\label{sect:sigmoments:lem1:it4} for each $m\geq 0$, it holds that on $\vertiii{\cdot}_\rho$-bounded subsets the projections $\pi_{[m]} : V^\infty \rightarrow V^\infty$ converge uniformly wrt.\ $\vertiii{\cdot}_1$ to the identity operator on $V^\infty$;
\item\label{sect:sigmoments:lem1:it5} for each $\lambda>0$, we have that $\delta_\lambda\circ\log = \log\circ\,\delta_\lambda$ and $\delta_\lambda[\mathfrak{sig}(x)] = \mathfrak{sig}(\lambda\cdot x)$, any $x\in\mathcal{BV}$.             
\end{enumerate} 
\end{appendixlemma}  
\begin{proof}
Statements \ref{sect:sigmoments:lem1:it1}, \ref{sect:sigmoments:lem1:it2} and \ref{sect:sigmoments:lem1:it2.1} are well-known, see e.g.\ \citep[Cor.\ 2.4 and Cor.\ 5.5]{CHL} and \citep[Thm.\ 4]{HLY} (or simply apply the change of variables theorem for direct verification). The approximation property \ref{sect:sigmoments:lem1:it6}, which is sometimes referred to as the \emph{universality} of the signature, is an immediate consequence of the Stone-Weierstrass theorem and the fact that the set $\{\langle\sig(\cdot),\bm{\ell}\rangle\mid \bm{\ell}\in V^\circ\}$ is a subalgebra of $C(\mathcal{K})$ which contains the constants and separates points (the latter due to \citep[Thm.\ 4]{HLY} which implies that $\sig:\mathcal{BV}/\mathcal{T}\rightarrow V$ is injective), see e.g.\ \citep[Thm.\ 5.6. (2)]{CHO}. 

\ref{sect:sigmoments:lem1:it3}\,:\, As is immediate from \eqref{rem:coordspace_quotient:eq1}, the map $\log_{[m]}$ is a polynomial and hence $V$-valued and continuous, the latter by the fact that (both $\pi_{[m]}$ and) the multiplication $\ast$ ($\cong$ tensor multiplication $\otimes$; \eqref{rem:freealg_identify_tensor}) on $V$ is continuous (e.g.\ \citep[Section 3]{CHL}). The commutativity of $\log_{[m]}$ and $\pi_{[m]}$ is clear again from \eqref{rem:coordspace_quotient:eq1}. As to the boundedness assertion, let $\lambda>2\rho$ and denote $B_\lambda\coloneqq\{\bm{t}\in V_{(1)}\mid \vertiii{\bm{t}-1}_\lambda\leq 1\}$. Then in particular $\sup_{\bm{t}\in B_\lambda}\|\pi_m(\bm{t})\|_m \, \leq \, \lambda^{-m}$ for every $m\geq 0$, whence for each $\bm{t}\in B_\lambda$ and $\bm{\ell}\coloneqq\log(\bm{t})$ we have that, for all $m\geq k \geq 1$,  
\begin{equation}
\left\|\pi_m\big[(\bm{t}-1)^{\ast k}\big]\right\|_m \ \leq \ \sum_{\substack{m_1+\ldots+m_k=m\\m_\nu\geq 1}}\left\|\pi_{m_1}(\bm{t})\ast\cdots\ast\pi_{m_k}(\bm{t})\right\|_m \ \leq \ \binom{m-1}{k-1}\cdot\lambda^{-m}  
\end{equation}(as the tensor norms $\|\cdot\|_m$ are each submultiplicative), and hence find from \eqref{rem:log_transform} that 
\begin{equation}
\|\pi_m(\bm{\ell})\|_m \ \leq \ \sum_{k=1}^m\frac{1}{k}\binom{m-1}{k-1}\lambda^{-m} \ \leq \ 2^m\lambda^{-m} \qquad \text{for each \ } m\geq 1, 
\end{equation}implying that $\sup_{\ell\in\log(B_\lambda)}\vertiii{\bm{\ell}}_\rho \leq \sum_{m\geq 1}(2\rho/\lambda)^m < \infty$, as desired. 

\ref{sect:sigmoments:lem1:it4}\,:\, Let $B\subset V^\infty$ be bounded wrt.\ $\vertiii{\cdot}_\rho$, i.e.\ suppose that $\beta_\rho\coloneqq\sup_{\bm{t}\in B}\vertiii{\bm{t}}_\rho < \infty$. Then there will be some $0<q<1$ together with an index $m_0\geq 1$ such that 
\begin{equation}\label{sect:sigmoments:lem1:aux1}
\sup_{\bm{t}\in B}\|\pi_m(\bm{t})\|_m \ \leq \ q^m \qquad \text{for each } \ \ m\geq m_0. 
\end{equation}Indeed: Assuming otherwise that the above does not hold, we for any given $q\in(0,1)$ obtain the existence of a sequence $(\bm{t}^{(n)})_{n\in\N}\subset B$ with the property that   
\begin{equation}
\|\pi_{m_n}(\bm{t}^{(n)})\|_{m_n} \ > \ q^{m_n} \qquad \text{for each } \ \ n\in\N
\end{equation} 
for some strictly increasing sequence $(m_n)_{n\in\N}\subset\N$. But choosing $q>\rho^{-1}$ then implies that 
\begin{equation}
\beta_\rho \ \, \geq \, \ \sup_{n\in\N}\bvertiii{\bm{t}^{(n)}}_\rho \ \geq \ \sup_{n\in\N} (\rho\cdot q)^{m_n} \ = \ \infty   
\end{equation} 
in contradiction to the $\vertiii{\cdot}_\rho$-boundedness of $B$. Thus \eqref{sect:sigmoments:lem1:aux1} holds, and with it (by convergence of the geometric series) the claimed uniform $\vertiii{\cdot}_1$-convergence of $(\pi_{[m]})_{m\in\N}$.

\ref{sect:sigmoments:lem1:it5}\,:\, This is clear by inspection of \eqref{rem:log_transform} and \eqref{sect:sigmoments:eq1}, respectively. 
\end{proof}

%
%
\end{appendix}

%
%
 

\nocite{KGL}

\bibliographystyle{plain}
\bibliography{References}
{\small
\vspace{1.5em}
\vfill
\begin{center}
\address{Mathematical Institute\\University of Oxford\\AWB, ROQ, Woodstock Road\\Oxford, OX2 6GG, UK} 
\end{center}
\vfill
}
\end{document}